\newcommand{\dee}{\mathrm{d}}
\def\balign#1\ealign{\begin{align}#1\end{align}}
\def\baligns#1\ealigns{\begin{align*}#1\end{align*}}
\def\balignat#1\ealign{\begin{alignat}#1\end{alignat}}
\def\balignats#1\ealigns{\begin{alignat*}#1\end{alignat*}}
\def\bitemize#1\eitemize{\begin{itemize}#1\end{itemize}}
\def\benumerate#1\eenumerate{\begin{enumerate}#1\end{enumerate}}
\newenvironment{talign*}
 {\csname align*\endcsname}
 {\endalign}
\newenvironment{talign}
 {\csname align\endcsname}
 {\endalign}
\def\balignst#1\ealignst{\begin{talign*}#1\end{talign*}}
\def\balignt#1\ealignt{\begin{talign}#1\end{talign}}
\let\originalleft\left
\let\originalright\right
\renewcommand{\left}{\mathopen{}\mathclose\bgroup\originalleft}
\renewcommand{\right}{\aftergroup\egroup\originalright}
\def\tinycitep*#1{{\tiny\citep*{#1}}}
\def\tinycitealt*#1{{\tiny\citealt*{#1}}}
\def\tinycite*#1{{\tiny\cite*{#1}}}
\def\smallcitep*#1{{\scriptsize\citep*{#1}}}
\def\smallcitealt*#1{{\scriptsize\citealt*{#1}}}
\def\smallcite*#1{{\scriptsize\cite*{#1}}}
\def\mbf#1{\mathbf{#1}}
\def\mrm#1{\mathrm{#1}}
\def\reals{\mathbb{R}} %
\def\<{\left\langle} %
\def\>{\right\rangle}
\newcommand{\boldone}{\mbf{1}} %
\newcommand{\ident}{\mbf{I}} %
\def\norm#1{\left\|{#1}\right\|} %
\newcommand{\onenorm}[1]{\norm{#1}_1} %
\newcommand{\twonorm}[1]{\norm{#1}_2} %
\newcommand{\infnorm}[1]{\norm{#1}_{\infty}} %
\newcommand{\opnorm}[1]{\norm{#1}_{\mathrm{op}}} %
\newcommand{\fronorm}[1]{\norm{#1}_{\mathrm{F}}} %
\newcommand{\binner}[2]{\left\langle{#1},{#2}\right\rangle} %
\def\indic#1{\mbb{I}\left[{#1}\right]} %
\def\Earg#1{\E\left[{#1}\right]}
\def\P{\mbb{P}} %
\def\Parg#1{\P\left({#1}\right)}
\def\Var{\mrm{Var}} %
\DeclareSymbolFont{rsfs}{U}{rsfs}{m}{n}
\DeclareSymbolFontAlphabet{\mathscrsfs}{rsfs}
\newcommand{\Unif}{\textnormal{Unif}}
\providecommand{\tr}{\mathop\mathrm{tr}}
\newtheorem{theorem}{Theorem}
\newtheorem{lemma}[theorem]{Lemma}
\newtheorem{corollary}[theorem]{Corollary}
\newtheorem{definition}[theorem]{Definition}
\renewenvironment{proof}{\noindent\textbf{Proof.}\hspace*{.3em}}{\qed \vspace{.1in}}
\newenvironment{proof-sketch}{\noindent\textbf{Proof Sketch}
  \hspace*{1em}}{\qed\bigskip\\}
\newenvironment{proof-idea}{\noindent\textbf{Proof Idea}
  \hspace*{1em}}{\qed\bigskip\\}
\newenvironment{proof-of-lemma}[1][{}]{\noindent\textbf{Proof of Lemma {#1}}
  \hspace*{1em}}{\qed\\}
\newenvironment{proof-of-theorem}[1][{}]{\noindent\textbf{Proof of Theorem {#1}}
  \hspace*{1em}}{\qed\\}
\newenvironment{proof-attempt}{\noindent\textbf{Proof Attempt}
  \hspace*{1em}}{\qed\bigskip\\}
\newenvironment{remark}{\noindent\textbf{Remark.}
  \hspace*{0em}}{\smallskip}%
\newtheorem{proposition}[theorem]{Proposition}
\newtheorem{assumption}{Assumption}
\newtheorem*{assumption*}{Assumptions}
\theoremstyle{definition}
\def\bw{\boldsymbol{w}}
\def\bW{\boldsymbol{W}}
\def\bU{\boldsymbol{U}}
\def\bu{\boldsymbol{u}}
\def\bx{\boldsymbol{x}}
\def\by{\boldsymbol{y}}
\def\bb{\boldsymbol{b}}
\def\ba{\boldsymbol{a}}
\def\bV{\boldsymbol{V}}
\def\bv{\boldsymbol{v}}
\def\bt{\boldsymbol{t}}
\def\bT{\boldsymbol{T}}
\def\bchi{\boldsymbol{\chi}}
\def\bomega{\boldsymbol{\omega}}
\def\bz{\boldsymbol{z}}
\def\bZ{\boldsymbol{Z}}
\def\bA{\boldsymbol{A}}
\def\bB{\boldsymbol{B}}
\def\bP{\boldsymbol{P}}
\def\bg{\boldsymbol{g}}
\def\balpha{\boldsymbol{\alpha}}
\def\bomega{\boldsymbol{\omega}}
\def\bmu{\boldsymbol{\mu}}
\def\bSigma{\boldsymbol{\Sigma}}
\def\Eargs#1#2{\E_{#1}\left[#2\right]}
\def\vspn{\operatorname{span}}
\def\indic{\mathbbm{1}}
\newcommand{\cmark}{\ding{51}}
\newcommand{\xmark}{\ding{55}}
\def\STR{\mathrm{STR}}
\def\bp{\boldsymbol{p}}
\def\boldzero{\boldsymbol{0}}
\def\bh{\boldsymbol{h}}
\def\Attn{{\normalfont\texttt{Attn}}}
\def\FCNN{{\normalfont\texttt{2NN}}}
\def\FFN{{\normalfont\texttt{FFN}}}
\def\RNN{{\normalfont\texttt{RNN}}}
\def\TR{{\normalfont\texttt{TR}}}
\def\arc{{\normalfont\texttt{arc}}}
\def\bTheta{\boldsymbol{\Theta}}
\def\QK{\mathrm{QK}}
\def\NN{\mathrm{NN}}
\def\proj{{\mathrm{proj}}}
\DeclareMathOperator{\softmax}{softmax}
\DeclareMathOperator{\vect}{vec}
\DeclareMathOperator{\enc}{enc}
\def\abs#1{\left\vert #1 \right\vert}
\def\myqed{\qed}
\crefname{assumption}{Assumption}{Assumptions}
\begin{document}

\title{When Do Transformers Outperform Feedforward and \\
Recurrent Networks? A Statistical Perspective}

\author{
Alireza Mousavi-Hosseini\textsuperscript{1}
\ \ \ \ \ \
Clayton Sanford\textsuperscript{2}
\ \ \ \ \ \
Denny Wu\textsuperscript{3}
\ \ \ \ \ \
Murat A.~Erdogdu\textsuperscript{1}
}

\maketitle

\setcounter{footnote}{0}

\footnotetext[1]{University of Toronto and Vector Institute. \texttt{\{mousavi,erdogdu\}@cs.toronto.edu}.}
\footnotetext[2]{Google Research. \texttt{chsanford@google.com}}
\footnotetext[3]{New York University and Flatiron Institute. \texttt{dennywu@nyu.edu}.
\vspace{-2.5mm}}
\allowdisplaybreaks

\begin{abstract}%
Theoretical efforts to prove advantages of Transformers in comparison with classical architectures such as feedforward and recurrent neural networks have mostly focused on representational power. In this work, we take an alternative perspective and prove that even with infinite compute, feedforward and recurrent networks may suffer from larger sample complexity compared to Transformers, as the latter can adapt to a form of \textit{dynamic sparsity}. Specifically, we consider a sequence-to-sequence data generating model on sequences of length $N$, in which the output at each position depends only on $q$ relevant tokens with $q \ll N$, and the positions of these tokens are described in the input prompt. We prove that a single-layer Transformer can learn this model if and only if its number of attention heads is at least $q$, in which case it achieves a sample complexity almost independent of $N$, while recurrent networks require $N^{\Omega(1)}$ samples on the same problem. If we simplify this model, recurrent networks may achieve a complexity almost independent of $N$, while feedforward networks still require $N$ samples. Consequently, our proposed sparse retrieval model illustrates a natural hierarchy in sample complexity across these architectures.
\end{abstract}

\section{Introduction}
Transformers~\citep{vaswani2017attention}, neural network architectures that are composed of attention and feedforward blocks, are now at the backbone of large models in machine learning across many different tasks \cite{radford2018improving,dosovitskiy2020image,brown2020language}.
The theoretical efforts surrounding the success of Transformers have so far demonstrated various capabilities like in-context learning~\citep[and others]{akyurek2023what,von2023Transformers,bai2023transformers,zhang2024trained,kim2024transformers} and chain of thought along with its benefits~\citep[and others]{feng2023towards,merrill2024expressive,li2024chain,kim2024provably} in various settings. 
There are fewer works that provide specific benefits of Transformers in comparison with feedforward and recurrent architectures. On the approximation side, there are tasks that Transformers can solve with size logarithmic in the input, while other architectures such as recurrent and feedforward networks require polynomial size~\citep{sanford2023representational,sanford2024transformers}. Based on these results, \cite{wang2024transformers} showed a separation between Transformers and feedforward networks by providing further optimization guarantees for gradient-based training of Transformers on a sparse token selection task.

While most prior works focused on the approximation separation between Transformers and feedforward networks, in this work we focus on a purely statistical separation, and ask:
\begin{center}
    \emph{What function class can Transformers learn with fewer samples compared to\\
    feedforward and recurrent networks, even with infinite compute?}
\end{center}
\cite{fu2023what} approached the above problem with random features, where the query-key matrix for the attention and the first layer weights for the two-layer feedforward network were fixed at random initialization. However, this only presents a partial picture, as neural networks can learn a significantly larger class of functions once ``feature learning'' is allowed, i.e., parameters are trained to adapt to the structure of the underlying task~\citep{bach2017breaking,ba2022high-dimensional,damian2022neural,bietti2022learning,dandi2023learning,abbe2023sgd,mousavi2024learning}. 

We evaluate the statistical efficiency of transformers and alternative architectures by characterizing how the sample complexity depends on the input sequence length. 
A benign sequence length dependence (e.g., sublinear) signifies the ability to achieve low test error in longer sequences, which is intuitively connected to the \textit{length generalization} capability \cite{anil2022exploring}. While Transformers have demonstrated this ability in certain structured logical tasks, they fail in other simple settings \citep{Zhou2023WhatAC,liu2023exposingattentionglitchesflipflop}. 
Our generalization bounds for bounded-norm Transformers --- along with our contrasts to RNNs and feedforward neural networks --- provide theoretical insights into the statistical advantages of Transformers and lay the foundation for future rigorous investigations of length generalization.

\begin{table}[t]
    \setlength{\tabcolsep}{15pt}
    \centering
    \begin{tabular}{c c c c }
        \toprule
        Statistical Model & Feedforward & RNN & Transformer \\
        \hline\hline
        \noalign{\vskip 0.12em}
        Simple-$q\STR$ & \xmark~ (Theorem~\ref{thm:mlp_lower_bound}) & \cmark~ (Theorem~\ref{thm:rnn_upper_bound}) & \cmark~ (Theorem~\ref{thm:tr_upper_bound}) \\
        \hline
        \noalign{\vskip 0.12em}
        $q\STR$ & \xmark~ (Theorem~\ref{thm:mlp_lower_bound}) & \xmark~ (Theorem~\ref{thm:rnn_lower_bound_smpl}) & \cmark~ (Theorem~\ref{thm:tr_upper_bound})\\
        \bottomrule
    \end{tabular}
    \caption{Summary of main contributions (see Theorem~\ref{thm:main}). \cmark~indicates a sample complexity upper bound that is almost sequence length-free (up to polylogarithmic factors). \xmark~indicates a lower bound of order $N^{\Omega(1)}$.}
    \label{tab:results}
\end{table}

\subsection{Our Contributions}
We study the $q$-Sparse Token Regression ($q\STR$) data generating model, a sequence-to-sequence model where the output at every position depends on a sparse subset of the input tokens. Importantly, this dependence is dynamic, i.e., changes from prompt to prompt, and is described in the input itself.
We prove that by employing the attention layer to retrieve relevant tokens at each position, single-layer Transformers can adapt to this dynamic sparsity, and learn $q\STR$ with a sample complexity almost independent of the length of the input sequence $N$, as long as the number of attention heads is at least $q$. On the other hand, we develop a new metric-entropy-based argument to derive norm and parameter-count lower bounds for RNNs approximating the $q\STR$ model. Thanks to lower bounds on weight norm, we also obtain a sample complexity lower bound of order $N^{\Omega(1)}$ for RNNs.
Further, we show that RNNs can learn a subset of $q\STR$ models where the output is a constant sequence, which we call simple-$q\STR$, with a sample complexity polylogarithmic in $N$.
Finally, we develop a novel lower bound technique for feedforward networks (FFNs) that takes advantage of the fully connected projection of the first layer to obtain a sample complexity lower bound linear in $N$, even when learning simple-$q\STR$ models. The following theorem and Table~\ref{tab:results} summarize our main contributions.
\begin{theorem}[Informal]
\label{thm:main}
    We have the following hierarchy of statistical efficiency for learning $q\STR$.
    \begin{itemize}[leftmargin=*]
        \item A single-layer Transformer with $H \geq q$ heads can learn $q\STR$ with sample complexity almost independent of $N$, and cannot learn $q\STR$ when $H < q$ even with infinitely many samples.
        \item RNNs can learn \emph{simple}-$q\STR$ models with sample complexity almost independent of $N$, but require at least $\Omega(N^c)$ samples for some absolute constant $c > 0$ to learn a generic $q\STR$ model, regardless of their size. 
        \item Feedforward neural networks, regardless of their size, require $\Omega(Nd)$ samples to learn even \emph{simple}-$q\STR$ models, where $d$ is input token dimension.
    \end{itemize}
\end{theorem}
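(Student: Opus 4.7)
The plan is to handle the three architectures independently, combining explicit constructions with covering/metric-entropy lower bounds. For the Transformer upper bound, I would exhibit a single-layer Transformer with $q$ heads that performs sparse retrieval exactly: the $k$-th head uses its query-key matrix to read the $k$-th position pointer from the prompt and produce an attention distribution concentrated on the corresponding token, while its value matrix projects that token into the task-relevant subspace; a bounded-norm MLP combines the $q$ retrieved tokens into the output. Once representability with controlled norms is verified, I obtain the sample-complexity bound by a Rademacher/chaining argument for the norm-constrained Transformer class, in which $N$ enters only through $\polylog N$ factors via positional encodings and softmax normalization. For the matching infinite-sample impossibility at $H<q$, I would argue representationally: each head outputs a convex combination of value-transformed tokens, so $H<q$ heads leave a $q$-dimensional selection pattern unrepresentable at some prompt positions, and averaging over prompts produces a nonvanishing risk floor independent of the sample size.

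For the RNN side, the simple-$q\STR$ upper bound follows from a constructive RNN that streams the prompt token-by-token, detects the $q$ flagged positions via hidden-state comparators, and accumulates their contributions to a single constant-in-$t$ output; contraction properties of Lipschitz recurrences then give a Rademacher bound with only $\polylog N$ sequence-length dependence. The general-$q\STR$ lower bound is where a genuinely new metric-entropy technique is needed. I would exhibit a packing of $q\STR$ functions of cardinality $N^{\Omega(q)}$ that are pairwise separated in $L^2$ and induced by different selector patterns, then show that any RNN $\varepsilon$-covering this family must use weight norm or parameter count of order $N^{\Omega(1)}$: informally, the reachable output set of a bounded-norm RNN grows too slowly with the sequence length to cover the packing when the recurrence must preserve a position index that is only revealed at the query. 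This norm lower bound then feeds into standard norm-based generalization-lower-bound machinery (Sauer--Shelah / fat-shattering style) to force $\Omega(N^c)$ samples.

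For the FFN lower bound, the key observation is that the first linear layer treats the input as an unstructured $Nd$-dimensional vector, so the hypothesis class is invariant under joint orthogonal transformations of the input coordinates; sparsity in input space offers no free dimension reduction. I would construct a subfamily of simple-$q\STR$ instances indexed by $\Omega(N)$ orthogonal directions that any bounded-norm FFN must distinguish, then apply a Yao/Fano-style argument (or reduce to the classical $\Omega(Nd)$ lower bound for $Nd$-dimensional linear regression on the linearization of the subfamily) to conclude. The main obstacle I anticipate is the RNN lower bound: translating a metric-entropy deficit into a sample-complexity lower bound requires simultaneously controlling the approximation floor under bounded norms and the estimation cost of selecting within the class, and the packing must respect the dynamic sparse-retrieval structure rather than relying on generic worst-case functions. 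A secondary obstacle is keeping the Transformer upper bound genuinely $N$-free up to polylog factors: the softmax and positional encodings each contribute mild $N$-dependence that must be absorbed into constants via careful chaining rather than naive union bounds over positions.
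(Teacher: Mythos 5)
Your Transformer upper bound and RNN simple-$q\STR$ upper bound track the paper's route closely (per-head hard retrieval via positional-encoding inner products, then chaining over the norm-constrained class; a streaming accumulator RNN with contraction control). But three of your lower-bound steps contain genuine gaps. First, the ``$H<q$ heads cannot represent a $q$-sparse selection pattern'' counting argument fails: a single head outputs a convex combination of value-transformed tokens and can therefore place mass $1/q$ on each of $q$ tokens, so sparse \emph{averaging} of $q$ tokens is representable with one head (the paper remarks on exactly this). The paper's Proposition~\ref{prop:tr_lower_bound_improved} instead uses a \emph{nonlinear} link $y_i \propto \sum_j(\|\bx_{t_{ij}}\|^2-\E\|\bx_{t_{ij}}\|^2)$, zeroes out half the tokens so the attention scores at the probed positions are independent of $\bx_{1:q}$, observes that the prediction then depends on $\bx_{1:q}$ only through at most $H(d+q)$ linear functionals, and lower-bounds the risk by the Gaussian conditional variance of $\|\bx_{1:q}\|^2$ given that projection.

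Second, both your RNN and FFN sample-complexity lower bounds implicitly target \emph{all} learners, and no such bounds hold: for the simple-$1\STR$ instance with linear link, each sample reveals a noiseless linear measurement $\langle\bu,\bx_{t_1}\rangle$ with observed design, so $\bu\in\mathbb{S}^{d-1}$ is information-theoretically recoverable from $\mathcal{O}(d)$ samples, and any Fano/Yao or fat-shattering argument over unrestricted estimators caps out far below $\Omega(Nd)$ or $\Omega(N^c)$. The paper's bounds are algorithm-dependent. For FFNs (Theorem~\ref{thm:mlp_lower_bound}) the key mechanism you are missing is that for stationary points of regularized risk and for min-norm ERM, the first-layer rows lie in $\vspn(\bx^{(1)},\dots,\bx^{(n)})\subset\reals^{Nd}$, an $n$-dimensional subspace; averaging the conditional variance of $\langle\bP_{t_1}\bu,\bx\rangle$ given that projection over random $\bu$ and $t_1$ gives exactly $1-n/(Nd)$. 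For RNNs, the paper's metric-entropy step is close to your intent (it packs the hidden-state ball using $e^{\Omega(N)}$ prompts sharing the queried token, whose target vectors are near-orthogonal Gaussians in $\reals^{N-1}$, forcing $\opnorm{\bU}^2\gtrsim N/(\mathfrak{L}^2\log d_h)$), but the conversion to a sample bound is not ``standard norm-based machinery'': it shows that $n$ samples can be interpolated with weight norm $\mathcal{O}(n^{3/2})$ via an explicit sawtooth network, so min-norm $\varepsilon$-ERM provably prefers the overfitting solution whenever $n\leq N^{c_1}$. Without restricting the algorithm class and exhibiting the cheap interpolant, your final step does not go through.
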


We experimentally validate the intuitions from Theorem~\ref{thm:main} in Figure~\ref{fig:exp_n_vs_N}, where we observe that on a $1\STR$ task, both FFNs and RNNs suffer from a large sample complexity for larger $N$. However, for a simple-$1\STR$ model RNNs perform closer to Transformers with a much milder dependence on $N$ compared to FFNs\footnote{The code to reproduce our experiments is provided at: \url{https://github.com/mousavih/transformers-separation}.}.

\subsection{Related Work}

\begin{figure}
    \centering
    \begin{subfigure}[b]{0.48\textwidth}
        \centering
        \includegraphics[width=0.9\textwidth]{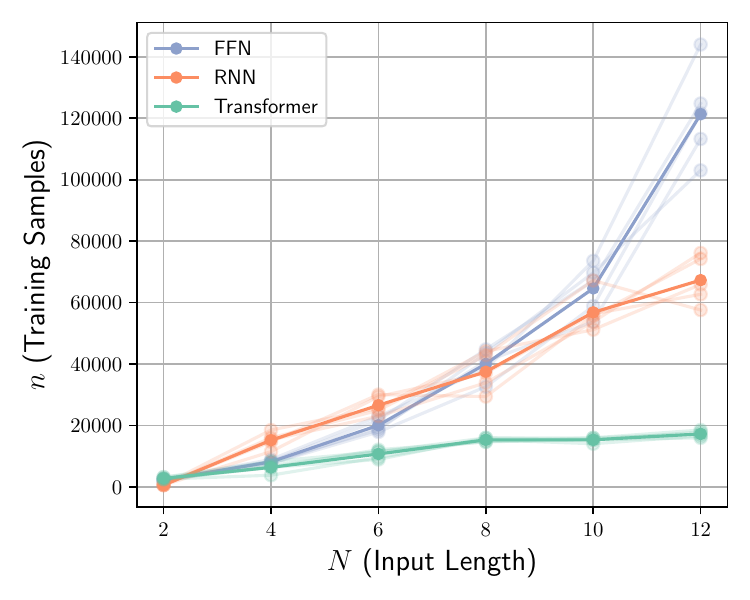}
        \vspace{-2mm}
        \caption{Sample complexity (1STR)}
        \label{fig:n_vs_N}
    \end{subfigure}
    \hfill
    \begin{subfigure}[b]{0.48\textwidth}
        \centering
        \includegraphics[width=0.9\textwidth]{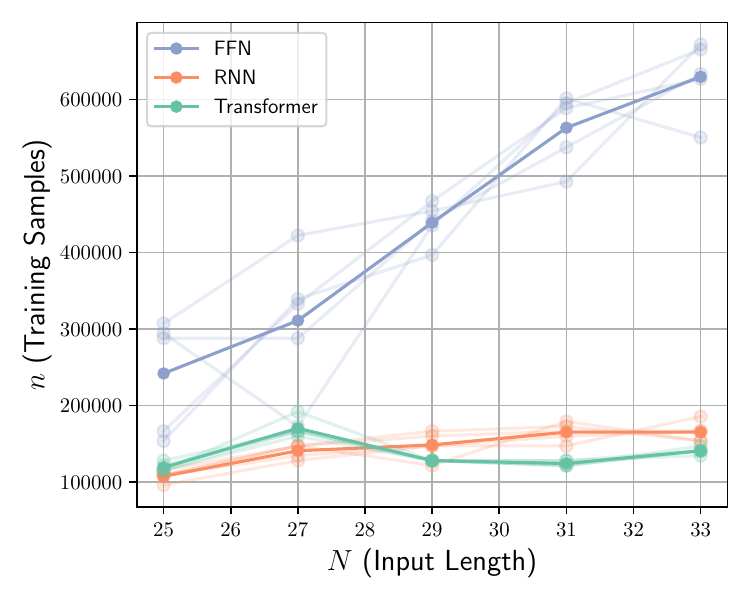}
        \vspace{-2mm}
        \caption{Sample complexity (Simple-1STR)}
        \label{fig:n_vs_N_simple}
    \end{subfigure}
    \caption{\small Number of samples required to reach a certain test MSE loss threshold while training with online AdamW. We consider  (\subref{fig:n_vs_N}) the $1\STR$ model with loss threshold $0.7$ and (\subref{fig:n_vs_N_simple}) the simple-$1\STR$ model with loss threshold $0.02$, averaged over 5 experiments. We use a linear link function, standard Gaussian input, $d=10$ and $d_e = \lfloor 5 \log(N)\rfloor$. Positional encodings are sampled uniformly from the unit hypercube. This observation is consistent with Theorem~\ref{thm:main}.}
    \label{fig:exp_n_vs_N}
\end{figure}
While generalization is a fundamental area of study in machine learning theory, theoretical work on the generalization capabilities of Transformers remains relatively sparse.
Some works analyze the inductive biases of self-attention through connections to max-margin SVM classifiers \citep{Vasudeva2024ImplicitBA}.
Others quantify complexity in terms of the simplest programs in a formal language (such as the RASP model of \cite{yang2023masked}) that solve the task and relate that to Transformer generalization \citep{Zhou2023WhatAC, Chatterjee2024NeuralNG}. 
The most relevant works to our own are \cite{edelman2022inductive,trauger2023sequence,truong2024rank}, which employ covering numbers to bound the sample complexity of deep Transformers with bounded weights.
They demonstrate a logarithmic scaling in the sequence length, depth, and width and apply their bounds to the learnability of sparse Boolean functions.
We refine these covering number bounds to better characterize generalization in sequence-to-sequence learning with dynamic sparsity \cite{sanford2023representational}.
Our problems formalize long-context reasoning tasks, extending beyond simple retrieval to include challenges like \textit{multi-round coreference resolution} \citep{vodrahalli2024michelangelo}.

\paragraph{Expressivity of Transformers.} 
The expressive power of Transformers has been extensively studied in prior works.
Universality results establish that Transformers can approximate the output of any continuous function or Turing machine~\citep{yun2019transformers, wei2021statistically}, but complexity limitations remain for bounded-size models.
Transformers with fixed model sizes are unable to solve even regular languages, such as Dyck and Parity \citep{Bhattamishra2020OnTA, Hahn_2020}.
Further work~\citep[e.g.][]{merrill2023expressive} relates Transformers to boolean circuits to establish the hardness of solving tasks like graph connectivity with even polynomial-width Transformers.
Additionally, work on self-attention complexity explores how the embedding dimension and number of heads affects the ability of attention layers to approximate sparse matrices \citep{Likhosherstov2021OnTE}, recover nearest-neighbor associations \citep{amsel2024benefits}, and compute sparse averages \citep{sanford2023representational}.
The final task closely resembles our $q$STR model and has been applied to relate the capabilities of deep Transformers to parallel algorithms \citep{sanford2024transformers}.
Several works \citep[e.g.][]{Jelassi2024RepeatAM, bhattamishra2024separations, wen2024rnns} introduce sequential tasks where Transformers outperform RNNs or other state space models in parameter-efficient expressivity.
We establish similar architectural separations with an added focus on differentiating the generalization capabilities of Transformers, RNNs, and FFNs.

\paragraph{Statistical Separation.} 

Our work is conceptually related to studies on feature learning and adaptivity in feedforward networks, particularly in learning models with sparsity and low-dimensional structures.
Prior work has analyzed how neural networks and gradient-based optimization introduce inductive biases that facilitates the learning of low-rank and low-dimensional functions \citep{li2018algorithmic,wei2019regularization,chizat2020implicit, mousavi2023neural,oko2024pretrained}. 
These studies often demonstrate favorable generalization properties based on certain structures of the solution such as large margin or low norm \citep{bartlett2017spectrally,neyshabur2018towards,ongie2019function,wei2019regularization}. 
Our goal is to extend efficient learning of low-dimensional concepts to sequential architectures, ensuring sample complexity remains efficient in both input dimension $d$ and context length $N$. 
Our approach, motivated by \cite{sanford2023representational,wang2024transformers}, suggests that $q$STR is a sequential model whose sparsity serves as a low-dimensional structure, making it the primary determinant of generalization complexity for Transformers.

\paragraph{Notation.} For a natural number $n$, define $[n] \coloneqq \{1,\hdots,n\}$. We use $\norm{\cdot}_p$ to denote the $\ell_p$ norm of vectors. For a matrix $\bA \in \reals^{m \times n}$, $\norm{\bA}_{p,q} \coloneqq \big\Vert\big(\norm{\bA_{:,1}}_p,\hdots,\norm{\bA_{:,n}}_p\big)\big\Vert_q$, and $\opnorm{\bA}$ denotes the operator norm of $\bA$.
We use $a \lesssim b$ and $a \leq \mathcal{O}(b)$ interchangeably, which means $a \leq Cb$ for some absolute constant $C$. We similarly define $\gtrsim$ and $\Omega$. $\tilde{\mathcal{O}}$ and $\tilde{\Omega}$ hide multiplicative constants that depend polylogarithmically on problem parameters.
$\sigma$ denotes the ReLU activation.

\section{Problem Setup}
\paragraph{Statistical Model.} In this paper, we will focus on the ability of different architectures for learning the following data generating model.
\begin{definition}[$q$-Sparse Token Regression]
    Suppose $\bp,\by \sim \mathcal{P}$ where 
    $$\bp = \left(\begin{pmatrix}\bx_1 \\ \bt_1
    \end{pmatrix}, \hdots, \begin{pmatrix}\bx_N \\ \bt_N\end{pmatrix}\right),$$
    $\bt_i \in [N]^q$ and $\bx_i \in \reals^d$ for $i \in [N]$. In the $q$-sparse token regression ($q$STR) data generating model, the output is given by
    $\by = (y_1,\hdots,y_N)^\top \in \reals^N$,
    where
    $$y_i = g(\bx_{t_{i1}},\hdots,\bx_{t_{iq}}),$$
    for some $g : \reals^{qd} \to \reals$. We call this model \emph{simple}-$q\STR$ if the data distribution is such that $\bt_i = \bt$ for all $i \in [N]$ and some $\bt$ drawn from $[N]^q$.
\end{definition}

The above defines a class of sequence-to-sequence functions, where the label at position $i$ in the output sequence depends only on a subsequence of size $q$ of the input data, determined by the set of indices $\bt_i$.  $\bp$ in the above definition denotes the prompt or context. Given the large context length of modern architectures, we are interested in a setting where $q \ll N$.
In this setting, the answer at each position only depends on a few tokens, however the tokens it depends on change based on the context.
Therefore, we seek architectures that are \emph{adaptive} to this form of \emph{dynamic sparsity} in the true data generating process, with computational and sample complexity independent of $N$. 
As a special case, choosing the link function $g$ above as the tokens' mean recovers the \textit{sparse averaging} model proposed in~\cite{sanford2023representational},
where the authors demonstrated a separation in terms of approximation power between Transformers and other architectures.

To obtain statistical guarantees, we will impose mild moment assumptions on the data, amounting to subGaussian inputs and link functions growing at most polynomially.
\begin{assumption}\label{assump:data}
    Suppose $\Earg{\norm{\bx_i}^r}^{1/r} \leq \sqrt{C_xdr}$ and $\Earg{\abs{y_i}^r}^{1/r} \leq \sqrt{C_yr^s}$ for all $r \geq 1$, $i \in [N]$, and some absolute constants $s \geq 1$ and $C_x,C_y > 0$.
\end{assumption}
Learning the $q\STR$ model requires two steps: 1.~extracting the relevant tokens at each position and 2.~learning the link function $g$. We are interested in settings where the difficulty of learning is dominated by the first step, therefore we assume $g$ is well-approximated by a two-layer feedforward network.
\begin{assumption}\label{assump:fcnn_approx}
    There exist $m_g \in \mathbb{N}$, $\ba_g,\bb_g \in \reals^{m_g}$ and $\bW_g \in \reals^{m_g \times qd}$, such that $\twonorm{\ba_g} \leq r_a / \sqrt{m_g}$, and $\Vert(\bW_g,\bb_g)\Vert_{\mathrm{F}} \leq \sqrt{m_g} r_w$ for some constants $r_a,r_w > 0$, and
    $$\sup_{\big\{\twonorm{\bx_i} \leq \sqrt{Cd\log(nN)},\, \forall i \in [q]\big\}}\left\vert g(\bx_1,\hdots,\bx_q) - \ba_g^\top\sigma(\bW_g(\bx_1^\top, \hdots, \bx_q^\top)^\top + \bb_g)\right\vert^2 \leq \varepsilon_{\FCNN},$$
    where $C=3C_xe$ and $\varepsilon_{\FCNN}$ is some absolute constant.
\end{assumption}
Ideally, $\varepsilon_\FCNN$ above is a small constant denoting the approximation error. This assumption can be verified using various universal approximation results for ReLU networks. For example, when $g$ is an additive model of $P$ Lipschitz functions, where each function depends only on a $k$-dimensional projection of the input, the above holds for every $\varepsilon_\FCNN > 0$ and $m_g = \tilde{\mathcal{O}}\big((P/\sqrt{\varepsilon_\FCNN})^{k}\big)$, $r_a = \tilde{\mathcal{O}}\big((P/\sqrt{\varepsilon_\FCNN})^{(k+1)/2}\big)$, and $r_w = 1$ (we can always have $r_w = 1$ by homogeneity)~\citep{bach2017breaking}.

\paragraph{Empirical Risk Minimization.}
While Empirical Risk Minimization (ERM) is a standard abstract learning algorithm to use for generalization analysis, its standard formalizations use risk functions for scalar-valued predictions.
Before introducing the notions of ERM that we employ, we first state several sequential risk formulations to evaluate a predictor $\hat\by_{\arc}(\cdot; \bTheta) \in \mathcal{F}_\arc$ on i.i.d.\ training samples $\{\bp^{(i)},{\by}^{(i)}\}_{i=1}^n$, where $\arc$ denotes a general architecture.
We define the \textit{population risk}, \textit{averaged empirical risk}, and \textit{point-wise empirical risk} respectively as
\begin{align}
    R^{\arc}(\bTheta) &\coloneqq \frac{1}{N}\Earg{\sum_{j=1}^N(\hat{y}_{\arc}(\bp^{(i)};\bTheta)_j - y^{(i)}_j)^2} = \frac{1}{N}\Earg{\twonorm{\hat{\by}_{\arc}(\bp^{(i)};\bTheta) - \by^{(i)}}^2}, \\
    \hat{R}^{\arc}_{n,N}(\bTheta) &\coloneqq \frac{1}{nN}\sum_{i=1}^n\sum_{j=1}^N\big(\hat{y}_{\arc}(\bp^{(i)};\bTheta)_j - y^{(i)}_j\big)^2,\label{eq:ERM_nN}\\
    \hat{R}^{\arc}_n(\bTheta) &\coloneqq \frac{1}{n}\sum_{i=1}^n\big(\hat{y}_{\arc}(\bp^{(i)};\bTheta)_{j^{(i)}} - y^{(i)}_{j^{(i)}}\big)^2,\label{eq:ERM_n}
\end{align}
where $\{j^{(i)}\}_{i=1}^n$ are i.i.d.\ position indices drawn from $\mathrm{Unif}([N])$.%
The goal is to minimize the population risk $R^{\arc}(\bTheta)$ by minimizing some empirical risk, potentially with weight regularization.
We use three formalizations of learning algorithms to prove our results.

\begin{enumerate}[leftmargin=*]
    \item 
    \textit{Constrained ERM} minimizes an empirical risk $\hat{R}^\arc$ subject to the model parameters belonging on some (e.g., norm-constrained) set $\varTheta$. Concretely, let \[\hat\bTheta \in \argmin_{\bTheta \in \varTheta}\hat{R}^{\arc}(\bTheta).\]
    \Cref{thm:tr_upper_bound} considers constrained ERM algorithms for bounded-weight transformers with point-wise risk $\hat{R}^{\TR}_n(\bTheta)$, and \Cref{thm:rnn_upper_bound} uses $\hat{R}^{\RNN}_n(\bTheta)$ for RNNs. Note that the upper bounds proved for training with point-wise empirical risk $\hat{R}^\arc_n$ readily transfer to training with averaged empirical risk $\hat{R}^\arc_{n,N}$.  
    
    \item 
    \textit{Min-norm $\varepsilon$-ERM} minimizes the norm of the parameters, subject to sufficiently small loss:
    \begin{equation}\label{eq:min_norm_ERM}
        \hat{\bTheta}_{\varepsilon} \in \argmin_{\{\bTheta : \hat{R}^{\arc}(\bTheta) - \min\hat{R}^{\arc} \leq \varepsilon\}}\twonorm{\vect(\bTheta)}.    
    \end{equation}
    \Cref{thm:rnn_lower_bound_smpl} uses min-norm $\varepsilon$-ERM to place a lower bound on the sample complexity of RNNs with $\hat{R}^{\RNN}_n(\bTheta)$.
    The two formulations can be related by letting $\varepsilon_\varTheta$ denote the risk penalty for restricting parameters to $\varTheta$.
    
    \item 
    Beyond ERM, \Cref{thm:mlp_lower_bound} also considers \textit{stationary points} of the averaged or point-wise loss, with $\ell_2$ regularization. This learning algorithm is presented in greater detail in Definition~\ref{def:stat_point_algo}.
\end{enumerate}

\section{Transformers}\label{sec:transformers}
A single-layer Transformer is composed of an attention layer and a fully connected feedforward network that is applied in parallel to the outputs of attention. In the following, we describe our assumptions on the different components of the Transformer architecture.
\paragraph{Positional encoding.}
To break the permutation equivaraince of Transformers, we append positional information to the input tokens. Given a prompt $\bp$, we consider an encoding given by
$$\bZ(\bp) = \begin{pmatrix}
    \bx_1 & \hdots & \bx_N\\
    \enc(1,\bt_1) & \hdots & \enc(N,\bt_N)
\end{pmatrix} \in \reals^{D_e \times N},$$
where $\enc : [N] \times [N]^q \to \reals^{d_{\enc}}$ provides the encoding of the position and of $\bt_i$, and $D_e \coloneqq d + d_{\enc}$. We use $\bz_i$ to refer to the $i$th column above. 
We remark that allowing $\enc$ to take $\bt_i$ as input allows specific encodings of the indices $\bt_i$ that take advantage of the $q\STR$ structure; examples of this
have been considered in prior works~\citep{wang2024transformers}. In practice, we expect such useful encodings to be learned automatically by previous layers in the Transformer. 
We remark that for a fair comparison, in our lower bounds for other architectures we allow \emph{arbitrary processing} of $\bt_i$ in their encoding procedure. 
To specify $\enc$, we use a set of vectors $\{\bomega_i\}_{i=1}^N$ in $\reals^{d_e}$ that satisfy the following property. 

\begin{assumption}\label{assump:enc}
    We have $\abs{\binner{\bomega_i}{\bomega_j}} \leq \frac{1}{2}$ for all $i \neq j$, and $\norm{\bomega_i}^2 = 1$ for all $i$.
\end{assumption}
Such a set of vectors can be obtained e.g., by sampling random Rademacher vectors from the unit cube $\{\pm 1/\sqrt{d_e}\}^{d_e}$, with $d_e = \Theta(\log N)$, which is the scaling we assume throughout the paper. We can now define 
$$\enc(i,\bt_i) = \sqrt{d/q}(\bomega_i,\bomega_{t_{i1}},\hdots,\bomega_{t_{iq}})^\top \in \reals^{(q+1)d_e},$$
hence $d_{\enc} = (q + 1)d_e$ and $D_e = d + (q+1)d_e$.
The $\sqrt{d/q}$ prefactor ensures that $\bx_i$ and $\enc(i,\bt_i)$ will roughly have the same $\ell_2$ norm, resulting in a balanced input to the attention layer.

\paragraph{Multi-head attention.}
Given a sequence $\{\bz_i\}_{i=1}^N$ where $\bz_i \in \reals^{D_e}$ with $D_e$ as the embedding dimension, a single head of attention outputs another sequence of length $N$ in $\reals^{D_e}$, given by
$$f_{\Attn}(\bp;\bW_Q, \bW_K, \bW_V) = \left[\sum_{j=1}^N\bW_V\bz_j\frac{e^{\binner{\bW_Q\bz_i}{\bW_K\bz_j}}}{\sum_{l=1}^N e^{\binner{\bW_Q\bz_i}{\bW_K\bz_l}}}\right]_{i \in [N]}.$$
Where $\bW_K,\bW_Q,\bW_V$ are the key, query, and value projection matrices respectively. We can simplify the presentation by replacing $\bW_Q^\top\bW_K$ with a single parameterizing matrix for query-key projections denoted by $\bW_\QK \in \reals^{D_e \times D_e}$, and absorbing $\bW_V$ into the weights of the feedforward layer. 
This provides us with a simplified parameterization of attention, which we denote by $f_\Attn(\bp;\bW_\QK)$.
This simplification is standard in theoretical works
(see e.g.~\cite{li2023transformers,ahn2023transformers,zhang2024trained,wang2024transformers}).
Our main separation results still apply when maintaining separate trainable projections; the above only simplifies the exposition.

We can concatenate the output of $H$ attention heads with separate key-query projection matrices to obtain a multi-head attention layer with $H$ heads. We denote the output of head $h \in [H]$ with $f_{\Attn}(\bp;\bW_\QK^{(h)})$. The output of the multi-head attention at position $i$ is then given by 
$$f_{\Attn}^{(H)}(\bp;\bW^{(1)}_\QK,\hdots,\bW^{(H)}_\QK)_i = (f_\Attn(\bp;\bW^{(1)}_\QK)_i,\hdots,f_\Attn(\bp;\bW^{(H)}_\QK)_i)^\top \in \reals^{HD_e}.$$
We will denote by $\bTheta_\QK = (\bW^{(1)}_\QK,\hdots,\bW^{(H)}_\QK)$ the parameters of the multi-head attention.

Finally, a two-layer neural network acts on the output of the attention to generate labels. Given input $\bh \in \reals^{HD_e}$, the output of the network is given by
$$f_{\FCNN}(\bh; \ba_\FCNN,\bW_\FCNN,\bb_\FCNN) = \ba_\FCNN^\top \sigma(\bW_\FCNN\bh + \bb_\FCNN),$$
where $\bW_\FCNN \in \reals^{m \times HD_e}$ are the first layer weights, $\bb_\FCNN,\ba_\FCNN \in \reals^m$ are the second layer weights and biases, and $m$ is the width of the layer. We can also use the summarized notation $\bTheta_\FCNN = (\ba_\FCNN,\bW_\FCNN,\bb_\FCNN)$ to refer to the feedforward layer weights.
As a result, the prediction of the transformer at position $i$ is given by
$$\hat{y}_{\TR}(\bp;\bTheta_{\TR})_i = f_{\FCNN}(f^{(H)}_{\Attn}(\bp;\bTheta_\QK)_i;\bTheta_\FCNN),$$
where $\bTheta_{\TR} = (\bTheta_\QK,\bTheta_\FCNN)$ denotes the overall trainable parameters of the Transformer.
We will use the notation $\hat{\by}_{\TR}(\bp;\bTheta_{\TR}) = (\hat{y}_{\TR}(\bp;\bTheta_{\TR})_1,\hdots,\hat{y}_{\TR}(\bp;\bTheta_{\TR})_N)^\top \in \reals^N$ to denote the vectorized output.

\subsection{Limitations of Transformers with Few Heads}
In this section, we will demonstrate that $H \geq \Omega(q)$ is required to learn $q\STR$ models, even from a pure approximation perspective, i.e. with access to population distribution. In contrast to~\cite{amsel2024benefits}, we do not put any assumptions on the rank of the key-query projections, i.e.\ our lower bound applies even when the key-query projection matrix is full-rank.
\begin{proposition}\label{prop:tr_lower_bound_improved}
    Consider a $q\STR$ model where $y_i = \frac{1}{\sqrt{qd}}\sum_{j=1}^q\big(\|{\bx_{t_{ij}}}\|^2 - \Earg{\|{\bx_{t_{ij}}}\|^2}\big)$, $\bx_i \sim \mathcal{N}(0,\bSigma_i)$ such that $\bSigma_i = \ident_d$ for $i < N/2$ and $\bSigma_i = 0$ for $i \geq N/2$. Then, there exists a distribution over $(\bt_i)_{i \in [N]}$ such that for any choice of $\bTheta_\TR$ (including arbitrary $\{\bW^{(h)}_\QK\}_{h \in [H]}$), we have
    $$\frac{1}{N}\Earg{\twonorm{\by - \hat{\by}_{\TR}(\bp;\bTheta_{\TR})}^2} \geq 1 - \frac{(q+d)H}{qd}.$$
\end{proposition}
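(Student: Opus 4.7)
The plan is to lower-bound the MSE by the fraction of $y_i$'s variance that no function of the attention output $\bu_i := f^{(H)}_{\Attn}(\bp;\bTheta_\QK)_i \in \reals^{HD_e}$ can recover. Because $\hat{y}_{\TR}(\bp;\bTheta_{\TR})_i$ is measurable with respect to $\bu_i$, the tower property gives $\mathbb{E}[(y_i - \hat{y}_{\TR}(\bp;\bTheta_{\TR})_i)^2] \geq \mathbb{E}[\Var(y_i\mid\bu_i)] = \mathbb{E}[y_i^2] - \mathbb{E}[\mathbb{E}[y_i\mid\bu_i]^2]$. For the distribution over $\bt$, I would draw each $t_{ij}$ independently and uniformly from $[N]$; using $\Var(\|\bx_j\|^2) = 2d \cdot \mathbf{1}\{j < N/2\}$ and $\mathbb{E}[\mathbf{1}\{t_{ij}<N/2\}]=1/2$, a short computation yields $\mathbb{E}[y_i^2] = 1$. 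Combined with the symmetry of the distribution of $\bp$ in $i$, the claim reduces to showing $\mathbb{E}[\mathbb{E}[y_i\mid\bu_i]^2] \leq (q+d)H/(qd)$ for every $\bTheta$.

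The main step is a Hermite/Wick-type computation. Since $y_i$ is a pure degree-$2$ Hermite polynomial in the active Gaussian subset $\{\bx_j\}_{j<N/2}$, only the degree-$2$ Hermite projection of $\mathbb{E}[y_i\mid\bu_i]$ contributes. Writing the $\bx$-part of each head as $\bv_i^{(h)} = \sum_j \alpha^{(h)}_{ij}(\bx)\bx_j$, and temporarily treating $\alpha$ as $\bx$-independent, the Hermite-$2$ coordinates of the quadratic features $v^{(h)}_{ic} v^{(h')}_{ic'}$ along the direction indexed by $(j,c'')$ equal $\alpha^{(h)}_{ij}\alpha^{(h')}_{ij}$ when $c=c'=c''$, and vanish otherwise. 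The span of these vectors as $(h,h',c)$ vary is therefore at most $Hd$-dimensional, and the target's Hermite-$2$ coordinate vector is proportional to $\beta_i^+ \otimes \mathbf{1}_d$, where $\beta_i^+\in\{0,1\}^{N/2-1}$ indicates $\{t_{ik}:t_{ik}<N/2\}$. Projecting the target onto the predictable subspace and applying Wick's theorem bounds the captured variance by $(2/(qd))\cdot d\cdot\operatorname{tr}(K^{-1}K^+)$, where $K_{hh'}=\langle\alpha^{(h)}_i,\alpha^{(h')}_i\rangle$ and $K^+_{hh'}=\sum_{j\in S_i^+}\alpha^{(h)}_{ij}\alpha^{(h')}_{ij}$; since $K^+\preceq K$, we have $\operatorname{tr}(K^{-1}K^+)\leq H$, delivering the $H/q$ contribution. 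A complementary $H/d$ contribution comes from the positional-encoding channel of the attention output, whose $\bx$-dependence is purely through the softmax; this is bounded by Cauchy--Schwarz against the per-coordinate rank-one structure of the value map together with the constraint $\sum_h \|\alpha^{(h)}_i\|_2^2 \leq H$.

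The principal obstacle is the $\bx$-dependence of the softmax weights $\alpha^{(h)}_{ij}(\bx)$: since $\bu_i$ is not actually a linear projection of $\bx$, the naive trace argument above does not apply verbatim, and in principle higher Hermite components of $\bu_i(\bx)$ could contribute additional Hermite-$2$ mass toward $\mathbb{E}[y_i\mid\bu_i]$. I would resolve this by Hermite-expanding $\alpha^{(h)}_{ij}(\bx)$ and observing that only the zeroth-order term contributes to the Hermite-$1$ part of $\bv_i^{(h)}$, and hence to the leading-order Hermite-$2$ projection of $\mathbb{E}[y_i\mid\bu_i]$. Higher-order softmax corrections contribute to Hermite-$2$ of $y_i$ only through nested bilinear-Hermite terms, which are controlled via Cauchy--Schwarz together with the simplex constraint $\sum_j\alpha^{(h)}_{ij}(\bx) = 1$ and absorbed into the $H/d$ term. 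Assembling both contributions yields $\mathbb{E}[\mathbb{E}[y_i\mid\bu_i]^2] \leq H/q + H/d$, completing the proof.
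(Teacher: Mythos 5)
There is a genuine gap, and it sits exactly where you flagged "the principal obstacle." Your overall reduction — lower-bounding the risk by $\mathbb{E}[\Var(y_i \mid \bu_i)]$ where $\bu_i$ is the attention output — matches the paper's starting point, but your plan never uses the hypothesis $\bSigma_i = 0$ for $i \geq N/2$, and that hypothesis is the crux. The paper sets $\bt_i = (1,\dots,q)$ deterministically for $i \geq N/2$ and restricts the risk to those positions; there the query token $\bx_i$ vanishes, so every attention logit $\binner{\bW_Q\bz_i}{\bW_K\bz_j}$ loses its bilinear term and becomes an \emph{affine} function of the relevant tokens, namely $\binner{\enc(i,\bt_i)}{\bW^{(h,e,x)}_\QK\bx_j}$. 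Consequently the entire prediction at position $i$ is a measurable function of at most $H(q+d)$ \emph{linear} functionals of $\bx_{1:q}$ ($Hq$ logits plus $Hd$ coordinates of the value averages) together with randomness independent of $\bx_{1:q}$, and the exact Gaussian identity $\Var(\|\bx_{1:q}\|^2 \mid \bV\bx_{1:q}) = 2(qd - \mathrm{rank}(\bV))$ finishes the proof. With your choice of drawing every $t_{ij}$ i.i.d.\ uniform and averaging over all positions, the positions $i < N/2$ have active query tokens, the softmax weights depend on $\bx$ through genuinely quadratic logits, and no finite collection of linear functionals determines $\bu_i$.

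The proposed repair — Hermite-expanding $\alpha^{(h)}_{ij}(\bx)$, keeping only the zeroth-order term, and absorbing the rest by Cauchy--Schwarz — does not bound the quantity you need. The issue is that $\mathbb{E}[y_i \mid \bu_i]$ is the $L^2$ projection of $y_i$ onto \emph{all measurable functions} of $\bu_i$, not onto the low-degree Hermite components of $\bu_i$ itself. A nonlinear function of $\bu_i$ can recover degree-2 information about $\bx$ from the higher-degree Hermite components of $\bu_i$ (e.g.\ a cubic coordinate of $\bu_i$ can be inverted to a linear one and then squared), so controlling the Hermite-2 coefficients of the features $v^{(h)}_{ic}v^{(h')}_{ic'}$ does not control the Hermite-2 mass of $\mathbb{E}[y_i\mid\bu_i]$. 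Any correct argument must convert "$\bu_i$ is a function of few linear functionals of a Gaussian (plus independent noise)" into the conditional-variance bound, which is exactly what the paper's zero-token construction delivers and what your setup lacks. (Two smaller points: with i.i.d.\ uniform $t_{ij}$ the collision terms give $\mathbb{E}[y_i^2] = 1 + O(q/N)$ rather than exactly $1$; and a normalization consistent with the stated bound requires the paper's deterministic choice, under which $\mathbb{E}[y_i^2] = 2$ for $i \ge N/2$ and the factor $2$ cancels against $\Var(\|\bx_{1:q}\|^2 \mid \bV\bx_{1:q}) = 2(qd - H(q+d))$.)
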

\begin{remark}
We highlight the importance of the nonlinear dependence of $y_i$ on $\bx$ for the above lower bound. In particular, for the sparse token averaging task introduced in~\cite{sanford2023representational}, a single-head attention layer with a carefully constructed embedding suffices for approximation.
\end{remark}

The above proposition implies that given sufficiently large dimensionality $d\gg q$, approximation alone necessitates at least $H = \Omega(q)$ heads. In Appendix~\ref{app:tr_few_heads}, we present the proof of Proposition~\ref{prop:tr_lower_bound_improved}, along with Proposition~\ref{prop:tr_lower_bound} which establishes an exact lower bound $H \geq q$ for all $d \geq 1$, at the expense of additional restrictions on the query-key projection matrix.

\subsection{Learning Guarantees for Multi-Head Transformers}
We consider the following parameter class
$\varTheta_{\TR} = \left\{\twonorm{\vect(\bTheta)} \leq R\right\}$
and provide a learning guarantee for empirical risk minimizers over $\varTheta_{\TR}$, with its proof deferred to Appendix~\ref{app:tr_proofs}.
\begin{theorem}\label{thm:tr_upper_bound}
    Let $\hat{\bTheta} = \argmin_{\bTheta \in \varTheta_{\TR}}\hat{R}^{\TR}_{n}(\bTheta)$ and $m = m_g$. Suppose we set $H = q$ and $R^2 = \tilde{\Theta}(r_a^2/m_g + m_gr_w^2 + q^2/d)$. Under~\Cref{assump:data,assump:fcnn_approx,assump:enc}, we have
    $$R^{\TR}(\hat{\bTheta}_n) \lesssim \varepsilon_\FCNN + \tilde{\mathcal{O}}\left(C_1\sqrt{\frac{m_gq(d + q) + q^3 + qd^2}{n}}\right)$$
    where $C_1=R^2qd$, with probability at least $1 - n^{-c}$ for some absolute constant $c > 0$.
\end{theorem}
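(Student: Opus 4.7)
The plan is a standard two-step empirical risk analysis: (i) exhibit a specific $\bTheta^\star \in \varTheta_\TR$ whose population risk is at most $\varepsilon_\FCNN$ up to negligible terms, and (ii) uniformly bound the generalization gap $\sup_{\bTheta \in \varTheta_\TR}|R^{\TR}(\bTheta) - \hat{R}^{\TR}_n(\bTheta)|$ via covering numbers and Rademacher complexity. Combining these via the standard ERM decomposition $R^{\TR}(\hat{\bTheta}) \leq R^{\TR}(\bTheta^\star) + 2\sup_{\bTheta \in \varTheta_\TR}|R^{\TR}(\bTheta) - \hat{R}^{\TR}_n(\bTheta)|$ will yield the claimed bound, with the first term contributing $\varepsilon_\FCNN$ and the second contributing the $\tilde{\mathcal{O}}(\cdot)$ correction.

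\textbf{Approximation.} For each head $h \in [q]$ I construct $\bW_{\QK}^{(h)\star} \in \reals^{D_e \times D_e}$ as $\bP_h^\top \bigl(\tfrac{\tau q}{d}\,\ident_{d_e}\bigr)\bP_0$, where $\bP_0$ extracts the $\bomega_j$ block from the key $\bz_j$, $\bP_h$ extracts the $\bomega_{t_{ih}}$ block from the query $\bz_i$, and $\tau = c\log N$ is a temperature. Unwinding the $\sqrt{d/q}$ prefactor in $\enc$, this produces the bilinear logit $\bz_i^\top \bW_{\QK}^{(h)\star}\bz_j = \tau \langle \bomega_{t_{ih}}, \bomega_j \rangle$. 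By Assumption~\ref{assump:enc} the logit gap between $j = t_{ih}$ and any other index is at least $\tau/2$, so softmax places mass $1 - N^{-\Omega(1)}$ on $t_{ih}$; on the high-probability event that $\|\bx_j\|_2 \leq \sqrt{Cd\log(nN)}$ for all $j$, the $h$-th head thus returns $\bz_{t_{ih}}$ up to error $N^{-\Omega(1)}\sqrt d$, and concatenation places $(\bx_{t_{i1}},\ldots,\bx_{t_{iq}})$ into designated coordinates of the multi-head output. Setting $\bTheta^\star_\FCNN = (\ba_g, \tilde{\bW}_g, \bb_g)$ with $\tilde{\bW}_g$ the zero-padded version of $\bW_g$ from Assumption~\ref{assump:fcnn_approx}, the prediction at position $i$ approximates $g(\bx_{t_{i1}},\ldots,\bx_{t_{iq}})$ to error $\varepsilon_\FCNN + N^{-\Omega(1)}$, giving $R^{\TR}(\bTheta^\star) \lesssim \varepsilon_\FCNN$. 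The norm accounting gives $\|\bTheta^\star_\FCNN\|_2^2 \leq r_a^2/m_g + m_g r_w^2$ by Assumption~\ref{assump:fcnn_approx}, and $\|\bTheta^\star_\QK\|_\mathrm{F}^2 \leq q \cdot \tau^2 q^2 d_e/d^2 = \tilde{\mathcal{O}}(q^3/d^2) \leq \tilde{\mathcal{O}}(q^2/d)$ in the sparse regime $q \leq d$, fitting the budget $R^2$.

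\textbf{Generalization.} On the event $\mathcal{E}$ that $\|\bx_i^{(k)}\|_2 \leq \sqrt{Cd\log(nN)}$ and labels are polylogarithmically bounded uniformly over $k \in [n]$ and $i \in [N]$ (which holds with probability $\geq 1 - n^{-c}$ by Assumption~\ref{assump:data} and a union bound), softmax outputs are convex combinations of tokens with $\|\bz_j\|_2 = \tilde{\mathcal{O}}(\sqrt d)$, so each head produces a vector of norm $\tilde{\mathcal{O}}(\sqrt d)$ and concatenation across $H = q$ heads gives attention-output norm $\tilde{\mathcal{O}}(\sqrt{qd})$; combined with the AM-GM bound $\|\ba_\FCNN\|_2\|\bW_\FCNN\|_\mathrm{op} \leq R^2/2$ on the total norm budget, predictions are bounded polynomially in $R$ times $\sqrt{qd}$. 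Using the identity $\|\softmax(u)-\softmax(v)\|_1 \leq 2\|u-v\|_\infty$ together with the bilinearity of logits in $\bTheta_\QK$, I establish joint parameter-Lipschitzness of the Transformer prediction map with constant polynomial in $R$ and $\sqrt{qd}$, free of $N$. The ambient parameter count is $P = m_g(HD_e + 2) + HD_e^2 = \tilde{\mathcal{O}}(m_g q(d+q) + q(d+q)^2) = \tilde{\mathcal{O}}(m_g q(d+q) + q^3 + qd^2)$ (absorbing $q^2 d$ by AM-GM). Euclidean $\delta$-covers of $\varTheta_\TR$ thus have log-cardinality $\tilde{\mathcal{O}}(P\log(1/\delta))$, and Dudley's entropy integral yields a Rademacher bound of the desired form $\tilde{\mathcal{O}}(R^2 qd\sqrt{P/n})$ for the squared-loss class, matching $C_1\sqrt{(m_g q(d+q) + q^3 + qd^2)/n}$; failure of $\mathcal{E}$ is absorbed into the $n^{-c}$ probability.

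\textbf{Main obstacle.} The delicate technical point is controlling the Lipschitz constant of the softmax attention in $\bTheta_\QK$ uniformly over $\varTheta_\TR$ without incurring a polynomial-in-$N$ factor. A naive bound on $\partial\softmax/\partial\bW_\QK$ can scale with $N$ when the logits are highly skewed, and this is exactly what must be avoided to preserve the polylogarithmic $N$ dependence. The fix is to leverage the global constraint $\|\bTheta_\QK\|_\mathrm{F} \leq R$ together with $\|\bz_j\|_2 = \tilde{\mathcal{O}}(\sqrt d)$ on $\mathcal{E}$ to keep the logit range controlled, and then apply the dimension-free softmax inequality above to decouple the sensitivity from the sequence length. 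This is the one step where the analysis must be length-aware; all remaining ingredients reduce to standard chaining over a bounded-norm parameter class.
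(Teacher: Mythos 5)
Your proposal follows essentially the same route as the paper's proof: the same block-structured $\bW_{\QK}^{(h)}$ with a $\Theta(\log N)$-scaled identity acting on the relevant positional-encoding block for approximation (matching the paper's Equation~\eqref{eq:W_QK_constr} and Lemma~\ref{lem:tr_approx}), and the same covering-number/chaining generalization analysis built on the $\ell_1$--$\ell_\infty$ softmax Lipschitz inequality and the parameter count $m_g q D_e + q D_e^2$. The one step you gloss over is that the \emph{population} risk is an expectation over unbounded labels and predictions, so the high-probability boundedness event cannot simply be "absorbed into the $n^{-c}$ probability"; the paper resolves this by working with a truncated loss class and separately bounding $R^{\TR}-R^{\TR}_\tau$ via Cauchy--Schwarz and sub-Weibull tail bounds (Lemma~\ref{lem:loss_truncated}), a routine addition under Assumption~\ref{assump:data}.
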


\begin{wrapfigure}{r}{0.37\textwidth}  
    \centering
    \includegraphics[width=0.4\textwidth,keepaspectratio]{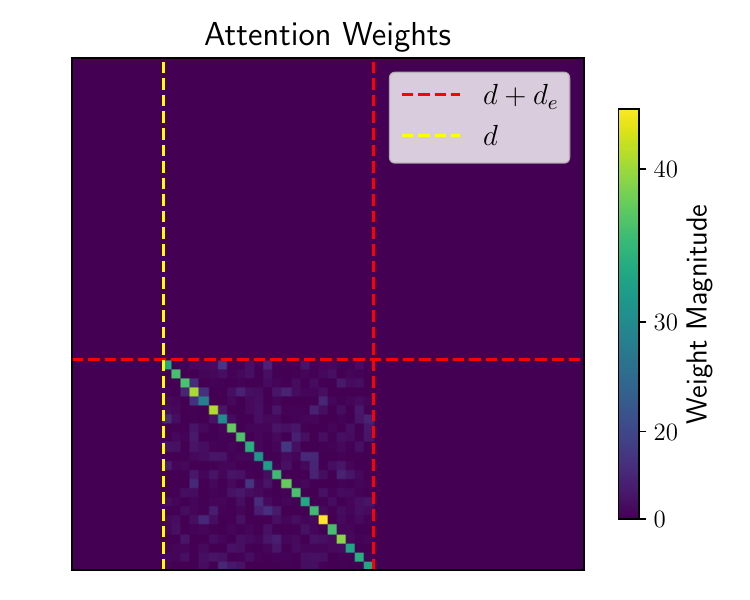}
    \vspace{-6mm}
    \caption{\small The trained attention weights $\bW_Q^\top\bW_K$ match our theoretical construction, see Equation~\eqref{eq:W_QK_constr}. We use the $1\STR$ setup of Figure~\ref{fig:exp_n_vs_N} with $N=100$.}
    \label{fig:attn_weight}
\end{wrapfigure}
We make the following remarks. 
\begin{itemize}[leftmargin=*]
\item First, the sample complexity above depends on $N$ only up to log factors as desired. Second, we can remove the $C_1$ factor by performing a clipping operation with a sufficiently large constant on the Transformer output. Note that the first and second terms in the RHS above denote the approximation and estimation errors respectively. Extending the above guarantee to cover $m \geq m_g$ and $H \geq q$ is straightforward.
\item This bound provides guidance on the relative merits of scaling the parameter complexity of the feedforward versus the attention layer, which remains an active research area related to Transformer scaling laws~\citep{he2024matterstransformersattentionneeded,jelassi2024mixtureparrotsexpertsimprove}, by highlighting the trade-off between the two in achieving minimal generalization error.
Concretely, $m_g \gg d + q$ represents a regime where the complexity is dominated by the feedforward layer learning the downstream task $g$, while $m_g \ll d + q$ signifies dominance of the attention layer learning to retrieve the relevant tokens.
\end{itemize}
\WFclear
\begin{itemize}[topsep=0pt,leftmargin=*]
    \item By incorporating additional structure in the ERM solution, it is possible to obtain improved sample complexities. 
    A close study of the optimization dynamics may reveal such additional structure in the solution reached by gradient-based methods, pushing the sample complexity closer to the information-theoretic limit of $\Omega(qd)$. Figure~\ref{fig:attn_weight} demonstrates that the attention weights achieved through standard optimization of a Transformer match our theoretical constructions (see Equation~\ref{eq:W_QK_constr}), even while maintaining separate $\bW_Q$ and $\bW_K$ during training. We leave the study of optimization dynamics and the resulting sample complexity for future work.
\end{itemize}

\section{Feedforward Neural Networks (FFNs)}\label{sec:ffn}

In this section, we consider a general formulation of a feedforward network. Our only requirement will be that the first layer performs a fully-connected projection. 
The subsequent layers of the network can be arbitrarily implemented, e.g.\ using attention blocks or convolution filters. 
Specifically, the FFN will implement the mapping $\bp \mapsto f(\bT, \bW\bx)$ where $\bW \in \reals^{m_1 \times Nd}$ is the weight matrix in the first layer, $\bx = (\bx_1^\top,\hdots,\bx_N^\top)^\top \in \reals^{Nd}$, and $f : [N]^{qN} \times \reals^{m_1} \to \reals^N$ implements the rest of the network. 
Unlike the Transformer architecture, here we give the network full information of $\bT = (\bt_1,\hdots,\bt_N)$, and in particular the network can implement arbitrary encodings of the position variables $\bt_1,\hdots,\bt_N$. 
This formulation covers usual approaches where encodings of $\bt$ are added to or concatenated with $\bx$.

For our negative result on feedforward networks, we can further restrict the class of $q\STR$ models, and only look at simple models, where a single set of indices $\bt$ is shared at all positions. Note that for simple-$q\STR$ models, $\hat{R}_n$ of~\eqref{eq:ERM_n} and $\hat{R}_{n,N}$ of~\eqref{eq:ERM_nN} will be equivalent, thus we only consider one of them. Additionally, the lower bound of this section holds regardless of the loss function used for training. Therefore, for some arbitrary loss $\ell : \reals \times \reals \to \reals$, we define the empirical risk of the FFN as
$$\hat{\mathcal{L}}^{\FFN}(f,\bW) \coloneqq \frac{1}{nN}\sum_{i=1}^n\sum_{j=1}^N\ell(y^{(i)}_j,f(\bT^{(i)},\bW\bx^{(i)})_j),$$
where $\bT^{(i)} = (\bt^{(i)}_1,\hdots,\bt^{(i)}_N)$. We still use $R^\FFN(f,\bW)$ for expected squared loss. Our lower bound covers a broad set of algorithms, characterized by the following definition.
\begin{definition}\label{def:stat_point_algo}
    Let $\mathcal{A}_{\mathrm{SP}}$ denote the set of algorithms that return a stationary point of the regularized empirical risk of an FFN. Specifically, for every $A \in \mathcal{A}$, $A(S_n)$ returns $f_{A(S_n)}$ and $\bW_{A(S_n)}$, such that
    $$\nabla_{\bW} \hat{\mathcal{L}}^{\FFN}(f_{A(S_n)},\bW_{A(S_n)}) + \lambda \bW_{A(S_n)} = 0,$$
    for some $\lambda > 0$ depending on $A$. $S_n$ above denotes the training set.
    Let $\mathcal{A}_{\mathrm{ERM}}$ denote the set of algorithms that return the min-norm approximate ERM. Specifically, every $A \in \mathcal{A}_{\mathrm{ERM}}$ returns
    $$A(S_n) = \argmin_{\{f,\bW : \hat{\mathcal{L}}^{\FFN}(f,\bW) \leq \varepsilon\}}\fronorm{\bW},$$
    for some $\varepsilon \geq 0$. Define $\mathcal{A} \coloneqq \mathcal{A}_{\mathrm{SP}} \cup \mathcal{A}_{\mathrm{ERM}}$.
\end{definition}
In particular, $\mathcal{A}$ goes beyond constrained ERM in that it also includes the (ideal) output of first-order optimization algorithms with weight decay, or ERM with additional $\ell_2$ penalty on the weights. 
The following minimax lower bound shows that all algorithms in class $\mathcal{A}$ fail to learn even the subset of simple-$q\STR$ models with a sample complexity sublinear in $N$.

\begin{theorem}\label{thm:mlp_lower_bound}
    Suppose $\bx \sim \mathcal{N}(0,\ident_{Nd})$, and consider the simple-$1\STR$ model with $t_{i1} = t_1$ for all $i \in [N]$, where $t_1$ is drawn independently and uniformly in $[N]$, and a linear link function, i.e. $y = \binner{\bu}{\bx_{t_1}}$ for some $\bu \in \mathbb{S}^{d-1}$. Let $\mathcal{A}$ be the class of algorithms in Definition~\ref{def:stat_point_algo}. Then,
    $$\inf_{A \in \mathcal{A}}\sup_{\bu \in \mathbb{S}^{d-1}}R^\FFN(f_{A(S_n)},\bW_{A(S_n)}) \geq 1 - \frac{n}{Nd},$$
    with probability $1$ over the training set $S_n$.
\end{theorem}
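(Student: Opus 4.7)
The plan is to show that every algorithm $A \in \mathcal{A}$ is forced to place the rows of its first-layer matrix $\bW_{A(S_n)}$ inside the training-input subspace $V_n := \vspn\{\bx^{(1)},\ldots,\bx^{(n)}\} \subseteq \reals^{Nd}$, which has dimension at most $n$. Combined with a Gaussian Bayes-risk calculation and an averaging trick over $\bu$, this low-dimensional bottleneck will deliver the factor $n/(Nd)$. All three statements below hold deterministically given the training inputs, so the ``probability $1$'' in the theorem is automatic.

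\textbf{Step 1 (row-space containment).} This is the unifying observation across $\mathcal{A}_{\mathrm{SP}}$ and $\mathcal{A}_{\mathrm{ERM}}$. Since $f(\bT^{(i)},\bW\bx^{(i)})_j$ depends on $\bW$ only through the vector $\bW\bx^{(i)} \in \reals^{m_1}$, the chain rule yields
\[\nabla_{\bW}\hat{\mathcal{L}}^{\FFN}(f,\bW) \;=\; \frac{1}{nN}\sum_{i=1}^{n}\bv^{(i)}(f,\bW)\,(\bx^{(i)})^{\top}\]
for some vectors $\bv^{(i)}(f,\bW) \in \reals^{m_1}$, so every row of the gradient lies in $V_n$. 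For $A \in \mathcal{A}_{\mathrm{SP}}$, the stationarity condition $\bW = -\lambda^{-1}\nabla_{\bW}\hat{\mathcal{L}}^{\FFN}$ transfers the row structure directly to $\bW$. For $A \in \mathcal{A}_{\mathrm{ERM}}$, decompose $\bW = \bW_{\parallel} + \bW_{\perp}$ relative to $V_n$ row-wise: since $\bW\bx^{(i)} = \bW_{\parallel}\bx^{(i)}$ on training data, zeroing $\bW_{\perp}$ preserves the constraint $\hat{\mathcal{L}}^{\FFN} \leq \varepsilon$ but strictly reduces $\fronorm{\bW}^{2} = \fronorm{\bW_{\parallel}}^{2} + \fronorm{\bW_{\perp}}^{2}$ unless $\bW_{\perp}=0$; hence the min-norm solution also has rows in $V_n$.

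\textbf{Step 2 (Bayes-risk reduction).} Row containment implies $\bW\bx = \bW P_{V_n}\bx$ for any test input, so the predictor is measurable with respect to $\sigma(\bT,P_{V_n}\bx)$ and the MSE is lower bounded by the corresponding conditional variance. Writing $y_j = \binner{\bE_{t_1}\bu}{\bx}$ where $\bE_t \in \reals^{Nd \times d}$ extracts the $t$-th block, and using that $\bx \sim \mathcal{N}(0,\ident_{Nd})$ is independent of $\bT$, Gaussian conditioning gives $\mathbb{E}[y_j \mid \bT,P_{V_n}\bx] = \binner{\bE_{t_1}\bu}{P_{V_n}\bx}$ with per-position residual $\twonorm{(\ident-P_{V_n})\bE_{t_1}\bu}^{2} = 1 - \twonorm{P_{V_n}\bE_{t_1}\bu}^{2}$. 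Averaging over $t_1 \sim \Unif([N])$ yields $R^{\FFN}(f,\bW) \geq 1 - \tfrac{1}{N}\sum_{t=1}^{N}\twonorm{P_{V_n}\bE_{t}\bu}^{2}$.

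\textbf{Step 3 (minimax via randomized prior).} To pass to a minimax bound, use $\sup_{\bu \in \mathbb{S}^{d-1}} \geq \mathbb{E}_{\bu \sim \Unif(\mathbb{S}^{d-1})}$ together with the identities $\mathbb{E}_{\bu}[\bu\bu^{\top}] = \ident_{d}/d$ and $\sum_{t=1}^{N}\bE_{t}\bE_{t}^{\top} = \ident_{Nd}$ to conclude
\[\mathbb{E}_{\bu}\sum_{t=1}^{N}\twonorm{P_{V_n}\bE_{t}\bu}^{2} \;=\; \frac{\mathrm{tr}(P_{V_n})}{d} \;\leq\; \frac{n}{d},\]
giving $\inf_{A\in\mathcal{A}}\sup_{\bu}R^{\FFN} \geq 1 - n/(Nd)$. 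The main obstacle is the row-space argument in Step 1: it works uniformly over $\mathcal{A}$ only because both algorithm classes are controlled by gradients of a function of $\bW\bx^{(i)}$, which are rank-$1$ in $\bx^{(i)}$. Steps 2--3 are then a standard Gaussian Bayes-risk computation combined with the classical averaging trick for converting worst-case bounds into expected ones.
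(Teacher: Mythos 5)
Your proposal is correct and follows essentially the same route as the paper's proof: the row-space containment argument for both $\mathcal{A}_{\mathrm{SP}}$ (gradient rank-one structure plus stationarity) and $\mathcal{A}_{\mathrm{ERM}}$ (orthogonal decomposition and norm minimality), followed by a Gaussian conditional-variance lower bound and averaging over a uniform prior on $\bu$ and over $t_1$. The only difference is cosmetic — you evaluate $\mathbb{E}_{\bu}\sum_t\twonorm{P_{V_n}\bE_t\bu}^2$ in a single trace computation whereas the paper averages over $\bu$ and $t_1$ sequentially — and both yield the identical bound $1 - n/(Nd)$.
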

\begin{remark}
    The above lower bound implies that learning the simple $1\STR$ model with FFNs requires at least $Nd$ samples. Note that here we do not have any assumption on $m_1$, i.e. the network can have infinite width. This is a crucial difference with the lower bounds in~\cite{sanford2023representational,wang2024transformers}, where the authors only prove a computational lower bound, i.e. arguing that a similar model cannot be learned unless $m_1 \geq Nd$.
\end{remark}

The main intuition is that from the stationarity property of Definition~\ref{def:stat_point_algo}, the rows of the trained $\bW$ will always be in the span of the training samples $\bx^{(i)}$ for $i \in [n]$. This is an $n$-dimensional subspace, and the best predictor that only depends on this subspace still has a loss determined by the variance of $y$ conditioned on this subspace. By randomizing the target direction $\bu$, we can observe that the label $y$ can depend on all $Nd$ target directions. As a result, as long as $n < Nd$, this variance will be bounded away from zero, leading to the failure of FFNs, even with infinite compute/width.
 For the detailed proof, we refer to Appendix~\ref{app:proof_mlp_lower_bound}. 

\section{Recurrent Neural Networks}\label{sec:rnn}

In this section, we first provide positive results for RNNs by proving that they can learn simple-$q\STR$ with a sample complexity only polylogarithmic in $N$, thus establishing a separation in their learning capability from feedforward networks. 
Next, we turn to general $q\STR$, where we provide a negative result on RNNs, proving that to learn such models their sample complexity must scale with $N^{\Omega(1)}$ regardless of model size, making them less statistically efficient than Transformers. 
Throughout this section, we focus on bidirectional RNNs, since the $q\STR$ model is not necessarily causal and the output at position $i$ may depend on future tokens. 

\subsection{RNNs can learn simple-\texorpdfstring{$q\STR$}{qSTR}}\label{sec:rnn_upper_bound}
A bidirectional RNN maintains, for each position in the sequence, a forward and a reverse hidden state, denoted by $(\bh^{\rightarrow}_i)_{i=1}^N$ and $(\bh^{\leftarrow}_i)_{i=1}^{N}$, where $\bh^{\rightarrow}_i,\bh^{\leftarrow}_i \in \reals^{d_h}$.
These hidden states are obtained by initializing $\bh^{\rightarrow}_1 = \bh^{\leftarrow}_{N} = \boldzero_{d_h}$ and recursively applying
\begin{align*}
    \bh^{\rightarrow}_i &= \Pi_{r_h}\big(\bh^{\rightarrow}_{i-1} + f^{\rightarrow}_h(\bh^{\rightarrow}_{i-1},\bz_{i-1};\bTheta^{\rightarrow}_h)\big), \quad \forall i \in \{2,\hdots,N\}\\
    \bh^{\leftarrow}_i &= \Pi_{r_h}\big(\bh^{\leftarrow}_{i+1} + f^{\leftarrow}_h(\bh^{\leftarrow}_{i+1},\bz_{i+1};\bTheta^{\leftarrow}_h)\big), \quad \forall i \in \{1,\hdots,N-1\},
\end{align*}
where $\Pi_{r_h} : \reals^{d_h} \to \reals^{d_h}$ is the projection $\Pi_{r_h}\bh = (1 \wedge r_h/\twonorm{\bh})\bh$,
and $f^{\rightarrow}_h$ and $f^{\leftarrow}_h$ are implemented by feedforward networks, parameterized by $\bTheta^{\rightarrow}_h$ and $\bTheta^{\leftarrow}_h$ respectively.
Recall $\bz_i = (\bx_i^\top,\enc(i,\bt_i)^\top)^\top$ is the encoding of $\bx_i$. 
We remark that while we add $\Pi_{r_h}$ for technical reasons, it resembles layer normalization which ensures stability of the state transitions on very long inputs;  
a more involved analysis can replace $\Pi_{r_h}$ with standard formulations of layer normalization.
Additionally, directly adding $\bh^{\rightarrow}_{i-1}$ and $\bh^{\leftarrow}_{i+1}$ to the output of transition functions represents residual or skip connections. The output at position $i$ is generated by
$$y_i = f_y(\bh^{\rightarrow}_i,\bh^{\leftarrow}_i,\bz_i;\bTheta_y),$$
which is another feedforward network.
Specifically, we consider an RNN with deep transitions \citep{pascanu2013construct} and let
$f^{\rightarrow}_h(\cdot;\bTheta^{\rightarrow}_h)$ be an $L_h$ layer feedforward network, given by
$$f^{\rightarrow}_h(\cdot;\bTheta^{\rightarrow}_h) = \bW^\rightarrow_{L_h}\sigma\big(\bW^\rightarrow_{L_h-1} \hdots \sigma(\bW^\rightarrow_2\sigma(\bW^\rightarrow_1(\cdot) + \bb^\rightarrow_1) + \bb^\rightarrow_2) \hdots + \bb^\rightarrow_{L_h - 1}\big),$$
therefore $\bTheta^{\rightarrow}_h = (\bW^\rightarrow_1,\bb^\rightarrow_1,\hdots,\bW^\rightarrow_{L_h - 1},\bb^{\rightarrow}_{L_h-1},\bW^\rightarrow_{L_h})$. We can similarly define $f^\leftarrow_h(\cdot;\bTheta^{\leftarrow}_h)$ with depth $L_h$, and $f_y(\cdot;\bTheta_y)$ with depth $L_y$. We denote the complete output of the RNN via
$$\hat{\by}_\RNN(\bp;\bTheta_\RNN) = (f_y(\bh^\rightarrow_1,\bh^\leftarrow_1,\bz_1;\bTheta_y),\hdots,f_y(\bh^\rightarrow_N,\bh^\leftarrow_N,\bz_N;\bTheta_y)) \in \reals^N.$$
We now define the constraint set of this architecture. Let
$$\varTheta_\RNN = \Big\{\bTheta \,:\, \twonorm{\vect(\bTheta)} \leq R, \opnorm{\bW^\rightarrow_{L_h}}\hdots \opnorm{\bW^\rightarrow_{1,h}} \leq \alpha_N, \opnorm{\bW^\leftarrow
_{L_h}}\hdots \opnorm{\bW^\leftarrow_{1,h}} \leq \alpha_N\Big\},$$
where $\bW^{\rightarrow}_{1,h}$ contains the first $d_h$ columns of $\bW^\rightarrow_1$, and the conditions above are introduced to ensure $f^\rightarrow_h$ and $f^\leftarrow_h$ are at most $\alpha_N$-Lipschitz with respect to the hidden state input.
One way to meet this requirement is to multiply $\bW^\rightarrow_{1,h}$ by a factor of $\alpha_N/\prod_{l=2}^{L_h}\opnorm{\bW^\rightarrow_l}$ in the forward pass.
Without this Lipschitzness constraint, current techniques for proving uniform RNN generalization bounds will suffer from a sample complexity linear in $N$, see e.g.~\cite{chen2020generalization}.
We have the following guarantee for RNNs learning simple-$q\STR$ models.
\begin{theorem}\label{thm:rnn_upper_bound}
    Let $\hat{\bTheta} = \argmin_{\bTheta \in \varTheta_\RNN}\hat{R}^\RNN_n(\bTheta)$. Suppose~\cref{assump:data,assump:fcnn_approx,assump:enc} hold with the simple-$q\STR$ model, i.e.\ $\bt_i = \bt$ for all $i \in [N]$ and some $\bt$ drawn from $[N]^q$. Then, with $L_h,L_y = \mathcal{O}(1)$, $r_h = \tilde{\Theta}(\sqrt{qd})$, and for any $\alpha_N \leq N^{-1}$, 
    we obtain
    $$R^\RNN(\hat{\bTheta}) \lesssim \varepsilon_\FCNN + \sqrt{\frac{\poly(d,q,m_g,r_a,r_w,\varepsilon_\FCNN^{-1},\log(nN))}{n}},$$
    with probability at least $1 - n^{-c}$ for some absolute constant $c > 0$.
\end{theorem}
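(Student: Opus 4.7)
The proof has the standard approximation-plus-estimation structure. For the \textbf{approximation} side, I would exhibit a concrete $\bTheta^\star\in\varTheta_\RNN$ with $R^\RNN(\bTheta^\star)\lesssim \varepsilon_\FCNN$ by exploiting that under simple-$q\STR$ the relevant indices $\bt$ are broadcast into $\enc(i,\bt)$ at every position. I would take both transition functions $f^\rightarrow_h,f^\leftarrow_h$ to be independent of the hidden state (by setting the block $\bW^\rightarrow_{1,h}$ of the first weight matrix that reads $\bh$ to zero, which makes the product $\opnorm{\bW^\rightarrow_{L_h}}\cdots\opnorm{\bW^\rightarrow_{1,h}}=0\le\alpha_N$ trivially) and have them implement a gated copy $\bz_j \mapsto \sum_{k=1}^q\indic{j=t_k}\,\bx_j\otimes e_k \in\reals^{qd}$. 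The indicator $\indic{j=t_k}$ is obtained by applying a ReLU threshold to $\binner{\bomega_j}{\bomega_{t_k}}$ with offset $1/2$, exploiting the gap guaranteed by~\Cref{assump:enc}, and the subsequent bilinear gating of $\bx_j$ by this indicator is realized by a constant-depth ReLU block of width $\mathrm{poly}(d,q,\log(nN))$. Thanks to the skip connection, the recurrence then degenerates to a running sum, so under the high-probability event $\{\twonorm{\bx_i}\lesssim\sqrt{d\log(nN)}\}$ from~\Cref{assump:data}, $\bh^\rightarrow_i$ and $\bh^\leftarrow_i$ stay within the radius $r_h=\tilde\Theta(\sqrt{qd})$ and together contain the tokens $\{\bx_{t_k}\}_{k:\,t_k\neq i}$. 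The output network $f_y$ combines these hidden states with the current token $\bz_i$ to reconstruct $(\bx_{t_1},\dots,\bx_{t_q})$ exactly at every position, then applies the two-layer ReLU approximation of $g$ guaranteed by~\Cref{assump:fcnn_approx} to incur total error $\mathcal{O}(\varepsilon_\FCNN)$.

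For \textbf{estimation}, I would bound the generalization gap uniformly over $\varTheta_\RNN$ via a covering-number argument on the point-wise loss $(\hat y_\RNN(\bp;\bTheta)_j - y_j)^2$. After truncating inputs and labels at $\tilde{\mathcal{O}}(\sqrt{d})$ and $\tilde{\mathcal{O}}(1)$ respectively using~\Cref{assump:data} and a union bound over the $n$ samples, the main ingredient is a Lipschitz bound on $\bTheta \mapsto \hat y_\RNN(\bp;\bTheta)_j$. Propagating perturbations through the forward recurrence uses that $\bh\mapsto \bh + f^\rightarrow_h(\bh,\bz;\bTheta)$ is $(1+\alpha_N)$-Lipschitz in $\bh$ and that a single application is $\mathrm{poly}(d,q,r_h,R)$-Lipschitz in $\bTheta^\rightarrow_h$, so perturbations applied at every step compound as $\sum_{i=0}^{N-1}(1+\alpha_N)^i\le eN$ by the choice $\alpha_N\le N^{-1}$. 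Since $\varTheta_\RNN$ has $N$-independent ambient dimension (constant depths $L_h,L_y$, and widths fixed by the approximation step via $m_g,d,q,\varepsilon_\FCNN^{-1}$), the log-covering number of the induced function class at scale $\varepsilon$ is of order $\dim(\bTheta)\log(NR/\varepsilon)$. Plugging into Dudley's integral or a single-scale discretization then yields a generalization error of $\sqrt{\mathrm{poly}(d,q,m_g,r_a,r_w,\varepsilon_\FCNN^{-1},\log(nN))/n}$.

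The \textbf{main obstacle} is the interaction, within a single transition block, between the parameter-norm constraint $\twonorm{\vect(\bTheta)}\le R$ and the multiplicative constraint $\opnorm{\bW^\rightarrow_{L_h}}\cdots\opnorm{\bW^\rightarrow_{1,h}}\le\alpha_N$: the latter is exactly what keeps $(1+\alpha_N)^N$ bounded and hence the covering number only $\polylog(N)$, but it also must leave the construction enough expressivity for the approximation argument. Zeroing the hidden-state block of $\bW^\rightarrow_{1,h}$ sidesteps this cleanly for simple-$q\STR$ (a stateless transition suffices because $\bt$ is visible at every position), yet one still has to verify that the remaining weights implementing the gated copy fit within the overall $R$ budget. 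A secondary subtlety is that the theorem controls the \emph{point-wise} empirical risk $\hat R^\RNN_n$, which is essential: an $N$-dimensional output class would absorb an extra factor of $N$ in the covering number and destroy the $\polylog(N)$ scaling, whereas the random index $j^{(i)}$ lets us treat each training sample as a single scalar prediction.
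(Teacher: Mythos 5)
Your overall architecture is the same as the paper's: a stateless transition (zeroing the hidden-state block of $\bW^\rightarrow_{1,h}$ so the Lipschitz constraint is vacuous), a gated copy $\bz_j \mapsto (\bx_j\indic[t_l=j])_{l\in[q]}$ accumulated by the skip connection, an output head that feeds the reconstructed $(\bx_{t_1},\dots,\bx_{t_q})$ into the network of Assumption~\ref{assump:fcnn_approx}, and a parameter-count covering argument in which the $(1+\alpha_N)^N\le e$ propagation keeps $N$ out of everything but logarithms. Your remarks on the role of the point-wise risk and on the tension between the norm and Lipschitz constraints are also on target.

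There is, however, one genuine gap in the approximation step, and it is precisely the point the paper flags as the core difficulty. The transition block cannot compute the gate exactly: both the inner product $\binner{\bomega_j}{\bomega_{t_l}}$ and the subsequent product of $\bx_j$ with the thresholded scalar are bilinear maps that a finite-width ReLU network only realizes up to some error $\varepsilon$ \emph{at every position} $j$, not only at $j=t_l$. Your claim that ``the recurrence then degenerates to a running sum'' and that the hidden state stays within $r_h=\tilde\Theta(\sqrt{qd})$ implicitly assumes the transition outputs exactly $\boldzero_d$ whenever $t_l\neq j$; with only a uniform $\varepsilon$-approximation, the forward pass accumulates error (and norm) of order $N\varepsilon$, so making the final error $\mathcal{O}(\sqrt{\varepsilon_\FCNN})$ would force $\varepsilon=o(1/N)$ and hence width, weight norms, and covering numbers that are $\poly(N)$ — destroying the claimed $\polylog(N)$ sample complexity. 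The paper resolves this by appending one more ReLU layer implementing the dead-zone map $z\mapsto \sigma(z-\varepsilon/2)-\sigma(-z-\varepsilon/2)$, which sends every coordinate with magnitude at most $\varepsilon/2$ to \emph{exactly} zero; since for each $l$ the event $t_l=j$ occurs for at most one $j$, the per-sequence error then collapses from $N\varepsilon$ to $\sqrt{q}\,\varepsilon$ (see the discussion around Equations~(\ref{eq:f_h_loose_bound})--(\ref{eq:f_h_error}) and Proposition~\ref{prop:hidden_state_approx}). Without this rectification, or an equivalent mechanism guaranteeing exact zeros off the matching positions, your approximation argument does not go through.
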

As desired, the above sample complexity depends on $N$ only up to polylogarithmic factors. In particular, we can choose $\alpha_N = 0$ and fix $\bW^\rightarrow_{1,h} = \bW^\leftarrow_{1,h} = \boldzero$, which would simplify the network parametrization. Namely, in our construction $f^\rightarrow$ and $f^\leftarrow$ do not need to depend on $\bh^\rightarrow$ and $\bh^\leftarrow$ respectively. The dimension of RNN weights, implicit in the formulation above, must have a similar polynomial scaling as evident by the proof of the above theorem in Appendix~\ref{app:proof_rnn}.

\subsection{RNNs cannot learn general \texorpdfstring{$q\STR$}{qSTR}}
For our lower bound, we will consider a broad class of recurrent networks, without restricting to a specific form of parametrization. Specifically, we consider bidirectional RNNs chracterized by
\begin{align*}
    \bh^\rightarrow_{i+1} &= \proj_{r_h}\big(f^\rightarrow_h(\bh^\rightarrow_i,\bx_i,\bt_i,i)\big), \quad \forall \, i \in \{1,\hdots,N-1\}\\
    \bh^\leftarrow_{i-1} &= \proj_{r_h}\big(f^\leftarrow_h(\bh^\leftarrow_i,\bx_i,\bt_i,i)\big), \quad \forall \, i \in \{2,\hdots,N\}\\
    y_i &= f_y(\bU^\rightarrow\bh^\rightarrow_i,\bU^\leftarrow\bh^\leftarrow_i,\bx_i,\bt_i,i), \quad \forall i \in [N]
\end{align*}
where $f_y : \reals^{d_h} \times \reals^{d_h} \times \reals^d \times [N]^{q+1} \to \reals$, $f^\rightarrow_h,f^\leftarrow_h : \reals^{d_h} \times \reals^d \times [N]^{q+1} \to \reals^{d_h}$, $\bU^\rightarrow,\bU^\leftarrow\in \reals^{d_h \times d_h}$, $d_h$ is the width of the model, and $r_h > 0$ is some constant. Moreover, $\proj_{r_h} : \reals^{d_h} \to \reals^{d_h}$ is any mapping that guarantees $\twonorm{\proj_{r_h}(\cdot)} \leq r_h$. 
As mentioned before, this operation mirrors the layer normalization to ensure that $\bh_i$ remains stable. 
Further, we assume $f_y(\cdot, \bx, \bt)$ is $\mathfrak{L}/r_h$-Lipschitz for all $\bx \in \reals^d$ and $\bt \in [N]^q$.
This formulation covers different variants of (bidirectional) RNNs used in practice such as LSTM and GRU, and includes the RNN formulation of Section~\ref{sec:rnn_upper_bound} as a special case.
Define $\bU \coloneqq (\bU^\rightarrow, \bU^\leftarrow) \in \reals^{d_h \times 2d_h}$ for conciseness.
Note that in practice $f_y,f^\rightarrow_h,f^\leftarrow_h$ are determined by additional parameters. 
However, the only weight that we explicitly denote in this formulation is $\bU$, since our lower bound will directly involve this projection, and we keep the rest of the parameters implicit for our representational lower bound.

Our technique for proving a lower bound for RNNs differs significantly from that of FFNs, and in particular we will control the representation cost of the $q\STR$ model, i.e., a lower bound on the norm of $\bTheta_{\RNN}$.

We will now present the RNN lower bound, with its proof deferred to Appendix~\ref{app:proof_rnn_lower_bound}.
\begin{proposition}\label{prop:rnn_lower_bound}
    Consider the $1\STR$ model where $\bx \sim \mathcal{N}(0,\ident_{Nd})$ with a linear link function, i.e.\ $y_j = \binner{\bu}{\bx_{t_j}}$ for some $\bu \in \mathbb{S}^{d-1}$. Further, $t_i$ is drawn independently from the rest of the prompt and uniformly from $[N]$ for all $i \in [N]$. Then, there exists an absolute constant $c > 0$, such that
    $$\frac{1}{N}\Earg{\norm{\by - \hat{\by}_{\RNN}(\bp)}^2} \leq c,$$
    implies
    $$d_h\geq \Omega\Big(\frac{N}{\log(1 + \mathfrak{L}^2\opnorm{\bU}^2)}\Big), \quad \text{and} \quad \opnorm{\bU}^2 \geq \Omega\Big(\frac{N}{\mathfrak{L}^2\log(1 + d_h)}\Big).$$
\end{proposition}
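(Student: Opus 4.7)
The plan is to use a rate-distortion argument to show that the RNN's prediction at a single position cannot carry enough information about the Gaussian target, unless the hidden size $d_h$ and the output projection norm $\opnorm{\bU}$ are sufficiently large.

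First I would reduce to a single position via averaging. Since $\frac{1}{N}\Earg{\norm{\by - \hat{\by}_{\RNN}(\bp)}^2} \leq c$, Markov's inequality gives some position $j^\ast \in [N]$ with per-position MSE $\Earg{(\hat{y}_{j^\ast} - y_{j^\ast})^2} \leq 2c$. At this position, the RNN's prediction is $\hat{y}_{j^\ast} = f_y(V^{\rightarrow}, V^{\leftarrow}, \bx_{j^\ast}, t_{j^\ast}, j^\ast)$ where $V^{\rightarrow} := \bU^{\rightarrow}\bh^{\rightarrow}_{j^\ast}$ and $V^{\leftarrow} := \bU^{\leftarrow}\bh^{\leftarrow}_{j^\ast}$ each lie in the ball of radius $r_h \opnorm{\bU}$ in $\reals^{d_h}$. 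Crucially, neither hidden state depends on $t_{j^\ast}$ (the bidirectional recursion consumes $t_{j^\ast-1}$ and $t_{j^\ast+1}$ into the update rules, not $t_{j^\ast}$ itself). I therefore view the random function $t \mapsto \hat{y}_{j^\ast}|_{t_{j^\ast}=t}$ as a random vector $\hat{\bY} \in \reals^N$ lying in a Lipschitz image of a ball in $\reals^{2d_h}$.

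Next I would apply the rate-distortion theorem. Since $t_{j^\ast} \sim \Unif([N])$ independently of the rest of the prompt, $\Earg{(\hat{y}_{j^\ast} - y_{j^\ast})^2} = \frac{1}{N}\Earg{\norm{\hat{\bY} - \bY}^2}$ where $\bY = (\binner{\bu}{\bx_t})_{t \in [N]}$. For fixed $\bu \in \mathbb{S}^{d-1}$ and $\bx_t \sim \mathcal{N}(0,\ident_d)$, the vector $\bY$ is exactly $\mathcal{N}(0,\ident_N)$. Shannon's rate-distortion theorem then guarantees $I(\bY;\hat{\bY}) \geq \frac{N}{2}\log(1/(2c)) \gtrsim N$ for any sufficiently small constant $c$. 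I would then upper bound $I(\bY;\hat{\bY})$ via two complementary metric-entropy estimates for the support $S \subseteq \reals^N$ of $\hat{\bY}$, which is a Lipschitz image of a ball in $\reals^{2d_h}$ under a map whose coordinates are $(\mathfrak{L}/r_h)$-Lipschitz. The direct covering via the $d_h$-dimensional parameterization gives $\log N(S,\delta) \lesssim d_h \log(1 + \sqrt{N}\mathfrak{L}\opnorm{\bU}/\delta)$; setting $\delta = \sqrt{cN}$ and combining with $I(\bY;\hat{\bY}) \leq H(\hat{\bY}^{\text{quant}}) \leq \log N(S,\delta)$ and the rate-distortion lower bound yields $d_h \log(1+\mathfrak{L}^2\opnorm{\bU}^2) \gtrsim N$, which rearranges to the first stated bound. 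A complementary estimate exploits the fact that each coordinate of $\hat{\bY}$ has range at most $2\mathfrak{L}\opnorm{\bU}$: combining this coordinate-wise boundedness with the $d_h$-dimensional parameterization via a chaining / Gaussian-width argument should yield $\mathfrak{L}^2\opnorm{\bU}^2\log(1+d_h) \gtrsim N$, which is the second stated bound.

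The main obstacle is the sharpness of the second entropy estimate. A direct Talagrand-style Lipschitz-composition bound only gives $w(S) \lesssim \sqrt{Nd_h}\,\mathfrak{L}\opnorm{\bU}$, which via Sudakov minoration yields only $\opnorm{\bU}^2 \gtrsim N/(\mathfrak{L}^2 d_h)$ --- strictly weaker than the claimed $\opnorm{\bU}^2 \gtrsim N/(\mathfrak{L}^2 \log(1+d_h))$ in the regime $d_h \gg \log d_h$. Replacing the $\sqrt{d_h}$ factor by $\sqrt{\log(1+d_h)}$ requires a more delicate chaining that exploits the $\ell^\infty$-boundedness of each coordinate of $\hat{\bY}$, as opposed to only the global $\ell^2$-norm bound; this is essentially a Dudley-style entropy integral sharpened in the small-ball regime, and is the technical heart of the lower bound.
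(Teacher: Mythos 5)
Your overall architecture matches the paper's: reduce to a single position $j$ by averaging, observe that $\bh^{\rightarrow}_j,\bh^{\leftarrow}_j$ do not depend on $t_j$ so the map $t\mapsto \hat{y}_j|_{t_j=t}$ factors through the bottleneck $\bU\bh_j$ lying in a ball of radius $O(r_h\opnorm{\bU})$ in $\reals^{2d_h}$, followed by a coordinatewise $(\mathfrak{L}/r_h)$-Lipschitz map into $\reals^{N}$, and then argue that a set of such low metric entropy cannot approximate a standard Gaussian in $\reals^{N}$ to constant per-coordinate distortion. The paper implements the counting step with a probabilistic packing argument (draw $e^{\Theta(N)}$ i.i.d.\ targets, show they are pairwise $\Omega(\sqrt{N})$-separated and mostly well-approximated, hence their preimages form an $\Omega(r_h/\mathfrak{L})$-packing of the bottleneck ball), whereas you use rate--distortion plus a mutual-information/metric-entropy bound; these are morally equivalent, and your version of the first conclusion ($d_h \gtrsim N/\log(1+\mathfrak{L}^2\opnorm{\bU}^2)$, via the volumetric covering of the $2d_h$-dimensional ball) goes through.

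The genuine gap is the second conclusion, $\opnorm{\bU}^2 \gtrsim N/(\mathfrak{L}^2\log(1+d_h))$, which you explicitly leave unresolved. The fix is not a sharper chaining argument in $\reals^N$ exploiting coordinatewise $\ell^\infty$ bounds of $\hat{\bY}$ (a Gaussian-width/Sudakov route indeed stalls at $\opnorm{\bU}^2\gtrsim N/(\mathfrak{L}^2 d_h)$, as you note). Instead, the paper applies a \emph{second}, Maurey-sparsification-based estimate for the packing number of the Euclidean ball in the bottleneck space itself (Lemma~\ref{lem:unit_ball_packing}): an $\epsilon$-packing of $\{\twonorm{\bh}\le R\}\subseteq\reals^{2d_h}$ has log-cardinality at most $O\big(\tfrac{R^2}{\epsilon^2}(1+\log(1+d_h\epsilon^2/R^2))\big)$, in addition to the volumetric $d_h\log(1+R/\epsilon)$. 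Feeding the same $e^{\Omega(N)}$-sized packing at scale $\epsilon \asymp r_h/\mathfrak{L}$, radius $R\asymp r_h\opnorm{\bU}$, into this second branch immediately gives $N \lesssim \mathfrak{L}^2\opnorm{\bU}^2\log(1+d_h)$, i.e.\ the missing bound. So the missing ingredient is a norm-based (dimension-insensitive) packing bound for the ball in the $2d_h$-dimensional latent space, not a refined entropy integral over the $N$-dimensional output space; without it your proposal establishes only one of the two claimed inequalities.
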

\begin{remark}
    Note that the unboundedness of Gaussian random variables is not an issue for approximation here, since $(g(\bx_1),\hdots,g(\bx_N))$ is highly concentrated around $\mathbb{S}^{N-1}(\sqrt{N})$. In fact, one can directly assume $(g(\bx_1),\hdots,g(\bx_N)) \sim \Unif(\mathbb{S}^{N-1}(\sqrt{N}))$ and derive a similar lower bound. 
    The choice of Gaussian above is only made to simplify the presentation of the proof.
\end{remark}

The above proposition has two implications. First, it has a \emph{computational} consequence, implying that any RNN representing the $q\STR$ models requires a width that grows at least linearly with the context-length $N$. A similar lower bound in terms of bit complexity was derived in~\cite{sanford2023representational} using different tools. More importantly, the norm lower bound $\fronorm{\bU} \geq \tilde{\Omega}(\sqrt{N})$ has a \textit{generalization} consequence, as it suggests that norm-based generalization bounds cannot guarantee a sample-complexity independent of $N$.

To translate the above representational cost result to a sample complexity lower bound, we now introduce the parametrization of the output function $f_y$. The exact parametrization of the transition functions will be unimportant, and we will use the notation $f^\rightarrow_h(\bh, \bx, \bt;\bTheta^\rightarrow_h)$ to denote a general parameterized function (similarly with $f^\leftarrow$). 
We will assume $f_y$ is given by a feedforward network,
$$f_y(\bU^\rightarrow\bh^\rightarrow,\bU^\leftarrow\bh^\leftarrow,\bx,\bt;\bTheta_y) = \bW_{L_y}\sigma\big(\hdots \sigma(\bW_2\sigma(\bU\bh + \bW_y\bz + \bb_y) + \bb_2) \hdots\big),$$
where $\bh = (\bh^\rightarrow, \bh^\leftarrow) \in \reals^{2d_h}$, $\bz = (\bx_i,f_E(\bt_i,i)) \in \reals^{d + d_E}$. Here, $f_E(\bt_i,i)$ is an arbitrary encoding function with arbitrary dimension $d_E$. Then $\bTheta_y = (\bU,\bW_y,\bb_y,\bW_2,\bb_2,\hdots,\bW_{L_y})$, and $\bTheta_\RNN = (\bU,\bTheta_y,\bTheta^\rightarrow_h,\bTheta^\leftarrow_h)$. 
Note that thanks to the homogeneity of ReLU, we can always reparameterize the network by taking $\bar{\bh} = \bh/r_h$, $\bar{\bW}_y = \bW_y/r_h$, $\bar{\bb}_y = \bb_y/r_h$, and $\bar{\bW}_2 = \bW_2/r_h$ without changing the prediction function. Thus, in the following, we take $r_h = 1$ without losing the expressive power of the network.
We then have the following lower bound on the sample complexity of min-norm $\varepsilon$-ERM.

\begin{theorem}\label{thm:rnn_lower_bound_smpl}
    Consider the $1\STR$ model of Proposition~\ref{prop:rnn_lower_bound}. Suppose the size of the hidden state, the depth of the prediction function, and the weight norm respectively satisfy $d_h \leq e^{N^c}$, $2 \leq L_y \leq C$, and $\twonorm{\vect(\bTheta_\RNN)} \leq e^{N^c/L_y}$ for some absolute constants $c<1$ and $C \geq 2$, and recall we set $r_h=1$ due to homogeneity of the network. Let $\hat{\bTheta}_{\varepsilon}$ be the min-norm $\varepsilon$-ERM of $\hat{R}^\RNN_n$, defined in~\eqref{eq:min_norm_ERM}. Then, there exist absolute constants $c_1,c_2,c_3 > 0$ such that if $n \leq \mathcal{O}(N^{c_1})$, for any $\varepsilon \geq 0$, with probability at least $c_2$ over the training set,
    $$\frac{1}{N}\Earg{\twonorm{\hat{\by}_\RNN(\bp;\hat{\bTheta}_{n,\varepsilon}) - \by}^2} \geq c_3.$$
\end{theorem}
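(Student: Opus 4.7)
The plan is to combine the representational obstruction of Proposition \ref{prop:rnn_lower_bound} with a matching upper bound on the parameter norm of the min-norm $\varepsilon$-ERM obtained via an explicit construction. On the one hand, if the population risk of $\hat{\bTheta}_\varepsilon$ were below some absolute constant $c_3$, Proposition \ref{prop:rnn_lower_bound} together with the hypothesis $d_h \leq e^{N^c}$ would give $\mathfrak{L}^2\opnorm{\bU}^2 \gtrsim N^{1-c}$. Since $f_y$ has the prescribed depth-$L_y$ feedforward form, $\mathfrak{L} \leq \prod_{l=2}^{L_y}\opnorm{\bW_l}$, and AM-GM applied to the $L_y$ quantities $\opnorm{\bU}^2,\opnorm{\bW_2}^2,\dots,\opnorm{\bW_{L_y}}^2$, whose sum is at most $\|\vect(\bTheta_\RNN)\|_2^2$, yields $\mathfrak{L}^2\opnorm{\bU}^2 \leq \|\vect(\bTheta_\RNN)\|_2^{2L_y}/L_y^{L_y}$. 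Consequently, any RNN with population risk below $c_3$ must satisfy $\|\vect(\bTheta_\RNN)\|_2 \gtrsim N^{(1-c)/(2L_y)}$. Since the min-norm solution satisfies $\|\vect(\hat{\bTheta}_\varepsilon)\|_2 \leq \|\vect(\tilde{\bTheta})\|_2$ for any $\varepsilon$-feasible $\tilde{\bTheta}$, it suffices to exhibit one such $\tilde{\bTheta}$ of much smaller norm.

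Concretely, I will exhibit, on an event of probability at least $c_2$ over the training set, a competitor $\tilde{\bTheta}$ with $\hat{R}^\RNN_n(\tilde{\bTheta}) = 0$ (hence $\varepsilon$-feasible for every $\varepsilon\geq 0$) and $\|\vect(\tilde{\bTheta})\|_2 \leq \mathrm{poly}(n,d)$, which is $N^{O(c_1)}$ once $n \leq N^{c_1}$ and $d$ is an absolute constant. The construction uses the RNN as a cheap hash function followed by a 1-D memorization network. The forward and backward recurrences maintain a scaled accumulator such as $\bh^\rightarrow_i = \alpha\sum_{k<i}\bx_k$ with $\alpha = \Theta(1/\sqrt{Nd})$ so that $\|\bh^\rightarrow_i\|_2 \leq 1 = r_h$, and the transition is realized by a constant-depth ReLU $f^\rightarrow_h$ with $\mathcal{O}(1)$ operator norm per layer and $\mathrm{poly}(d)$ Frobenius norm; the backward recurrence is symmetric. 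For i.i.d.\ Gaussian prompts, a standard anti-concentration argument shows that with probability at least $c_2$ the $n$ resulting hashes $(\bh^\rightarrow_N,\bh^\leftarrow_1)$ are pairwise $\Omega(1/\mathrm{poly}(n))$-separated along a random direction $\bw$.

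On this event, take $\bU$ to be a rank-one projection onto the direction $\bw$ in the hash space, and let $f_y$ implement a constant-depth sum of narrow ReLU bumps in this one-dimensional variable, one bump per training sample, with heights matching the stored labels $y^{(i)}_{j^{(i)}}$. This classical memorization recipe for $n$ well-separated real inputs achieves $\hat{R}^\RNN_n(\tilde{\bTheta})=0$ with $\opnorm{\bU},\opnorm{\bW_l} \leq \mathrm{poly}(n)$ and overall $\|\vect(\bTheta_y)\|_2 \leq \mathrm{poly}(n)$; for $L_y>2$, intermediate layers are filled with approximate identities that inflate no operator norm beyond $\mathrm{poly}(n)$. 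Combining the two components gives $\|\vect(\tilde{\bTheta})\|_2 \leq \mathrm{poly}(n,d) \leq N^{O(c_1)}$, and choosing the absolute constant $c_1$ sufficiently small relative to $(1-c)/(2C)$ renders this incompatible with the representational lower bound $\|\vect(\hat{\bTheta}_\varepsilon)\|_2 \gtrsim N^{(1-c)/(2L_y)}$, forcing $R^\RNN(\hat{\bTheta}_\varepsilon) \geq c_3$.

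The main obstacle is the second piece: the memorizer must fit within the exact depth-$L_y$ parameterization of $f_y$ (with $\bU$ applied to the full $d_h$-dimensional hidden state, $\bW_y$ applied to $\bz$, and only $L_y-1$ subsequent matrices $\bW_l$), and each of $\opnorm{\bU},\opnorm{\bW_2},\dots,\opnorm{\bW_{L_y}}$ must remain $\mathrm{poly}(n)$; a single inflated operator norm would loosen the AM-GM upper bound on $\mathfrak{L}^2\opnorm{\bU}^2$ and break the contradiction. Concentrating the informative action of $\bU$ into a single ``address'' direction, and producing each bump as a constant-depth ReLU feature of this address, is the mechanism that delivers the required simultaneous norm control.
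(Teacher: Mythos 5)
Your overall skeleton matches the paper's proof exactly: combine the representational norm lower bound (Proposition~\ref{prop:rnn_lower_bound} plus AM--GM over $\opnorm{\bU}^2,\opnorm{\bW_2}^2,\dots,\opnorm{\bW_{L_y}}^2$, giving $\twonorm{\vect(\bTheta)}\gtrsim N^{(1-c)/(2L_y)}$ for any RNN with small population risk) with an explicit zero-empirical-risk competitor of norm $\poly(n)$, and conclude via the min-norm property. The first paragraph of your argument is correct and is essentially verbatim what the paper does.

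The gap is in the interpolating construction, and it is a real one: you never account for the width bottleneck imposed by the prescribed parameterization of $f_y$. The first hidden layer of $f_y$ is $\sigma(\bU\bh+\bW_y\bz+\bb_y)\in\reals^{d_h}$, so its width is \emph{forced} to equal $d_h$, which the theorem allows to be arbitrarily small. Your memorizer is ``one narrow ReLU bump per training sample,'' which needs width $\Omega(n)$ in some hidden layer; when $L_y=2$ (allowed by the hypothesis $2\le L_y\le C$) the entire memorizer must live in that width-$d_h$ first layer, and a width-$d_h$ two-layer ReLU network of a scalar variable is piecewise linear with $O(d_h)$ pieces, so it cannot interpolate $n$ generic points when $d_h\ll n$. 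In that regime your small-norm competitor does not exist and the contradiction cannot be run. The paper closes exactly this hole with a preliminary case split: if $d_h<n$, then small population risk together with Proposition~\ref{prop:rnn_lower_bound} and $\mathfrak{L}\opnorm{\bU}\le e^{N^c/2}$ forces $d_h\ge\Omega(N^{1-c})$, hence $n>d_h\ge\Omega(N^{1-c})$, contradicting $n\le\mathcal{O}(N^{c_1})$ for $c_1<1-c$; so one may assume $d_h\ge n$ before building the interpolant. You need this (or an equivalent) step.

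Two further remarks. First, your construction is more complicated than necessary: routing the memorization through a running-sum ``hash'' in the hidden state forces you to control the accumulator norm against the projection $\proj_{r_h}$ and to keep $\bU$ and the transition weights small simultaneously. The paper instead sets $\bTheta_h^{\rightarrow}=\bTheta_h^{\leftarrow}=\boldzero$ and $\bU=\boldzero$ and memorizes the $n$ pairs $(\bx^{(i)}_{j^{(i)}},y^{(i)}_{j^{(i)}})$ directly through the $\bW_y\bz$ path, using a random 1-D projection plus a piecewise-linear interpolant (Lemmas~\ref{lem:projection} and~\ref{lem:sawtooth}), with total squared norm $\mathcal{O}(n^3)$; the constant success probability comes from the anti-concentration event and a Markov bound on $\twonorm{\by}$, which you would also need to make ``heights matching the stored labels'' quantitative. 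Second, your anti-concentration claim should be stated for the hashes at the \emph{queried} positions $(\bh^{\rightarrow}_{j^{(i)}},\bh^{\leftarrow}_{j^{(i)}})$, not $(\bh^{\rightarrow}_N,\bh^{\leftarrow}_1)$, since $f_y$ only sees the former. With the $d_h<n$ case added and the interpolant either simplified as above or carefully fitted into the width-$d_h$ first layer, your argument goes through.
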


\begin{remark}
    It is possible to remove the subexponential bound on $\norm{\vect(\bTheta_\RNN)}$ by allowing the learner to search over families of RNN architectures with arbitrary $d_h \leq e^{N^c}$ rather than fixing a single $d_h$. In practice, even when fixing $d_h$, one would avoid solutions that violate this norm constraint due to numerical instability.
\end{remark}

To prove the above theorem, we use the fact that an RNN that generalizes on the entire data distribution (hence approximates the $1\STR$ model) requires a weight norm that scales with $\sqrt{N}$, while overfitting on the $n$ samples in the training set with zero empirical risk is possible with a $\poly(n)$ weight norm. As a result, as long as $n \leq N^{c_1}$ for some small constant $c_1 > 0$, min-norm $\varepsilon$-ERM will choose models that overfit rather than generalize. A similar approach was taken in~\cite{parkinson2024depth} to prove sample complexity separations between two and three-layer feedforward networks. The complete proof is presented in Appendix~\ref{app:proof_thm_rnn_lower}.

\section{Conclusion}

In this paper, we established a sample complexity separation between Transformers and baseline architectures, namely feedforward and recurrent networks, for learning sequence-to-sequence models where the output at each position depends on a sparse subset of input tokens described in the input itself, coined the $q\STR$ model.
We proved that Transformers can learn such a model with sample complexity almost independent of the length of the input sequence $N$, while feedforward and recurrent networks have sample complexity lower bounds of $N$ and $N^{\Omega(1)}$, respectively.
Further, we established a separation between FFNs and RNNs by proving that recurrent networks can learn the subset of simple-$q\STR$ models where the output at all positions is identical, whereas feedforward networks require at least $N$ samples.
An important direction for future work is to develop an understanding of the optimization dynamics of Transformers to learn $q\STR$ models, and to study sample complexity separations that highlight the role of depth in Transformers. 

\section*{Acknowledgments}
The authors thank Alberto Bietti and Song Mei for useful discussions.
MAE was
partially supported by the NSERC Grant [2019-06167], the CIFAR AI Chairs program, and the CIFAR Catalyst grant.

{

\bibliographystyle{alpha}
\bibliography{ref}

}

\newpage
{
\hypersetup{linkcolor=black}
\renewcommand{\contentsname}{Table of Contents}
\tableofcontents
}

\newpage

\appendix
\newpage
\section{Details of Section~\ref{sec:transformers}}
Here we present the omitted results and proofs of Section~\ref{sec:transformers}.
We begin by presenting the improved sample complexity for Transformers.

\subsection{Proof of Theorem~\ref{thm:tr_upper_bound}}\label{app:tr_proofs}
To prove Theorem~\ref{thm:tr_upper_bound}, we will prove the more general theorem below.
\begin{theorem}\label{thm:tr_upper_general}
    Let $\hat{\bTheta} \coloneqq \argmin_{\bTheta \in \varTheta_{\TR}} \hat{R}^\TR_n(\bTheta)$, where
    \begin{align*}
        \varTheta_{\TR} \coloneqq \Big\{&\twonorm{\ba_\FCNN} \leq r_a/\sqrt{m}, \fronorm{(\bW_{\FCNN},\bb_\FCNN)} \leq r_w\sqrt{m}, \norm{\bW^{(h)}_{\QK}}_{2,1} \leq \alpha\,\, \forall h \in [H]\Big\}.
    \end{align*}
    Suppose $H=q$, $m=m_g$, and $\alpha=\tilde{\Theta}(1)$ (given in Lemma~\ref{lem:tr_approx}). Then, under~\Cref{assump:data,assump:fcnn_approx,assump:enc}, with probability at least $1 - n^{-c}$ for some absolute constant $c > 0$, we have
    \begin{equation}\label{eq:generic_tr_gen_bound}
        R^\TR(\hat{\bTheta}) \leq \mathcal{O}(\varepsilon_\NN^2) + \tilde{\mathcal{O}}\left(C_1\sqrt{\frac{(m_gq(d + q) + r_z^6r_a^2r_w^2q^2\wedge q(q^2 + d^2))}{n}}\right),
    \end{equation}
    where $C_1 = qr_a^2r_w^2r_z^2$.
\end{theorem}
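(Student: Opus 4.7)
The plan is to apply the standard oracle inequality: for $\bTheta^* \in \varTheta_\TR$ to be constructed,
\begin{align*}
R^\TR(\hat{\bTheta}) \leq R^\TR(\bTheta^*) + 2\sup_{\bTheta \in \varTheta_\TR}\bigl|R^\TR(\bTheta) - \hat{R}^\TR_n(\bTheta)\bigr|,
\end{align*}
and then handle the approximation and uniform estimation terms in separate lemmas. The final bound is obtained by optimizing the cover of $\varTheta_\TR$ used for the second term.

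For approximation I would construct $\bTheta^*$ head by head, using head $h$ to softly retrieve $\bx_{t_{ih}}$. By Assumption~\ref{assump:enc} we have $\langle\bomega_a,\bomega_b\rangle=1$ if $a=b$ and $\leq \tfrac{1}{2}$ otherwise, so I would choose $\bW^{(h),*}_\QK$ to project $\bz_i$ onto $\beta\,\bomega_{t_{ih}}$ (read off the $(h{+}1)$-th block of the encoding at position $i$) and $\bz_j$ onto $\bomega_j$ (read off its self-positional block), with $\beta = \Theta(\log(nN))$. This makes head $h$'s softmax concentrate on $j=t_{ih}$ up to additive error $\mathrm{poly}(1/(nN))$ once tokens are truncated to $\|\bx_i\|\leq r_z := \Theta(\sqrt{d\log(nN)})$, which by Assumption~\ref{assump:data} holds off a negligible event. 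Absorbing the value projection into $\bW_\FCNN$, the feedforward layer then receives $(\bx_{t_{i1}},\ldots,\bx_{t_{iq}})$ and implements the approximator of $g$ guaranteed by Assumption~\ref{assump:fcnn_approx}. A direct computation gives $\|\bW^{(h),*}_\QK\|_{2,1} = \tilde{\mathcal{O}}(1)$ and confirms that $(\ba_g,\bW_g,\bb_g)$ already satisfy their constraints, yielding $R^\TR(\bTheta^*) \lesssim \varepsilon_\NN^2$.

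For estimation, on the same truncation event the predictor is bounded by $\tilde{\mathcal{O}}(C_1^{1/2})$ and Lipschitz in $\bTheta$ with constants polynomial in $r_a,r_w,r_z,\alpha$, so Dudley chaining on the squared-loss class gives
\begin{align*}
\sup_{\bTheta\in\varTheta_\TR}\bigl|R^\TR(\bTheta) - \hat{R}^\TR_n(\bTheta)\bigr| \;\lesssim\; C_1\sqrt{\tfrac{\log \mathcal{N}_\TR}{n}} \,+\, \text{truncation error}.
\end{align*}
The $\FCNN$ block contributes $\tilde{\mathcal{O}}(m_g q(d+q))$ via a standard cover for bounded-norm two-layer networks, while the attention block admits two complementary covers: a Maurey-type cover exploiting the $\ell_{2,1}$ constraint on each $\bW^{(h)}_\QK$, yielding $\tilde{\mathcal{O}}(r_z^6 r_a^2 r_w^2 q^2)$ through the softmax Lipschitz constant, and a parameter-count cover that restricts $\bW^{(h)}_\QK$ to its effective $(d+q)\times q$ action on the range of the encoding, yielding $\tilde{\mathcal{O}}(q^3 + qd^2)$; taking the minimum is the source of the inner ``$\wedge$'' in \eqref{eq:generic_tr_gen_bound}.

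The main obstacle is ensuring $\log\mathcal{N}_\TR$ is polylogarithmic in $N$ despite the softmax summing $N$ terms. The key point is that, under Assumption~\ref{assump:enc} and the $\ell_{2,1}$ constraint, the attention scores $\langle \bW^{(h)}_\QK\bz_i,\bz_j\rangle$ are uniformly bounded by $\tilde{\mathcal{O}}(\alpha\, r_z)$ over $j\in[N]$, because $\bz_j$ itself has $\ell_\infty$ norm only $\tilde{\mathcal{O}}(r_z)$. This converts what would naively be an $\ell_1$-type cost of $N$ softmax entries into an $\ell_\infty$-type bound, so $N$ enters only through polylog factors inside $r_z$ and $\beta$. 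Combining approximation, uniform estimation on the truncated event, and the negligible truncation tail produces the bound in \eqref{eq:generic_tr_gen_bound}, from which Theorem~\ref{thm:tr_upper_bound} follows by specializing $\alpha = \tilde{\Theta}(1)$ and reading off $C_1 = R^2 qd$ under the parameter norm constraint $\|\vect(\bTheta)\|_2\leq R$.
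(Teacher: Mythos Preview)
Your approach is essentially the paper's: the approximation construction (head $h$ retrieves $\bx_{t_{ih}}$ via the positional-encoding blocks, then the feedforward layer applies $(\ba_g,\bW_g,\bb_g)$), the covering decomposition into a parameter-count cover for the feedforward layer plus the minimum of a Maurey $\ell_{2,1}$ cover and a Frobenius parameter-count cover for the attention block, Dudley chaining, and loss truncation are exactly the ingredients of Lemmas~\ref{lem:tr_approx}, \ref{lem:QK_covering}, \ref{lem:tr_radem_bound}, Corollary~\ref{cor:tr_gen_bound}, and Lemma~\ref{lem:loss_truncated}.

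One correction to your identified ``main obstacle'': score boundedness is neither the reason $N$ drops out nor sufficient for it (and $\bz_j$ has $\ell_2$ norm $r_z$, not $\ell_\infty$). The actual mechanism, used in the paper following \cite{edelman2022inductive}, is the unconditional Lipschitz property $\onenorm{\softmax(a)-\softmax(b)} \leq 2\infnorm{a-b}$, combined with the matrix H\"older bound $\twonorm{(\softmax(a)-\softmax(b))^\top\bZ} \leq \norm{\bZ^\top}_{2,\infty}\onenorm{\softmax(a)-\softmax(b)} \leq r_z\cdot 2\infnorm{a-b}$. Since $\infnorm{\bZ(\bW_\QK^{(h)}-\hat\bW_\QK^{(h)})^\top\bz_i} \leq \norm{\bZ^\top}_{2,\infty}\twonorm{(\bW_\QK^{(h)}-\hat\bW_\QK^{(h)})^\top\bz_i}$, the $N$-length softmax sum collapses to a bound depending only on $r_z$ and the cover of $\bTheta_\QK^\top\bz_i$; this is the content of the paper's Lemma~15 and is the step you should state precisely.
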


We begin with a lemma establishing the capability of Transformers in approximating $q\STR$ models.
\begin{lemma}\label{lem:tr_approx}
    Suppose Assumption~\ref{assump:fcnn_approx} holds. Let $r_x = \sqrt{3C_xed\log(nN)}$. Assume $H=q$ and $m_g = m$. Then, there exists $\bTheta_\TR$ such that
    $$\sup_{\{\twonorm{\bx_j} \leq r_x,\, \forall j \in [N]\}}\abs{g(\bx_{t_{i1}},\hdots,\bx_{t_{iq}}) - \hat{y}_\TR(\bp;\bTheta_\TR)_i} \leq 2\sqrt{\varepsilon_\FCNN},$$
    and
    $$\twonorm{\ba_\FCNN} \leq \frac{r_a}{\sqrt{m}}, \quad \fronorm{(\bW_\FCNN, \bb_\FCNN)} \leq \sqrt{m}r_w, \quad {\norm{{\bW^{(h)}_\QK}^\top}}_{2,1} \leq \frac{2d_eq}{d}\log\left(\frac{2r_ar_wr_xN\sqrt{q}}{\varepsilon_\FCNN}\right),$$
    for all $h \in [H]$.
\end{lemma}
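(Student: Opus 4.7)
The plan is to construct a Transformer that uses its $H=q$ attention heads as ``one-hot retrievers'' of the $q$ relevant tokens at each position, and then uses the feedforward block to implement the $g$-approximating network from Assumption~\ref{assump:fcnn_approx}. For head $h\in[q]$, I would define $\bW^{(h)}_{\QK}$ as $\lambda$ times a sparse block-permutation matrix that maps the $(h{+}1)$-th positional block of the query $\bz_i$ (containing $\sqrt{d/q}\,\bomega_{t_{ih}}$) onto the first positional block of the key $\bz_j$ (containing $\sqrt{d/q}\,\bomega_j$), and zeros elsewhere. This makes the attention logit $\bz_i^\top\bW^{(h)}_{\QK}\bz_j=\lambda(d/q)\binner{\bomega_{t_{ih}}}{\bomega_j}$, which by Assumption~\ref{assump:enc} equals $\lambda d/q$ when $j=t_{ih}$ and is at most $\lambda d/(2q)$ otherwise. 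Choosing $\lambda=(2q/d)\log(2r_ar_wr_xN\sqrt{q}/\varepsilon_\FCNN)$ ensures the softmax places total mass at most $Ne^{-\lambda d/(2q)}$ on incorrect positions, so on the event $\{\twonorm{\bx_j}\leq r_x\,\forall j\}$ (which also gives $\twonorm{\bz_j}\lesssim r_x$ since the encoding has deterministic norm $\sqrt{(q+1)d/q}\leq\sqrt{2d}\lesssim r_x$), head $h$ at position $i$ outputs $\bz_{t_{ih}}+\boldsymbol{\epsilon}_{i,h}$ with $\twonorm{\boldsymbol{\epsilon}_{i,h}}\lesssim r_xNe^{-\lambda d/(2q)}$.

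Next, I would set $\bTheta_\FCNN$ to an augmentation of the $g$-approximator from Assumption~\ref{assump:fcnn_approx}: let $\bW_\FCNN\in\reals^{m_g\times qD_e}$ zero out the positional-encoding coordinates of the concatenated head outputs and apply $\bW_g$ to the $qd$ remaining $\bx$-coordinates, and take $\bb_\FCNN=\bb_g$ and $\ba_\FCNN=\ba_g$. Then the norm bounds $\fronorm{(\bW_\FCNN,\bb_\FCNN)}=\fronorm{(\bW_g,\bb_g)}\leq r_w\sqrt{m_g}$ and $\twonorm{\ba_\FCNN}\leq r_a/\sqrt{m_g}$ follow immediately, and the feedforward map is $\bigl(\twonorm{\ba_\FCNN}\,\opnorm{\bW_\FCNN}\bigr)$-Lipschitz, i.e.\ $r_ar_w$-Lipschitz. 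Hence the retrieval error contributes at most $r_ar_w\cdot\sqrt{q}\cdot r_xNe^{-\lambda d/(2q)}\leq\varepsilon_\FCNN$ at the output by the choice of $\lambda$, and combining with the $\sqrt{\varepsilon_\FCNN}$ approximation bound of Assumption~\ref{assump:fcnn_approx} (applicable because $\twonorm{\bx_{t_{ih}}}\leq r_x$) yields total error at most $\sqrt{\varepsilon_\FCNN}+\varepsilon_\FCNN\leq 2\sqrt{\varepsilon_\FCNN}$. The norm bound $\norm{\bW^{(h)\top}_{\QK}}_{2,1}=\lambda d_e=(2d_eq/d)\log(2r_ar_wr_xN\sqrt{q}/\varepsilon_\FCNN)$ then follows directly from the construction, since $\bW^{(h)}_{\QK}$ has exactly $d_e$ nonzero rows, each a scaled standard basis vector of norm $\lambda$.

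The main obstacle is calibrating $\lambda$: Assumption~\ref{assump:enc} provides only a constant ($1/2$) margin in the normalized inner products, so one must ensure $\lambda d/q$ is large enough---scaling as $\log(N/\varepsilon_\FCNN)$---that a union bound over the $N-1$ competing positions leaves negligible softmax mass off-target, while simultaneously keeping $\lambda d_e$ polylogarithmic in $N$ so the stated $(2,1)$-norm bound is valid. A minor technical point is propagating the per-head error through the concatenation (introducing the $\sqrt{q}$ factor) and then through the Lipschitz FFN; this is routine but must be tracked to expose the exponents appearing inside the logarithm that defines $\lambda$.
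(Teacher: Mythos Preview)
The proposal is correct and follows essentially the same construction and error analysis as the paper's proof: the block-permutation choice of $\bW^{(h)}_\QK$, the softmax margin calculation via Assumption~\ref{assump:enc}, the zero-padded embedding of $(\bW_g,\bb_g,\ba_g)$ into $\bTheta_\FCNN$, and the Lipschitz propagation of the retrieval error all match the paper's argument (the paper uses the symbol $\alpha$ in place of your $\lambda$ and targets attention error $\sqrt{\varepsilon_\FCNN}$ rather than $\varepsilon_\FCNN$, but these are cosmetic). One minor refinement in the paper is that it projects the head-output error onto the $\bx$-coordinates via a matrix $\bA$ before bounding, rather than bounding the full $\bz$-error and arguing the positional part is $\lesssim r_x$; this avoids the implicit assumption $\sqrt{d}\lesssim r_x$ but does not change the final bound.
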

\begin{proof}
In our construction, the goal of attention head $h$ at position $i$ will be to output $z_{t_{ih}}$. Namely, we want to achieve
$$f_\Attn(\bp;\bW^{(h)}_\QK)_i \approx \bz_{t_{ih}}.$$
Note that to do so, for each key token $\bz_j$, we only need to compute $\binner{\bomega_{t_{ih}}}{\bomega_j}$. Therefore, most entries in $\bW^{(h)}_\QK$ can be zero. 
We only require a block of $d_e \times d_e$, which corresponds to comparing $\bomega_j$ and $\bomega_{t_{ih}}$ when comparing query $\bz_i$ and key $\bz_j$. Thus, we let

\begin{equation}\label{eq:W_QK_constr}
    \bW^{(h)}_\QK = 
    \begin{pmatrix}
    \boldzero_{(d + hd_e) \times d} & \boldzero_{(d + hd_e) \times d_e} & \boldzero_{(d + hd_e) \times qd_e}\\
    \boldzero_{d_e \times d} & \alpha \ident_{d_e} & \boldzero_{d_e \times qd_e}\\
    \boldzero_{(q-h)d_e \times d} & \boldzero_{(q-h)d_e \times d_e} & \boldzero_{(q-h)d_e \times qd_e}
    \end{pmatrix}
\end{equation}
Then, we have $\binner{\bz_i}{\bW^{(h)}_\QK\bz_j} = \alpha\binner{\bomega_{t_{ih}}}{\bomega_j}d/q$. We can then verify that
\begin{align*}
    \twonorm{\bA f_\Attn(\bp;\bW^{(h)}_\QK)_i - \bA\bz_{t_{ih}}} &\leq \sum_{j \neq t_{ih}}e^{-\alpha d/(2q)}(\norm{\bA\bz_j} + \twonorm{\bA\bz_{t_{ih}}})
\end{align*}
for every matrix $\bA$. We will specifically choose $\bA$ to be the projection onto the first $d$ coordinates in the following. Hence, $\alpha$ will control the error in the softmax attention approximating a ``hard-max'' attention that would exactly choose $\bz_{t_{ih}}$.

To construct the weights of the feedforward layer $\ba_\FCNN,\bW_\FCNN,\bb_\FCNN$, we let $\ba_\FCNN=\ba_g$ and $\bb_\FCNN = \bb_g$ from Assumption~\ref{assump:fcnn_approx}, and define $\bW_\FCNN$ by extending $\bW_g$ with zero entries such that
$$\bW_\FCNN\begin{pmatrix}
    \bz_{t_{i1}} \\ \hdots \\ \bz_{t_{iq}}
\end{pmatrix} = \bW_g\begin{pmatrix}
    \bx_{t_{i1}} \\ \hdots \\ \bx_{t_{iq}}
\end{pmatrix}.$$
Then $\fronorm{\bW_\FCNN} = \fronorm{\bW_g}$. Notice that
$\cdot \mapsto \ba^\top\sigma(\bW(\cdot) + \bb)$
is $r_ar_w$ Lipschitz.
As a result, for any $\bx$ with $\norm{\bx} \leq r_x$ we have
\begin{align*}
    \abs{g(\bx_{t_{i1}},\hdots,\bx_{t_{iq}}) - \hat{y}_{\TR}(\bp;\bTheta_{\TR})_i} \leq \sqrt{\varepsilon_\FCNN} + \varepsilon_\Attn,
\end{align*}
where we recall
$$\abs{g(\bx_{t_{i1}},\hdots,\bx_{t_{iq}}) - f_\FCNN((\bz_{t_{i1}},\hdots,\bz_{t_{iq}});\ba_\FCNN,\bW_\FCNN,\bb_\FCNN)} \leq \sqrt{\varepsilon_\FCNN},$$
and
\begin{align*}
    \varepsilon_\Attn &= \abs{f_\FCNN((\bz_{t_{i1}},\hdots,\bz_{t_{iq}});\bTheta_\FCNN) - f_\FCNN(f^{(q)}_\Attn(\bp;\bTheta_\QK);\bTheta_\FCNN)}\\
    &\leq r_ar_w\sqrt{\sum_{h=1}^q\twonorm{\bA f_\Attn(\bp;\bW^{(h)}_\QK)_i - \bA\bz_{t_{ih}}}^2}\\
    &\leq 2r_ar_wr_xN\sqrt{q}e^{-\alpha d/(2q)},
\end{align*}
where we recall $\bA\bz_j = \bx_j$. Thus, with
$$\alpha = 2q\log(2r_ar_wr_xN\sqrt{q}/\sqrt{\varepsilon_\FCNN})/d$$
we can guarantee the distance is at most $2\sqrt{\varepsilon_\FCNN}$.
\end{proof}

Before proceeding to obtain statistical guarantees, we will show that we can consider the encodings $\bz^{(i)}_j$ to be bounded with high probability. This will be a useful event to consider throughout the proofs of various sections.
\begin{lemma}\label{lem:input_norm_bound}
    Suppose $\{\bp^{(i)}\}_{i=1}^n$ are $n$ input prompts (not necessarily independent) drawn from the input distribution, with tokens denoted by $\{(\bx^{(i)}_j)_{j=1}^{N}\}_{i=1}^n$. Under Assumption~\ref{assump:data}, for any $r_x > 0$ we have
    $$\Parg{\max_{i \in [n], j \in [N]}\twonorm{\bx^{(i)}_j} \geq r_x} \leq nNe^{-r_x^2/(2C_xed)}.$$
    In particular, for $r_x = \sqrt{3C_xed\log (nN)}$ we have   
    $$\Parg{\max_{i \in [n], j \in [N]}\twonorm{\bx^{(i)}_j} \geq r_x} \leq \sqrt{\frac{1}{nN}}.$$
\end{lemma}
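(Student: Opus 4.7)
The plan is to first establish the single-token tail bound $\P[\|\bx_i\| \geq r_x] \leq e^{-r_x^2/(2C_x e d)}$ using Assumption~\ref{assump:data}, and then apply a union bound over the $nN$ tokens to obtain the first inequality. The second inequality will then follow by direct substitution of the chosen $r_x$.

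To prove the single-token tail bound, I would use the moment method. Specifically, for any $r \geq 1$, Markov's inequality gives
\[
\P\bigl[\|\bx_i\| \geq r_x\bigr] \;=\; \P\bigl[\|\bx_i\|^r \geq r_x^r\bigr] \;\leq\; \frac{\E[\|\bx_i\|^r]}{r_x^r} \;\leq\; \frac{(C_x d r)^{r/2}}{r_x^r} \;=\; \left(\frac{C_x d r}{r_x^2}\right)^{r/2},
\]
where the penultimate inequality uses Assumption~\ref{assump:data}. Optimizing the right-hand side in $r$ (setting the derivative of $\tfrac{r}{2}\log(C_x d r / r_x^2)$ to zero) yields the choice $r^* = r_x^2 / (e C_x d)$, which plugs back to give the subgaussian tail bound $e^{-r_x^2/(2 e C_x d)}$. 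This choice is valid as long as $r^* \geq 1$, i.e., $r_x^2 \geq e C_x d$; otherwise the bound $nN e^{-r_x^2/(2C_x e d)}$ already exceeds $1$ and is trivially true.

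Applying a union bound over $i \in [n]$ and $j \in [N]$ then gives the claimed bound of $nN e^{-r_x^2/(2 C_x e d)}$. For the second part of the lemma, substituting $r_x = \sqrt{3 C_x e d \log(nN)}$ yields exponent $-r_x^2/(2 C_x e d) = -\tfrac{3}{2}\log(nN)$, hence $nN \cdot (nN)^{-3/2} = (nN)^{-1/2}$, as desired.

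The main (minor) obstacle is the constraint $r \geq 1$ in the moment method. Provided $n N$ is at least a small absolute constant, this constraint is satisfied at the chosen $r_x$; in the regime where $r^* < 1$, the probability bound is vacuous and there is nothing to prove. There is no deeper difficulty here — the statement is a routine consequence of the subgaussian moment assumption combined with a union bound.
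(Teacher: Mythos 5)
Your proposal is correct and follows essentially the same route as the paper: a Markov/moment argument with the optimized moment order $r^* = r_x^2/(eC_xd)$, combined with a union bound over the $nN$ tokens (the paper performs the union bound inside the expectation, bounding $\E[\max_{i,j}\|\bx^{(i)}_j\|^p]$ by the sum of $p$-th moments, which yields the identical final bound). Your explicit handling of the $r \geq 1$ constraint is a nice touch that the paper glosses over.
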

\begin{proof}
    Via Markov's inequality, for any $p > 0$ and $r_x > 0$, we have
    \begin{align*}
        \Parg{\max_{i,j}\twonorm{\bx^{(i)}_j} \geq r_x} &\leq \frac{\Earg{\max_{i,j}\twonorm{\bx^{(i)}_j}^p}}{r_x^p} \leq \frac{\Earg{\sum_{i,j}\twonorm{\bx^{(i)}_j}^p}}{r_x^p} \leq \frac{Nn(C_xpd)^{p/2}}{r^p_x}.
    \end{align*}
    Let $p = r_x^2/(C_xed)$. Then,
    $$\Parg{\max_{i,j}\twonorm{\bx^{(i)}_j} \geq r_x} \leq nNe^{-r_x^2/(2C_xed)},$$
    which proves the first statement, and the second statement follows by plugging in the specific value of $r_x$.
\end{proof}

We are now ready to move to the generalization analysis of Transformers. First, we have to formally define the prediction function class of Transformers with a notation suitable for this section. We begin by defining the function class of attention. We have
$$\mathcal{F}_{\Attn} = \{\bp,j \mapsto f^{(H)}_{\Attn}(\bp;\bTheta_\QK)_j : \bTheta_\QK \in \varTheta_\QK\},$$
where we will later specify $\varTheta_\QK$. 
Additionally, we define $\mathcal{F}_{\FCNN}$ by
$$\mathcal{F}_{\FCNN} = \{\bh \mapsto f_{\FCNN}(\bh;\bTheta_\FCNN) \,:\, \bTheta_\FCNN \in \varTheta_\FCNN\},$$
where $\bTheta_\FCNN = (\ba_\FCNN,\bW_\FCNN,\bb_\FCNN)$, and we will later specify $\varTheta_\FCNN$.
Then the class $\mathcal{F}_{\TR}$ can be defined as
$$\mathcal{F}_{\TR} = \{\bp,j \mapsto f_{\FCNN}(f_{\Attn}(\bp)_j) \,:\, f_\Attn \in \mathcal{F}_{\Attn}, f_{\FCNN} \in \mathcal{F}_\FCNN\}.$$

Recall we use the $S_n$ to denote the training set. To avoid extra indices, we will use the notation $\bp,j \in S_n$ to go over $\{\bp^{(i)},j^{(i)}\}_{i=1}^n$.
We can then define the following distances on the introduced function classes
\begin{align*}
    d^\TR_\infty(f,f') &\coloneqq \sup_{\bp,j}\abs{f(\bp)_j - f'(\bp)_j}, \quad \forall f,f' \in \mathcal{F}_\TR\\
    d^\Attn_\infty(f,f') &\coloneqq \sup_{\bp,j}\twonorm{f(\bp)_j - f'(\bp)_j}, \quad \forall f,f' \in \mathcal{F}_\Attn\\
    d^\FCNN_\infty(f,f') &\coloneqq \sup_{\twonorm{\cdot} \leq \sqrt{H}r_z}\abs{f(\cdot) - f'(\cdot)}, \quad \forall f,f' \in \mathcal{F}_\FCNN.
\end{align*}
We choose the radius $\sqrt{H}r_z$ for defining $d^\FCNN_\infty$ since on the event of Lemma~\ref{lem:input_norm_bound}, this will be the norm bound on the output of the attention layer at every position.

Recall that for a distance $d_\infty$ and a set $\mathcal{F}$, an $\epsilon$-covering $\hat{\mathcal{F}}$ is a set such that for every $f \in \mathcal{F}$, there exists $\hat{f} \in \hat{\mathcal{F}}$ such that $d_\infty(f,\hat{f}) \leq \epsilon$. 
The $\epsilon$-covering number of $\mathcal{F}$, denoted by $\mathcal{C}(\mathcal{F},d_\infty,\epsilon)$, is the number of elements of the smallest such $\hat{\mathcal{F}}$.
The following lemma relates the covering number of $\mathcal{F}_\TR$ to those of $\mathcal{F}_\Attn$ and $\mathcal{F}_\FCNN$.
\begin{lemma}\label{lem:tr_covering_nn_attn}
    Suppose $f_{\FCNN}$ is $L_f$ Lipschitz for every $f_{\FCNN} \in \mathcal{F}_{\FCNN}$. Then, for any $\epsilon_\FCNN,\epsilon_\Attn > 0$, on the event of Lemma~\ref{lem:input_norm_bound} we have
    $$\log\mathcal{C}(\mathcal{F}_{\TR},d^\TR_\infty,\epsilon_\FCNN + L_f\epsilon_\Attn) \leq \log\mathcal{C}\left(\mathcal{F}_{\FCNN},d^\FCNN_\infty,\epsilon_\FCNN\right) + \log\mathcal{C}\left(\mathcal{F}_{\Attn},d^\Attn_\infty,\epsilon_\Attn\right).$$
\end{lemma}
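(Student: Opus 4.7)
The plan is the standard composition-of-covers argument, with care taken to ensure that the inputs fed into the FCNN cover lie in the domain $\{\|\cdot\| \le \sqrt{H}r_z\}$ on which $d^\FCNN_\infty$ is defined. Let $\hat{\mathcal{F}}_\Attn$ be a minimal $\epsilon_\Attn$-cover of $\mathcal{F}_\Attn$ under $d^\Attn_\infty$, of cardinality $\mathcal{C}(\mathcal{F}_\Attn,d^\Attn_\infty,\epsilon_\Attn)$, and let $\hat{\mathcal{F}}_\FCNN$ be a minimal $\epsilon_\FCNN$-cover of $\mathcal{F}_\FCNN$ under $d^\FCNN_\infty$, of cardinality $\mathcal{C}(\mathcal{F}_\FCNN,d^\FCNN_\infty,\epsilon_\FCNN)$. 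Define the product cover
\[
\hat{\mathcal{F}}_\TR \;=\; \bigl\{(\bp,j)\mapsto \hat f_\FCNN\bigl(\hat f_\Attn(\bp)_j\bigr) \,:\, \hat f_\Attn\in\hat{\mathcal{F}}_\Attn,\ \hat f_\FCNN\in\hat{\mathcal{F}}_\FCNN\bigr\},
\]
whose size is the product of the two covering numbers.

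Next, I would verify that $\hat{\mathcal{F}}_\TR$ is a $(\epsilon_\FCNN+L_f\epsilon_\Attn)$-cover of $\mathcal{F}_\TR$ under $d^\TR_\infty$. Fix an arbitrary $f\in\mathcal{F}_\TR$, written as $f(\bp)_j = f_\FCNN(f_\Attn(\bp)_j)$, and pick $\hat f_\Attn\in\hat{\mathcal{F}}_\Attn$ with $d^\Attn_\infty(f_\Attn,\hat f_\Attn)\le \epsilon_\Attn$ and $\hat f_\FCNN\in\hat{\mathcal{F}}_\FCNN$ with $d^\FCNN_\infty(f_\FCNN,\hat f_\FCNN)\le \epsilon_\FCNN$. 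For any prompt $\bp$ satisfying the norm bounds of Lemma~\ref{lem:input_norm_bound} and any position $j$, note that the single-head attention output is a softmax-convex combination of tokens $\bz_l$, each of which satisfies $\|\bz_l\|\le r_z$ on this event (by combining the $\bx_l$ bound with the deterministic norm of the positional encoding). Hence each head's output has norm at most $r_z$, and concatenating $H$ heads yields $\|f_\Attn(\bp)_j\|\le \sqrt{H}\,r_z$. Consequently $f_\Attn(\bp)_j$ lies in the domain over which $d^\FCNN_\infty$ controls $|f_\FCNN-\hat f_\FCNN|$.

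A two-term triangle inequality then gives
\[
\bigl|f_\FCNN(f_\Attn(\bp)_j) - \hat f_\FCNN(\hat f_\Attn(\bp)_j)\bigr|
\;\le\; \bigl|f_\FCNN(f_\Attn(\bp)_j) - \hat f_\FCNN(f_\Attn(\bp)_j)\bigr| + \bigl|\hat f_\FCNN(f_\Attn(\bp)_j) - \hat f_\FCNN(\hat f_\Attn(\bp)_j)\bigr|.
\]
The first term is bounded by $\epsilon_\FCNN$ using $d^\FCNN_\infty$ at the point $f_\Attn(\bp)_j$, which belongs to its domain by the previous step. The second term is bounded by $L_f\|f_\Attn(\bp)_j-\hat f_\Attn(\bp)_j\|\le L_f\epsilon_\Attn$ by the Lipschitz assumption on $\hat f_\FCNN$ (inherited from $\mathcal{F}_\FCNN$) and the definition of $d^\Attn_\infty$. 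Taking the supremum over $(\bp,j)$ yields $d^\TR_\infty(f,\hat f_\FCNN\circ\hat f_\Attn)\le \epsilon_\FCNN+L_f\epsilon_\Attn$, so $\hat{\mathcal{F}}_\TR$ is indeed a valid cover at that radius. Taking logarithms of cardinalities gives the claim.

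The only delicate point, and the one I would double-check, is the implicit domain of the FCNN cover: the conclusion requires that every attention output the cover might be evaluated on sits inside the ball of radius $\sqrt{H}r_z$. This is exactly why the statement is conditioned on the event of Lemma~\ref{lem:input_norm_bound}, and why the softmax-convex-combination observation is essential — it transfers the token-wise input norm bound to an output norm bound for attention without any dependence on the attention parameters. Once this is in place, the rest is routine composition of covers.
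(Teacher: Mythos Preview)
Your proposal is correct and follows essentially the same approach as the paper: a product cover together with the two-term triangle inequality, where one term is handled by the FCNN cover and the other by Lipschitzness of the FCNN composed with the attention cover. Your explicit verification that the attention output lies in the ball of radius $\sqrt{H}r_z$ (via the softmax-convex-combination argument) makes precise a point the paper only states informally just before the lemma.
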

\begin{proof}
    The proof simply follows from the triangle inequality, namely
\begin{align*}
    \sup_{\bp,j}\abs{f_{\TR}(\bp;\bTheta_{\TR})_j -f_{\TR}(\bp;\hat{\bTheta}_{\TR})_j} \leq& \sup_{\twonorm{\bh} \leq \sqrt{H}r_z}\twonorm{f_{\FCNN}(\bh;\bTheta_\NN) - f_{\FCNN}(\bh;\hat{\bTheta}_\NN)}\\
    &+ L_f\sup_{\bp,j}\twonorm{f^{(H)}_\Attn(\bp;\bTheta_\QK)_j - f^{(H)}_\Attn(\bp;\hat{\bTheta}_\QK)_j}.
\end{align*}
\end{proof}

We have the following estimate for the covering number of $\mathcal{F}_{\FCNN}$.
\begin{lemma}
    Suppose $\twonorm{\vect(\bTheta_\RNN)} \leq R$ and $\twonorm{\bz^{(i)}_j} \leq R$ for all $i \in [n]$ and $j \in [N]$. Then,
    $$\log\mathcal{C}\left(\mathcal{F}_{\FCNN},d^\FCNN_\infty,\epsilon\right) \lesssim m_gH D_e\log(1 + \poly(R)/\epsilon).$$
\end{lemma}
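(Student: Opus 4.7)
The plan is the standard two-step covering argument for a parameterized function class: first bound the Lipschitz dependence of $f_\FCNN$ on its parameters uniformly over the bounded input set, then invoke a volumetric covering number bound for the parameter ball in $\reals^{m_g(HD_e + 2)}$, and finally rescale $\epsilon$ by the Lipschitz constant.

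First I would fix inputs $\bh$ with $\twonorm{\bh} \le \sqrt{H}r_z$ and show that $\bTheta_\FCNN = (\ba_\FCNN, \bW_\FCNN, \bb_\FCNN) \mapsto f_\FCNN(\bh; \bTheta_\FCNN)$ is Lipschitz with a constant of size $\poly(R)$ when $\twonorm{\vect(\bTheta_\FCNN)} \le R$. Using the 1-Lipschitzness of $\sigma$, the triangle inequality decomposes the difference between two parameter settings as
\begin{align*}
\bigl|f_\FCNN(\bh;\ba,\bW,\bb) - f_\FCNN(\bh;\ba',\bW',\bb')\bigr|
&\le \twonorm{\ba - \ba'}\,\twonorm{\sigma(\bW\bh+\bb)} + \twonorm{\ba'}\bigl(\fronorm{\bW-\bW'}\twonorm{\bh} + \twonorm{\bb-\bb'}\bigr).
\end{align*}
Under the norm constraints $\twonorm{\ba},\twonorm{\bb},\fronorm{\bW} \le R$ and $\twonorm{\bh}\le \sqrt{H}r_z$, each of the three terms is bounded by a polynomial in $R$, $\sqrt{H}r_z$, times the corresponding parameter deviation in Frobenius/$\ell_2$ norm. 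Pooling these factors yields a single Lipschitz constant $L_{\mathrm{par}} = \poly(R)$ (absorbing the factors of $\sqrt{H}r_z$ into $\poly(R)$ via the implicit constants in the statement) such that
\begin{equation*}
\sup_{\twonorm{\bh}\le \sqrt{H}r_z}\bigl|f_\FCNN(\bh;\bTheta_\FCNN) - f_\FCNN(\bh;\bTheta_\FCNN')\bigr|
\le L_{\mathrm{par}}\,\twonorm{\vect(\bTheta_\FCNN) - \vect(\bTheta_\FCNN')}.
\end{equation*}

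Second, the parameter set $\{\vect(\bTheta_\FCNN) : \twonorm{\vect(\bTheta_\FCNN)} \le R\}$ is a Euclidean ball in $\reals^{p}$ of radius $R$, where $p = m_g(HD_e + 2) \asymp m_gHD_e$. A standard volumetric argument (e.g., Vershynin's high-dimensional probability, Corollary 4.2.13) gives the bound $\log\mathcal{C}(B_2^p(R),\twonorm{\cdot},\delta) \le p\log(1 + 2R/\delta)$. Setting $\delta = \epsilon/L_{\mathrm{par}}$ and pulling back the covering through the Lipschitz map produces a covering of $\mathcal{F}_\FCNN$ in $d^\FCNN_\infty$ of radius $\epsilon$. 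Taking logarithms yields
\begin{equation*}
\log\mathcal{C}(\mathcal{F}_\FCNN, d^\FCNN_\infty, \epsilon) \le m_g(HD_e + 2)\log\bigl(1 + 2RL_{\mathrm{par}}/\epsilon\bigr) \lesssim m_gHD_e\,\log(1 + \poly(R)/\epsilon),
\end{equation*}
which is the desired bound.

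There is no serious obstacle here; the argument is essentially textbook. The only thing to watch is the bookkeeping of the Lipschitz constant: one must ensure that $L_{\mathrm{par}}$ depends only polynomially on $R$ (and on fixed quantities like $H$, $r_z$ which get absorbed into $\poly(R)$), so that the $\log(R L_{\mathrm{par}}/\epsilon) = \log(\poly(R)/\epsilon)$ step is valid. The decomposition above makes each contribution to $L_{\mathrm{par}}$ transparently polynomial, so the claim follows after combining the two steps.
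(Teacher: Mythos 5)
Your proof is correct and follows essentially the same route as the paper: the paper derives this as a special case of its general multi-layer covering lemma (Lemma for $\mathcal{F}_{\NN,L}$), whose proof is exactly your two steps — a $\poly(R)$ Lipschitz bound of the network output in the parameters over the bounded input ball, followed by a volumetric $\epsilon$-net of the parameter ball of dimension $p \asymp m_g H D_e$. The only bookkeeping point, which you already flag, is absorbing $\sqrt{H}r_z$ into $\poly(R)$, which is valid under the stated hypotheses.
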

This is a special case of Lemma~\ref{lem:mlp_covering}, proved in Appendix~\ref{app:proof_rnn}.

For the next step, define the distance
$$d^\QK_\infty(\bTheta_\QK,\bTheta'_\QK) \coloneqq \sup_{\bp,j}\twonorm{\bTheta_\QK^\top\bz_j - {\bTheta'}_\QK^\top\bz_j}$$
on $\varTheta_\QK$, where we recall $\bTheta_\QK = (\bW^{(1)}_\QK,\hdots,\bW^{(H)}_\QK) \in \reals^{D_e \times HD_e}$.
The following lemma relates the covering number of the multi-head attention layer to the matrix covering number of the class of attention parameters.

\begin{lemma}
    Suppose $\twonorm{\bz^{(i)}_j} \leq r_z$ for all $i\in [n]$ and $j \in [N]$. Then,
    $$\log\mathcal{C}(\mathcal{F}_\Attn,d^\Attn_\infty,\epsilon) \leq \log\mathcal{C}\left(\varTheta_\QK,d^\QK_\infty,\frac{\epsilon}{2r_z^2}\right).$$
\end{lemma}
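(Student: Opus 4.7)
The plan is to show that the multi-head attention map is globally Lipschitz in its QK parameters, with constant $2r_z^2$ going from the $d^\QK_\infty$ metric to the $d^\Attn_\infty$ metric. Once this is established, any $\epsilon/(2r_z^2)$-cover of $\varTheta_\QK$ under $d^\QK_\infty$ immediately pushes forward, via the parameter-to-function map, to an $\epsilon$-cover of $\mathcal{F}_\Attn$ under $d^\Attn_\infty$, which is exactly what the lemma asserts.

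First, I would reduce to a per-head estimate. Fix a prompt $\bp$, a position $i$, and two parameter tuples $\bTheta_\QK$ and $\bTheta'_\QK$ with $d^\QK_\infty(\bTheta_\QK,\bTheta'_\QK) \leq \delta$. Since $f^{(H)}_\Attn$ concatenates single-head outputs,
\[
\twonorm{f^{(H)}_\Attn(\bp;\bTheta_\QK)_i - f^{(H)}_\Attn(\bp;\bTheta'_\QK)_i}^2 = \sum_{h=1}^H \twonorm{f_\Attn(\bp;\bW^{(h)}_\QK)_i - f_\Attn(\bp;{\bW^{(h)}_\QK}')_i}^2,
\]
and the definition of $d^\QK_\infty$ likewise splits as $\sum_h \twonorm{(\bW^{(h)}_\QK - {\bW^{(h)}_\QK}')^\top \bz_l}^2 \leq \delta^2$ uniformly in the input token $\bz_l$ appearing in any training prompt.

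Second, I would bound a single head. Writing the attention logits as $s_l = \binner{\bz_i}{\bW^{(h)}_\QK \bz_l} = \binner{(\bW^{(h)}_\QK)^\top \bz_i}{\bz_l}$, and $s'_l$ analogously, Cauchy--Schwarz together with $\twonorm{\bz_l}\leq r_z$ yields $\|s - s'\|_\infty \leq r_z\delta_h$, where $\delta_h := \twonorm{(\bW^{(h)}_\QK - {\bW^{(h)}_\QK}')^\top \bz_i}$. The standard fact that the softmax Jacobian $\diag(p) - pp^\top$ has column sums of absolute values equal to $2p_k(1-p_k)\leq 1$ (hence $\ell_\infty\to\ell_1$ operator norm at most $2$) then gives, after integrating along the straight line of logits, $\|\softmax(s) - \softmax(s')\|_1 \leq 2\|s-s'\|_\infty \leq 2 r_z \delta_h$. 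Since the attention output at position $i$ is the weighted average $\sum_j \softmax_j(s)\,\bz_j$, a final application of the triangle inequality together with $\twonorm{\bz_j}\leq r_z$ yields
\[
\twonorm{f_\Attn(\bp;\bW^{(h)}_\QK)_i - f_\Attn(\bp;{\bW^{(h)}_\QK}')_i} \leq r_z\,\|\softmax(s) - \softmax(s')\|_1 \leq 2 r_z^2 \delta_h.
\]

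Finally, I would aggregate across heads. Summing squares,
\[
\twonorm{f^{(H)}_\Attn(\bp;\bTheta_\QK)_i - f^{(H)}_\Attn(\bp;\bTheta'_\QK)_i}^2 \leq 4 r_z^4 \sum_h \delta_h^2 = 4 r_z^4 \twonorm{(\bTheta_\QK - \bTheta'_\QK)^\top \bz_i}^2 \leq 4 r_z^4 \delta^2,
\]
and taking a square root together with the supremum over $(\bp,i)$ gives $d^\Attn_\infty(f_\Attn,f'_\Attn) \leq 2 r_z^2 \cdot d^\QK_\infty(\bTheta_\QK,\bTheta'_\QK)$. Mapping an optimal $\epsilon/(2r_z^2)$-cover of $\varTheta_\QK$ through the parameter-to-function map therefore produces an $\epsilon$-cover of $\mathcal{F}_\Attn$, yielding the claimed covering-number inequality. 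The only mildly nontrivial ingredient is the $\ell_\infty \to \ell_1$ contractivity of the softmax; everything else is Cauchy--Schwarz together with the token-norm hypothesis, so I anticipate no real obstacle beyond careful bookkeeping of the per-head aggregation.
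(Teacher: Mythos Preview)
Your proof is correct and follows essentially the same route as the paper's: a per-head decomposition, the softmax $\ell_\infty\to\ell_1$ Lipschitz bound with constant $2$ (the paper cites \cite[Corollary~A.7]{edelman2022inductive} while you give a Jacobian argument), and two applications of the token-norm bound $r_z$ via Cauchy--Schwarz / convex combination, aggregated across heads to obtain $d^\Attn_\infty \leq 2r_z^2\, d^\QK_\infty$. The only cosmetic slip is that each column $\ell_1$-sum of the softmax Jacobian is $2p_k(1-p_k)\leq \tfrac12$, not $\leq 1$; the constant $2$ you need comes from summing these over all columns, $\sum_k 2p_k(1-p_k)\leq 2$, which upper-bounds the $\ell_\infty\to\ell_1$ operator norm.
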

\begin{proof}
    We recall that $\bZ \in \reals^{N \times D_e}$ denotes the encoded prompt, and $\softmax$ is applied row-wise. For conciseness, Let $\Delta \coloneqq \sup_{\bp,j}\twonorm{f^{(H)}_\Attn(\bp;\bTheta_\QK)_j - f^{(H)}_\Attn(\bp;\hat{\bTheta}_\QK)_j}^2$. Then we have
    \begin{align*}
        \Delta &= \sup_{\bp,j \in S_n}\sum_{h\in [H]}\twonorm{f_\Attn(\bp;\bW^{(h)}_\QK)_j - f_\Attn(\bp;\hat{\bW}^{(h)}_\QK)_j}^2\\
        &= \sup_{\bp,j \in S_n}\sum_{h \in [H]}\twonorm{\softmax\big(\bz_j^\top\bW_\QK^{(h)}\bZ^\top\big)\bZ - \softmax\big(\bz_j^\top\hat{\bW}_\QK^{(h)}\bZ^\top\big)\bZ}^2\\
        &\leq \sup_{\bp,j \in S_n}\sum_{h\in [H]}\norm{\bZ^\top}_{2,\infty}^2\norm{\softmax(\bz_j^\top\bW^{(h)}_\QK\bZ^\top)^\top - \softmax(\bz_j^\top\hat{\bW}^{(h)}_\QK\bZ^\top)^\top}_{1}^2,
    \end{align*}
    where we used Lemma~\ref{lem:matrix_holder_ineq} for the last inequality. Moreover, by~\cite[Corollary A.7]{edelman2022inductive},
    \begin{align*}
    \onenorm{\softmax\big(\bz_j^\top\bW^{(h)}_\QK\bZ^\top\big)^\top - \softmax\big(\bz_j^\top\hat{\bW}^{(h)}_\QK\bZ^\top\big)} &\leq 2\infnorm{\bZ{\bW^{(h)}}^\top_\QK\bz_j - \bZ{\hat{\bW}^{(h)}}{}^\top_\QK\bz_j}\\
        &\leq 2\norm{\bZ^\top}_{2,\infty}\twonorm{{\bW^{(h)}}^\top_\QK\bz_j - {\hat{\bW}^{(h)}}{}^\top_\QK\bz_j}.
    \end{align*}
    Consequently, 
    \begin{align*}
        \Delta &\leq 4r_z^4\sup_{\bp,j \in S_n}\sum_{h\in [H]}\twonorm{{\bW^{(h)}_\QK}^\top \bz_j - {\hat{\bW}^{(h)}}{}^\top_\QK\bz_j}^2\\
        &= 4r_z^4 \sup_{\bp,j \in S_n}\twonorm{\bTheta_\QK^\top \bz_j - {\hat{\bTheta}_\QK}^\top\bz_j}^2,\\
    \end{align*}
    which completes the proof.
\end{proof}

Further, we have the following covering number estimate for $\varTheta_{\QK}$.
\begin{lemma}\label{lem:QK_covering}
    Suppose $\varTheta_\QK = \{\norm{\bTheta_\QK}_{2,1} \leq R_{2,1},\, \fronorm{\bTheta_\QK} \leq R_F\}$ and $\twonorm{\bz^{(i)}_j} \leq r_z$ for all $i \in [n]$ and $j \in [N]$. Then,
    $$\log\mathcal{C}\left(\varTheta_\QK,d^\QK_\infty,\epsilon\right) \lesssim \min\left(\frac{r_z^2R_{2,1}^2\log(2HD_e^2)}{\epsilon^2}, HD_e^2\log\Big(1 + \frac{2R_Fr_z}{\epsilon}\Big)\right).$$
\end{lemma}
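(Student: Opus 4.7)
The bound is a minimum of two terms, which I would establish separately. The second term, $HD_e^2\log(1+2R_Fr_z/\epsilon)$, is a straightforward volumetric estimate: since every data token satisfies $\twonorm{\bz^{(i)}_j}\le r_z$, a column-by-column Cauchy--Schwarz argument gives
\[
d^\QK_\infty(\bTheta_\QK,\bTheta'_\QK)=\sup_{\bp,j}\twonorm{(\bTheta_\QK-\bTheta'_\QK)^\top \bz_j}\le r_z\,\fronorm{\bTheta_\QK-\bTheta'_\QK}.
\]
Hence any Frobenius-norm cover of the ball $\{\bTheta\in\reals^{D_e\times HD_e}:\fronorm{\bTheta}\le R_F\}$ at scale $\epsilon/r_z$ is also a $d^\QK_\infty$-cover at scale $\epsilon$, and the standard volumetric bound for Euclidean balls in $\reals^{HD_e^2}$ immediately produces the claimed estimate.

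The first term, $r_z^2R_{2,1}^2\log(2HD_e^2)/\epsilon^2$, comes from a matrix Maurey/Zhang-type sparsification argument. I would decompose
\[
\bTheta_\QK=\sum_{l=1}^{HD_e} c_l\,\bu_l\,\boldsymbol{e}_l^\top,\qquad c_l=\twonorm{\bTheta_{\QK,:l}},\quad \bu_l=\bTheta_{\QK,:l}/c_l,
\]
so that $\sum_l c_l=\norm{\bTheta_\QK}_{2,1}\le R_{2,1}$ and $\bTheta_\QK/R_{2,1}$ is a convex combination over rank-$1$ atoms of the form $\bu\,\boldsymbol{e}_l^\top$ with $\twonorm{\bu}\le 1$ and $l\in[HD_e]$. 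Maurey's empirical method then produces, for each $\bTheta_\QK$, a reconstruction $\hat{\bTheta}_\QK=\frac{R_{2,1}}{T}\sum_{t=1}^T \bu_{k_t}\,\boldsymbol{e}_{k_t}^\top$ with $T=\Theta(R_{2,1}^2r_z^2\log(2HD_e^2)/\epsilon^2)$. Concentration is controlled by a vector-valued Hoeffding/Pinelis bound: for each fixed $\bz_j^{(i)}$, the centered summands are vectors in $\reals^{HD_e}$ with $\ell_2$ norm at most $2R_{2,1}r_z$, giving $\P[\twonorm{(\hat{\bTheta}_\QK-\bTheta_\QK)^\top \bz_j^{(i)}}>\epsilon]\le 2\exp(-cT\epsilon^2/(R_{2,1}^2r_z^2))$. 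Counting the distinct reconstructions $\hat{\bTheta}_\QK$ via the matrix Maurey/Zhang bound (as in the matrix covering lemma of Bartlett--Foster--Telgarsky and the analogous bounds used in \cite{edelman2022inductive,truong2024rank}) then yields the $\log(2HD_e^2)$ factor.

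The main technical obstacle is handling the continuous atom parameter $\bu\in\mathbb{S}^{D_e-1}$ cleanly: a naive $\delta$-net of the unit sphere would introduce a spurious $D_e\log(R_{2,1}r_z/\epsilon)$ factor in place of the crisp $\log(2HD_e^2)$. The standard resolution, built into the matrix Maurey machinery, is to perform the sparsification jointly across matrix atoms so that the enumeration of distinct $\hat{\bTheta}_\QK$ is controlled by the $D_e\cdot HD_e=HD_e^2$ matrix entries rather than by discretizing each direction separately. Taking the minimum of the two resulting bounds completes the lemma.
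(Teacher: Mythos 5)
Your proposal matches the paper's proof, which is a two-line argument: the second term follows from $\twonorm{(\bTheta_\QK-\hat{\bTheta}_\QK)^\top\bz_j}\le r_z\fronorm{\bTheta_\QK-\hat{\bTheta}_\QK}$ plus a volumetric cover of the Frobenius ball, and the first term is cited directly to Maurey's matrix sparsification lemma (Lemma 3.2 of Bartlett--Foster--Telgarsky), i.e., exactly the entrywise-atom decomposition you describe for resolving the continuous-direction issue. Your write-up is a faithful (and more detailed) expansion of the same two steps, so no further comparison is needed.
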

\begin{proof}
    The first estimate comes from Maurey's sparsification lemma~\cite[Lemma 3.2]{bartlett2017spectrally}, while the second estimate is based on the inequality
    $$\twonorm{\bTheta_\QK^\top\bz_j - {\hat{\bTheta}_\QK}^\top\bz_j} \leq r_z\fronorm{\bTheta_\QK - \hat{\bTheta}_\QK},$$
    and covering $\varTheta_\QK$ with the Frobenius norm, see e.g.\ Lemma~\ref{lem:unit_ball_packing}.
\end{proof}

Finally, we obtain the following covering number for $\mathcal{F}_{\TR}$.
\begin{proposition}\label{prop:tr_covering}
    Suppose $\twonorm{\ba_\FCNN} \leq r_{m,a}$, $\fronorm{(\bW_\FCNN,\bb_\FCNN)} \leq R_{m,w}$, and $\norm{\bW^{(h)}_\QK}_{2,1} \leq r_{\QK}$ for all $h \in [H]$. Further assume $\twonorm{\bz^{(i)}_j} \leq r_z$ for all $i \in [n]$ and $j \in [N]$. Let $R \coloneqq \max(r_{m,a},R_{m,w},r_z)$. Then,
    \begin{align*}
        \log \mathcal{C}(\mathcal{F}_{\TR},d_{\mathcal{F}},\epsilon) \lesssim& m_gHD_e\log(1 + R/\epsilon) \\
        &+ \min\left(\frac{r_z^6r_{m,a}^2R_{m,w}^2H^2r_{QK}^2\log(HD_e^2)}{\epsilon^2},HD_e^2\log\Big(1 + \frac{\sqrt{H}r_\QK r_z^3r_{m,a}R_{m,w}}{\epsilon}\Big)\right).
    \end{align*}
\end{proposition}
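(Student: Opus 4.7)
The plan is to chain the four lemmas above, splitting the $\epsilon$-budget between the feedforward head and the attention sublayer. First, each $f_\FCNN(\bh) = \ba_\FCNN^\top \sigma(\bW_\FCNN \bh + \bb_\FCNN)$ in $\mathcal{F}_\FCNN$ is Lipschitz with constant at most $L_f := \twonorm{\ba_\FCNN}\,\opnorm{\bW_\FCNN} \leq r_{m,a} R_{m,w}$, using the $1$-Lipschitzness of $\sigma$ and $\opnorm{\bW_\FCNN} \leq \fronorm{(\bW_\FCNN,\bb_\FCNN)} \leq R_{m,w}$. Then setting $\epsilon_\FCNN = \epsilon/2$ and $\epsilon_\Attn = \epsilon/(2L_f)$ and invoking Lemma~\ref{lem:tr_covering_nn_attn} (the input-norm requirement $\twonorm{f^{(H)}_\Attn(\bp)_j} \leq \sqrt{H}\,r_z$ is immediate since each single-head output is a softmax-convex combination of $\{\bz_j\}$) yields
$$\log \mathcal{C}(\mathcal{F}_\TR, d^\TR_\infty, \epsilon) \leq \log\mathcal{C}\bigl(\mathcal{F}_\FCNN, d^\FCNN_\infty, \epsilon/2\bigr) + \log\mathcal{C}\bigl(\mathcal{F}_\Attn, d^\Attn_\infty, \epsilon/(2L_f)\bigr).$$

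The first summand is controlled by the feedforward covering estimate established just above, contributing at most $m_g H D_e \log(1 + R/\epsilon)$ after absorbing polynomial factors of $R$ into the logarithm. For the second summand, I first pass from the functional metric $d^\Attn_\infty$ to the parameter metric $d^\QK_\infty$ via the attention-to-parameter lemma, so the target radius becomes $\epsilon/(4 r_z^2 L_f)$. To invoke Lemma~\ref{lem:QK_covering} I then need norm bounds on the horizontal concatenation $\bTheta_\QK \in \reals^{D_e \times H D_e}$ in terms of the per-head $(2,1)$ bound: stacking columns gives $\norm{\bTheta_\QK}_{2,1} = \sum_h \norm{\bW^{(h)}_\QK}_{2,1} \leq H r_\QK$, while the elementary inequality $\fronorm{\bW^{(h)}_\QK} \leq \norm{\bW^{(h)}_\QK}_{2,1}$ (applied to the vector of column $\ell_2$-norms) yields $\fronorm{\bTheta_\QK} \leq \sqrt{H}\, r_\QK$.

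Plugging $R_{2,1} = H r_\QK$, $R_F = \sqrt{H}\, r_\QK$, and radius $\epsilon/(4 r_z^2 L_f)$ into Lemma~\ref{lem:QK_covering} produces the two alternatives inside the $\min$: the Maurey-type bound becomes $r_z^2 (H r_\QK)^2 \log(2HD_e^2)/\bigl(\epsilon/(4 r_z^2 L_f)\bigr)^2 \asymp r_z^6 r_{m,a}^2 R_{m,w}^2 H^2 r_\QK^2 \log(HD_e^2)/\epsilon^2$, while the Frobenius parameter-counting bound scales like $HD_e^2 \log\bigl(1 + \sqrt{H}\, r_\QK r_z^3 r_{m,a} R_{m,w}/\epsilon\bigr)$. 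Summing with the feedforward contribution delivers the claim.

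There is no real obstacle beyond careful bookkeeping: the argument chains the established covering estimates together with Lipschitz propagation. The one step to watch is the conversion from per-head to aggregate norm bounds on $\bTheta_\QK$ via its block structure, along with tracking how the factor $L_f = r_{m,a} R_{m,w}$ migrates from the outer covering radius into the inner one, which is exactly what introduces the $r_{m,a}^2 R_{m,w}^2$ and $r_{m,a} R_{m,w}$ factors in the two branches of the $\min$.
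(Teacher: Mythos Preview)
Your proposal is correct and follows exactly the same route as the paper's proof: establish $L_f = r_{m,a}R_{m,w}$, derive the aggregate bounds $R_{2,1} \leq H r_\QK$ and $R_F \leq \sqrt{H}\,r_\QK$ from the per-head $(2,1)$ constraint, and chain Lemmas~\ref{lem:tr_covering_nn_attn}, the feedforward covering estimate, the attention-to-parameter lemma, and Lemma~\ref{lem:QK_covering}. The paper's proof merely records these three facts and says ``combine the previous lemmas,'' so your write-up is in fact a more explicit rendering of the same argument.
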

\begin{proof}
    The proof follows from a number of observations. First, given the parameterization in the statement of the proposition, we have $L_f = r_{m,a}R_{m,w}$ in Lemma~\ref{lem:tr_covering_nn_attn}. Moreover, we have $R_F \leq \sqrt{H}r_\QK$ and $R_{2,1} \leq Hr_\QK$ in Lemma~\ref{lem:QK_covering}. The rest follows from combining the statements of the previous lemmas.
\end{proof}

Next, we will use the covering number bound to provide a bound for Rademacher complexity. Recall that for a class of loss functions $\mathcal{L}$, the empirical and population Rademacher complexities are defined as
$$\hat{\mathfrak{R}}_n(\mathcal{L}) \coloneqq \Earg{\sup_{\ell \in \mathcal{L}}\frac{1}{n}\sum_{i=1}^n\xi_i\ell(\bp^{(i)},\by^{(i)},j^{(i)})}, \quad \mathfrak{R}_n(\mathcal{L}) \coloneqq \Eargs{(\bp,\by,j)}{\hat{\mathfrak{R}}_n(\mathcal{L})}$$
respectively, where $(\xi_i)$ are i.i.d.\ Rademacher random variables.
Let the class of loss functions be defined by
\begin{equation}\label{eq:tr_loss_class}
    \mathcal{L}_\tau \coloneqq \{(\bp,\by,j) \mapsto (f_{\TR}(\bp)_j - y_j)^2\wedge \tau \,:\, f_{\TR} \in \mathcal{F}_{\TR}\},
\end{equation}
for some constant $\tau > 0$ to be fixed later. We then have the following bound on Rademacher complexity.
\begin{lemma}\label{lem:tr_radem_bound}
    Suppose $\max_{i \in [n], j \in [N]}\twonorm{\bz^{(i)}_j} \leq r_z$. For the loss class $\mathcal{L}_\tau$ given by~\eqref{eq:tr_loss_class}, we have
    $$\mathfrak{\hat{R}}_n(\mathcal{L}_\tau) \leq \tilde{\mathcal{O}}\left(\tau\sqrt{\frac{C_1 + (C_2\wedge C_3)}{n}}\right),$$
    where $C_1 = m_gH D_e$, $C_2 = r_z^6r_{m,a}^2R_{m,w}^2H^2r_{QK}^2$, and $C_3 = HD_e^2$.
\end{lemma}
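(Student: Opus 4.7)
The plan is to apply Dudley's entropy integral to the loss class $\mathcal{L}_\tau$, using the covering-number bound of Proposition~\ref{prop:tr_covering} to control its metric entropy. I would begin by observing that the truncated squared loss $\phi_y(u) := (u-y)^2 \wedge \tau$ is bounded in $[0,\tau]$ and $2\sqrt{\tau}$-Lipschitz in $u$: its derivative vanishes where $|u-y| > \sqrt{\tau}$ and is bounded by $2\sqrt{\tau}$ elsewhere. Consequently, any $\epsilon/(2\sqrt{\tau})$-cover of $\mathcal{F}_{\TR}$ in $d^{\TR}_\infty$ induces an $\epsilon$-cover of $\mathcal{L}_\tau$ in the empirical sup-norm, and combining with Proposition~\ref{prop:tr_covering} yields
\begin{align*}
\log \mathcal{C}(\mathcal{L}_\tau, d_\infty, \epsilon) \lesssim C_1 \log\Big(1 + \tfrac{R\sqrt{\tau}}{\epsilon}\Big) + \min\Big(\tfrac{C_2\,\tau \log(HD_e^2)}{\epsilon^2},\; C_3 \log\Big(1 + \tfrac{\sqrt{\tau}\, C_4}{\epsilon}\Big)\Big),
\end{align*}
where $C_4 := \sqrt{H}\,r_\QK\,r_z^3\,r_{m,a}\,R_{m,w}$ is the scale inherited from the Frobenius estimate of Lemma~\ref{lem:QK_covering}.

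Since elements of $\mathcal{L}_\tau$ lie in $[0,\tau]$ and $L_\infty$ covers upper bound the empirical $L_2$ covers required by the classical Dudley integral, I would then invoke
\begin{align*}
\hat{\mathfrak{R}}_n(\mathcal{L}_\tau) \lesssim \inf_{\alpha>0}\Big(\alpha + \frac{1}{\sqrt{n}} \int_\alpha^\tau \sqrt{\log \mathcal{C}(\mathcal{L}_\tau, d_\infty, \epsilon)}\, d\epsilon\Big)
\end{align*}
and choose $\alpha = n^{-1/2}$. The feedforward contribution integrates to $\int_\alpha^\tau \sqrt{C_1 \log(R\sqrt{\tau}/\epsilon)}\, d\epsilon \lesssim \tau \sqrt{C_1 \log n}$ because the integrand is decreasing; the Maurey attention term produces $\int_\alpha^\tau \sqrt{C_2 \tau}/\epsilon\, d\epsilon = \sqrt{C_2 \tau}\,\log(\tau/\alpha) \lesssim \tau \sqrt{C_2}\,\log n$ (absorbing $\sqrt{\tau}$ into $\tau$ using $\tau \gtrsim 1$); the Frobenius attention term similarly gives $\tilde{\mathcal{O}}(\tau \sqrt{C_3 \log n})$. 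Taking the minimum of the two attention bounds throughout the integration and adding the feedforward contribution yields $\tilde{\mathcal{O}}\bigl(\tau \sqrt{(C_1 + (C_2 \wedge C_3))/n}\bigr)$, as claimed.

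The main technical care will lie in tracking the $\sqrt{\tau}$ rescaling from pulling the Lipschitz constant of $\phi_y$ through the covering and verifying that all factors inside the logarithms remain polylogarithmic in the problem parameters, so they are safely absorbed into $\tilde{\mathcal{O}}$. A secondary point is that integrating $1/\epsilon$ in the Maurey contribution introduces an extra $\log(\tau/\alpha)$ factor, which is again polylogarithmic once $\alpha$ is polynomially small, and the additive $\alpha$ term from Dudley is of the desired order $n^{-1/2}$. Beyond these bookkeeping details, the argument is a standard chaining-plus-covering calculation with most of the substance already expended in deriving Proposition~\ref{prop:tr_covering}.
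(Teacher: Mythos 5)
Your proposal is correct and follows essentially the same route as the paper's proof: Lipschitz contraction of the truncated squared loss at scale $2\sqrt{\tau}$, the covering-number bound from Proposition~\ref{prop:tr_covering}, Dudley's entropy integral with $\alpha = n^{-1/2}$, and term-by-term integration of the feedforward, Maurey, and Frobenius contributions (including the extra $\log(\tau/\alpha)$ from the $1/\epsilon$ term, absorbed into $\tilde{\mathcal{O}}$). No substantive differences.
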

\begin{proof}
    Let $\mathcal{C}(\mathcal{L},d^{\mathcal{L}}_\infty,\epsilon)$ denote the $\epsilon$-covering number of $\mathcal{L}$,
    where $\ell(\bp,\by,j) = (f(\bp)_j - y_j)^2 \wedge \tau$ and $\ell'(\bp,\by,j) = (f'(\bp)_j - y_j)^2 \wedge \tau$. Then, for any $\alpha \geq 0$, by a standard chaining argument,
    \begin{align*}
        \hat{\mathfrak{R}}_n(\mathcal{L}_\tau) &\lesssim \alpha + \int_\alpha^{\tau}\sqrt{\frac{\log\mathcal{C}(\mathcal{L},d^\mathcal{L}_\infty,\epsilon)}{n}}\dee \epsilon.\\
        &\lesssim \alpha + \int_{\alpha}^{\tau}\sqrt{\frac{\log\mathcal{C}(\mathcal{F},d^{\TR}_\infty,\epsilon/(2\sqrt{\tau}))}{n}}\\
        &\lesssim \alpha + \int_\alpha^{\tau}\sqrt{\frac{C_1\log(R\sqrt{\tau}/\epsilon)}{n}}\dee\epsilon + \left\{\int_{\alpha}^{\tau}\sqrt{\frac{\tau C_2\log(HD_e^2)}{n\epsilon^2}}\dee\epsilon\right\} \wedge \left\{\int_\alpha^{\tau}\sqrt{\frac{C_3\log(1 + C_4\sqrt{\tau}/\epsilon)}{n}}\dee\epsilon\right\}\\
        &\lesssim \alpha + \sqrt{\frac{\tau^2 C_1\log(R\sqrt{\tau}/\alpha)}{n}} + \left\{\sqrt{\frac{\tau C_2\log(HD_e^2)}{n}}\log\left(\frac{\tau}{\alpha}\right)\right\}\wedge \left\{\sqrt{\frac{\tau^2 C_3\log(1 + C_4\sqrt{\tau}/\alpha)}{n}}\right\},
    \end{align*}
    where $(C_i)_{i=1}^3$ are given in the statement of the lemma and $C_4 = \sqrt{H}r_{\QK}r_z^3r_{m,a}R_{m,w}$.
    Choosing $\alpha = 1/\sqrt{n}$ completes the proof.
\end{proof}

Using standard symmetrization techniques, the above immediately yields a high probability upper bound for the expected truncated loss of any estimator in $\varTheta_{\TR}$.
\begin{corollary}\label{cor:tr_gen_bound}
    Let $\hat{\bTheta} = \argmin_{\bTheta \in \varTheta_\TR}\hat{R}^\TR_n(\bTheta)$, where $\varTheta_\TR$ is described in Proposition~\ref{prop:tr_covering}. Define $r_z = \sqrt{r_x^2 + d(1 + 1/q)}$ where $r_x$ is defined in Lemma~\ref{lem:input_norm_bound}. Let $C_1$, $C_2$, and $C_3$ be defined as in Lemma~\ref{lem:tr_radem_bound}. Then, with probability at least $1-\delta - (nN)^{-1/2}$ over $S_n$, we have
    $$R^\TR_\tau(\hat{\bTheta}) - \hat{R}^\TR_n(\hat{\bTheta}) \leq \tilde{\mathcal{O}}\left(\tau\sqrt{\frac{(C_1 + C_2\wedge C_3)}{n}}\right) + \mathcal{O}\left(\tau\sqrt{\frac{\log(1/\delta)}{n}}\right),$$
    where $R^\RNN_\tau(\hat{\bTheta}) \coloneqq \Eargs{\bp,j,y}{(\hat{y}_{\TR}(\bp;\hat{\bTheta})_j - y_j)^2\wedge \tau}$
\end{corollary}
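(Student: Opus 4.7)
The plan is to combine the empirical Rademacher complexity bound of Lemma~\ref{lem:tr_radem_bound} with a standard symmetrization plus McDiarmid argument, all carried out on the high-probability event that the input encodings have bounded norm.

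First, I would introduce the good event $\mathcal{E} = \{\max_{i \in [n], j \in [N]} \twonorm{\bx^{(i)}_j} \leq r_x\}$. By Lemma~\ref{lem:input_norm_bound}, $\Parg{\mathcal{E}} \geq 1 - (nN)^{-1/2}$. On $\mathcal{E}$, combining the bound on each $\twonorm{\bx^{(i)}_j}$ with the deterministic norm of the positional encoding (which equals $\sqrt{d(1+1/q)}$ by Assumption~\ref{assump:enc} and the construction of $\enc$) yields $\twonorm{\bz^{(i)}_j} \leq r_z$ uniformly, so the hypothesis of Lemma~\ref{lem:tr_radem_bound} holds and controls $\hat{\mathfrak{R}}_n(\mathcal{L}_\tau)$ at the target rate.

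Second, on $\mathcal{E}$ every loss in $\mathcal{L}_\tau$ lies in $[0,\tau]$, so the empirical process $\bTheta \mapsto R^\TR_\tau(\bTheta) - \hat{R}^\TR_{n,\tau}(\bTheta)$, where $\hat{R}^\TR_{n,\tau}$ denotes the truncated empirical risk, satisfies bounded differences of order $\tau/n$. Standard symmetrization gives $\mathbb{E}\sup_{\bTheta \in \varTheta_\TR}\{R^\TR_\tau(\bTheta) - \hat{R}^\TR_{n,\tau}(\bTheta)\} \leq 2\mathfrak{R}_n(\mathcal{L}_\tau)$, and McDiarmid's inequality upgrades this to
$$\sup_{\bTheta \in \varTheta_\TR}\{R^\TR_\tau(\bTheta) - \hat{R}^\TR_{n,\tau}(\bTheta)\} \leq 2\mathfrak{R}_n(\mathcal{L}_\tau) + \tau\sqrt{\frac{2\log(1/\delta)}{n}}$$
with probability at least $1-\delta$. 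A further McDiarmid step converts the population Rademacher complexity into the empirical one at an additional cost of $\tau\sqrt{\log(1/\delta)/n}$, after which Lemma~\ref{lem:tr_radem_bound} supplies the advertised rate. Since truncation only decreases the loss, $\hat{R}^\TR_{n,\tau}(\hat{\bTheta}) \leq \hat{R}^\TR_n(\hat{\bTheta})$, so the bound on $R^\TR_\tau - \hat{R}^\TR_{n,\tau}$ automatically controls the $R^\TR_\tau - \hat{R}^\TR_n$ quantity that appears in the statement.

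No single step constitutes a genuine obstacle; the only mild care required is that Lemma~\ref{lem:tr_radem_bound} is stated under the bounded-encoding hypothesis, so the symmetrization and McDiarmid arguments must be carried out on $\mathcal{E}$, with $\mathcal{E}^c$ absorbed into the $(nN)^{-1/2}$ failure term via a union bound. The choice $r_z = \sqrt{r_x^2 + d(1+1/q)}$ in the statement is precisely the deterministic envelope for $\twonorm{\bz^{(i)}_j}$ that holds on $\mathcal{E}$, so constants match without extra bookkeeping, and combining the two high-probability bounds yields the stated conclusion.
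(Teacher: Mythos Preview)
Your proposal is correct and matches the paper's approach: the paper's proof is a single sentence declaring the result a ``standard consequence of Rademacher-based generalization bounds'' together with the observation that the truncated empirical risk is dominated by $\hat{R}^\TR_n(\hat{\bTheta})$, and you have simply spelled out those standard steps (good event from Lemma~\ref{lem:input_norm_bound}, symmetrization, McDiarmid, union bound). One small imprecision: McDiarmid and symmetrization do not themselves require the event $\mathcal{E}$, since the $\tau$-truncated loss lies in $[0,\tau]$ unconditionally; $\mathcal{E}$ is needed only to invoke Lemma~\ref{lem:tr_radem_bound} on $\hat{\mathfrak{R}}_n(\mathcal{L}_\tau)$, and the union bound then proceeds exactly as you describe.
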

\begin{proof}
    The proof is a standard consequence of Rademacher-based generalization bounds, with the additional observation that
    $$\frac{1}{n}\sum_{i=1}^n\big(\hat{y}_{\TR}(\bp^{(i)};\hat{\bTheta})_{j^{(i)}} - y^{(i)}_{j^{(i)}}\big)^2\wedge \tau \leq \hat{R}^\TR_n(\hat{\bTheta}).$$
\end{proof}

The last step in the proof of the generalization bound is to bound $R^\TR(\hat{\bTheta})$ with $R^\TR_\tau(\hat{\bTheta})$. This is achieved by the following lemma.
\begin{lemma}\label{lem:loss_truncated}
    Define $\kappa^2 \coloneqq Hr_{m,a}^2R_{m,w}^2r_z^2$. Then, under Assumption~\ref{assump:data}, for $\tau \asymp \kappa^2\log(\kappa^2N\sqrt{n}) + \log(\kappa^2\sqrt{n})^s$, we have
    $$R^\TR(\hat{\bTheta}) - R^\TR_\tau(\hat{\bTheta}) \leq \sqrt{\frac{1}{n}}.$$
\end{lemma}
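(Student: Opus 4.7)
\textbf{Proof plan for Lemma~\ref{lem:loss_truncated}.} The plan is to represent the gap as a tail integral and decouple the two sources of large loss. By the layer‑cake formula,
\[
R^\TR(\hat\bTheta) - R^\TR_\tau(\hat\bTheta) \;=\; \Earg{\big((\hat y_\TR(\bp;\hat\bTheta)_j - y_j)^2 - \tau\big)_+} \;=\; \int_\tau^\infty \P\bigl((\hat y_\TR(\bp;\hat\bTheta)_j - y_j)^2 > t\bigr)\dt,
\]
and the elementary inclusion $\{2A + 2B > t\} \subseteq \{A > t/4\}\cup\{B > t/4\}$ applied to $A=\hat y_\TR^2$, $B=y_j^2$ reduces matters to bounding $\int_{\tau/4}^\infty\P(\hat y_\TR^2 > u)\,\dee u$ and $\int_{\tau/4}^\infty\P(y_j^2 > u)\,\dee u$ separately, each by $1/(2\sqrt{n})$.

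For the predictor term I would exploit two structural facts enforced by $\varTheta_\TR$. First, since softmax attention is a convex combination of value tokens, $\|f^{(H)}_\Attn(\bp;\bTheta_\QK)_j\| \leq \sqrt{H}\max_k\|\bz_k\|$ independently of $\bTheta_\QK$, and $\|\bz_k\|^2 \leq \|\bx_k\|^2 + d(1+1/q)$ by the positional encoding construction. Second, under $\twonorm{\ba_\FCNN}\leq r_{m,a}$ and $\fronorm{(\bW_\FCNN,\bb_\FCNN)}\leq R_{m,w}$ the feedforward head satisfies $|\ba_\FCNN^\top\sigma(\bW_\FCNN\bh+\bb_\FCNN)| \leq \sqrt{2}\,r_{m,a}R_{m,w}(1+\|\bh\|)$. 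Chaining these gives $|\hat y_\TR(\bp;\hat\bTheta)_j| \lesssim r_{m,a}R_{m,w}\sqrt H(\max_k\|\bx_k\|+\sqrt d)$, so $\hat y_\TR^2 \lesssim (\kappa^2/r_z^2)(\max_k\|\bx_k\|^2 + d)$. The subGaussian tail $\P(\max_k\|\bx_k\|^2 > v) \leq N e^{-v/(2C_x e d)}$ from the proof of Lemma~\ref{lem:input_norm_bound} then yields $\P(\hat y_\TR^2 > u) \leq N\exp(-c u/\kappa^2)$ for $u$ above a constant times $\kappa^2$, and integrating from $\tau/4$ dominates the contribution by $1/(2\sqrt n)$ as soon as $\tau \gtrsim \kappa^2\log(\kappa^2 N\sqrt n)$.

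For the label term I would use Assumption~\ref{assump:data}: Markov's inequality gives $\P(|y_j| > v) \leq (C_y r^s)^{r/2}/v^r$ for every $r \geq 1$; optimizing in $r$ at the scale $r \asymp (v^2/(eC_y))^{1/s}$ produces the sub‑Weibull tail $\P(y_j^2 > u) \leq \exp(-c(u/C_y)^{1/s})$. Integrating from $\tau/4$ via the substitution $u = w^s$ gives an upper bound of order $\tau^{(s-1)/s}\exp(-c(\tau/4)^{1/s})$, which is at most $1/(2\sqrt n)$ once $\tau \gtrsim (\log(\kappa^2\sqrt n))^s$. Together the two thresholds are exactly the $\tau \asymp \kappa^2\log(\kappa^2N\sqrt n) + \log(\kappa^2\sqrt n)^s$ stated in the lemma.

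The main technical obstacle is the sub‑Weibull computation for $y_j^2$: for $s>1$ the label tail is heavier than Gaussian, and the integral $\int \exp(-c u^{1/s})\,\dee u$ must be evaluated carefully (via $u=w^s$ or a single integration by parts) to achieve the polylogarithmic, rather than polynomial, threshold in $n$. The rest is bookkeeping: tracking the factor of $\sqrt 2$ from the Lipschitz bound on the feedforward head, absorbing $d(1+1/q)$ and $r_x^2$ into the single quantity $\kappa^2 = Hr_{m,a}^2R_{m,w}^2r_z^2$, and verifying that the uniform (over the random index $j$) bound on $|\hat y_\TR|$ suffices since the attention/feedforward bounds do not depend on $j$.
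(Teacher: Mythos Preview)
Your argument is correct, but the decomposition differs from the paper's. The paper writes $R^\TR(\hat\bTheta)-R^\TR_\tau(\hat\bTheta)\le \Earg{\Delta_y^2\indic[\Delta_y>\sqrt\tau]}$ and applies Cauchy--Schwarz to bound this by $\Earg{\Delta_y^4}^{1/2}\Parg{\Delta_y\ge\sqrt\tau}^{1/2}$; it then controls the fourth moment (via $\Earg{\max_l\|\bx_l\|^4}\lesssim (d\log N)^2$) and the single tail probability separately. You instead use the layer--cake representation and integrate the whole tail $\int_\tau^\infty\P(\Delta_y^2>t)\,\dee t$, after decoupling into the predictor and label pieces. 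Both routes rest on the same probabilistic inputs---the deterministic bound $|\hat y_\TR|\lesssim r_{m,a}R_{m,w}\sqrt H(1+\max_l\|\bz_l\|)$, the subGaussian tail of $\max_l\|\bx_l\|$ from Lemma~\ref{lem:input_norm_bound}, and the sub-Weibull tail of $y_j$ from Assumption~\ref{assump:data}---and yield the same threshold for $\tau$. Your route is slightly more direct in that it avoids computing $\Earg{\Delta_y^4}$ altogether, while the paper's Cauchy--Schwarz split makes the dependence on $\kappa^2$ in the second term $\log(\kappa^2\sqrt n)^s$ more transparent (it enters through the moment factor that the tail probability must absorb). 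Either way the bookkeeping you flag---the factor of $4$ from the change of variables $u=t/4$, the integration of $\exp(-cu^{1/s})$ via $u=w^s$---is routine.
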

\begin{proof}
    For conciseness, define $\Delta_y \coloneqq \abs{\hat{y}_{\TR}(\bp;\hat{\bTheta})_j - y_j}$. By the Cuachy-Schwartz inequality, we have
    \begin{align*}
        R^\TR(\hat{\bTheta}) &= \Earg{\Delta_y^2 \indic\left[\Delta_y \leq \sqrt{\tau}\right]} + \Earg{\Delta_y^2\indic\left[\Delta_y > \sqrt{\tau}\right]}\\
        &\leq R^\TR_\tau(\hat{\bTheta}) + \Earg{\Delta_y^4}^{1/2}\Parg{\Delta_y \geq \sqrt{\tau}}^{1/2}.
    \end{align*}
    Moreover,
    $$\Earg{\Delta_y^4}^{1/2} \leq 2\Earg{y_j^4}^{1/2} + 2\Earg{\hat{y}(\bp;\hat{\bTheta})_j^4}^{1/2}.$$
    By Assumption~\ref{assump:data}, we have $\Earg{y_j^4}^{1/2} \lesssim 1$. Additionally, note that
    \begin{align*}
        \abs{\hat{y}(\bp;\hat{\bTheta})_j} &\leq \twonorm{\ba_\FCNN}(\sqrt{H}\fronorm{\bW_\FCNN}\max_{l \in [N]}\twonorm{\bz_l} + \twonorm{\bb_\FCNN})\\
        &\leq \sqrt{H}r_{m,a}R_{m,w}(1 + \max_{l \in [N]}\twonorm{\bz_l}).
    \end{align*}
    To bound $\max_{l \in [N]}\twonorm{\bz_l}$, we use the subGaussianity of $\twonorm{\bx_l}$ characterized in Assumption~\ref{assump:data}. Specifically, for all $r \geq 1$
    \begin{align*}
        \Earg{\max_{l \in [N]}\twonorm{\bx_l}^4} \leq \Earg{\max_{l \in [N]}\twonorm{\bx_l}^{4r}}^{1/r} &\leq \Earg{\sum_{l=1}^N\twonorm{\bx_l}^{4r}}^{1/r}\\
        &\leq N^{1/r}\Earg{\twonorm{\bx_1}^{4r}}^{1/r}\\
        &\lesssim N^{1/r}C_x^2d^2r^2\\
        &\lesssim (C_xd\log(N))^2,
    \end{align*}
    where the last inequality follows from choosing $r = \log N$.
    As a result,
    $$\Earg{\hat{y}(\bp;\hat{\bTheta})_j^4}^{1/2} \lesssim Hr_{m,a}^2R_{m,w}^2r_z^2\log(N)^2 \eqqcolon \kappa^2\log(N)^2.$$
    We now turn to bounding the probability. We have
    \begin{align*}
        \Parg{\Delta_y \geq \sqrt{\tau}} &\leq \Parg{\abs{y_j} \geq \frac{\sqrt{\tau}}{2}} + \Parg{\abs{\hat{y}(\bp;\hat{\bTheta})_j} \geq \frac{\sqrt{\tau}}{2}}\\
        &\leq \exp\left(-\Omega(\tau^{1/s})\right) + N \exp\Big(-\Omega\Big(\frac{\tau}{Hr_{m,a}^2R_{m,w}^2r_z^2}\Big)\Big),\\
    \end{align*}
    where the second inequality follows from sub-Weibull concentration bounds for $y$ and Lemma~\ref{lem:input_norm_bound}. Choosing
    $\tau = \Theta(\kappa^2\log(\kappa^2N\sqrt{n}) + \log(\kappa^2\sqrt{n})^s)$
    completes the proof.
\end{proof}

\paragraph{Proof of Theorem~\ref{thm:tr_upper_general}.}
The theorem follows immediately from the approximation guarantee of Lemma~\ref{lem:tr_approx}, the generalization bound of Corollary~\ref{cor:tr_gen_bound}, and the truncation control of Lemma~\ref{lem:loss_truncated}.
\myqed

\subsection{Details on Limitations of Transformers with Few Heads}\label{app:tr_few_heads}
While Proposition~\ref{prop:tr_lower_bound_improved} is only meaningful in the setting of $d = \Omega(q)$, the following proposition provides an exact lower bound $H \geq q$ on the number of heads for all $d$, at the expense of additional restrictions on the attention matrix.
\begin{proposition}\label{prop:tr_lower_bound}
    Consider the $q\STR$ data model. Suppose $d=1$ and $y_i = \tfrac{1}{\sqrt{q}}\sum_{j=1}^{q}(x_{t_{ij}}^2 - \mathbb{E}[x_{t_{ij}}^2])$. Assume $x_i \sim \mathcal{N}(0,\sigma_i^2)$ independently, such that $\sigma_i = 1$ for $i < N/2$ and $\sigma_i = 0$ for $i \geq N/2$. 
    Further, assume the attention weights between the data and positional encoding parts of the tokens are fixed at zero, i.e. $\bW^{(h)}_\QK = \begin{pmatrix}\bW^{(h)}_{\bx} & \boldzero_{d \times (q+1)d_e}\\ \boldzero_{(q+1)d_e \times d} & \bW^{(h)}_{\bomega}\end{pmatrix}$ where $\bW^{(h)}_{\bx} \in \reals^{d \times d}$ and $\bW^{(h)}_{\bomega} \in \reals^{(q+1)d_e \times (q+1)d_e}$ are the attention parameters, for $i \in [H]$. 
    Then, there exists a distribution over $(\bt_i)_{i \in [N]}$ such that for any choice of $\bTheta_\TR$, we have
    $$\frac{1}{N}\Earg{\twonorm{\by - \hat{\by}_{\TR}(\bp;\bTheta_{\TR})}^2} \geq 1 - \frac{H}{q}.$$
\end{proposition}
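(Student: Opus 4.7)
The plan is to construct an adversarial distribution on $\bt$ that places all relevant indices in the ``live'' half $\{1,\hdots,\lceil N/2\rceil-1\}$, and then exploit the block-diagonal structure of $\bW^{(h)}_\QK$ together with $x_i=0$ for $i\geq N/2$ to show that the Transformer's prediction at every second-half query depends on $\bx$ only through $H$ scalar linear functionals. A subspace-restricted conditional-variance lower bound then yields the claimed $1-H/q$.

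Concretely, draw each $\bt_i$ independently as a uniform $q$-subset of $\{1,\hdots,\lceil N/2\rceil-1\}$, so that $y_i=\tfrac{1}{\sqrt{q}}\sum_{k=1}^q(x_{a_k}^2-1)$ with $a_k=t_{ik}$ distinct in the active half and $\Vararg{y_i}=2$. For $i\geq N/2$, since $x_i=0$ the attention score $\bz_i^\top\bW^{(h)}_\QK\bz_j$ decomposes by block-diagonality to $\enc(i,\bt_i)^\top\bW^{(h)}_\bomega\enc(j,\bt_j)$, so the softmax weights $\alpha^{(h)}_{ij}$ are deterministic in $\bx$ given $\bt$. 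Since also $x_j=0$ for $j\geq N/2$, the $\bx$-dependent portion of the attention output at position $i$, head $h$, collapses to the scalar $u^{(h)}_i=\sum_{j<N/2}\alpha^{(h)}_{ij}x_j$, while the positional-encoding portion is entirely deterministic. Consequently, conditional on $\bt$, $\hat{y}_\TR(\bp;\bTheta_\TR)_i=G_i(u^{(1)}_i,\hdots,u^{(H)}_i)$ for some $\bx$-free function $G_i$.

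Let $V\subseteq\reals^{\lceil N/2\rceil-1}$ denote the span of the coefficient vectors of the $H$ functionals $u^{(h)}_i$, so $\dim V\leq H$, and set $\bE=[e_{a_1},\hdots,e_{a_q}]$ (orthonormal columns) and $\bM=\bE^\top\bP_V\bE$, where $\bP_V$ is orthogonal projection onto $V$. Since $\bx_{<N/2}\sim\Gsn(\boldzero,\ident)$ and $\hat{y}_i$ depends on $\bx$ only through $\bP_V\bx$,
\begin{align*}
\Earg{(y_i-\hat{y}_i)^2\divmid\bt}\geq\Earg{\Vararg{y_i\divmid\bP_V\bx,\bt}}.
\end{align*}
Decomposing $x_{a_k}=p_k+r_k$ with $r_k$ independent of $\bP_V\bx$ and applying Isserlis' theorem gives $\Covarg{x_{a_k}^2,x_{a_l}^2\divmid\bP_V\bx}=4p_kp_l(\ident_q-\bM)_{kl}+2(\ident_q-\bM)_{kl}^2$. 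Taking expectations and summing over $k,l$, the $\Tr(\bM)$ contributions cancel exactly, yielding $\Earg{\Vararg{y_i\divmid\bP_V\bx,\bt}}=2-2\fronorm{\bM}^2/q$. Because $\bM$ is PSD with $\opnorm{\bM}\leq 1$ and $\Tr(\bM)=\Tr(\bP_V\bE\bE^\top)\leq\rank{\bP_V}\leq H$, we have $\fronorm{\bM}^2\leq\opnorm{\bM}\Tr(\bM)\leq H$, so $\Earg{(y_i-\hat{y}_i)^2\divmid\bt}\geq 2-2H/q$. Summing over the $\geq N/2$ positions $i\geq N/2$ and lower-bounding the first-half contribution by $0$ produces $\tfrac{1}{N}\Earg{\twonorm{\by-\hat{\by}_\TR(\bp;\bTheta_\TR)}^2}\geq 1-H/q$ as claimed.

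The principal obstacle I anticipate is the explicit conditional-variance computation: carefully collapsing $\sum_{k,l}\Earg{\Covarg{x_{a_k}^2,x_{a_l}^2\divmid\bP_V\bx}}$ via Isserlis' theorem and tracking the cancellation that turns the combination of $\Tr(\bM)$ and $\fronorm{\bM}^2$ terms into $2q-2\fronorm{\bM}^2$. The structural reduction---that block-diagonality plus $x_i=0$ forces the second-half predictions to be functions of $H$ linear functionals of $\bx$---is conceptually the heart of the argument, but follows directly once one unpacks the attention formula.
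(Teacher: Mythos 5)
Your proposal is correct and follows essentially the same route as the paper: restrict to positions $i \geq N/2$, use the block-diagonal $\bW^{(h)}_\QK$ together with $x_j = 0$ for $j \geq N/2$ to make the attention scores $\bx$-independent so the prediction is a function of $H$ linear functionals of $\bx$, and then lower-bound by the conditional variance of a Gaussian quadratic given an at-most-$H$-dimensional projection. The only (immaterial) difference is the final computation: the paper conditions on the span of the score vectors restricted to $\reals^q$, where the conditional covariance is an exact rank-$(q-H)$ projection giving variance exactly $2(q-H)$, whereas you work with $\bM = \bE^\top\bP_V\bE$ in the ambient space and bound $\fronorm{\bM}^2 \leq \opnorm{\bM}\Tr(\bM) \leq H$, which yields the same bound.
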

Note that in our approximation constructions for learning $q\STR$, we always fixed the attention weights between data and positional components to be zero, which is why we assume the same in Proposition~\ref{prop:tr_lower_bound}. 

\paragraph{Proof of Proposition~\ref{prop:tr_lower_bound}.}
We will simply choose $\bt_i = (1,\hdots,q)$ deterministically for $i \geq \frac{N}{2}$ and draw $\bt_i$ from an arbitrary distribution for $i < N/2$. Note that we have
$$R^\TR(\bTheta_{\TR}) = \frac{1}{N}\sum_{i=1}^N\Earg{(y_i - \hat{y}_{\TR}(\bp;\bTheta_{\TR})_i)^2} \geq \frac{1}{N}\sum_{i=N/2}^N\Earg{(y_i - \hat{y}_{\TR}(\bp;\bTheta_{\TR})_i)^2}.$$
Let $\phi : \reals^{HD_e} \to \reals$ denote the mapping by the feedforward layer. Fix some $i \geq N/2$. Note that
\begin{align*}
    \hat{y}_{\TR}(\bp;\bTheta_{\TR})_i &= \phi(f^{(H)}_\Attn(\bp;\bTheta_\QK)_i)\\
    &= \phi(\sum_{j=1}^N\alpha_{ij}^{(1)}\bz_j, \hdots, \sum_{j=1}^N\alpha_{ij}^{(H)}\bz_j)\\
    &=\tilde{\phi}\Big(\sum_{j=1}^q \alpha^{(1)}_{ij}x_j,\hdots,\sum_{j=1}^q\alpha^{(H)}_{ij}x_j,(\bz_l)_{l=q+1}^N\Big),
\end{align*}
for some real-valued function $\tilde{\phi}$, where $$\alpha^{(h)}_{ij} = \frac{e^{\binner{\bz_i}{\bW^{(h)}_\QK\bz_j}}}{\sum_{l=1}^N e^{\binner{\bz_i}{\bW^{(h)}_\QK\bz_j}}},$$
are the attention scores. Let $\bA^{(i)} \in \reals^{H \times q}$ be the matrix such that $A^{(i)}_{hj} = \alpha^{(h)}_{ij}$. Let $\bx_{1:q} = (x_1,\hdots,x_q)^\top \in \reals^q$. Then,
\begin{align}
    R^\TR(\bTheta_{\TR}) &\geq \frac{1}{N}\sum_{i=N/2}^N\Earg{\left(y_i - \tilde{\phi}\Big(\bA^{(i)}\bx_{1:q},(\bz_l)_{l=q+1}^N\Big)\right)^2}\nonumber\\
    &\geq \frac{1}{Nq}\sum_{i=N/2}^N\Earg{\Var\left(\norm{\bx_{1:q}}^2 \,|\, \bV^{(i)}\bx_{1:q}\right)}\label{eq:low_head_lowerb_interm}
\end{align}
where $\bV^{(i)} \in \reals^{H \times q}$ is a matrix whose rows form an orthonormal basis of $\mathrm{span}(\balpha^{(1)}_i,\hdots,\balpha^{(H)}_i)$ where $\balpha^{(h)}_i = (\alpha^{(h)}_{i1},\hdots,\alpha^{(h)}_{iq})^\top \in \reals^q$ (note that $\bV^{(i)}$ may have fewer than $H$ rows, we consider the worst-case for the lower bound which is having $H$ rows). 
The second inequality follows from the fact that $\bz_l$ is independent of $\bx_{1:q}$ for $l \geq q + 1$, and the fact that best predictor of $y_i$ (in $L_2$ error) given $\bA^{(i)}\bx_{1:q}$ is $\Earg{y_i \,|\, \bV^{(i)}\bx_{1:q}}$.

Next, thanks to the structural property of $\bW^{(h)}_\QK$ in the assumption of the proposition
and the fact that $x_i = 0$ for $i \geq N/2$, $\alpha^{(h)}_{ij}$ does not depend on $(x_l)_{l \in [q]}$ for all $h \in [H]$, $i \geq N/2$, and $j \in [q]$. As a result, $\bV^{(i)}$ is independent of $\bx_{1:q}$. Therefore,
$$\bx_{1:q} \,|\, \bV^{(i)}\bx_{1:q} \sim \mathcal{N}({\bV^{(i)}}^\top\bV^{(i)}\bx_{1:q}, \ident_q - {\bV^{(i)}}^\top\bV^{(i)}).$$
By Lemma~\ref{lem:gaussian_norm_variance}, we have
$\Var(\norm{\bx_{1:q}}^2 \,|\, \bV^{(i)}\bx_{1:q}) = 2(q-H)$, which combined with~\eqref{eq:low_head_lowerb_interm} completes the proof.
\myqed

We now present the similarly structured proof of Proposition~\ref{prop:tr_lower_bound_improved}.
\paragraph{Proof of Proposition~\ref{prop:tr_lower_bound_improved}.}
The choice of distribution over $(\bt_i)_{i \geq N/2}$ is similar to the one presented above, i.e.\ we let $\bt_i = (1,\hdots,q)$ deterministically for $i \geq \frac{N}{2}$. However, for $i < \frac{N}{2}$, we draw $\bt_i$ such that they are independent from $\bx$. Once again, we use the fact that
$$R^\TR(\bTheta_{\TR}) \geq \frac{1}{N}\sum_{i=N/2}^N\Earg{(y_i - \hat{y}_{\TR}(\bp;\bTheta_\TR)_i)^2}.$$
Recall $\bz_i = (\bx_i^\top, \enc(i,\bt_i)^\top)$. Fix some $i \geq N/2$, and define
$$\tilde{\alpha}^{(h)}_{ij} = e^{{\binner{\enc(i,\bt_i)}{\bW^{(h,e,x)}_{\QK}\bx_j} + \binner{\enc(i,\bt_i)}{\bW^{(h,e,e)}_\QK\enc(j,\bt_j)}}},$$
where we use the notation
$$\bW^{(h)}_\QK = \begin{pmatrix}
    \bW^{(h,x,x)}_\QK & \bW^{(h,x,e)}_\QK\\[5pt]
    \bW^{(h,e,x)}_\QK & \bW^{(h,e,e)}_\QK
\end{pmatrix},$$
for the query-key matrix of each head.
Recall that $\bx_i = 0$ for $i < N/2$, thus the attention weights are given by
$$\alpha^{(h)}_{ij} = \frac{\tilde{\alpha}^{(h)}_{ij}}{\sum_{l=1}^N\tilde{\alpha}^{(h)}_{il}}.$$
Recall from the proof of Proposition~\ref{prop:tr_lower_bound} that we denote the feedforward layer by $\phi : \reals^{HD_e} \to \reals$. With this notation, we have
\begin{align*}
    \hat{y}_{\TR}(\bp;\bTheta_{\TR})_i &= \phi(\sum_{j=1}^N\alpha^{(1)}_{ij}\bz_j,\hdots,\sum_{j=1}^N\alpha^{(H)}_{ij}\bz_j)\\
    &= \tilde{\phi}\Big(\sum_{j=1}^q \alpha^{(1)}_{ij}\bx_j,\hdots,\sum_{j=1}^q\alpha^{(H)}_{ij}\bx_j,(\tilde{\alpha}^{(h)}_{ij})_{h=1,j=1}^{h=H,j=N},(\bz_j)_{j=l+1}^N\Big).
\end{align*}
Therefore, using the fact that $\bz_j$ and $\tilde{\alpha}^{(h)}_{ij}$ are independent of $\bx_{1:q}$ for $j \geq l + 1$, we have
\begin{align*}
    R^\TR(\bTheta_{\TR}) &= \frac{1}{N}\sum_{i=N/2}^N\Earg{\left(y_i - \tilde{\phi}\Big(\sum_{j=1}^q \alpha^{(1)}_{ij}\bx_j,\hdots,\sum_{j=1}^q\alpha^{(H)}_{ij}\bx_j,(\tilde{\alpha}^{(h)}_{ij})_{h=1,j=1}^{h=H,j=N},(\bz_j)_{j=l+1}^N\Big)\right)^2}\\
    &\geq \frac{1}{Nqd}\sum_{i=N/2}^N\Earg{\Var\left(\norm{\bx_{1:q}}^2 \,|\, \left(\binner{\balpha^{(h,r)}_i}{\bx_{1:q}}\right)_{h=1,r=1}^{h=H,r=d}, (\tilde{\alpha}^{(h)}_{ij})_{h=1,j=1}^{h=H,j=q}\right)}\\
    &\geq  \frac{1}{Nqd}\sum_{i=N/2}^N\Earg{\Var\left(\norm{\bx_{1:q}}^2 \,|\, \left(\binner{\balpha^{(h,r)}_i}{\bx_{1:q}}\right)_{h=1,r=1}^{h=H,r=d}, \left(\binner{\bw_{i,j}^{(h)}}{\bx_{1:q}}\right)_{h=1,j=1}^{H,q}\right)}\\
    &= \frac{1}{Nqd}\sum_{i=N/2}^N\Earg{\var\left(\norm{\bx_{1:q}}^2 \,|\, \bV^{(i)}\bx_{1:q}\right)},
\end{align*}
where $\balpha^{(h,r)}_i \in \reals^{qd}$ such that
$$(\alpha^{(h,r)}_i)_{jl} = \begin{cases}
    \alpha^{(h)}_{ij}, & \mathrm{if}\,\, l=r\\
    0, & \mathrm{if}\,\, l\neq r,
\end{cases}$$
which yields $\binner{\balpha^{(h,r)}_i}{\bx_{1:q}} = \sum_{j=1}^q\alpha^{(h)}_{ij}x_{jr}$, and $\bw^{(h)}_{i,j} \in \reals^{qd}$ such that
$$(w^{(h)}_{i,j})_{sl} = \begin{cases}
    \big({\bW_{\mathrm{QK}}^{(h,e,x)}}^\top\mathrm{enc}(i,\bt_i)\big)_l, & \mathrm{if}\,\, s = j\\
    0 & \mathrm{if}\,\, s \neq j,
\end{cases}$$
which yields $\binner{\bw^{(h)}_{i,j}}{\bx_{1:q}} = \binner{{\bW^{(h,e,x)}}^\top\enc(i,\bt_i)}{\bx_j}$. Finally, $\bV^{(i)}$ is a matrix whose rows form an orthonormal basis of $\mathrm{span}\Big(\big(\balpha^{(h,r)}_i\big)_{h=1,r=1}^{h=H,r=d},\big(\bw^{(h)}_{i,j}\big)_{h=1,j=1}^{h=H,j=q}\Big)$. Namely, $\bV^{(i)}$ has at most $H(d+q)$ rows. Recall that
$$\bx_{1:q} \,|\, \bV^{(i)}\bx_{1:q} \sim \mathcal{N}({\bV^{(i)}}^\top\bV^{(i)}\bx_{1:q}, \ident_{qd} - {\bV^{(i)}}^\top\bV^{(i)}).$$
Once again, by Lemma~\ref{lem:gaussian_norm_variance}, we conclude that $\var(\norm{\bx_{1:q}}^2 \,|\, \bV^{(i)}\bx_{1:q}) \geq 2(qd - H(q + d))$, which completes the proof.
\myqed

\section{Proof of Theorem~\ref{thm:mlp_lower_bound}}\label{app:proof_mlp_lower_bound}
Let $\bu$ be sampled uniformly from $\mathbb{S}^{d-1}$ independently from $\bp = (t_1,\bx)$, and note that we have
$$\sup_{\bu \in \mathbb{S}^{d-1}}\Earg{(y_j - f_{A(S_n)}(t_1,\bW_{A(S_n)}\bx)_j)^2} \geq \Eargs{\bu \sim \Unif(\mathbb{S}^{d-1}),j,y,\bp \sim \mathcal{P}}{(y_j-f_{A(S_n)}(t_1,\bW_{A(S_n)}\bx)_j)^2},$$
for all $A \in \mathcal{A}$.
From this point, we will simply use $f$ for $f_{A(S_n)}$ and $\bW$ for $\bW_{A(S_n)}$. 
Next, we argue that the output weights of any algorithm in $\mathcal{A}$ satisfy
$$\bw_k = \sum_{i=1}^n\alpha^{(i)}_k \bx^{(i)}, \quad \forall k \in [m_1],$$
for some coefficients $(\alpha^{(i)}_k)_{i \in [n], k \in [m_1]}$. 
This is straightforward to verify for $A \in \mathcal{A}_{\mathrm{SP}}$, as
$$\nabla_{\bw_k}\hat{\mathcal{L}}^\FFN(f,\bW) \in \vspn(\bx^{(1)},\hdots,\bx^{(n)}).$$
For $A \in \mathcal{A}_{\mathrm{ERM}}$, note that $\hat{\mathcal{L}}^\FFN$ only depends on $\bw_k$ through its projection on $\vspn(\bx^{(1)},\hdots,\bx^{(n)})$. As a result, any minimum-norm $\varepsilon$-ERM would satisfy $\bw_k \in \vspn(\bx^{(1)},\hdots,\bx^{(n)})$.

Note that for $n \leq Nd$, the span of $\bx^{(1)},\hdots,\bx^{(n)}$ is $n$-dimensional with probability 1 over $S_n$. 
Let $\bv^{(1)},\hdots,\bv^{(n)}$ denote an orthonormal basis of $\vspn(\bx^{(1)},\hdots,\bx^{(n)})$, and let $\bV = (\bv^{(1)},\hdots,\bv^{(n)})^\top \in \reals^{n \times Nd}$. 
Recall that for the simple-$1\STR$ model considered here, $y_j = y = \binner{\bu}{\bx_{t_q}}$ for $j \in [N]$. Then,
$$\Eargs{\bu,y,j,\bp}{(y_j-f(t_1,\bW\bx)_j)^2} \geq \Eargs{\bu,t_1,\bV\bx}{\Var(y \,|\, \bu,t_1,\bV\bx)} = \Eargs{\bu,t_1,\bV\bx}{\Var(\binner{\bP_{t_1}\bu}{\bx} \,|\, \bu, t_1, \bV\bx)},$$
where $\bP_{t_1} \in \reals^{Nd \times d}$ has the form $\big( \underbrace{\boldzero_d ,\hdots, \ident_d}_{t_1}, \hdots, \boldzero_d\big)^\top$.
The conditioning above comes from the fact that via training, $f$ and $\bW$ can depend on $\bu$, but the prediction depends on $\bx$ only through $\bV\bx$. Consequently, we replace the predicition of the FFN by the best predictor having access to $\bu$, $t_1$, and $\bV\bx$.
Note that $t_1$, $\bu$, and $\bV\bx$ are jointly independent, and the joint distribution $\big(\binner{\bP_{t_1}\bu}{\bx},\bV\bx\big)$ is given by $\mathcal{N}\left(0, \begin{pmatrix}1 & \bV\bP_{t_1}\bu\\ \bu^\top\bP_{t_1}^\top\bV^\top & \ident_n\end{pmatrix}\right)$, thus we have
$$\Var(\binner{\bP_{t_1}\bu}{\bx} \,|\, \bu,t_1,\bV\bx) = 1 - \norm{\bV\bP_{t_1}\bu}^2.$$
In particular,
$$\Eargs{\bu}{\Var(\binner{\bP_{t_1}\bu}{\bx} \,|\, \bu,t_1,\bV\bx)} = 1 - \frac{1}{d}\sum_{i=1}^n\norm{\bP^\top_{t_1}\bv^{(i)}}^2,$$
and
\begin{align*}
    \Eargs{\bu,t_1}{\Var(\binner{\bP_{t_1}\bu}{\bx} \,|\, \bu,t_1,\bV\bx)} &= 1 - \frac{1}{Nd}\sum_{t_1=1}^N\sum_{i=1}^n \norm{\bP^\top_{t_1}\bv^{(i)}}^2\\
    &= 1 - \frac{1}{Nd}\sum_{i=1}^n\norm{\bv^{(i)}}^2 = 1 - \frac{n}{Nd}.
\end{align*}
\myqed

\section{Proofs of Section~\ref{sec:rnn}}\label{app:proof_rnn}
The following is the roadmap we will take for the proof of Section~\ref{sec:rnn_upper_bound}.
The goal here is to implement a bi-directional RNN in such a way that 
$$\bh^{\rightarrow}_i \approx \left(\bx_{t_1}\indic[t_1 < i], \hdots, \bx_{t_q}\indic[t_q < i]\right),$$
and
$$\bh^{\leftarrow}_i \approx \left(\bx_{t_1}\indic[t_1 > i], \hdots, \bx_{t_q}\indic[t_q > i]\right).$$
Throughout this section, we will use the notation
$$\Psi(\bx,\bt,i) = (\bx^\top\indic[t_1 = i],\hdots,\bx^\top\indic[t_q=i])^\top.$$
We can obtain the hidden states above through the following updates
$$\bh^{\rightarrow}_{i+1} = \bh^{\rightarrow}_i + \Psi(\bx_i,\bomega_{\bt},\bomega_i),$$
and
$$\bh^{\leftarrow}_{i-1} = \bh^{\leftarrow}_i + \Psi(\bx_i,\bomega_{\bt},\bomega_i).$$
where
$$\Psi(\bx_i,\bomega_{\bt},\bomega_i)_l = \frac{\bx_i\sigma(\binner{\bomega_i}{\bomega_{t_l}} - \delta)}{1 - \delta} = \bx_i\indic[t_l = i], \quad \forall \, l \in [q]$$
where we recall $\bomega_{\bt} = (\bomega_{t_1},\hdots,\bomega_{t_q})$, and $\sigma$ is ReLU. 
As a result, our network must approximate
$$f^{\rightarrow}_h(\bh^\rightarrow_i,\bx_i,\bomega_{\bt},\bomega_i;\bTheta^\rightarrow_h) = f^\leftarrow_h(\bh^\leftarrow_i,\bx_i,\bomega_{\bt},\bomega_i ; \bTheta^\leftarrow_h) \approx \Psi(\bx_i,\bomega_{\bt},\bomega_i).$$
A core challenge in this approximation is that if we simply control
\begin{equation}\label{eq:f_h_loose_bound}
    \twonorm{f^\rightarrow_h(\bh^\rightarrow_i,\bz_i;\bTheta^\rightarrow_h) - \Psi(\bx_i,\bomega_{\bt},\bomega_i)} \leq \varepsilon,
\end{equation}
this error will propoagte through the forward pass, and we will have 
$$\twonorm{\bh^\rightarrow_{i} - \sum_{j=1}^{i-1}\Psi(\bx_j,\bomega_{\bt},\bomega_j)} \lesssim N\varepsilon.$$
As a result, we would like an implementation that satisfies the following
\begin{equation}\label{eq:f_h_error}
    \twonorm{f^\rightarrow_h(\bh^\rightarrow_i,\bz_i;\bTheta^\rightarrow_h)_l - \Psi(\bx_i,\bomega_{\bt},\bomega_i)_l} \leq \begin{cases}
    0 & t_l \neq i\\
    \varepsilon & t_l = i.
\end{cases}
\end{equation}
Note that
$$\bh^\rightarrow_i = \sum_{j=1}^{i-1}f^\rightarrow_h(\bh^\rightarrow_j,\bz_j;\bTheta^\rightarrow_h).$$
Since for each $l \in [q]$, $t_l = j$ is possible for at most one $j \in [N]$, \eqref{eq:f_h_error} implies
$$\twonorm{\bh^\rightarrow_i - \sum_{j=1}^{i-1}\Psi(\bx_j,\bomega_{\bt},\bomega_j)} \leq \sqrt{q}\varepsilon,$$
for all $i \in [N]$,
hence, we can avoid dependence on $N$.

We can implmenet $f^\rightarrow_h$ to satisfy~\eqref{eq:f_h_loose_bound} with a depth three network, where the first two layers implements $\binner{\bomega_i}{\bomega_{t_j}}$ (as a sum of Lipschitz 2-dimensional functions, an example of their approximation is given by~\cite[Proposition 6]{bach2017breaking}), and the third performs coordinate-wise product between $\bx_i$ and $\sigma(\binner{\bomega_i}{\bomega_{t_j}} - 1/2)$ (which for each coordinate is a Lipschitz two-dimensional function). To ensure $f^\rightarrow_h$ satisfies~\eqref{eq:f_h_error}, we can pass the outputs to a fourth layer which rectifies its input near zero to be exactly zero using ReLU activations.

To generate $y_i$ from $\bh^{\rightarrow}_i$ and $\bh^{\leftarrow}_i$, we first calculate
\begin{align*}
    \bh_i &= f_{hh}(\bh^{\rightarrow}_i,\bh^{\leftarrow}_i,\bx_i,\bomega_i,\bomega_{\bt})\\
    &\approx \bh_i^{\rightarrow} + \bh_i^{\leftarrow} + \Psi(\bx_i,\bomega_{\bt},\bomega_i)\\
    &\approx (\bx_{t_1},\hdots,\bx_{t_q}).
\end{align*}
Finally, $y_i$ can be generated from $\bh_i$ by applying the two-layer neural network from Assumption~\ref{assump:fcnn_approx} that approximates $y_i = g(\bx_{\bt})$.

Note that the construction above has a complexity $\poly(d,q,\log(nN))$ (both in terms of number and weight of parameters), only depending on $N$ up to log factors.
As a result, by a simple parameter-counting approach, the sample complexity of regularized ERM would also be (almost) independent of $N$. We also simply use the encoding
$$\bz_i = (\bx_i,\bomega_i,\bomega_{t_{i1}},\hdots,\bomega_{t_{iq}})^\top,$$
for the RNN positive result. The scaling difference with the encoding for Transofrmers is only made to simplify the exposition, as we no longer keep explicit dependence on $d$ and $q$.

\subsection{Approximations}
As explained above, to implement $f^\rightarrow_h$ we first construct a depth three neural network (with two layers of non-linearity) which approximately performs the following mapping
$$\begin{pmatrix}
    \bh\\
    \bx\\
    \bomega_i\\
    \bomega_{t_1}\\
    \vdots\\
    \bomega_{t_q}
\end{pmatrix} \mapsto \begin{pmatrix}
    \bx\\
    \binner{\bomega_i}{\bomega_{t_{1}}}\\
    \vdots\\
    \binner{\bomega_i}{\bomega_{t_{q}}}
\end{pmatrix} \mapsto \begin{pmatrix}
    2\bx\sigma(\binner{\bomega_i}{\bomega_{t_1}} - 1/2)\\
    \vdots\\
    2\bx\sigma(\binner{\bomega_i}{\bomega_{t_q}} - 1/2)
\end{pmatrix}.$$
The first mapping will be provided by
$$\bchi_1 = \bA_1\sigma(\bW_1\bchi_0 + \bb_1),$$
where $\bchi_0 = (\bh^\top, \bx^\top,\bomega_i^\top,\bomega_{t_1}^\top,\hdots,\bomega_{t_q}^\top)^\top \in \reals^{d_h + d + (q+1)d_e}$,
$\bW_1 \in \reals^{m_1 \times (d_h + d + (q+1)d_e)}$, $\bb_1 \in \reals^{m_1}$, and $\bA_1 \in \reals^{(d + q) \times m_1}$, with $m_1$ as the width of the first layer. 
We will use the notation 
$$\bchi_1 = (\bchi^{\bx}_1, \chi^{\bomega}_1(1),\hdots,\chi^{\bomega}_1(q))$$
to refer for the first $d$ coordinates and the rest of the $q$ coordinates of $\bchi_1$ respectively, thus ideally $\bchi^{\bx}_1 = \bx$ and $\chi^{\bomega}_1(l) = \binner{\bomega_i}{\bomega_{t_l}}$.
The second mapping is provided by
$$\bchi_2 = \bA_2\sigma(\bW_2\bchi_1 + \bb_2),$$
where $\bW_2 \in \reals^{m_2 \times (d + q)}$, $\bb_2 \in \reals^{m_2}$, and $\bA_2 \in \reals^{dq \times m_2}$.
We will similarly use the notation $\bchi_2 = (\bchi_2(1), \hdots, \bchi_2(q))$, where our goal is to have $\bchi_2(l) \approx 2\bx\sigma(\binner{\bomega_i}{\bomega_{t_l}} - 1/2)$.
To implement the first mapping, we rely on the following lemma.
\begin{lemma}\label{lem:inner_prod_approx}
    Let $\sigma$ be the ReLU activation. For any $\varepsilon > 0$ and positive integer $d_e$, there exists $m = \mathcal{O}(d_e^3(\log(d_e/\varepsilon)/\varepsilon)^2)$, $\ba \in \reals^m$, $\bW \in \reals^{m \times 2d_e}$, and $\bb \in \reals^m$, such that
    $$\sup_{\bomega_1,\bomega_2 \in \mathbb{S}^{d_e - 1}}\left\vert\binner{\bomega_1}{\bomega_2} - \ba^\top\sigma\left(\bW\begin{pmatrix}\bomega_1\\\bomega_2\end{pmatrix} + \bb\right)\right\vert \leq \varepsilon,$$
    and
    $$\twonorm{\ba} \leq \mathcal{O}\left(d_e^{5/2}(\log(d_e/\varepsilon)/\varepsilon)^{3/2}/\sqrt{m}\right), \quad \norm{\bW^\top}_{1,\infty} \leq 1, \quad \infnorm{\bb} \leq 1.$$
\end{lemma}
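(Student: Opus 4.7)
The plan is to approximate the bilinear form $\binner{\bomega_1}{\bomega_2}$ by reducing it, via the polarization identity, to $2d_e$ evaluations of the univariate square function $\phi(t)=t^2$ on $[-1,1]$, and then apply a piecewise-linear ReLU interpolation of $\phi$ separately in each coordinate pair. Concretely, I would start from
\[
\binner{\bomega_1}{\bomega_2}=\sum_{k=1}^{d_e}\omega_{1,k}\omega_{2,k}=\sum_{k=1}^{d_e}\left(\left(\tfrac{\omega_{1,k}+\omega_{2,k}}{2}\right)^{2}-\left(\tfrac{\omega_{1,k}-\omega_{2,k}}{2}\right)^{2}\right),
\]
so that, writing $u_k^{\pm}\coloneqq(\omega_{1,k}\pm\omega_{2,k})/2\in[-1,1]$, each $u_k^{\pm}$ is a linear function of the concatenated input $(\bomega_1^\top,\bomega_2^\top)^\top$ whose coefficient vector $\tfrac{1}{2}(\be_k\pm\be_{d_e+k})$ has $\ell_1$-norm exactly $1$. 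This structural feature is what makes the constraint $\norm{\bW^\top}_{1,\infty}\le 1$ attainable.

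Next, I would approximate $\phi(t)=t^{2}$ by its continuous piecewise-linear interpolant $\what\phi_M$ at $M$ uniformly-spaced knots $b_1<\cdots<b_M$ in $[-1,1]$, which satisfies $\sup_{t\in[-1,1]}|\phi(t)-\what\phi_M(t)|=\mathcal{O}(1/M^{2})$. Using the telescoping ReLU representation
\[
\what\phi_M(t)\;=\;s_{0}\bigl(\sigma(t)-\sigma(-t)\bigr)+\sum_{j=1}^{M-1}(s_{j}-s_{j-1})\,\sigma(t-b_{j})+c_{0},
\]
each coefficient $s_{j}-s_{j-1}$ is of order $\mathcal{O}(1/M)$ by uniform continuity of $\phi'(t)=2t$, while $s_{0}=\mathcal{O}(1)$ and the biases $\{0,b_{j}\}\subset[-1,1]$ automatically give $\infnorm{\bb}\le 1$. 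The overall network is then assembled by stacking, for each $k\in[d_e]$, two copies of this approximator — one applied to $u_{k}^{+}$ and one to $u_{k}^{-}$ — with output-layer weights of opposite sign, so that the total output equals $\sum_{k=1}^{d_e}\bigl(\what\phi_M(u_{k}^{+})-\what\phi_M(u_{k}^{-})\bigr)$. The constants $c_{0}$ from paired copies cancel exactly, obviating an output bias, and the approximation error to $\binner{\bomega_1}{\bomega_2}$ is at most $2d_e/M^{2}$. Choosing $M$ as a polynomial in $d_e/\varepsilon$ with the requisite logarithmic slack yields error at most $\varepsilon$.

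It then remains to check the three norm bounds. Every first-layer row has the form $\pm\tfrac{1}{2}(\be_{k}\pm\be_{d_e+k})$, giving $\norm{\bW^\top}_{1,\infty}=1$; every bias lies in $[-1,1]$, giving $\infnorm{\bb}\le 1$; and summing the squared output weights across all $2d_e$ univariate copies yields $\twonorm{\ba}^{2}=\mathcal{O}(d_e/M)$, from which the stated form $\twonorm{\ba}\lesssim d_e^{5/2}(\log(d_e/\varepsilon)/\varepsilon)^{3/2}/\sqrt{m}$ follows after substituting the chosen width $m=\Theta(d_e M)$ and simplifying.

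The main obstacle is not the approximation itself — which is classical and elementary — but the \emph{simultaneous} satisfaction of the three norm constraints, especially $\norm{\bW^\top}_{1,\infty}\le 1$. This rules out more direct strategies such as approximating $(x,y)\mapsto xy$ as a bivariate Lipschitz function (whose natural ReLU parameterizations have first-layer rows with $\ell_{1}$-support larger than $1$), and is precisely what forces the polarization decomposition above. A minor technical point is encoding the slope term $s_{0}t$ as $s_{0}\sigma(t)-s_{0}\sigma(-t)$ while keeping the sparsity pattern of $\bW$ intact and tracking the coefficient of $\ba$ through this identity; both are routine.
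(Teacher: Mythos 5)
Your construction is correct but follows a genuinely different route from the paper. The paper treats each coordinate product $(e_{1j},e_{2j})\mapsto e_{1j}e_{2j}$ as a generic $\sqrt{2}$-Lipschitz bivariate function and invokes the ReLU approximation result of Lemma~\ref{lem:2lnn_lipschitz_approx} (Propositions 1 and 6 of Bach) in dimension $2$ with per-coordinate accuracy $\varepsilon/d_e$, then concatenates the $d_e$ sub-networks; the stated bounds on $m$ and $\twonorm{\ba}$ are read off directly from that lemma, and the constraint $\norm{\bW^\top}_{1,\infty}\le 1$ follows because each first-layer row is $2$-sparse with $\twonorm{\cdot}\le 1/\sqrt{2}$. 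You instead polarize the product into two univariate squares and interpolate $t\mapsto t^2$ explicitly with a piecewise-linear ReLU ramp. Your route is more elementary (no black-box Lipschitz approximation) and in fact yields strictly better parameters---width $\Theta(d_e M)=\Theta(d_e^{3/2}\varepsilon^{-1/2})$ and correspondingly smaller output-layer norm, both well inside the stated $\mathcal{O}(\cdot)$ budgets---so the existence claim holds a fortiori (pad with zero neurons if you want $m$ to match the stated order exactly). Two small corrections. First, $\twonorm{\ba}^2$ is $\mathcal{O}(d_e)$, not $\mathcal{O}(d_e/M)$: the telescoping increments contribute $\mathcal{O}(M\cdot M^{-2})=\mathcal{O}(1/M)$ per univariate copy, but the linear-part coefficients $s_0=\mathcal{O}(1)$ dominate; this still satisfies the required bound $\twonorm{\ba}\le\mathcal{O}\big(d_e^{5/2}(\log(d_e/\varepsilon)/\varepsilon)^{3/2}/\sqrt{m}\big)$ with room to spare, for either choice of $m$. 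Second, your closing remark that the bivariate-Lipschitz strategy is ruled out by the row-norm constraint is not accurate---that is precisely the paper's strategy, and the constraint is met there because each row is supported on only two coordinates, so $\onenorm{\bw}\le\sqrt{2}\,\twonorm{\bw}\le 1$.
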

\begin{proof}
    Consider the mapping $e_{1j},e_{2j} \mapsto e_{1j}e_{2j}$. Note that when $\abs{e_{1j}} \leq 1$ and $\abs{e_{2j}} \leq 1$, this mapping is $\sqrt{2}$-Lipschitz, and the output is bounded between $[-1,1]$. Then, by Lemma~\ref{lem:2lnn_lipschitz_approx}, for every $\varepsilon_j > 0$, there exists $m_j \leq \mathcal{O}((1/\varepsilon_j \log(1/\varepsilon_j))^2)$, $\ba_j \in \reals^{m_j}, \bW_j \in \reals^{m_j \times 2d_e}$, and $\bb_j \in \reals^{m_j}$, such that
    $$\sup_{\abs{e_{1j}} \leq 1, \abs{e_{2j}} \leq 1}\left\vert e_{1j}e_{2j} - \sum_{l=1}^ma_{jl}\sigma\left(\binner{\bw_{jl}}{(\bomega_1^\top,\bomega_2^\top)^\top} + b_{jl}\right)\right\vert \leq \varepsilon_j,$$
    $\twonorm{\ba_j} \leq \mathcal{O}\left((\log(1/\varepsilon_j)/\varepsilon_j)^{3/2}/\sqrt{m_j}\right)$, $\infnorm{\bb_j} \leq 1$, and $\onenorm{\bw_{jl}} \leq 1$. Specifically, the only non-zero coordinates of $\bw_{jl}$ are the $j$th and $d_e + j$th coordinates.

    Let $\varepsilon_j = \varepsilon/d_e$ and $m = \sum_{j=1}^{d_e}m_j = \mathcal{O}(d_e^3(\log(d_e/\varepsilon)/\varepsilon)^2)$. Construct $\ba,\bb \in \reals^m$ and $\bW \in \reals^{m \times 2d_e}$ by concatenating $(\ba_j)$, $(\bb_j)$, and $(\bW_j)$ respectively. The resulting network satisfies
    $$\sup_{\bomega_1,\bomega_2 \in \mathbb{S}^{d_e - 1}} \left\vert \binner{\bomega_1}{\bomega_2} - \ba^\top\sigma\left(\bW\begin{pmatrix}
        \bomega_1 \\ \bomega_2
    \end{pmatrix} + \bb\right)\right\vert \leq \varepsilon,$$
    while $\twonorm{\ba} \leq \mathcal{O}\big(d_e^{5/2}(\log(d_e/\varepsilon)/\varepsilon)^{3/2}/\sqrt{m}\big)$, $\infnorm{\bb} \leq 1$, and $\norm{\bW^\top}_{1,\infty} \leq 1$, completing the proof.
\end{proof}

We can now specify $\bA_1,\bW_1$, and $\bb_1$ in our construction.
\begin{lemma}\label{lem:inner_prod_net}
    For any $\varepsilon > 0$, let $\bar{m}_1 = \mathcal{O}(d_e^3(\log(d_e/\varepsilon)/\varepsilon)^2)$ and $m_1 = 2d + q\bar{m}_1$. Then, there exist $\bA_1 \in \reals^{(d + q) \times m_1}$, $\bW_1 \in \reals^{m_1 \times (d_h + d + (q+1)d_e)}$, and $\bb_1 \in \reals^{m_1}$, given by~\Cref{eq:WbA_1,eq:W_11,eq:b_11_A_11,eq:W_12,eq:b_12_A_12}, such that
    $$\bchi_1^{\bx} = \bx, \quad \left\vert \chi_1^{\bomega}(l) - \binner{\bomega_i}{\bomega_{t_l}}\right\vert \leq \varepsilon,$$
    for all $\bh \in \reals^{d_h}$, $\bx \in \reals^d$, $\bomega_i,(\bomega_{t_j})_{j \in [q]} \in \mathbb{S}^{d_e - 1}$, and $l \in [q]$. Furthermore, we have the following guarantees
    $$\norm{\bW_1^\top}_{1,\infty} \leq \mathcal{O}(1), \quad \infnorm{\bb_1} \leq \mathcal{O}(1), \quad \norm{\bA_1^\top}_{1,\infty} \leq \mathcal{O}(d_e^{5/2}(\log(d_e/\varepsilon)/\varepsilon)^{3/2}).$$
\end{lemma}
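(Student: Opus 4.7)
The plan is to build the first layer block-diagonally, using $2d$ neurons to pass $\bx$ through exactly via the identity $x = \sigma(x) - \sigma(-x)$, and $q$ parallel copies of the inner-product subnetwork from Lemma~\ref{lem:inner_prod_approx}, one per index $l \in [q]$, to approximately compute each $\binner{\bomega_i}{\bomega_{t_l}}$. The input weights applied to $\bh$ will be set to zero, since the update function in our construction will not need to depend on the hidden state (we can freely take $\alpha_N = 0$ as remarked after Theorem~\ref{thm:rnn_upper_bound}).

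Concretely, I would arrange $\bW_1$ so that rows $1,\ldots,2d$ act only on the $\bx$ coordinates, picking out each coordinate with weight $+1$ and $-1$ respectively, with zero biases; the corresponding block of $\bA_1$ in its first $d$ rows is $(\ident_d, -\ident_d, 0, \ldots, 0)$, recovering $\bchi^{\bx}_1 = \bx$ exactly. For each $l \in [q]$, the next $\bar{m}_1$ rows of $\bW_1$ contain the weights of the two-layer ReLU network of Lemma~\ref{lem:inner_prod_approx} applied to the block $(\bomega_i, \bomega_{t_l})$, with zeros in all other input coordinates; the corresponding $\bar{m}_1$ columns of $\bA_1$ have the coefficient vector $\ba$ from that lemma placed in the $(d+l)$-th row, and zeros elsewhere. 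Concatenating these blocks yields $m_1 = 2d + q\bar m_1$, and each $\chi^\omega_1(l)$ approximates $\binner{\bomega_i}{\bomega_{t_l}}$ to error at most $\varepsilon$ for every unit $\bomega_i, \bomega_{t_l}$.

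The norm bounds follow directly from the block structure. Each row of $\bW_1$ is either a single $\pm 1$ (from the identity block) or comes from an inner-product subnetwork which, by Lemma~\ref{lem:inner_prod_approx}, has $\onenorm{\bw_{jk}} \leq 1$, so $\norm{\bW_1^\top}_{1,\infty} = \max_{\text{rows}} \onenorm{\cdot} \leq \mathcal{O}(1)$. Biases are either zero or inherited from the inner-product network, so $\infnorm{\bb_1} \leq \mathcal{O}(1)$. For $\bA_1$, the identity-block rows have $\ell_1$ norm $2$, while each inner-product output row has $\ell_1$ norm $\onenorm{\ba} \leq \sqrt{\bar m_1}\twonorm{\ba}$, which by Lemma~\ref{lem:inner_prod_approx} is $\mathcal{O}(d_e^{5/2}(\log(d_e/\varepsilon)/\varepsilon)^{3/2})$, giving the claimed $\norm{\bA_1^\top}_{1,\infty}$ bound.

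I do not expect any step here to be a real obstacle: the construction is essentially a bookkeeping exercise once Lemma~\ref{lem:inner_prod_approx} is in hand. The only mildly delicate point is ensuring that $\bchi_1^{\bx}$ recovers $\bx$ \emph{exactly} rather than approximately, which is what motivates reserving a separate identity block rather than trying to fold the pass-through into the nonlinear computation; this is handled cleanly by the $\sigma(x)-\sigma(-x)$ trick and propagates no error into the downstream construction in Lemma~\ref{lem:inner_prod_net}'s sequel.
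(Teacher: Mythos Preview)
Your proposal is correct and follows essentially the same construction as the paper: a block-diagonal first layer with a $2d$-neuron identity block using the $\sigma(x)-\sigma(-x)$ trick, followed by $q$ parallel copies of the inner-product subnetwork from Lemma~\ref{lem:inner_prod_approx}, with the same norm bookkeeping (including the $\onenorm{\ba}\le\sqrt{\bar m_1}\twonorm{\ba}$ step for the $\bA_1$ bound). The only cosmetic difference is that the paper interleaves the $\pm\bv_j$ rows in $\bW_{11}$ rather than stacking all positive rows before all negative rows, which is immaterial.
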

\begin{proof}
    We define the decompositions
    \begin{equation}\label{eq:WbA_1}
        \bW_1 = \begin{pmatrix}
        \bW_{11}\\ 
        \bW_{12}
    \end{pmatrix}, \quad \bb_1 = \begin{pmatrix}
        \bb_{11} \\ \bb_{12}
    \end{pmatrix}, \quad \bA_1 = \begin{pmatrix}
        \bA_{11} \\ \bA_{12}
    \end{pmatrix},
    \end{equation}
    where $\bW_{11} \in \reals^{2d \times (d_h + d + d_e)}$, $\bW_{12} \in \reals^{q\bar{m}_1 \times (d_h + d + d_e)}$, $\bb_{11} \in \reals^{2d}$, $\bb_{12} \in \reals^{q\bar{m}_1}$, $\bA_{11} \in \reals^{d \times m_1}$, and $\bA_{12} \in \reals^{q \times m_1}$.
    Let $\bv_1,\hdots,\bv_{d}$ denote the standard basis of $\reals^{d}$, and notice that $\sigma(z) - \sigma(-z) = z$. Therefore, we can implement the identity part of the mapping by letting
    \begin{equation}\label{eq:W_11}
        \bW_{11} = \begin{pmatrix}
        \boldzero_{d_h} & \bv_1^\top & \boldzero_{(q+1)d_e}^\top\\
        \boldzero_{d_h} & -\bv_1^\top & \boldzero_{(q+1)d_e}^\top\\
        \vdots & \vdots \\
        \boldzero_{d_h} & \bv_{d}^\top &  \boldzero_{(q+1)d_e}^\top\\
        \boldzero_{d_h} & -\bv_{d}^\top & \boldzero_{(q+1)d_e}^\top
    \end{pmatrix},
    \end{equation}
    as well as
    \begin{equation}\label{eq:b_11_A_11}
        \bb_1 = \boldzero_{2d}, \quad \mathrm{and} \quad \bA_{11} = \begin{pmatrix}
            1 & -1 & 0 & 0 & \hdots & 0 & \boldzero_{q\bar{m}_1}^\top\\
            0 & 0 & 1 & -1 & \hdots & 0 & \boldzero_{q\bar{m}_1}^\top\\
            \vdots & \vdots & \vdots & \vdots & \vdots & \vdots & \vdots\\
            0 & \hdots & 0 & 0 & 1 & -1 & \boldzero_{q\bar{m}_1}^\top\\
        \end{pmatrix}
    \end{equation}
    Notice that $\norm{\bW^\top_{11}}_{1,\infty} = 1$ and $\norm{\bA^\top_{11}}_{1,\infty} = 2$. To implement the inner product part of the mapping, we take the construction of weights, biases, and second layer weights from Lemma~\ref{lem:inner_prod_approx}, 
    and rename them as $\tilde{\bW}_1 \in \reals^{\bar{m}_1 \times 2d_e}$, $\tilde{\bb}_1 \in \reals^{\bar{m}_1}$, and $\tilde{\ba}_1 \in \reals^{\bar{m}_1}$. Let us introduce the decomposition $\tilde{\bW}_1 = \begin{pmatrix}
        \tilde{\bW}_{11} & \tilde{\bW}_{12}
    \end{pmatrix},$ where $\tilde{\bW}_{11},\tilde{\bW}_{12} \in \reals^{\bar{m}_1 \times d_e}$. With this decomposition, we can separate the projections applied to the first and second vectors in Lemma~\ref{lem:inner_prod_approx}. We can then define
    \begin{equation}\label{eq:W_12}
        \bW_{12} = \begin{pmatrix}
        \boldzero_{\bar{m}_1 \times (d_h + d)} & \tilde{\bW}_{11} & \tilde{\bW}_{12} & \boldzero_{\bar{m}_1 \times d_e} & \hdots & \boldzero_{\bar{m}_1 \times d_e}\\
        \boldzero_{\bar{m}_1 \times (d_h + d)} & \tilde{\bW}_{11} & \boldzero_{\bar{m}_1 \times d_e} & \tilde{\bW}_{12} & \hdots & \boldzero_{\bar{m}_1 \times d_e}\\
        \vdots & \vdots & \vdots & \vdots & \vdots & \vdots\\
        \boldzero_{\bar{m}_1 \times (d_h + d)} & \tilde{\bW}_{11} & \boldzero_{\bar{m}_1 \times d_e} & \boldzero_{\bar{m}_1 \times d_e} & \hdots & \tilde{\bW}_{12}\\
    \end{pmatrix},
    \end{equation}
    as well as
    \begin{equation}\label{eq:b_12_A_12}
        \bb_{12} = \begin{pmatrix}
        \tilde{\bb}_1\\ \vdots\\ \tilde{\bb}_1
    \end{pmatrix}, \quad \mathrm{and} \quad \bA_{12} = \begin{pmatrix}
        \boldzero_{2d}^\top & \tilde{\ba}_1^\top & \boldzero_{\bar{m}_1}^\top & \hdots & \boldzero_{\bar{m}_1}^\top\\
        \boldzero_{2d}^\top & \boldzero_{\bar{m}_1}^\top & \tilde{\ba}_1^\top & \hdots & \boldzero_{\bar{m}_1}^\top\\
        \vdots & \vdots & \vdots & \vdots & \vdots\\
        \boldzero_{2d}^\top & \boldzero_{\bar{m}_1}^\top & \hdots & \boldzero_{\bar{m}_1}^\top & \tilde{\ba}_1^\top
    \end{pmatrix}.
    \end{equation}
    From Lemma~\ref{lem:inner_prod_approx}, we have
$\norm{\bW^\top_{12}}_{1,\infty} \leq 1$, $\infnorm{\bb_{12}} \leq 1$, and 
$$\norm{\bA^\top_{12}}_{1,\infty} = \onenorm{\tilde{\ba}_1} \leq \mathcal{O}(d_e^{5/2}(\log(d_e/\varepsilon)/\varepsilon)^{3/2}),$$
which completes the proof.
\end{proof}

To introduce the construction of the next layer, we rely on the following lemma which establishes the desired approximation for a single coordinate, the proof of which is similar to that of Lemma~\ref{lem:inner_prod_approx}.
\begin{lemma}\label{lem:choice_func_approx}
    Let $\sigma$ be the ReLU activation. Suppose $\abs{h} \leq r^h_\infty$, $\abs{x} \leq r^x_\infty$ and $\abs{z} \leq 1$. Let $R \coloneqq \sqrt{1 + {r^x_\infty}^2 + {r^h_\infty}^2}$. 
    For any $\varepsilon > 0$, there exists $m = \mathcal{O}(R^6(\log(R/\varepsilon)/\varepsilon)^3)$, $\ba \in \reals^m$, $\bW \in \reals^{m \times 2}$, and $\bb \in \reals^m$, 
    such that 
    $$\sup_{\abs{h} \leq r^h_\infty, \abs{x} \leq r^x_\infty, \abs{z} \leq 1}\left\vert h + 2x\sigma(z - 1/2) - \ba^\top\sigma\left(\bW(h, x, z)^\top + \bb\right)\right\vert \leq \varepsilon$$
    and
    $$\twonorm{\ba} \leq \mathcal{O}\left(R^6(\log(R/\varepsilon)/\varepsilon)^{2}/\sqrt{m}\right), \quad \norm{\bW^\top}_{1,\infty} \leq R^{-1}, \quad \infnorm{\bb} \leq 1.$$
    Additionally, if $r^h_\infty = 0$, we have the improved bounds
    $$m = \mathcal{O}\big(R^4(\log(R/\varepsilon)/\varepsilon)^2\big), \quad \twonorm{\ba} \leq \mathcal{O}\big(R^5(\log(R/\varepsilon)/\varepsilon)^{3/2}/\sqrt{m}\big)$$
\end{lemma}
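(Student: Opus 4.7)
I would reduce the claim to a rescaled application of the two-layer ReLU approximation of bounded Lipschitz functions, Lemma~\ref{lem:2lnn_lipschitz_approx}, which is the same tool invoked coordinate-wise in the proof of Lemma~\ref{lem:inner_prod_approx}. Observe that $F(h, x, z) := h + 2x\sigma(z-1/2)$ has $|F| \le O(R)$ and Lipschitz constant $O(R)$ on its domain $\mathcal{B} := \{|h|\le r^h_\infty,\ |x|\le r^x_\infty,\ |z|\le 1\} \subset [-R, R]^3$ (its partial derivatives are $1$, $2\sigma(z-1/2) \in [0,1]$, and $2x\cdot\indic[z > 1/2]$, each bounded by $O(R)$), so it fits exactly the setting of a bounded Lipschitz function to which that lemma applies after rescaling.

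First I would embed $\mathcal{B}$ in the unit cube $[-1,1]^3$ via the uniform rescaling $\tilde\bx = \bx/R$, producing $\tilde F(\tilde\bx) := F(R\tilde\bx)$, an $O(R^2)$-Lipschitz function on $[-1,1]^3$ with $|\tilde F| \le O(R)$. The choice of a uniform scaling by $R$ (rather than the individually tighter $\mathrm{diag}(r^h_\infty, r^x_\infty, 1)$) is dictated by the target weight bound $\norm{\bW^\top}_{1,\infty} \le R^{-1}$: by $\ell_1$/$\ell_\infty$ duality, this condition combined with $\bx \in \mathcal{B} \subset [-R,R]^3$ ensures each pre-activation $\bw_j^\top \bx + b_j$ lies in $[-2, 2]$, placing us in the regime in which Lemma~\ref{lem:2lnn_lipschitz_approx}'s construction operates.

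Second, I would apply Lemma~\ref{lem:2lnn_lipschitz_approx} to $\tilde F$ in dimension three at target precision $\varepsilon$, obtaining a two-layer ReLU network of width $\tilde m = O(R^6 (\log(R/\varepsilon)/\varepsilon)^3)$ (from $L = O(R^2)$ and $d = 3$ substituted into the lemma's width formula) with $\norm{\tilde\bW^\top}_{1,\infty} \le 1$, $\infnorm{\tilde\bb} \le 1$, and $\twonorm{\tilde\ba}$ of the form inherited from the lemma's construction. Undoing the input rescaling $\tilde\bx = \bx/R$ replaces $\tilde\bW$ by $\tilde\bW/R$, which gives $\norm{\bW^\top}_{1,\infty} \le R^{-1}$ as claimed, while $\bb$ and $\ba$ remain unchanged; tracking the $R$-dependence through the Lipschitz constant and the function range then gives $\twonorm{\ba} \le O(R^6 (\log(R/\varepsilon)/\varepsilon)^2/\sqrt{m})$.

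For the improved bound when $r^h_\infty = 0$, the variable $h$ drops out entirely and the same argument runs in dimension two on $[-1,1]^2$ with the same $L = O(R^2)$, producing $\tilde m = O(R^4 (\log(R/\varepsilon)/\varepsilon)^2)$ together with the tighter $\twonorm{\ba} \le O(R^5 (\log(R/\varepsilon)/\varepsilon)^{3/2}/\sqrt{m})$. The main obstacle is bookkeeping: tracing the two effects of rescaling (input normalization to $[-1,1]^d$ and the effective Lipschitz constant $O(R^2)$ in those coordinates) through to produce the precise asymmetric norm bounds stated in the lemma. The underlying approximation is a direct application of the standard Lipschitz-to-ReLU result, so the only real novelty here is making the $R$-dependence of the weight norms explicit---a dependence that is needed downstream for the Rademacher complexity bounds on the RNN class.
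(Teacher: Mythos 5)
Your proposal is correct and follows essentially the same route as the paper, which simply notes that $(h,x,z)\mapsto h+2x\sigma(z-1/2)$ is $O(R)$-Lipschitz and $O(R)$-bounded on a ball of radius $R$ and invokes Lemma~\ref{lem:2lnn_lipschitz_approx} with $d=3$ (or $d=2$ when $r^h_\infty=0$); your explicit normalization to the unit cube is equivalent to letting that lemma's radius parameter absorb the factor of $R$, since only the product $LR$ enters the bounds. The sole cosmetic caveat, present in the paper's proof as well, is that Lemma~\ref{lem:2lnn_lipschitz_approx} controls $\norm{\bW^\top}_{2,\infty}$ rather than $\norm{\bW^\top}_{1,\infty}$, costing a benign $\sqrt{d}$ constant.
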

\begin{proof}
    Note that $(h,x,z) \mapsto h + 2x\sigma(z - 1/2)$ is $2R$-Lipschitz, and $\abs{h + 2x\sigma(z - 1/2)} \leq R$. The proof follows from Lemma~\ref{lem:2lnn_lipschitz_approx} with dimension 3 when $r^h_\infty \neq 0$ and dimension 2 otherwise.
\end{proof}

With that, we can now construct the weights for the second mapping in the network.
\begin{lemma}\label{lem:choice_func_net}
    Suppose $\infnorm{\bchi^{\bx}_1} \leq r_x$ and $\max_l \abs{\chi^{\bomega}(l)} \leq 1$. Let $R \coloneqq \sqrt{1 + r_x^2}$. Then, for every $\varepsilon > 0$ and absolute constant $\delta \in (0,1)$, there exists $\bar{m}_2 \leq \mathcal{O}(R^4(\log(R/\varepsilon)/\varepsilon)^{3/2})$, $m_2 \coloneqq qd\bar{m}_2$, and $\bA_2 \in \reals^{d_h \times m_2}$, $\bW_2 \in \reals^{m_2 \times (d + q)}$, and $\bb_2 \in \reals^{m_2}$ given by~\Cref{eq:W_2_b_2,eq:A_2} such that
    $$\infnorm{\bchi_2(l) - 2\bchi^{\bx}_1\sigma(\chi^{\bomega}_1(l) - 1/2)} \leq \varepsilon,$$
    for all such $\bchi_1$ and $l \in [q]$, where we recall $\bchi_2 = \bA_2\sigma(\bW_2\bchi_1 + \bb_2)$.
    Moreover, we have
    $$\norm{\bA^\top_2}_{1,\infty} \leq \mathcal{O}(R^4(\log(R/\varepsilon)/\varepsilon)^{3/2}), \quad \norm{\bW_2^\top}_{1,\infty} \leq R^{-1}, \quad \norm{\bb_2}_\infty \leq 1.$$
\end{lemma}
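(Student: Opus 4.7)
The plan is to realize $\bchi_2$ as $qd$ parallel copies of a scalar two-layer approximator, exploiting the coordinate-wise separability of the target map: the $r$-th entry of $2\bchi_1^{\bx}\sigma(\chi_1^{\omega}(l) - 1/2)$ depends only on the scalar pair $((\chi_1^{\bx})_r, \chi_1^{\omega}(l)) \in \reals^2$. A block-sparse choice of $(\bW_2, \bb_2, \bA_2)$ therefore reduces the lemma to invoking Lemma~\ref{lem:choice_func_approx} (in the improved $r^h_\infty = 0$ form) as an atomic building block, one copy per pair $(l, r) \in [q] \times [d]$.

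First, I would instantiate Lemma~\ref{lem:choice_func_approx} with $r^h_\infty = 0$, $r^x_\infty = r_x$, and the desired $\varepsilon$ to obtain a scalar network $(\tilde{\ba}, \tilde{\bW}, \tilde{\bb})$ of width $\bar{m}_2$ that uniformly approximates $(x, z) \mapsto 2x\sigma(z - 1/2)$ on $\abs{x} \leq r_x$, $\abs{z} \leq 1$, together with the bounds $\norm{\tilde{\bW}^\top}_{1,\infty} \leq R^{-1}$, $\infnorm{\tilde{\bb}} \leq 1$, and $\onenorm{\tilde{\ba}} \leq \sqrt{\bar{m}_2}\,\twonorm{\tilde{\ba}}$ controlled by Cauchy--Schwarz. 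Next I would form $\bW_2 \in \reals^{qd\bar{m}_2 \times (d+q)}$ as a block matrix indexed by $(l, r) \in [q] \times [d]$: the $(l,r)$-th block of $\bar{m}_2$ consecutive rows is zero in every input column except columns $r$ and $d+l$, where it reproduces the two columns of $\tilde{\bW}$. I would stack $qd$ copies of $\tilde{\bb}$ to form $\bb_2$. Finally, I would take $\bA_2 \in \reals^{qd \times qd\bar{m}_2}$ with the row indexed by $(l, r)$ carrying $\tilde{\ba}^\top$ in the $(l, r)$-th block of $\bar{m}_2$ columns and zeros elsewhere. By construction, the $((l-1)d + r)$-th output coordinate of $\bA_2 \sigma(\bW_2 \bchi_1 + \bb_2)$ equals $\tilde{\ba}^\top \sigma\!\bigl(\tilde{\bW}((\chi_1^{\bx})_r, \chi_1^{\omega}(l))^\top + \tilde{\bb}\bigr)$, so the uniform approximation error transfers coordinate-wise from Lemma~\ref{lem:choice_func_approx}.

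The remaining work is norm bookkeeping. Each row of $\bW_2$ is supported on only two input columns (its nonzero entries coming from a single row of $\tilde{\bW}$), so $\norm{\bW_2^\top}_{1,\infty}$, the maximum row $\ell_1$ norm of $\bW_2$, inherits the bound $R^{-1}$ from $\norm{\tilde{\bW}^\top}_{1,\infty}$. Each row of $\bA_2$ contains $\tilde{\ba}^\top$ in a single block of length $\bar{m}_2$ and zeros elsewhere, so $\norm{\bA_2^\top}_{1,\infty} = \onenorm{\tilde{\ba}}$, which matches the claimed $R^4(\log(R/\varepsilon)/\varepsilon)^{3/2}$-scale bound up to absorbing lower-order polylogarithmic factors into $\bar{m}_2$. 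The bound $\infnorm{\bb_2} \leq 1$ is immediate from $\infnorm{\tilde{\bb}} \leq 1$. I do not foresee a substantive obstacle: once the single-coordinate atomic approximator is in hand, the parallelization is purely combinatorial, and it is precisely the sparsity of $\bW_2$ (each hidden neuron reads only two inputs) that prevents the $(1,\infty)$-norm from inflating by factors of $q$ or $d$. The only subtlety is choosing the output indexing to be compatible with the block decomposition $\bchi_2 = (\bchi_2(1), \hdots, \bchi_2(q))$ used in the surrounding construction of $f^{\rightarrow}_h$.
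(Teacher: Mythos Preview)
Your proposal is correct and follows essentially the same approach as the paper: both invoke Lemma~\ref{lem:choice_func_approx} with $r^h_\infty = 0$ as the atomic scalar approximator, then assemble $qd$ parallel copies via a block-sparse $(\bW_2,\bb_2,\bA_2)$ indexed by $(l,r)\in[q]\times[d]$, with each hidden block reading only the two input coordinates $(\chi_1^{\bx})_r$ and $\chi_1^{\bomega}(l)$. Your norm bookkeeping (row-sparsity of $\bW_2$ preserving $\norm{\bW_2^\top}_{1,\infty}\le R^{-1}$, block-diagonal $\bA_2$ giving $\norm{\bA_2^\top}_{1,\infty}=\onenorm{\tilde{\ba}}$ via Cauchy--Schwarz) matches the paper's reasoning exactly.
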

\begin{proof}
    Let $\tilde{\bW} = \begin{pmatrix}\tilde{\bw}_{21} & \tilde{\bw}_{22}\end{pmatrix}$, $\tilde{\bb}$, and $\tilde{\ba}$ be the weights obtained from Lemma~\ref{lem:choice_func_approx}, where $\tilde{\bw}_{21},\tilde{\bw}_{22},\tilde{\bb},\tilde{\ba} \in \reals^{\bar{m}_2}$.
    To construct $\bW_{2}$ and $\bb_{2}$, we let
    \begin{equation}\label{eq:W_2_b_2}
        \bW_2 = \begin{pmatrix}
            \bW_2(1,1)\\ \vdots \\ \bW_2(1,d)\\
            \vdots\\
            \bW_2(q,1)\\ \vdots \\ \bW_2(q,d)
        \end{pmatrix}, \quad \bb_{22} = \begin{pmatrix}
            \bb_2(1,1)\\ \vdots \\ \bb_2(1,d)\\
            \vdots\\
            \bb_2(q,1)\\ \vdots \\ \bb_2(q,d)
        \end{pmatrix}.
    \end{equation}
    where 
    $\bW_2(l,j) \in \reals^{\bar{m}_2 \times (d + q)}$ is given by
    $$\bW_2(l,j) = \begin{pmatrix}\boldzero_{\bar{m}_2 \times (j-1)} & \tilde{\bw}_{21} & \boldzero_{\bar{m}_2 \times (d-j)} & \boldzero_{\bar{m}_2 \times (l-1)} & \tilde{\bw}_{22} & \boldzero_{\bar{m}_2 \times (q-l)}\end{pmatrix},$$
    and $\bb_2(l,j) = \tilde{\bb}_2$. Consequently, $\norm{\bW^\top_2}_{1,\infty} \leq 1$ and $\infnorm{\bb_2} \leq 1$.
     Finally, we have
    \begin{equation}\label{eq:A_2}
        \bA_2 = \begin{pmatrix}
        \tilde{\ba}_2^\top & \boldzero_{\bar{m}_2}^\top & \hdots & \boldzero_{\bar{m}_2}^\top\\
        \boldzero_{\bar{m}_2}^\top & \tilde{\ba}_2^\top & \hdots & \boldzero_{\bar{m}_2}^\top\\
        \vdots & \vdots & \vdots & \vdots\\
        \boldzero_{\bar{m}_2}^\top & \hdots & \boldzero_{\bar{m}_2}^\top & \tilde{\ba}_2^\top
    \end{pmatrix}.
    \end{equation}
    Consequently, we obtain $\norm{\bA^\top_2}_{1,\infty} \leq \mathcal{O}(R^4(\log(R/\varepsilon)/\varepsilon)^{3/2})$, completing the proof.
\end{proof}

We are now ready to provide the four-layer feedforward construction of $f^{\rightarrow}(\bh,\bx,\bt;\bTheta^\rightarrow_h)$.
\begin{proposition}\label{prop:hidden_state_approx}
    Let $\bz = (\bx,\bomega_i,\bomega_{t_1},\hdots,\bomega_{t_q})$. Then, for every $\varepsilon > 0$, there exists
    a feedforward network with $L_h = 4$ layers given by
    $$f^{\rightarrow}(\bh,\bz;\bTheta^\rightarrow_h) = \bW_{L_h}\sigma\left(\hdots \sigma\big(\bW_2\sigma\big(\bW_1(\bh^\top, \bz^\top)^\top + \bb_1\big) + \bb_2\big) \hdots \right)$$
    where $\bW_i \in \reals^{m_{i} \times m_{i-1}}, \bb_i \in \reals^m_{i}$ for $i \in \{2,\hdots,L_h - 1\}$, $\bW_1 \in \reals^{m_1 \times {d_h + d + (q+1)d_e}}, \bb_1 \in \reals^{m_1}$, and $\bW_{L_h} \in \reals^{d_h \times m_{L_h - 1}}$ that satisfies the following:
    \begin{enumerate}
        \item If $t_l = i$, then
        $$\twonorm{f^\rightarrow(\bh,\bz;\hat{\bTheta}^\rightarrow_h)_l - \bx} \leq \varepsilon$$
        \item Else $f^\rightarrow(\bh,\bz;\hat{\bTheta}^\rightarrow)_l = \boldzero_d$,
    \end{enumerate}
    for all $l \in [q]$, $\bh \in \reals^{d_h}$ and $\twonorm{\bx} \leq r_x$.
    Additionally $\fronorm{\bW_i} \leq \poly(r_x,D_e,\varepsilon^{-1})$ for all $i \in [L_h]$  and $m_i,\twonorm{\bb_i} \leq \poly(r_x,D_e,\varepsilon^{-1})$ for all $i \in [L_h - 1]$, where we recall $D_e = d + (q+1)d_e$.
\end{proposition}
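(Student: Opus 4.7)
The plan is to compose the two-layer inner-product approximation of Lemma~\ref{lem:inner_prod_net} with the two-layer choice-function approximation of Lemma~\ref{lem:choice_func_net}, and then add a final ``dead-zone'' rectification layer that \emph{exactly} zeros out coordinates whose magnitude falls below a chosen threshold $c$. Concretely, for parameters $\varepsilon_1, \varepsilon_2, c > 0$ to be tuned, the first ReLU layer comes from Lemma~\ref{lem:inner_prod_net} with accuracy $\varepsilon_1$, producing $\bchi_1 \in \reals^{d+q}$ with $\bchi_1^{\bx} = \bx$ and $\abs{\chi_1^{\bomega}(l) - \binner{\bomega_i}{\bomega_{t_l}}} \leq \varepsilon_1$. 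The second ReLU layer comes from Lemma~\ref{lem:choice_func_net} with accuracy $\varepsilon_2$, yielding $\bchi_2 \in \reals^{qd}$ with $\infnorm{\bchi_2(l) - 2\bx \sigma(\chi_1^\omega(l) - 1/2)} \leq \varepsilon_2$. Absorbing $\bA_1$ of Lemma~\ref{lem:inner_prod_net} into the weights $\bW_2$ of Lemma~\ref{lem:choice_func_net} merges these into the first two linear transformations of the $L_h = 4$ network.

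Propagating the error through the $1$-Lipschitz ReLU, for every $l \in [q]$ and $j \in [d]$,
\begin{equation*}
\abs{\chi_2(l)_j - 2 x_j \sigma(\binner{\bomega_i}{\bomega_{t_l}} - 1/2)} \,\leq\, 2 r_x \varepsilon_1 + \varepsilon_2 \eqqcolon \varepsilon'.
\end{equation*}
Since $\binner{\bomega_i}{\bomega_{t_l}} = 1$ when $t_l = i$ and $\abs{\binner{\bomega_i}{\bomega_{t_l}}} \leq 1/2$ otherwise by Assumption~\ref{assump:enc}, this yields the dichotomy $\infnorm{\bchi_2(l) - \bx} \leq \varepsilon'$ when $t_l = i$ and $\infnorm{\bchi_2(l)} \leq \varepsilon'$ when $t_l \neq i$. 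The third ReLU layer implements the coordinate-wise dead zone $\phi_c(u) \coloneqq \sigma(u - c) - \sigma(-u - c)$, encoded by a sparse $\bW_3 \in \reals^{2qd \times qd}$ with $\pm 1$ entries and $\bb_3 = -c\,\boldone_{2qd}$, followed by an output matrix $\bW_4 \in \reals^{d_h \times 2qd}$ with $\pm 1$ entries that combines each pair into $\phi_c$ (and pads with zero rows if $d_h > qd$). Since $\phi_c(u) = 0$ for all $\abs{u} \leq c$, choosing $c > \varepsilon'$ forces the entire output slot to equal $\boldzero_d$ when $t_l \neq i$, giving claim~2 \emph{exactly}. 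When $t_l = i$, a case analysis on whether $\abs{x_j}$ exceeds $c + \varepsilon'$ (signal clean above threshold, dead-zone shifts by $c$) or not (signal possibly killed or reduced to noise of size $2\varepsilon'$) gives per-coordinate error at most $c + 3\varepsilon'$, and hence $\twonorm{f^\rightarrow(\bh,\bz;\bTheta^\rightarrow_h)_l - \bx} \leq \sqrt{d}(c + 3\varepsilon')$. Taking $c \asymp \varepsilon'$ with $\varepsilon_1 \lesssim \varepsilon/(r_x \sqrt{d})$ and $\varepsilon_2 \lesssim \varepsilon/\sqrt{d}$ makes this $\leq \varepsilon$, establishing claim~1.

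For the weight bounds, Lemmas~\ref{lem:inner_prod_net} and~\ref{lem:choice_func_net} supply widths $m_1, m_2$ and row-wise $(1,\infty)$-norms that are all $\poly(d, q, d_e, r_x, \varepsilon^{-1})$, hence $\poly(r_x, D_e, \varepsilon^{-1})$; the composition of $\bA_1$ into $\bW_2^{(\mathrm{new})}$ multiplies row-norms and preserves polynomial scaling. The dead-zone weights have $\pm 1$ entries on an $O(qd)$-dimensional slice with $\infnorm{\bb_3} = c = O(1)$, adding no significant growth. Converting $(1,\infty)$-row norms to Frobenius norms costs at most a factor $\sqrt{m_i}$, still polynomial. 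The principal technical obstacle is the simultaneous choice of $c$: it must strictly exceed the accumulated noise $\varepsilon'$ so that claim~2 is \emph{exact} (crucial, since this exact zeroing is what keeps the hidden-state error from accumulating over $N$ steps in the downstream RNN argument), and yet small enough in the $\ell_2$ aggregate that the deterministic $c$-shift does not inflate the recovery error in claim~1. This is balanced by making $\varepsilon_1, \varepsilon_2$ polynomially small in $\varepsilon$ at the cost of polynomially larger widths, which still fits within the stated bounds.
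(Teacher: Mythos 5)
Your proposal is correct and follows essentially the same route as the paper's proof: compose Lemma~\ref{lem:inner_prod_net} and Lemma~\ref{lem:choice_func_net} (merging the intermediate linear maps $\bA_1$ and $\bW_2$ into a single weight matrix), then append a dead-zone layer $u \mapsto \sigma(u-c)-\sigma(-u-c)$ whose threshold exceeds the accumulated approximation error so that the $t_l\neq i$ slots are zeroed \emph{exactly}. The only (harmless) differences are that you propagate the first-layer error through the Lipschitzness of the target map $z\mapsto 2x\sigma(z-1/2)$ rather than through the Lipschitz constant of the second subnetwork, and that you explicitly pay the $\sqrt{d}$ factor to convert the coordinate-wise bound into the stated $\ell_2$ bound.
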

\begin{proof}
    Let $\tilde{\bA}_1 \in \reals^{(d + q) \times m_1},\tilde{\bW}_1 \in \reals^{m_1 \times (d_h + d + (q+1)d_e)},\tilde{\bb}_1 \in \reals^{m_1}$ be given by Lemma~\ref{lem:inner_prod_net} with error parameter $\varepsilon_1$ and $\tilde{\bA}_2 \in \reals^{d_h \times m_2},\tilde{\bW}_2 \in \reals^{m_2 \times (d + q)},\tilde{\bb}_2 \in \reals^{m_2}$ be given by Lemma~\ref{lem:choice_func_net} with error parameter $\varepsilon_2$.
    Recall that
    $$\bchi_1 = \tilde{\bA}_1\sigma\big(\tilde{\bW}_1\bchi_0 + \tilde{\bb}_1\big), \quad \bchi_2 = \tilde{\bA}_2\sigma\big(\tilde{\bW}_2\bchi_1 + \tilde{\bb}_2\big).$$
    By the triangle inequality,
    \begin{align*}
        \infnorm{\Psi(\bx,\bt,i) - \tilde{\bA}_2\sigma\big(\tilde{\bW}_2\bchi_1 + \tilde{\bb}_2\big)} \leq& \infnorm{\Psi(\bx,\bt,i) - \tilde{\bA}_2\sigma\big(\tilde{\bW}_2\bar{\bchi}_1 + \tilde{\bb}_2\big)}\\
        &+ \infnorm{\tilde{\bA}_2\sigma\big(\tilde{\bW}_2\bar{\bchi}_1 + \tilde{\bb}_2\big) - \tilde{\bA}_2\sigma\big(\tilde{\bW}_2\bchi_1 + \tilde{\bb}_2\big)}\\
        \leq& \varepsilon_2 + \norm{\tilde{\bA}_2^\top}_{1,\infty}\norm{\tilde{\bW}_2}_{1,\infty}\infnorm{\bchi_1 - \bar{\bchi_1}}\\
        \leq& \varepsilon_2 + \norm{\tilde{\bA}_2}_{1,\infty}\norm{\tilde{\bW_2}}_{1,\infty}\varepsilon_1,
    \end{align*}
    where $\bar{\bchi}_1 = (\bx^\top, \binner{\bomega_i}{\bomega_{t_1}}, \hdots, \binner{\bomega_i}{\bomega_{t_q}})^\top$. By letting $\varepsilon_2 = \varepsilon/4$, we obtain 
    $$m_2, \fronorm{\tilde{\bA}_2},\fronorm{\tilde{\bW}_2},\twonorm{\tilde{\bb}_2} \leq \poly(r_x,D_e,\varepsilon^{-1}).$$ 
    Similarly, we can let
    $\varepsilon_1 = \varepsilon/\big(4\norm{\tilde{\bA}_2}_{1,\infty}\norm{\tilde{\bW}_2}_{1,\infty}\big)$, which yields $$m_1, \fronorm{\tilde{\bA}_2},\fronorm{\tilde{\bW}_2},\twonorm{\tilde{\bb}_2} \leq \poly(r_x,D_e,\varepsilon^{-1}).$$ Let \begin{align*}
        \bW_2 = \tilde{\bW}_2\tilde{\bA}_1, && \bW_1 = \tilde{\bW}_1, && \bb_1 = \tilde{\bb}_1, && \bb_2 = \tilde{\bb}_2.
    \end{align*}
    Then,
    $$\bchi_2 = \tilde{\bA}_2\sigma(\bW_2\sigma(\bW_1(\bh^\top \bz^\top)^\top + \bb_1) + \bb_2),$$
    satisfies
    $\infnorm{\bchi_2 - \Psi(\bx,\bt,i)} \leq \varepsilon/2$ for all $\twonorm{\bx} \leq r_x$.
    
    Recall that when $t_l \neq i$ for some $l \in [q]$, we would like to guarantee the output of the network to be equal to $\Psi(\bx,\bt,i)_l = \boldzero_d$. 
    To do so, we rely on the fact that $z \mapsto \sigma(z - b) - \sigma(-z-b)$ is zero for $\abs{z} \leq b$, and has an $L_\infty$ distance of $b$ from the identity, i.e.\ $\abs{z - \sigma(z - b) + \sigma(-z-b)} \leq b$. This mapping needs to be applied element-wise to $\bchi_2$.
    Let $\tilde{\bW}_3 \in \reals^{2d_h \times d_h}, \bb_3 \in \reals^{2d_h}$, and $\bW_4 \in \reals^{d_h \times 2d_h}$ via
    $$\tilde{\bW}_3 = \begin{pmatrix}
        \bv_1^\top\\
        -\bv_1^\top\\
        \vdots\\
        \bv_d^\top\\
        -\bv_d^\top
    \end{pmatrix}, \quad \bb_3 = -\frac{\varepsilon}{2}\boldone_{2d_h}, \quad \bW_4 = \begin{pmatrix}
        1 & -1 & 0 & 0 & \hdots & 0 & 0\\
        0 & 0 & 1 & -1 & \hdots & 0 & 0\\
        0 & 0 & 0 & 0 & \hdots & 1 & -1
    \end{pmatrix}.$$
    As a result, $\bchi_3 = \bW_4\sigma(\tilde{\bW}_3 \bchi_2 + \bb_3)$ satisfies
    \begin{equation}\label{eq:chi_diff_bound}
        \abs{(\chi_3)_j - (\chi_2)_j} \leq \begin{cases}
        0 & \abs{(\chi_2)_j} \leq \varepsilon/2\\
        \varepsilon/2 & \abs{(\chi_2)_j} > \varepsilon/2
    \end{cases}, \quad \forall j \in [d_h].
    \end{equation}
    We thus make two observations. First, $\infnorm{\bchi_3 - \bchi_2} \leq \varepsilon/2$, and consequently $\infnorm{\bchi_3(l) - \Psi(\bx,\bt,i)_l} \leq \varepsilon$ for all $l \in [q]$.
    Second, when $t_l \neq i$, we have $\Psi(\bx,\bt,i)_l = \boldzero_d$ and $\abs{\chi_2(l)_j} \leq \varepsilon/2$ for all $j \in [d]$ since $\infnorm{\bchi_2(l) - \Psi(\bx,\bt,i)_l} \leq \varepsilon/2$.
    Consequently, by the first case in~\eqref{eq:chi_diff_bound}, we have $\chi_3(l)_j = 0$ for all $j \in [d]$.
    We can summarize these two observations as follows
    $$\infnorm{\bchi_3(l) - \Psi(\bx,\bt,i)_l} \leq \begin{cases}
        0 & t_l \neq i\\
        \varepsilon & t_l = i
    \end{cases},$$
    which completes the proof.
\end{proof}

With the above implementation of $f^\rightarrow(\bh,\bz;\bTheta^\rightarrow_h)$, we have the following guarantee on $\bh^\rightarrow_i$ for all $i \in [N]$.
\begin{corollary}
    Let $f^\rightarrow_h$ be given by the construction in Proposition~\ref{prop:hidden_state_approx}, and suppose $r_h \geq \sqrt{q}(r_x + \sqrt{d}\varepsilon)$. Then, $\bh^\rightarrow_i$ satisfies the following guarantees for all $i \in [N]$ and $l \in [q]$:
    \begin{enumerate}
        \item If $t_l \geq i$, then $\bh^\rightarrow_i(l) = \boldzero_d$
        \item If $t_l < i$, then $\infnorm{\bh^\rightarrow_i(l) - \bx_{t_l}} \leq \varepsilon$.
    \end{enumerate}
\end{corollary}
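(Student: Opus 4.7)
The plan is to proceed by induction on $i \in [N]$, simultaneously verifying that the projection $\Pi_{r_h}$ acts as the identity at every step and that the per-block characterization of $\bh^\rightarrow_i$ holds. The base case $i=1$ is immediate: since $\bh^\rightarrow_1 = \boldzero_{d_h}$, every block vanishes, and because $t_l \geq 1$ always, condition (1) is satisfied vacuously while condition (2) does not apply.

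For the inductive step, assume both parts of the claim hold at step $i$. Since the projection has been inactive so far, the recursion simplifies to $\bh^\rightarrow_{i+1} = \bh^\rightarrow_i + f^\rightarrow_h(\bh^\rightarrow_i, \bz_i; \bTheta^\rightarrow_h)$. I would then split block-by-block into three cases. If $t_l > i$, the inductive hypothesis gives $\bh^\rightarrow_i(l) = \boldzero_d$ and Proposition~\ref{prop:hidden_state_approx} yields that the increment's $l$-th block is \emph{exactly} $\boldzero_d$ (since $t_l \neq i$), so $\bh^\rightarrow_{i+1}(l) = \boldzero_d$, confirming (1). If $t_l = i$, then $\bh^\rightarrow_i(l) = \boldzero_d$ by the inductive hypothesis, and the increment's $l$-th block approximates $\bx_i = \bx_{t_l}$ within $\varepsilon$ in $\infnorm{\cdot}$. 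If $t_l < i$, the increment's $l$-th block vanishes, so $\bh^\rightarrow_{i+1}(l) = \bh^\rightarrow_i(l)$, which already satisfies (2) by the inductive hypothesis.

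It remains to check that $\Pi_{r_h}$ stays inactive at step $i+1$. I would bound each nonzero block by $\twonorm{\bh^\rightarrow_{i+1}(l)} \leq \twonorm{\bx_{t_l}} + \sqrt{d}\,\infnorm{\bh^\rightarrow_{i+1}(l) - \bx_{t_l}} \leq r_x + \sqrt{d}\varepsilon$, using the a priori bound $\twonorm{\bx_{t_l}} \leq r_x$ together with $\twonorm{\cdot} \leq \sqrt{d}\,\infnorm{\cdot}$. Summing over the at most $q$ nonzero blocks gives $\twonorm{\bh^\rightarrow_{i+1}} \leq \sqrt{q}(r_x + \sqrt{d}\varepsilon) \leq r_h$ under the stated hypothesis, so $\Pi_{r_h}$ acts as the identity and the induction closes.

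I do not anticipate a substantive obstacle: the proof is essentially a direct unrolling of the recursion. The critical structural feature being exploited is the \emph{exact} zero guarantee from Proposition~\ref{prop:hidden_state_approx} on non-matching positions, which prevents per-step error from accumulating over the $N$ forward steps (only a single step, $j = t_l$, contributes to each block). The only mildly delicate bookkeeping is the $L_\infty$-to-$L_2$ norm conversion, which produces the $\sqrt{d}$ factor in the hypothesis $r_h \geq \sqrt{q}(r_x + \sqrt{d}\varepsilon)$ that keeps the projection inactive throughout.
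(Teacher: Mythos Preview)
Your proposal is correct and follows essentially the same inductive approach as the paper's proof: same base case, same three-way case split on whether $t_l > i$, $t_l = i$, or $t_l < i$, and the same observation that the projection $\Pi_{r_h}$ stays inactive throughout. If anything, you are more careful than the paper, which simply asserts at the end that ``since $\twonorm{\bh^\rightarrow_j} \leq r_h$ for all $j \in [N]$, the projection $\Pi_{r_h}$ will always be identity'' without spelling out the per-block $L_\infty$-to-$L_2$ norm computation that you provide.
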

\begin{proof}
    We can prove the statement by induction. Note that it holds for $i = 1$ since $\bh^\rightarrow_1 = \boldzero_d$. For the induction step, suppose it holds up to some $i$, and recall
    $$\bh^\rightarrow_{i+1} = \bh^\rightarrow_i + f^\rightarrow_h(\bh^\rightarrow_i,\bz_i;\bTheta^\rightarrow_h).$$
    \begin{itemize}
        \item If $t_l \geq i + 1$, then $\bh^\rightarrow_i(l) = \boldzero_d$ and $f^\rightarrow_h(\bh^\rightarrow_j,\bz_i;\bTheta^\rightarrow_h) = \boldzero_d$ by Proposition~\ref{prop:hidden_state_approx}.

        \item If $t_l < i < i + 1$, then $\infnorm{\bh^\rightarrow_i(l) - \bx_{t_l}} \leq \varepsilon$ by induction hypothesis, and $f^\rightarrow_h(\bh^\rightarrow_j,\bz_j;\bTheta^\rightarrow_h) = \boldzero_d$. 

        \item Finally, if $t_l = i < i + 1$, then $\bh^\rightarrow_i(l) = 0$ and $\infnorm{f^\rightarrow_h(\bh^\rightarrow_i,\bz_i;\bTheta^\rightarrow_h) - \bx_{t_l}} \leq \varepsilon$.
    \end{itemize}
    Note that since $\twonorm{\bh^\rightarrow_j} \leq r_h$ for all $j \in [N]$, the projection $\Pi_{r_h}$ will always be identity through the forward pass, concluding the proof.
\end{proof}

By symmetry, the same construction for $f^\leftarrow_h$ would yield a similar guarantee on $\bh^\leftarrow_j$.

The last step is to design $f_y(\bh^\rightarrow,\bh^\leftarrow,\bz;\bTheta_y)$ such that
$$f_y(\bh^\rightarrow,\bh^\leftarrow,\bz_i;\bTheta_y) \approx g\big(\bh^\rightarrow + \bh^\leftarrow + (\bx_i^\top\indic[t_1 = i],\hdots,\bx_i^\top\indic[t_q = i])^\top\big).$$
The following proposition provides the end-to-end RNN guarantee for approximating simple $q\STR$ models.
\begin{proposition}\label{prop:rnn_approx}
    Suppose $g$ satisfies Assumption~\ref{assump:fcnn_approx}. Then there exist RNN weights $\bTheta_\RNN$ with $\vect(\bTheta_\RNN) \in \reals^p$ (i.e. with $p$ parameters) and $r_h \geq \sqrt{q}r_x + \sqrt{\varepsilon_\FCNN}/(r_ar_w)$, such that
    \begin{equation}
        \sup_{i \in [N]}\abs{g(\bx_{t_1},\hdots,\bx_{t_q}) - \hat{y}(\bp;\bTheta_\RNN)_i}^2 \leq 4\varepsilon_\FCNN
    \end{equation}
    for all $\bt \in [N]^q$ and $\twonorm{\bx_j} \leq r_x$ for all $j \in [N]$. Additionally, we have
    \begin{equation}
        \twonorm{\vect(\bTheta_\RNN)} \leq \poly(r_x,D_e,r_w,r_a,\varepsilon_\FCNN^{-1}), \quad p \leq \poly(r_x,D_e,m_g,r_w,r_a,\varepsilon_\FCNN^{-1}),
    \end{equation}
    and $f^\rightarrow_h, f^\leftarrow_h$ do not depend on $\bh^\rightarrow$ and $\bh^\leftarrow$, namely the first $d_h$ columns of $\bW^\rightarrow_1$ and $\bW^\leftarrow_1$ that are multiplied by $\bh^\rightarrow$ and $\bh^\leftarrow$ respectively are zero.
\end{proposition}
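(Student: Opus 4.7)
The construction composes three ingredients: the forward/backward hidden-state networks of Proposition~\ref{prop:hidden_state_approx}, an additional fresh copy of the same four-layer network placed inside $f_y$ to handle the diagonal case $t_l = i$, and the two-layer approximator of $g$ furnished by Assumption~\ref{assump:fcnn_approx}.

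First, set $f^\rightarrow_h$ via Proposition~\ref{prop:hidden_state_approx} with a per-coordinate error parameter $\varepsilon > 0$ to be fixed shortly, and define $f^\leftarrow_h$ symmetrically. Because the block $\bW_1$ in that construction (see~\eqref{eq:W_11}, \eqref{eq:W_12}) has zero entries in the columns multiplying $\bh$, both $f^\rightarrow_h$ and $f^\leftarrow_h$ are independent of their hidden-state arguments, matching the structural property claimed in the statement. Provided $r_h \geq \sqrt{q}(r_x + \sqrt{d}\varepsilon)$, the projection $\Pi_{r_h}$ remains inactive throughout both passes, and the corollary following Proposition~\ref{prop:hidden_state_approx} yields for every $i\in[N]$ and $l\in[q]$ that $\bh^\rightarrow_i(l) = \boldzero_d$ when $t_l \geq i$ and $\infnorm{\bh^\rightarrow_i(l) - \bx_{t_l}} \leq \varepsilon$ when $t_l < i$, together with the symmetric statements for $\bh^\leftarrow_i(l)$ at $t_l \leq i$ and $t_l > i$.

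Second, build $f_y$ as a feedforward composition. The crucial observation is that the diagonal case $t_l = i$ is excluded by both strict inequalities, so neither $\bh^\rightarrow_i(l)$ nor $\bh^\leftarrow_i(l)$ ever carries $\bx_i$ when $t_l = i$. To repair this, apply a fresh copy of the four-layer network from Proposition~\ref{prop:hidden_state_approx} with zero hidden input and encoding $\bz_i$, producing a vector $\bxi_i \in \reals^{qd}$ with $\bxi_i(l) = \boldzero_d$ when $t_l \neq i$ and $\infnorm{\bxi_i(l) - \bx_i} \leq \varepsilon$ when $t_l = i$. A linear addition layer then forms
$$\butilde_i \coloneqq \bh^\rightarrow_i + \bh^\leftarrow_i + \bxi_i \in \reals^{qd}, \qquad \infnorm{\butilde_i - (\bx_{t_1}^\top,\ldots,\bx_{t_q}^\top)^\top} \leq \varepsilon.$$
The remaining layers of $f_y$ implement the two-layer network $\bh \mapsto \ba_g^\top \sigma(\bW_g \bh + \bb_g)$ from Assumption~\ref{assump:fcnn_approx} applied to $\butilde_i$. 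Since this map is $r_a r_w$-Lipschitz and each $\bx_{t_l}$ satisfies $\twonorm{\bx_{t_l}} \leq r_x$ by hypothesis, the triangle inequality combined with Assumption~\ref{assump:fcnn_approx} gives
$$\abs{g(\bx_{t_1},\ldots,\bx_{t_q}) - \hat{y}(\bp;\bTheta_\RNN)_i} \leq \sqrt{\varepsilon_\FCNN} + r_a r_w \twonorm{\butilde_i - (\bx_{t_1}^\top,\ldots,\bx_{t_q}^\top)^\top} \leq \sqrt{\varepsilon_\FCNN} + r_a r_w \sqrt{qd}\,\varepsilon.$$
Setting $\varepsilon = \sqrt{\varepsilon_\FCNN}/(r_a r_w \sqrt{qd})$ produces the claimed squared-error bound $4\varepsilon_\FCNN$ and, by direct substitution into $\sqrt{q}(r_x + \sqrt{d}\varepsilon)$, meets the stated lower bound $r_h \geq \sqrt{q}r_x + \sqrt{\varepsilon_\FCNN}/(r_a r_w)$. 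Parameter counts and norm bounds follow by tallying the polynomial bounds supplied by Proposition~\ref{prop:hidden_state_approx} (applied twice, once for the two recurrences and once inside $f_y$) with $\fronorm{\bW_g} \leq \sqrt{m_g}r_w$ and $\twonorm{\ba_g} \leq r_a/\sqrt{m_g}$ from Assumption~\ref{assump:fcnn_approx}; the choice of $\varepsilon$ introduces only polynomial factors.

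\textbf{Main obstacle.} No individual step is hard; the one conceptual point that must not be missed is that $t_l = i$ is systematically dropped by both directional recurrences (the recurrence at step $j$ reads $\bz_j$ and writes into $\bh_{j+1}^\rightarrow$, so $\bh_i^\rightarrow$ sees only $j < i$, and symmetrically for the reverse pass). This forces the ``current-position'' subnetwork producing $\bxi_i$ inside $f_y$, rather than any attempt to reuse $\bh^\rightarrow_i$ or $\bh^\leftarrow_i$. Secondarily, the hidden-state independence of $f^\rightarrow_h$ and $f^\leftarrow_h$ — which is what permits taking $\alpha_N = 0$ in Theorem~\ref{thm:rnn_upper_bound} — must be read off from the explicit block structure in~\eqref{eq:W_11} and~\eqref{eq:W_12}.
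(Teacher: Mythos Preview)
Your proposal is correct and follows essentially the same approach as the paper: construct $f^\rightarrow_h,f^\leftarrow_h$ via Proposition~\ref{prop:hidden_state_approx}, handle the diagonal case $t_l=i$ by placing a fresh copy of that same approximator inside $f_y$ acting on $\bz_i$ alone, add the three pieces, and apply the two-layer network from Assumption~\ref{assump:fcnn_approx}. The only cosmetic difference is that the paper reuses just the first two layers of the Proposition~\ref{prop:hidden_state_approx} construction (producing $\bchi_2$) for the current-position term with its own error parameter $\tilde{\varepsilon}$, whereas you invoke the full four-layer network; both choices are fine and yield the same polynomial bounds.
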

\begin{proof}
    As the proof of this proposition mostly follows from the previous proofs in this section, we only state the procedure for obtaining the desired weights.
    
    Let $(\bv_j)_{j=1}^{d_h}$ denote the standard basis of $\reals^{d_h}$. Since $\sigma(z) - \sigma(-z) = z$, we can implement the identity mapping in $\reals^{d_h}$ via a two-layer feedforward network with the following weights
    $$\bW_{\mathrm{id}} = \begin{pmatrix}
        \bv_1^\top\\
        -\bv_1^\top\\
        \vdots\\
        \bv_{d_h}^\top\\
        -\bv_{d_h}^\top
    \end{pmatrix}, \quad \bb_{\mathrm{id}} = \boldzero_{2d_h}, \bA_{\mathrm{id}} = \begin{pmatrix}
        1 & -1 & 0 & 0 & \hdots & 0 & 0\\
        0 & 0 & 1 & -1 & \hdots & 0 & 0\\
        0 & 0 & 0 & 0 & \hdots & 1 & -1
    \end{pmatrix},$$
    where $\bW_{\mathrm{id}} \in \reals^{2d_h \times d_h}$, $\bb_{\mathrm{id}} \in \reals^{2d_h}$, and $\bA_{\mathrm{id}} \in \reals^{d_h \times 2d_h}$. Let $\bW_1,\bb_1,\tilde{\bA}_1,\tilde{\bW}_2,\bb_2,\tilde{\bA}_2$ be given as in the proof of Proposition~\ref{prop:hidden_state_approx}, for achieving an $L_\infty$ error of $\tilde{\varepsilon}$, to be fixed later. Recall $\bz_i = (\bx_i^\top,\bomega_i^\top,\bomega_{t_1}^\top,\hdots,\bomega_{t_q}^\top)^\top$.
    In the following, we remove the zero columns of $\bW_1$ corresponding to the $\bh$ part of the input (see Lemma~\ref{lem:inner_prod_net}), which does not change the resulting function. Our construction can then be denoted by
    $$\begin{matrix}
        \bh^\rightarrow_i & \xrightarrow{\bA_{\mathrm{id}}\sigma(\bW_{\mathrm{id}}\cdot)} & \bh^\rightarrow_i & \xrightarrow{\bA_{\mathrm{id}}\sigma(\bW_{\mathrm{id}} \cdot)} & \bh^\rightarrow_i & \searrow & & \\
        \bh^\leftarrow_i & \xrightarrow{\bA_{\mathrm{id}}\sigma(\bW_{\mathrm{id}} \cdot)} & \bh^\leftarrow_i & \xrightarrow{\bA_{\mathrm{id}}\sigma(\bW_{\mathrm{id}} \cdot)} &  \bh^\leftarrow_i & \rightarrow & \bh^\rightarrow_i + \bh^\leftarrow_i + \bchi_2 & \xrightarrow{\ba_g^\top\sigma(\bW_g \cdot + \bb_g)} \hat{y}_{\RNN}(\bp;\bTheta_\RNN)_i\\
        \bz_i & \xrightarrow{\tilde{\bA}_1\sigma(\bW_1 \cdot + \bb_1)} & \bchi_1 & \xrightarrow{\tilde{\bA}_2\sigma(\tilde{\bW}_2 \cdot + \bb_2)} & \bchi_2 & \nearrow & & 
    \end{matrix}$$
    Note that the addition above can be implemented exactly by using the fact that $\sigma(z_1 + z_2 + z_3) - \sigma(-z_1 - z_2 - z_3) = z_1 + z_2 + z_3$. Specifically, the weights of this layer are given by
    $$\bW_{\mathrm{add}} = \begin{pmatrix}
        \bv_1^\top & \bv_1^\top & \bv_1^\top\\
        -\bv_1^\top & -\bv_1^\top & -\bv_1^\top\\
        \vdots & \vdots & \vdots\\
        \bv_{d_h}^\top & \bv_{d_h}^\top & \bv_{d_h}^\top\\
        -\bv_{d_h}^\top & -\bv_{d_h}^\top & -\bv_{d_h}^\top
    \end{pmatrix}, \quad \bb_{\mathrm{add}} = \boldzero_{2d_h}, \quad \bA_{\mathrm{add}} = \bA_{\mathrm{id}},$$
    where $\bW_{\mathrm{add}} \in \reals^{2d_h \times 3d_h}$, $\bb_{\mathrm{add}} \in \reals^{2d_h}$, $\bA_{\mathrm{add}} \in \reals^{d_h \times 2d_h}$.

    Let $\bTheta^\rightarrow_h$ (and similarly $\bTheta^\leftarrow_h$) be given by Proposition~\ref{prop:hidden_state_approx} with corresponding error $\varepsilon_h$. Using the shorthand notation $\bx_{\bt} = (\bx_{t_1},\hdots,\bx_{t_q}) \in \reals^{dq}$ and $\hat{\bx}_{\bt} = \bh^\rightarrow_i + \bh^\leftarrow_i + \bchi_2$, we have
    \begin{align*}
        \twonorm{\bh^\rightarrow_i + \bh^\leftarrow_i + \bchi_2 - \hat{\bx}_{\bt}} &\leq \twonorm{\bh^\rightarrow_i - \sum_{j=1}^{i-1}\Psi(\bx_j,\bt,j)} + \twonorm{\bh^\leftarrow_i - \sum_{j=N}^{i+1}\Psi(\bx_j,\bt,j)} + \twonorm{\bchi_2 - \Psi(\bx_i,\bt,i)}\\
        &\leq \sqrt{qd}(2\varepsilon_h + \tilde{\varepsilon}),
    \end{align*}
    which holds for all input prompts $\bp$ with $\twonorm{\bx_j} \leq r_x$ for all $j \in [N]$. Finally, we have
    \begin{align*}
        \sup_{\twonorm{\bx_j} \leq r_x,\, \forall j \in [N]}\abs{g(\bx_{\bt}) - \ba_g^\top\sigma(\bW_g \hat{\bx}_{\bt} + \bb_g)} \leq& \sup_{\twonorm{\bx_j} \leq r_x,\, \forall j \in [N]}\abs{g(\bx_{\bt}) - \ba_g^\top\sigma(\bW_g \bx_{\bt} + \bb_g)} \\
        &+ \sup_{\twonorm{\bx_j} \leq r_x,\, \forall j \in [N]}\abs{\ba_g^\top\sigma(\bW_g\bx_{\bt} + \bb_g) - \ba_g^\top\sigma(\bW_g\hat{\bx}_{\bt} + \bb_g)}\\
        &\leq \sqrt{\varepsilon_{\FCNN}} + r_ar_w\sqrt{qd}(2\varepsilon_h + \tilde{\varepsilon}).
    \end{align*}
    Choosing $\varepsilon_h = \sqrt{\varepsilon_{\FCNN}}/(4\sqrt{qd}r_ar_w)$ and $\tilde{\varepsilon} = \sqrt{\varepsilon_{\FCNN}} / (2\sqrt{qd}r_ar_w)$, we obtain RNN weights that saitsfy $\twonorm{\vect(\bTheta_\RNN)} \leq \poly(r_x,D_e,r_a,r_w,\varepsilon_{\FCNN}^{-1})$, completing the proof.
\end{proof}

\subsection{Generalization Upper Bounds for RNNs}\label{app:rnn_gen_bound}

Recall the state transitions
\begin{align*}
    \bh^\rightarrow_{j+1} &= \Pi_{r_h}\big(\bh^\rightarrow_j + f^\rightarrow_h(\bh^\rightarrow_j,\bz_j;\bTheta^\rightarrow_h)\big)\\
    \bh^\leftarrow_{j-1} &= \Pi_{r_h}\big(\bh^\leftarrow_j + f^\leftarrow(\bh^\leftarrow,\bz_j;\bTheta^\leftarrow)\big).
\end{align*}
We will use the notation $\bh^\rightarrow_j(\bp;\bTheta^\rightarrow_h)$ and $\bh^\leftarrow_j(\bp;\bTheta^\leftarrow_j)$ to highlight the dependence of the hidden states on the prompt $\bp$ and parameters $\bTheta^\rightarrow_h$ and $\bTheta^\leftarrow_h$.
We then define the prediction function as $F(\bp;\bTheta^\rightarrow_h,\bTheta^\leftarrow_h,\bTheta_y)$ where
$$F(\bp;\bTheta^\rightarrow_h,\bTheta^\leftarrow_h,\bTheta_y)_j = f_y(\bh^\rightarrow_j(\bp;\bTheta^\rightarrow_h),\bh^\leftarrow_j(\bp;\bTheta^\leftarrow_h),\bz_j;\bTheta_y).$$
We can now define the function class
$$\mathcal{F}_\RNN = \{\bp,j \mapsto F(\bp;\bTheta^\rightarrow_h,\bTheta^\leftarrow_h,\bTheta_y)_j \,:\, \bTheta^\rightarrow_h,\bTheta^\leftarrow_h,\bTheta_y \in \varTheta_\RNN\}.$$
We can then define our distance function by going over $\{\bp,j \in S_n\}$,
$$d_\infty(F,\hat{F}) = \sup_{\bp,j \in S_n}\abs{F(\bp;\bTheta^\rightarrow_h,\bTheta^\leftarrow_h,\bTheta_y)_j - F(\bp;\hat{\bTheta}^\rightarrow_h,\hat{\bTheta}^\leftarrow_h,\bTheta_y)_j}.$$
We will further use the notation
$$f_y(\cdot;\bTheta_y) = \bW^y_{L_y}\sigma\big(\bW^y_{L_y - 1} \hdots \sigma(\bW^1_{L_1}(\cdot) + \bb^y_1) \hdots + \bb^y_{L_y - 1}\big) \in \mathcal{F}^y_{\NN,L_y},$$
and
$$f^\rightarrow_h(\cdot;\bTheta^\rightarrow_h) = \bW^\rightarrow_{L_h}\sigma(\bW^\rightarrow_{L_h-1}\hdots \sigma(\bW^\rightarrow_1(\cdot) + \bb^\rightarrow_1) \hdots + \bb^\rightarrow_{L_h - 1}) \in \mathcal{F}^\rightarrow_{\NN,L_h}.$$
We similarly define $\mathcal{F}^\leftarrow_{\NN,L_h}$. The covering number of $\mathcal{F}_\RNN$ can be related to that of $\mathcal{F}^y_{\NN,L_y}$, $\mathcal{F}^\rightarrow_{\NN,L_h}$, and $\mathcal{F}^\rightarrow_{\NN,L_y}$, through the following lemma.
\begin{lemma}
    Suppose for every $\bTheta^\rightarrow_h,\bTheta^\leftarrow_h,\bTheta_y \in \varTheta_\RNN$ we have 
    $$\opnorm{\bW^y_{L_y} \hdots \bW^y_1} \leq C^y_W, \quad \opnorm{\bW^\rightarrow_{L_h}}\hdots \opnorm{\bW^\rightarrow_{1,h}} \leq \alpha_N, \quad \opnorm{\bW^\leftarrow_{L_h}}\hdots \opnorm{\bW^\leftarrow_{1,h}} \leq \alpha_N,$$
    where $\alpha_N \leq N^{-1}$. Then,
    \begin{align*}
        \log\mathcal{C}(\mathcal{F}_\RNN,d_\infty,\epsilon) \leq \log\mathcal{C}(\mathcal{F}^y_{\NN,L_y},d_\infty,\epsilon/2) &+ \log\mathcal{C}\left(\mathcal{F}^\rightarrow_{\NN,L_h},d_\infty,\frac{\epsilon}{4eC^y_wN}\right)\\
    &+ \log\mathcal{C}\left(\mathcal{F}^\leftarrow_{\NN,L_h},d_\infty,\frac{\epsilon}{4eC^y_wN}\right)
    \end{align*}
\end{lemma}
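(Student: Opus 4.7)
The plan is to decouple perturbations of the output head $f_y$ from those of the state-transition functions $f^\rightarrow_h, f^\leftarrow_h$ via a triangle inequality. Given $\bTheta, \hat{\bTheta} \in \varTheta_\RNN$ inducing hidden states $\bh^\rightarrow_j, \bh^\leftarrow_j$ and $\hat{\bh}^\rightarrow_j, \hat{\bh}^\leftarrow_j$ respectively, I would bound
\begin{align*}
\big|F(\bp;\bTheta)_j - F(\bp;\hat{\bTheta})_j\big| &\leq \big|(f_y - \hat{f}_y)(\bh^\rightarrow_j, \bh^\leftarrow_j, \bz_j)\big| \\
&\quad + \big|\hat{f}_y(\bh^\rightarrow_j, \bh^\leftarrow_j, \bz_j) - \hat{f}_y(\hat{\bh}^\rightarrow_j, \hat{\bh}^\leftarrow_j, \bz_j)\big|.
\end{align*}
The first term is at most $d_\infty(f_y, \hat{f}_y)$, while the second is at most $C^y_W\big(\|\bh^\rightarrow_j - \hat{\bh}^\rightarrow_j\| + \|\bh^\leftarrow_j - \hat{\bh}^\leftarrow_j\|\big)$, since the product operator-norm bound $\opnorm{\bW^y_{L_y} \cdots \bW^y_1} \leq C^y_W$ combined with the $1$-Lipschitzness of ReLU upper bounds the Lipschitz constant of $\hat{f}_y$ in its hidden-state arguments.

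The main step is an induction showing $\|\bh^\rightarrow_j - \hat{\bh}^\rightarrow_j\| \leq eN\, d_\infty(f^\rightarrow_h, \hat{f}^\rightarrow_h)$ for every $j$. Using the $1$-Lipschitzness of $\Pi_{r_h}$ and adding and subtracting $\hat{f}^\rightarrow_h(\bh^\rightarrow_{j-1}, \bz_{j-1})$ gives
\begin{align*}
\big\|\bh^\rightarrow_j - \hat{\bh}^\rightarrow_j\big\| &\leq \big\|\bh^\rightarrow_{j-1} - \hat{\bh}^\rightarrow_{j-1}\big\| + \big\|f^\rightarrow_h(\bh^\rightarrow_{j-1}, \bz_{j-1}) - \hat{f}^\rightarrow_h(\hat{\bh}^\rightarrow_{j-1}, \bz_{j-1})\big\| \\
&\leq (1 + \alpha_N)\big\|\bh^\rightarrow_{j-1} - \hat{\bh}^\rightarrow_{j-1}\big\| + d_\infty(f^\rightarrow_h, \hat{f}^\rightarrow_h),
\end{align*}
where the second line uses that $\bh \mapsto f^\rightarrow_h(\bh, \bz)$ is $\alpha_N$-Lipschitz, as guaranteed by the constraint $\opnorm{\bW^\rightarrow_{L_h}} \cdots \opnorm{\bW^\rightarrow_{1,h}} \leq \alpha_N$. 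Starting from $\bh^\rightarrow_1 = \hat{\bh}^\rightarrow_1 = \boldzero$ and unrolling over $j \leq N$ yields $\|\bh^\rightarrow_j - \hat{\bh}^\rightarrow_j\| \leq N(1+\alpha_N)^{N-1}\, d_\infty(f^\rightarrow_h, \hat{f}^\rightarrow_h) \leq eN\, d_\infty(f^\rightarrow_h, \hat{f}^\rightarrow_h)$, where the final inequality crucially uses $\alpha_N \leq 1/N$. The backward bound $\|\bh^\leftarrow_j - \hat{\bh}^\leftarrow_j\| \leq eN\, d_\infty(f^\leftarrow_h, \hat{f}^\leftarrow_h)$ follows by symmetry.

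Combining the two steps, $d_\infty(F, \hat{F}) \leq d_\infty(f_y, \hat{f}_y) + e C^y_W N \big(d_\infty(f^\rightarrow_h, \hat{f}^\rightarrow_h) + d_\infty(f^\leftarrow_h, \hat{f}^\leftarrow_h)\big)$. Taking an $\epsilon/2$-cover of $\mathcal{F}^y_{\NN,L_y}$ together with $\epsilon/(4 e C^y_W N)$-covers of $\mathcal{F}^\rightarrow_{\NN,L_h}$ and $\mathcal{F}^\leftarrow_{\NN,L_h}$, and forming their Cartesian product, produces an $\epsilon$-cover of $\mathcal{F}_\RNN$ whose log-size is the sum of the three individual log-covering numbers, which is exactly the claim. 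The only real obstacle is taming the Lipschitz propagation through the recursion: without the constraint $\alpha_N \leq N^{-1}$, the amplification factor $(1+\alpha_N)^N$ could grow exponentially in $N$, and this chosen threshold is precisely what keeps the factor bounded by $e$, thereby preserving the linear-in-$N$ scaling needed to obtain a sample complexity only polylogarithmic in $N$.
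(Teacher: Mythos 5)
Your proposal is correct and follows essentially the same route as the paper's proof: the same triangle-inequality decomposition into a perturbation of $f_y$ and a perturbation of the hidden states, the same $\alpha_N$-Lipschitz recursion through $\Pi_{r_h}$, and the same product-of-covers conclusion. The only cosmetic difference is in the unrolling step, where you bound the accumulated error by $N(1+\alpha_N)^{N-1}$ while the paper sums the geometric series to $((1+\alpha_N)^{j-1}-1)/\alpha_N$; both yield the factor $eN$ under $\alpha_N \leq N^{-1}$.
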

\begin{proof}
    Throughout the proof, we will use the shorthand notation $\bh^\rightarrow_j = \bh^\rightarrow_j(\bp;\bTheta^\rightarrow_h)$ and $\hat{\bh}^\rightarrow_j = \bh^\rightarrow_j(\bp;\hat{\bTheta}^\rightarrow_h)$, with similarly define $\bh^\leftarrow_j$ and $\hat{\bh}^\leftarrow_j$. We begin by observing
    \begin{align*}
        \sup_{\bp,j \in S_n}\abs{f_y(\bh^\rightarrow_j,\bh^\leftarrow_j,\bz_j;\bTheta_y) - f_y(\hat{\bh}^\rightarrow_j,\hat{\bh}^\leftarrow_j,\bz_j;\hat{\bTheta}_y)} \leq \mathcal{E}_1 + \mathcal{E}_2
    \end{align*}
    where
    \begin{align*}
        \mathcal{E}_1 &\coloneqq \sup_{\bp,j\in S_n}\abs{f_y(\bh^\rightarrow_j,\bh^\leftarrow_j,\bz_j;\bTheta_y) - f_y(\bh^\rightarrow_j,\bh^\leftarrow_j,\bz_j;\hat{\bTheta}_y)}\\
        \mathcal{E}_2 &\coloneqq \sup_{\bp,j\in S_n}\abs{f_y(\bh^\rightarrow_j,\bh^\leftarrow_j,\bz_j;\hat{\bTheta}_y) - f_y(\hat{\bh}^\rightarrow_j,\hat{\bh}^\leftarrow_j,\bz_j;\hat{\bTheta}_y)}.
    \end{align*}
    Then, we observe that $\mathcal{E}_1 = d_\infty(f_y(\cdot;\bTheta_y),f_y(\cdot;\hat{\bTheta}_y))$.Thus, we can ensure $\mathcal{E}_1 \leq \epsilon/2$ with a covering $\{\hat{\bTheta}_y\}$ of size $\mathcal{C}(\mathcal{F}^y_{\NN,L_y},d_\infty,\epsilon/2)$. Hence, we move to $\mathcal{E}_2$.

    Using the Lipschitzness of $f_y$, we obtain
    \begin{align*}
        \mathcal{E}_2 &\leq \opnorm{\bW^y_{L_y} \hdots \bW^y_1}\left(\sup_{\bp,j}\twonorm{\bh^\rightarrow_j - \hat{\bh^\rightarrow_j}} + \sup_{\bp,j}\twonorm{\bh^\leftarrow_j - \hat{\bh}^\leftarrow_j}\right)\\
        &\leq C^y_W\left(\sup_{\bp,j}\twonorm{\bh^\rightarrow_j - \hat{\bh}^\rightarrow_j} + \sup_{\bp,j}\twonorm{\bh^\leftarrow_j - \hat{\bh}^\leftarrow_j}\right).
    \end{align*}
    Further, by Lipschitzness of $\Pi_{r_h}$, we have
    \begin{align*}
        \sup_{\bp,j}\twonorm{\bh_j^\rightarrow - \hat{\bh}_j^\rightarrow} \leq& \sup_{\bp,j}\twonorm{\bh_{j-1}^\rightarrow - \hat{\bh}_{j-1}^\rightarrow} + \underbrace{\sup_{\bp,j}\twonorm{f^\rightarrow_h(\bh^\rightarrow_{j-1},\bz_{j-1};\hat{\bTheta}^\rightarrow_h) - f^\rightarrow_h(\hat{\bh}^\rightarrow_{j-1},\bz_{j-1};\hat{\bTheta}^\rightarrow_h)}}_{\eqqcolon \mathcal{E}^h_1}\\
        &+ \underbrace{\sup_{\bp,j}\twonorm{f^\rightarrow_h(\bh^\rightarrow_{j-1},\bz_{j-1};\bTheta^\rightarrow_h) - f^\rightarrow_h(\bh^\rightarrow_{j-1},\bz_{j-1};\hat{\bTheta}^\rightarrow_h)}}_{\eqqcolon \mathcal{E}^h_2}.
    \end{align*}
    By the Lipschitzness of $f^\rightarrow_h$, for the second term we have
    $$\mathcal{E}^h_1 \leq \opnorm{\hat{\bW}^\rightarrow_{L_h}\hdots\hat{\bW}^\rightarrow_{1,h}}\twonorm{\bh^\rightarrow_{j-1} - \hat{\bh}^\rightarrow_{j-1}} \leq \alpha_N\twonorm{\bh^\rightarrow_{j-1} - \hat{\bh}^\rightarrow_{j-1}}.$$
    Moreover, we have $\mathcal{E}^h_2 \leq d_\infty(f^\rightarrow_h(\cdot;\bTheta^\rightarrow_h),f^\rightarrow_h(\cdot;\hat{\bTheta}^\rightarrow_h))$.
    Consequently, we obtain
    \begin{align*}
        \sup_{\bp,j}\twonorm{\bh^\rightarrow_j - \hat{\bh}^\rightarrow_j} &\leq (1 + \alpha_N)\sup_{\bp,j}\twonorm{\bh^\rightarrow_{j-1} - \hat{\bh}^\rightarrow_{j-1}} + d_\infty(f^\rightarrow_h(\cdot;\bTheta^\rightarrow_h),f^\rightarrow_h(\cdot;\hat{\bTheta}^\rightarrow_h))\\
        &\leq \sum_{l=0}^{j-2}(1 + \alpha_N)^l d_\infty(f^\rightarrow_h(\cdot;\bTheta^\rightarrow_h),f^\rightarrow(\cdot;\hat{\bTheta}^\rightarrow_h))\\
        &\leq \frac{(1+\alpha_N)^{j-1} - 1}{\alpha_N}d_\infty(f^\rightarrow_h(\cdot;\bTheta^\rightarrow_h),f^\rightarrow_h(\cdot;\hat{\bTheta}^\rightarrow_h))\\
        &\leq eNd_\infty(f^\rightarrow_h(\cdot;\bTheta^\rightarrow_h),f^\rightarrow_h(\cdot;\hat{\bTheta}^\rightarrow_h)).
    \end{align*}
    We can similarly obtain an upper bound on $\sup_{\bp,j}\twonorm{\bh^\leftarrow_j - \hat{\bh}^\leftarrow_j}$. Hence, we have
    $$\mathcal{E}_2 \leq eC^y_wN\left\{d_\infty(f^\rightarrow_h(\cdot;\bTheta^\rightarrow_h),f^\rightarrow_h(\cdot;\hat{\bTheta}^\rightarrow_h)) + d_\infty(f^\leftarrow_h(\cdot;\bTheta^\leftarrow_h),f^\leftarrow_h(\cdot;\hat{\bTheta}^\leftarrow_h))\right\}.$$
    Therefore, by constructing $\epsilon/(2eC^y_w N)$ coverings $\{\hat{\bTheta}^\rightarrow_h\}$ and $\{\hat{\bTheta}^\leftarrow_h\}$ which have sizes $$\mathcal{C}(\mathcal{F}^\rightarrow_{\NN,L_h},\epsilon/(4eC^y_wN)), \quad \mathrm{and}, \quad
    \mathcal{C}(\mathcal{F}^\leftarrow_{\NN,L_h},\epsilon/(4eC^y_wN))$$
    respectively, we complete the covering of $\mathcal{F}_\RNN$.
\end{proof}

The next step is to bound the covering number of the class of feedforward networks, as performed by the following lemma.
\begin{lemma}\label{lem:mlp_covering}
    Let
    $$\mathcal{F}_{\NN,L} = \left\{\bx \mapsto \bW_L\sigma(\bW_{L-1}\sigma(\hdots \bW_2(\sigma(\bW_1\bx + \bb_1) \hdots + \bb_{L-1}) \,:\, \bTheta_\NN \in \varTheta_\NN\right\},$$
    where $\bTheta_\NN = (\bW_1,\bb_1,\hdots,\bW_{L-1},\bb_{L-1},\bW_L)$ and $\vect(\bTheta_\NN) \in \reals^p$. Further, define the distance function
    $$d_\infty(f,f') = \sup_{\norm{\bx} \leq R}\abs{f(\bx) - f'(\bx)}, \quad \forall f,f' \in \mathcal{F}_{\NN,L}.$$
    Suppose $\fronorm{\bW_l},\twonorm{\bb_l} \leq R$ for all $l$. Then, for any absolute constant depth $L = \mathcal{O}(1)$, we have
    $$\log\mathcal{C}(\mathcal{F}_{\NN,L},d_\infty,\epsilon) \leq p\log(1 + \poly(R)/\epsilon).$$
\end{lemma}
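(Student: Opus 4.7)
The plan is a standard reduction: bound the Lipschitz constant of the parameter-to-function map $\bTheta_\NN \mapsto f(\cdot;\bTheta_\NN)$ in $d_\infty$, then invoke a volumetric cover of the parameter ball. Because $L = \mathcal{O}(1)$, all exponential-in-$L$ and polynomial-in-$R$ blow-ups collapse into a single $\poly(R)$ factor.

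First I would control the forward pass. Fix $\twonorm{\bx} \leq R$ and set $\bx_0 = \bx$, $\bx_l = \sigma(\bW_l \bx_{l-1} + \bb_l)$ for $l \in [L-1]$, and $\bx_L = \bW_L \bx_{L-1}$. Using $\sigma(\boldzero) = \boldzero$, the $1$-Lipschitzness of $\sigma$, $\opnorm{\bW_l} \leq \fronorm{\bW_l} \leq R$, and $\twonorm{\bb_l} \leq R$, a simple induction gives $\twonorm{\bx_l} \leq \poly(R)$ for all $l \leq L$. Next I would establish parameter sensitivity: letting $\bx_l'$ denote the forward pass under a perturbed parameter $\bTheta_\NN'$, the triangle inequality together with $1$-Lipschitzness of $\sigma$ yields the recursion
\[
\twonorm{\bx_l - \bx_l'} \;\leq\; R\,\twonorm{\bx_{l-1} - \bx_{l-1}'} \;+\; \twonorm{\bx_{l-1}}\fronorm{\bW_l - \bW_l'} \;+\; \twonorm{\bb_l - \bb_l'},
\]
and unrolling this recursion together with the forward-pass bound gives
\[
\sup_{\twonorm{\bx} \leq R}\abs{f(\bx;\bTheta_\NN) - f(\bx;\bTheta_\NN')} \;\leq\; L_R\,\twonorm{\vect(\bTheta_\NN) - \vect(\bTheta_\NN')},
\]
with $L_R = \poly(R)$.

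Finally I would cover the parameter set. The norm constraints on $(\bW_l, \bb_l)$ give $\twonorm{\vect(\bTheta_\NN)} \leq \sqrt{2L-1}\,R \leq \poly(R)$, so the feasible parameters lie in a Euclidean ball of radius $\poly(R)$ in $\reals^p$. By the standard volumetric estimate (e.g.\ Lemma~\ref{lem:unit_ball_packing}), this ball admits an $\eta$-covering of size at most $(1 + \poly(R)/\eta)^p$; choosing $\eta = \epsilon/L_R$ and pushing forward through the Lipschitz bound yields an $\epsilon$-covering of $\mathcal{F}_{\NN,L}$ under $d_\infty$ of log-cardinality $p\log(1 + \poly(R)/\epsilon)$, as claimed. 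The only nontrivial bookkeeping is tracking how a perturbation of $\bW_k$ is amplified by the operator norms of $\bW_{k+1}, \ldots, \bW_L$; since each such factor is at most $R$ and there are $L - k = \mathcal{O}(1)$ of them, the amplification stays inside $\poly(R)$. This is the main (and only) place where constant depth is essential.
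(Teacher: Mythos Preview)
Your proposal is correct and matches the paper's proof essentially line-for-line: the paper defines the same forward pass $\bx_l$, bounds $\twonorm{\bx_l} \leq \poly(R)$ by induction, derives the same perturbation recursion, unrolls it to a $\poly(R)$-Lipschitz bound on the parameter-to-function map, and then invokes Lemma~\ref{lem:unit_ball_packing} on the parameter ball in $\reals^p$.
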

\begin{proof}
    Let $\bx_0 = \bx$, $\bx_l = \sigma(\bW_l\bx_{l-1} + \bb_l)$ for $l \in [L-1]$, and $\bx_L = \bW_L\bx_{L-1}$. Also let $(\hat{\bx}_l)$ be the corresponding definitions under weights and biases $(\hat{\bW}_l)$ and $(\hat{\bb}_l)$. First, we remark that for $l \in [L-1]$,
    \begin{align}
        \twonorm{\bx_l} &\leq \opnorm{\bW_l}\twonorm{\bx_{l-1}} + \twonorm{\bb_l}\\
        &\leq \prod_{i=1}^{l}\opnorm{\bW_i}\twonorm{\bx_0} + \sum_{i=0}^{l-1}\twonorm{\bb_{l-i-1}}\prod_{j=0}^i\opnorm{\bW_{l-j}} + \twonorm{\bb_l}\nonumber\\
        &\leq \poly(R)\label{eq:nn_forward_bound},
    \end{align}
    where we used the fact that $L$ is an absolute constant. Next, for $l \in [L-1]$, we have
    \begin{align*}
        \twonorm{\bx_l - \hat{\bx}_l} &\leq \twonorm{\bW_l\bx_{l-1} - \hat{\bW}_l\hat{\bx}_{l-1}} + \twonorm{\bb_l - \hat{\bb}_l}\\
        &\leq \opnorm{\bW_l}\twonorm{\bx_{l-1} - \hat{\bx}_{l-1}} + \twonorm{\hat{\bx}_{l-1}}\opnorm{\bW_l - \hat{\bW}_l} + \twonorm{\bb_l - \hat{\bb}_l}\\
        &\leq \poly(R)\left\{\twonorm{\bx_{l-1} - \hat{\bx}_{l-1}} + \fronorm{\bW_l - \hat{\bW}_l} + \twonorm{\bb_l - \hat{\bb}_l}\right\}.
    \end{align*}
    Once again, using the fact that $L$ is an absolute constant and by expnaind the above inequality, we obtain
    $$\twonorm{\bx_l - \hat{\bx}_l} \leq \poly(R)\left\{\sum_{i=1}^{l}\fronorm{\bW_i - \hat{\bW}_i} + \twonorm{\bb_i - \hat{\bb}_i}\right\}.$$
    Finally, we have the bound
    \begin{align*}
        \twonorm{\bx_L - \hat{\bx}_L} &\leq \opnorm{\bW_L}\twonorm{\bx_{L-1} - \hat{\bx}_{L-1}} + \twonorm{\hat{\bx}_{L-1}}\opnorm{\bW_L - \hat{\bW}_L}\\
        &\leq \poly(R)\twonorm{\vect(\bTheta_\NN) - \vect(\hat{\bTheta}_\NN)}.
    \end{align*}
    Consequently, we have
    \begin{align*}
        \log\mathcal{C}(\mathcal{F}_{\NN,L},d_\infty,\epsilon) &\leq \log\mathcal{C}\left(\{\bTheta \in \reals^p \,:\, \twonorm{\bTheta} \leq \poly(d,q)\}, \twonorm{\cdot},\epsilon/\poly(R)\right) \\
        &\leq p\log(1 + \poly(R)/\epsilon),
    \end{align*}
    where the last inequality follows from Lemma~\ref{lem:unit_ball_packing}.
\end{proof}

Therefore, we immediately obtain the following bound on the covering number of $\mathcal{F}_\RNN$.
\begin{corollary}\label{cor:rnn_covering_bound}
    Suppose $\varTheta_\RNN \subseteq \{\bTheta \in \reals^p : \twonorm{\vect(\bTheta)} \leq R\}$ and $\twonorm{\bz^{(i)}_j} \leq R$ for all $i \in [n]$ and $j \in [N]$. Then,
    $$\log\mathcal{C}(\mathcal{F}_\RNN,d_\infty,\epsilon) \leq p\log(1 + \poly(R)N/\epsilon).$$
\end{corollary}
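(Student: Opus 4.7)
The plan is to combine the two main ingredients already established earlier in this section: the covering-number decomposition that splits $\mathcal{F}_\RNN$ into the output feedforward network $\mathcal{F}^y_{\NN,L_y}$ and the two transition networks $\mathcal{F}^\rightarrow_{\NN,L_h}, \mathcal{F}^\leftarrow_{\NN,L_h}$, together with the parametric covering estimate of Lemma~\ref{lem:mlp_covering} for each feedforward piece. First, I would invoke the decomposition lemma to obtain
\begin{align*}
\log\mathcal{C}(\mathcal{F}_\RNN,d_\infty,\epsilon) \leq \log\mathcal{C}(\mathcal{F}^y_{\NN,L_y},d_\infty,\epsilon/2) &+ \log\mathcal{C}\left(\mathcal{F}^\rightarrow_{\NN,L_h},d_\infty,\tfrac{\epsilon}{4eC^y_wN}\right)\\
&+ \log\mathcal{C}\left(\mathcal{F}^\leftarrow_{\NN,L_h},d_\infty,\tfrac{\epsilon}{4eC^y_wN}\right),
\end{align*}
which requires verifying the decomposition's hypotheses from the RNN parameter constraints. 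The bound $\opnorm{\bW^y_{L_y}\cdots\bW^y_1} \leq C^y_w = \poly(R)$ is immediate from $\twonorm{\vect(\bTheta_\RNN)} \leq R$ (each operator norm is at most the corresponding Frobenius norm, and there are $L_y = \mathcal{O}(1)$ factors). The condition $\alpha_N \leq N^{-1}$ is imposed in the definition of $\varTheta_\RNN$.

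Next, I would apply Lemma~\ref{lem:mlp_covering} to each of the three feedforward classes. The norm constraint $\twonorm{\vect(\bTheta_\RNN)} \leq R$ implies that every individual weight matrix and bias satisfies $\fronorm{\bW_l},\twonorm{\bb_l} \leq R$, which is the hypothesis of Lemma~\ref{lem:mlp_covering}. The radius used in $d_\infty$ for each subnetwork is controlled by $\poly(R)$: for $f_y$, the input consists of hidden states (bounded by $r_h \leq \poly(R)$ via $\Pi_{r_h}$) and $\bz_j$ (bounded by $R$ by assumption); for $f_h^\rightarrow, f_h^\leftarrow$, the argument is similar using the projection bound on hidden states. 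Denoting the parameter counts of the three subnetworks by $p_y, p_h^\rightarrow, p_h^\leftarrow$ respectively, with $p_y + p_h^\rightarrow + p_h^\leftarrow \leq p$, each log-covering number is bounded by
\[
p_{\mathrm{sub}}\log\left(1 + \poly(R)/\epsilon'\right),
\]
where $\epsilon'$ is the relevant radius.

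Substituting $\epsilon' = \epsilon/2$ in the first term and $\epsilon' = \epsilon/(4eC^y_wN)$ in the other two, and using $C^y_w \leq \poly(R)$, each of the arguments inside the log becomes of the form $1 + \poly(R)N/\epsilon$. Summing the three contributions and bounding the total parameter count by $p$ yields
\[
\log\mathcal{C}(\mathcal{F}_\RNN,d_\infty,\epsilon) \leq p\log\left(1 + \poly(R)N/\epsilon\right),
\]
as claimed. The only real obstacle is a bookkeeping one, namely confirming that all intermediate constants absorbed into $\poly(R)$ (e.g., the depth factors from $L_h, L_y = \mathcal{O}(1)$, the product of operator norms $C^y_w$, and the effective input radii to the subnetworks) remain polynomial in $R$; this is immediate since the depths are absolute constants and all relevant norms are uniformly bounded by $R$.
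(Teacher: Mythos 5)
Your proposal is correct and follows exactly the route the paper intends: the paper states this corollary as an immediate consequence of the preceding decomposition lemma and Lemma~\ref{lem:mlp_covering}, and your combination of the two—with the observations that $C^y_w \leq \poly(R)$, that the subnetwork input radii are $\poly(R)$, and that rescaling $\epsilon$ by $1/(4eC^y_wN)$ is what introduces the factor $N$ inside the logarithm—is precisely the intended argument.
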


We can now proceed with standard Rademacher complexity based arguments. Similar to the argument in Appendix~\ref{app:tr_proofs}, we define a truncated version of the loss by considering the loss class
$$\mathcal{L}^\RNN_\tau = \{(\bp,\by,j) \mapsto (f_\RNN(\bp)_j - y_j)^2\wedge \tau \,:\, f_\RNN \in \mathcal{F}_\RNN\},$$
where the constant $\tau > 0$ will be chosen later. We then have the following bound on the empirical Rademacher complexity of $\mathcal{L}^\RNN_\tau$.
\begin{lemma}\label{lem:rnn_radem_bound}
    In the same setting as Corollary~\ref{cor:rnn_covering_bound} and with $\tau \geq 1$, we have
    $$\hat{\mathfrak{R}}_n(\mathcal{L}^\RNN_\tau) \leq 
 \mathcal{O}\left(\tau\sqrt{\frac{p\log(RNn\tau\big)}{n}}\right).$$
\end{lemma}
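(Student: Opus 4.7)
The plan is to mirror the chaining argument used for Transformers in the proof of Lemma~\ref{lem:tr_radem_bound}, with the covering number of $\mathcal{F}_\RNN$ from Corollary~\ref{cor:rnn_covering_bound} replacing the Transformer-specific bound. The first step is to lift the $d_\infty$-covering on $\mathcal{F}_\RNN$ to a covering of the truncated loss class $\mathcal{L}^\RNN_\tau$. Writing $\ell_f(\bp,\by,j) = (f(\bp)_j - y_j)^2 \wedge \tau$, I would note that $z \mapsto z^2 \wedge \tau$ is $2\sqrt{\tau}$-Lipschitz on the range where clipping is inactive, so that $\|\ell_f - \ell_{f'}\|_\infty \leq 2\sqrt{\tau}\, d_\infty(f,f')$ for any $f,f' \in \mathcal{F}_\RNN$. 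Combined with Corollary~\ref{cor:rnn_covering_bound} this yields
\begin{equation*}
\log\mathcal{C}(\mathcal{L}^\RNN_\tau, d^{\mathcal{L}}_\infty, \epsilon) \;\leq\; \log\mathcal{C}\bigl(\mathcal{F}_\RNN, d_\infty, \epsilon/(2\sqrt{\tau})\bigr) \;\leq\; p\log\bigl(1 + \poly(R)N\sqrt{\tau}/\epsilon\bigr).
\end{equation*}

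The second step is to apply Dudley's entropy integral. Since $\ell_f$ takes values in $[0,\tau]$, the standard chaining bound gives, for any $\alpha \in (0,\tau]$,
\begin{equation*}
\hat{\mathfrak{R}}_n(\mathcal{L}^\RNN_\tau) \;\lesssim\; \alpha + \int_\alpha^{\tau} \sqrt{\frac{\log\mathcal{C}(\mathcal{L}^\RNN_\tau, d^{\mathcal{L}}_\infty, \epsilon)}{n}}\, d\epsilon \;\lesssim\; \alpha + \tau\sqrt{\frac{p\log\bigl(\poly(R)N\sqrt{\tau}/\alpha\bigr)}{n}},
\end{equation*}
where the final inequality uses monotonicity of the integrand in $\epsilon$ together with the explicit covering bound above. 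Finally I would pick $\alpha = 1/\sqrt{n}$, which is dominated by the chaining term (since $\tau \geq 1$) and replaces $1/\alpha$ with $\sqrt{n}$ inside the logarithm; after absorbing polynomial factors of $R$, $N$, $n$, and $\tau$ into a single $\log(RNn\tau)$ (they all appear only inside a logarithm), this yields the claimed bound $\hat{\mathfrak{R}}_n(\mathcal{L}^\RNN_\tau) \lesssim \tau\sqrt{p\log(RNn\tau)/n}$.

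No genuine obstacle arises here: the substantive work was already carried out in the covering number bound of Corollary~\ref{cor:rnn_covering_bound}, which in turn relied on the weight-space covering of Lemma~\ref{lem:mlp_covering} and the Lipschitz propagation argument through the bidirectional recurrence controlled by the $\alpha_N \leq N^{-1}$ constraint. The only thing to be mildly careful about is ensuring the Lipschitz-to-covering reduction in the first step uses the correct radius for the clipped squared loss, and verifying that the extra $\sqrt{\tau}$ factor introduced by this reduction only enters the final bound inside the logarithm.
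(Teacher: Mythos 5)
Your proposal is correct and follows essentially the same route as the paper: a Lipschitz contraction of the clipped squared loss reduces the covering number of $\mathcal{L}^\RNN_\tau$ to that of $\mathcal{F}_\RNN$ from Corollary~\ref{cor:rnn_covering_bound}, after which a discretization bound with $\epsilon = 1/\sqrt{n}$ gives the result. The only (immaterial) difference is that you invoke the full Dudley entropy integral, whereas the paper uses a single-scale discretization bound; both yield the same final estimate here since the log-covering number is $p\log(1+\poly(R)N\sqrt{\tau}/\epsilon)$ at every scale.
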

\begin{proof}
    By a standard discretization bound for Rademacher complexity, for all $\epsilon > 0$ we have
    \begin{align*}
        \hat{\mathfrak{R}}_n(\mathcal{L}^\RNN_\tau) &\leq \epsilon + \tau\sqrt{\frac{2\log\mathcal{C}(\mathcal{L}^\RNN_\tau,d_\infty,\epsilon)}{n}}\\
        &\leq \epsilon + \tau\sqrt{\frac{2\log\mathcal{C}(\mathcal{F}_\RNN,d_\infty,\epsilon/(2\sqrt{\tau}))}{n}}\\
        &\leq \epsilon + \tau\sqrt{\frac{2p\log(1 + \poly(R)N\sqrt{\tau}/\epsilon)}{n}},
    \end{align*}
    where the second inequality follows from Lipschitzness of $(\cdot)^2 \wedge \tau$. We conclude the proof by choosing $\epsilon = 1/\sqrt{n}$.
\end{proof}

We can directly turn the above bound on the empirical Rademacher complexity into a bound on generalization gap.
\begin{corollary}\label{cor:rnn_gen_bound}
    Let $\hat{\bTheta} = \argmin_{\bTheta \in \varTheta_\RNN}\hat{R}^\RNN_n(\bTheta)$. Suppose $\varTheta_\RNN \subseteq \{\bTheta \in \reals^p \,:\, \twonorm{\vect(\bTheta)} \leq R\}$, and additionally $\sqrt{3C_xed\log(nN) + q + 1} \leq R$. Then, for every $\delta > 0$, with probability at least $1-\delta - (nN)^{-1/2}$ over the training set, we have
    $$R^\RNN_\tau(\hat{\bTheta}) - \hat{R}^\RNN_\tau(\hat{\bTheta}) \leq \mathcal{O}\left(\tau\sqrt{\frac{p\log(RNn\tau)}{n}} + \tau\sqrt{\frac{\log(1/\delta)}{n}}\right).$$
\end{corollary}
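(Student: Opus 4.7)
The plan is to combine the empirical Rademacher bound of Lemma~\ref{lem:rnn_radem_bound} with the high-probability input-norm control of Lemma~\ref{lem:input_norm_bound} and a standard symmetrization/concentration argument. First, define the good event
\[
\mathcal{E} \coloneqq \Big\{\max_{i \in [n], j \in [N]} \twonorm{\bx^{(i)}_j} \leq \sqrt{3C_xed\log(nN)}\Big\},
\]
which by Lemma~\ref{lem:input_norm_bound} holds with probability at least $1 - (nN)^{-1/2}$. On $\mathcal{E}$, using the positional encoding structure $\bz^{(i)}_j = (\bx^{(i)}_j, \bomega_i, \bomega_{t_{i1}}, \ldots, \bomega_{t_{iq}})^\top$ together with Assumption~\ref{assump:enc} (which gives $\twonorm{\bomega_l} = 1$), we obtain $\twonorm{\bz^{(i)}_j}^2 \leq 3C_xed\log(nN) + (q + 1) \leq R^2$ under the hypothesis of the corollary, so the covering/Rademacher bounds derived above apply.

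Next, since every loss $\ell \in \mathcal{L}^\RNN_\tau$ is bounded in $[0,\tau]$, a standard bounded-difference argument (McDiarmid's inequality) yields that with probability at least $1 - \delta$ over the training set,
\[
\sup_{\ell \in \mathcal{L}^\RNN_\tau} \Big(\mathbb{E}[\ell] - \frac{1}{n}\sum_{i=1}^n \ell(\bp^{(i)}, \by^{(i)}, j^{(i)})\Big) \leq 2\, \mathbb{E}[\hat{\mathfrak{R}}_n(\mathcal{L}^\RNN_\tau)] + \mathcal{O}\Big(\tau \sqrt{\tfrac{\log(1/\delta)}{n}}\Big),
\]
and by a second application one can also replace the expected Rademacher complexity on the right by its empirical counterpart up to the same concentration term. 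On the event $\mathcal{E}$, Lemma~\ref{lem:rnn_radem_bound} gives $\hat{\mathfrak{R}}_n(\mathcal{L}^\RNN_\tau) \lesssim \tau \sqrt{p\log(RNn\tau)/n}$.

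Since $\hat{R}^\RNN_n(\hat{\bTheta}) \geq \hat{R}^\RNN_{n,\tau}(\hat{\bTheta})$ (truncating the per-sample squared loss can only decrease it), the empirical minimum of the truncated loss class at $\hat{\bTheta}$ upper-bounds $\hat{R}^\RNN_{n,\tau}(\hat{\bTheta})$ appearing in the supremum. Taking the union bound over $\mathcal{E}$ and the concentration event then yields the stated inequality with probability at least $1 - \delta - (nN)^{-1/2}$. The only mildly delicate point is that the covering-number input bound is itself random, requiring the conditioning on $\mathcal{E}$; however, since the Rademacher bound of Lemma~\ref{lem:rnn_radem_bound} depends on the data only through $R$ (which is deterministic under $\mathcal{E}$), this conditioning is routine and the rest of the argument is a textbook symmetrization-plus-concentration computation with no substantial obstacle.
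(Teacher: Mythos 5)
Your proposal is correct and follows essentially the same route as the paper: condition on the high-probability event from Lemma~\ref{lem:input_norm_bound} so that $\twonorm{\bz^{(i)}_j}\leq R$, then apply standard symmetrization plus bounded-difference concentration to the $[0,\tau]$-valued truncated loss class using the empirical Rademacher bound of Lemma~\ref{lem:rnn_radem_bound}, and union-bound the two failure events. The paper states this in two sentences; your version merely spells out the same steps (the remark that $\hat{R}^\RNN_n \geq \hat{R}^\RNN_{n,\tau}$ is not needed for the corollary itself, as the paper invokes it only afterwards).
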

\begin{proof}
    We highlight that for the specified $R$, Lemma~\ref{lem:input_norm_bound} guarantees $\twonorm{\bz^{(i)}_j} \leq R$ for all $i \in [n]$ and $j \in [N]$ with probability at least $1 - (nN)^{-1/2}$. Standard Rademacher complexity generalization arguments applied to Lemma~\ref{lem:rnn_radem_bound} complete the proof.
\end{proof}

Note that $\hat{R}^\RNN_\tau(\hat{\bTheta}) \leq \hat{R}^\RNN_n(\hat{\bTheta})$ which is further controlled in the approximation section by Proposition~\ref{prop:rnn_approx}. Therefore, the last step is to demonstrate that choosing $\tau = \poly(d,q,\log n)$ suffices to achieve a desirable bound on $R^\RNN(\hat{\bTheta})$ through $R^\RNN_\tau(\hat{\bTheta})$.
\begin{lemma}\label{lem:rnn_loss_truncate}
    Consider the setting of Corollary~\ref{cor:rnn_gen_bound}, and additionally assume $R \geq r_h$. Then, for some $\tau = \poly(R,\log n)$, we have
    $$R^\RNN(\hat{\bTheta}) - R^\RNN_\tau(\hat{\bTheta}) \leq \sqrt{\frac{1}{n}}.$$.
\end{lemma}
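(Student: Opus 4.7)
My plan is to closely mirror the truncation argument of Lemma~\ref{lem:loss_truncated}, adapting the bounds on the predictor to the RNN architecture. Write $\Delta_y \coloneqq |\hat{y}_\RNN(\bp;\hat{\bTheta})_j - y_j|$ and decompose by Cauchy--Schwarz:
\[
R^\RNN(\hat{\bTheta}) \;\leq\; R^\RNN_\tau(\hat{\bTheta}) + \Earg{\Delta_y^4}^{1/2}\,\Parg{\Delta_y > \sqrt{\tau}}^{1/2}.
\]
Thus it suffices to (i) uniformly bound the fourth moment of $\Delta_y$ and (ii) derive a tail bound on $\Parg{\Delta_y > \sqrt{\tau}}$ that decays fast enough in $\tau$ to beat the fourth-moment factor.

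For (i), I first bound the RNN output pointwise. Since the projection $\Pi_{r_h}$ enforces $\twonorm{\bh^\rightarrow_j},\twonorm{\bh^\leftarrow_j} \leq r_h \leq R$ for every $j$, and the encoding satisfies $\twonorm{\bz_j} \leq \twonorm{\bx_j} + \sqrt{q+1}$, the forward-pass estimate behind~\eqref{eq:nn_forward_bound} (as used in Lemma~\ref{lem:mlp_covering}) applied to the $L_y = \mathcal{O}(1)$ layer network $f_y(\cdot;\bTheta_y)$ yields $|\hat{y}_\RNN(\bp;\hat{\bTheta})_j| \leq \poly(R)(1 + \twonorm{\bx_j})$. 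Combined with the subGaussian moment bound on $\twonorm{\bx_j}$ from Assumption~\ref{assump:data} (using the $r=\log N$ trick exactly as in the transformer truncation lemma) and the sub-Weibull bound on $y_j$, this gives $\Earg{\Delta_y^4}^{1/2} \leq \poly(R,\log N)$.

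For (ii), split $\Parg{\Delta_y \geq \sqrt{\tau}} \leq \Parg{|y_j| \geq \sqrt{\tau}/2} + \Parg{|\hat{y}_\RNN(\bp;\hat{\bTheta})_j| \geq \sqrt{\tau}/2}$. The first term is at most $\exp(-\Omega(\tau^{1/s}))$ by Assumption~\ref{assump:data}. For the second, using the pointwise bound above and applying Lemma~\ref{lem:input_norm_bound} to a single token (no union over positions is needed here, since we evaluate at a single index $j$), the second term is at most $\exp(-\Omega(\tau/\poly(R)))$. Choosing $\tau \asymp R^2 \log(R\sqrt{n}) + \log(R\sqrt{n})^s$ for a sufficiently large implicit constant makes both tails smaller than $1/(\poly(R,\log N) \cdot n)$, which multiplied by the moment factor from (i) yields $R^\RNN(\hat{\bTheta}) - R^\RNN_\tau(\hat{\bTheta}) \leq 1/\sqrt{n}$.

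The argument is essentially routine given the prior structure; the only subtlety is recognizing that evaluation is at a \emph{single} randomly chosen position rather than a maximum over positions, which avoids an $N$-dependent factor on the RNN-output tail and keeps $\tau$ polylogarithmic in $n$ as required by the statement.
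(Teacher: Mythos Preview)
Your proof is correct and follows essentially the same route as the paper's: the Cauchy--Schwarz split into a fourth-moment factor and a tail-probability factor, the pointwise bound $|\hat{y}_\RNN(\bp;\hat{\bTheta})_j| \leq \poly(R)(1 + r_h + \twonorm{\bz_j})$ from the forward-pass estimate~\eqref{eq:nn_forward_bound} using $\twonorm{\bh^\rightarrow_j},\twonorm{\bh^\leftarrow_j} \leq r_h \leq R$, and then subGaussian/sub-Weibull tails with Lemma~\ref{lem:input_norm_bound} applied at a single token. The one (harmless) slip is invoking the $r=\log N$ trick for the fourth moment at a \emph{single} position $j$ --- that trick was needed in Lemma~\ref{lem:loss_truncated} only to handle $\max_{l \in [N]}\twonorm{\bx_l}$, whereas here the paper simply uses $\Earg{\twonorm{\bx_j}^4}^{1/2} \lesssim d \leq \poly(R)$ directly (and in any case $\log N \leq \poly(R)$ under the assumption of Corollary~\ref{cor:rnn_gen_bound}).
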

\begin{proof}
    The proof of this lemma proceeds similarly to the proof of Lemma~\ref{lem:loss_truncated}. By defining
    $$\Delta_y \coloneqq \abs{\hat{y}_\RNN(\bp;\hat{\bTheta})_j - y_j}$$
    and following the same steps (where we recall $j \sim \Unif([N])$), we obtain
    \begin{align*}
        R^\RNN(\hat{\bTheta}) &= \Earg{\Delta_y^2\indic[\Delta_y \leq \sqrt{\tau}]} + \Earg{\Delta_y^2\indic[\Delta_y > \sqrt{\tau}]}\\
        &\leq R^\RNN_\tau(\hat{\bTheta}) + \Earg{\Delta_y^4}^{1/2}\Parg{\Delta_y \geq \sqrt{\tau}}^{1/2},
    \end{align*}
    where
    $$\Earg{\Delta_y^4}^{1/2} \leq 2\Earg{y_j^4}^{1/2} + 2\Earg{\hat{y}_\RNN(\bp;\hat{\bTheta})_j^4}^{1/2}$$
    and
    $$\Parg{\Delta_y > \sqrt{\tau}} \leq \Parg{\abs{y_j} \geq \frac{\sqrt{\tau}}{2}} + \Parg{\abs{\hat{y}_\RNN(\bp;\hat{\bTheta})_j} \geq \frac{\sqrt{\tau}}{2}}$$
    From Assumption~\ref{assump:data}, we have $\Earg{y_j^4}^{1/2} \lesssim 1$ and $\Parg{\abs{y_j} \geq \sqrt{\tau}/2} \leq e^{-\Omega(\tau^{1/s})}$. For the prediction of the RNN, we have the following bound (see~\eqref{eq:nn_forward_bound} for the derivation)
    $$\abs{\hat{y}_\RNN(\bp;\hat{\bTheta})_j} \leq \prod_{l=1}^{L_y}\opnorm{\bW^y_l}\twonorm{(\bh^\rightarrow_j,\bh^\leftarrow_j,\bz_j)} + \sum_{i=0}^{L_y - 1}\twonorm{\bb^y_{L_y-i-1}}\prod_{l=0}^i\opnorm{\bW^y_{L_y-l}}.$$
    As a result,
    $$\abs{\hat{y}_\RNN(\bp;\hat{\bTheta})_j} \leq \poly(R)\left(1 + r_h + \norm{\bz_j}\right).$$
    As a result, by the fact that $r_h \leq R$ and Assumption~\ref{assump:data}, after taking an expectation, we immediately have 
    $$\Earg{\hat{y}_\RNN(\bp;\hat{\bTheta})_j^4}^{1/2} \leq \poly(R).$$
    On the other hand, from Lemma~\ref{lem:input_norm_bound} (with $n=N=1$), we obtain
    $$\Parg{\abs{\hat{y}_\RNN(\bp;\hat{\bTheta})} \geq \frac{\sqrt{\tau}}{2}} \leq e^{-\Omega(\tau/\poly(R))}$$
    Therefore, for some $\tau = \poly(R,\log n)$ we can obtain the bound stated in the lemma.
\end{proof}

We can summarize the above facts into the proof of Theorem~\ref{thm:rnn_upper_bound}.
\paragraph{Proof of Theorem~\ref{thm:rnn_upper_bound}.}
From the approximation bound of Proposition~\ref{prop:rnn_approx}, we know that for some $R = \poly(d,q,r_a,r_w,\varepsilon_\FCNN^{-1},\log(nN))$ and the constraint set
$$\varTheta_\RNN = \Big\{\bTheta \,:\, \twonorm{\vect(\bTheta)} \leq R, \opnorm{\bW^\rightarrow_{L_h}}\hdots \opnorm{\bW^\rightarrow_{1,h}} \leq \alpha_N, \opnorm{\bW^\leftarrow
_{L_h}}\hdots \opnorm{\bW^\leftarrow_{1,h}} \leq \alpha_N\Big\}$$
with any $\alpha_N \leq N^{-1}$, we have $\hat{R}^\RNN(\hat{\bTheta}) \lesssim \varepsilon_\FCNN$. The proof is then completed by letting $r_h = \sqrt{q}r_x + \sqrt{\varepsilon_\FCNN}/(r_ar_w)$, invoking the generalization bound of Corollary~\ref{cor:rnn_gen_bound}, and the bound on truncation error given in Lemma~\ref{lem:rnn_loss_truncate}, with $R = \poly(d,q,r_a,r_w,\varepsilon_\FCNN^{-1},\log(nN))$.

\myqed

\subsection{Proof of Proposition~\ref{prop:rnn_lower_bound}}\label{app:proof_rnn_lower_bound}
The crux of the proof of Proposition~\ref{prop:rnn_lower_bound} is to show the following position, which provides a lower bound on the prediction error at any fixed position in the prompt.
\begin{proposition}\label{prop:rnn_lower_bound_single_pos}
    Consider the same setting as in Proposition~\ref{prop:rnn_lower_bound}. There exists an absolute constant $c > 0$, such that for any fixed $j \in [N]$, if
    $$\Earg{(\hat{y}_\RNN(\bp)_j - y_j)^2} \leq c,$$
    then
    $$d_h\geq \Omega\Big(\frac{N}{\log(1 + \mathfrak{L}^2\opnorm{\bU}^2)}\Big), \quad \text{and} \quad \opnorm{\bU}^2 \geq \Omega\Big(\frac{N}{\mathfrak{L}^2\log(1 + d_h)}\Big).$$
\end{proposition}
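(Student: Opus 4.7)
The plan is to exploit the fact that at position $j$, the RNN's prediction must approximate $\langle \bu, \bx_{t_j}\rangle$ for \emph{every} possible $t_j \in [N]$, with $\bx_{t_j}$ accessible only through the $d_h$-dimensional, $r_h$-bounded hidden states via the $\mathfrak{L}/r_h$-Lipschitz readout $f_y$ composed with $\bU$. Since the $N$ targets $(\langle \bu, \bx_i\rangle)_{i \in [N]}$ are i.i.d.\ standard Gaussians (because $\bu \in \mathbb{S}^{d-1}$ and $\bx_i \sim \mathcal{N}(0, \ident_d)$), this forces the ``capacity'' of the bottleneck $(\bh^\rightarrow_j, \bh^\leftarrow_j; \bU)$ to grow with $N$; the two stated lower bounds quantify this capacity in two different ways.

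First I would reduce to a pointwise statement. Using $t_j \sim \Unif([N])$ independently of the rest,
\[
\Earg{(\hat{y}_j - y_j)^2} = \frac{1}{N}\sum_{i=1}^N \Earg{\big(\hat{y}_j^{(i)} - \langle \bu, \bx_i\rangle\big)^2},
\]
where $\hat{y}_j^{(i)}$ is the prediction when $t_j$ is set to $i$; small total loss forces small conditional loss for $\Omega(N)$ indices. Pick $j = \lfloor N/2\rfloor$ and restrict to the good indices with $i < j$: for them $\langle \bu, \bx_i\rangle$ enters the prediction only through $\bh^\rightarrow_j$, and after conditioning on $(\bh^\leftarrow_j, \bx_j, \bt)$, the map $\bh^\rightarrow_j \mapsto \hat{y}_j^{(i)}$ becomes an $L := \mathfrak{L}\opnorm{\bU^\rightarrow}/r_h$-Lipschitz scalar function on $B(0, r_h) \subset \reals^{d_h}$, while the targets $(\langle\bu,\bx_i\rangle)_{i<j}$ retain their i.i.d.\ standard Gaussian distribution.

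For the first bound, quantize $\bh^\rightarrow_j$ to its nearest point $\tilde{\bh}$ in an $\epsilon$-net of $B(0, r_h)$ with $\epsilon \asymp r_h/(\mathfrak{L}\opnorm{\bU})$ so that $L\epsilon$ is a small constant; the net has at most $(C\mathfrak{L}\opnorm{\bU})^{d_h}$ elements. By Gaussian rate--distortion, each of the $\Omega(N)$ independent unit-variance targets requires $\Omega(1)$ bits to be predicted at constant MSE, and by data processing plus independence,
\[
\sum_i I(\langle \bu, \bx_i\rangle;\tilde{\bh}) \leq I\big((\langle\bu,\bx_i\rangle)_i;\tilde{\bh}\big) \leq H(\tilde{\bh}) \leq d_h \log(C\mathfrak{L}\opnorm{\bU}),
\]
yielding $d_h \log(1 + \mathfrak{L}^2\opnorm{\bU}^2) \gtrsim N$.

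For the second bound, I would pass to a Gaussian-width viewpoint: collect the Lipschitz readouts into a map $\Phi : B(0, r_h) \to \reals^{\lfloor N/2\rfloor}$, so that the realizable prediction vector lies in $M := \Phi(B(0, r_h))$. Small MSE to the target combined with the standard identity $\Earg{\min_{m \in M}\|\mathbf{z} - m\|^2} \geq \lfloor N/2\rfloor - 2w(M)$ for $\mathbf{z} = (\langle\bu,\bx_i\rangle)_{i<j} \sim \mathcal{N}(0, \ident)$ forces $w(M) \gtrsim N$; conversely, $\Phi$ has total $\ell_2$-$\ell_2$ Lipschitz constant at most $L\sqrt{N}$, so Slepian's comparison lemma gives $w(M) \lesssim \mathfrak{L}\opnorm{\bU}\sqrt{N d_h}$, and hence $\mathfrak{L}^2\opnorm{\bU}^2 d_h \gtrsim N$, which is strictly stronger than the stated inequality since $d_h \geq \log(1+d_h)$. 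The main obstacle I expect is the careful bookkeeping in the reduction to a bounded, Lipschitz, conditional scalar problem while preserving the independent-Gaussian marginals of the targets; after that, the covering-plus-rate-distortion argument gives the first bound and Slepian's lemma gives the second.
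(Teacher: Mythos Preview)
Your information-theoretic route for the first inequality is a legitimate alternative to the paper's argument. The paper instead draws $P = e^{\Theta(N)}$ independent prompts sharing $\bx_j$, shows with positive probability that their target vectors $\bg^{(i)} = (\langle \bu,\bx^{(i)}_k\rangle)_{k\neq j}$ are $\Omega(\sqrt{N})$-separated while $\Phi(\bh^{(i)}_j)$ is $o(\sqrt{N})$-close to $\bg^{(i)}$ for most $i$, and concludes that the corresponding $\bh^{(i)}_j$ form an $\Omega(r_h/\mathfrak{L})$-packing of the ball of radius $\sqrt{2}\opnorm{\bU}r_h$ in $\reals^{2d_h}$. Both bounds then fall out of the \emph{same} packing estimate (Lemma~\ref{lem:unit_ball_packing}), whose two branches are the volumetric bound and a Maurey-type bound. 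Your rate-distortion argument recovers the first branch cleanly and is arguably more direct; note however that the proposition is stated for \emph{every} fixed $j$, not just $j=\lfloor N/2\rfloor$, so you should work with $(\bh^\rightarrow_j,\bh^\leftarrow_j)$ jointly (as the paper does) rather than restricting to indices $i<j$.

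The second inequality, however, has a real gap. Your Slepian/Gaussian-width step gives $w(M)\lesssim \mathfrak{L}\opnorm{\bU}\sqrt{Nd_h}$, and combining with $w(M)\gtrsim N$ yields $\mathfrak{L}^2\opnorm{\bU}^2 d_h \gtrsim N$. You then claim this is ``strictly stronger than the stated inequality since $d_h\ge \log(1+d_h)$'', but the direction is reversed: the stated bound is $\mathfrak{L}^2\opnorm{\bU}^2\log(1+d_h)\gtrsim N$, and because $d_h\ge\log(1+d_h)$ appears as a \emph{multiplier} of $\opnorm{\bU}^2$, your inequality is the \emph{weaker} constraint on $\opnorm{\bU}^2$. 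Concretely, your bound gives $\opnorm{\bU}^2\gtrsim N/(\mathfrak{L}^2 d_h)$, not $\opnorm{\bU}^2\gtrsim N/(\mathfrak{L}^2\log(1+d_h))$. This is not a bookkeeping slip: the Gaussian width of the image of a $d_h$-ball under a coordinatewise $L$-Lipschitz map into $\reals^{N}$ is genuinely $\Theta(L\sqrt{Nd_h})$ in the worst case (take $\Phi$ linear), so the width route cannot produce the $\log(1+d_h)$ dependence. The paper obtains it from the Maurey branch of Lemma~\ref{lem:unit_ball_packing}, which bounds the log $\epsilon$-packing number of the unit $\ell_2$-ball by $\epsilon^{-2}(1+\log(1+d_h\epsilon^2))$; with $\epsilon\asymp 1/(\mathfrak{L}\opnorm{\bU})$ this gives exactly the stated second inequality. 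To fix your argument you would need to replace the Gaussian-width step by a metric-entropy lower bound and invoke that Maurey estimate.
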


We shortly remark that the statement of Proposition~\ref{prop:rnn_lower_bound} directly follows from that of Proposition~\ref{prop:rnn_lower_bound_single_pos}.
\paragraph{Proof of Proposition~\ref{prop:rnn_lower_bound}.}
    Let $c$ be the constant given by Proposition~\ref{prop:rnn_lower_bound_single_pos}. Suppose that $$\frac{1}{N}\Earg{\twonorm{\hat{\by}_\RNN(\bp) - \by}^2} \leq c.$$ Then,
    $$\min_{j \in [N]}\Earg{(\hat{y}_\RNN(\bp)_j - y_j)^2} \leq \frac{1}{N}\sum_{j=1}^N\Earg{(\hat{y}_\RNN(\bp)_j - y_j)^2} \leq c.$$
    As a result, there exists some $j \in [N]$ such that $\Earg{(\hat{y}_\RNN(\bp)_j - y_j)^2} \leq c$. We can then invoke Proposition~\ref{prop:rnn_lower_bound_single_pos} to obtain lower bounds on $d_h$ and $\opnorm{\bU}$, completing the proof of Proposition~\ref{prop:rnn_lower_bound}.
\myqed

We now present the proof of Proposition~\ref{prop:rnn_lower_bound_single_pos}.

\paragraph{Proof of Proposition~\ref{prop:rnn_lower_bound_single_pos}.} Let
$\bh_j = (\bU^\rightarrow\bh^\rightarrow_j, \bU^\leftarrow\bh^\leftarrow_j) \in \reals^{2d_h}$, and define
$$\Phi(\bh_j) \coloneqq \Big(f_y(\bh_j,\bx_j,(1),j),\hdots,f_y(\bh_j,\bx_j,(j-1),j),f_y(\bh_j,\bx_j,(j+1),j), \hdots, f_y(\bh_j,\bx_j,(N),j)\Big)^\top \in \reals^{N-1}.$$
In other words, $\Phi : \reals^{2d_h} \to \reals^{N-1}$ captures all possible outcomes of $\hat{y}_\RNN(\bp)_j$ depending on the value of $t_j$ (excluding the case where $t_j = j$). Ideally, we must have $f_y(\bh_j,\bx_j,(k),j) \approx g(\bx_k)$.

Let $\bp^{(1)},\hdots,\bp^{(P)}$ be an i.i.d. sequence of prompts, then modify them to share the $j$th input token, i.e.\ $\bx^{(i)}_j = \bx^{(1)}_j$ for all $i \in [P]$, with $P$ to be determined later. Note that by our assumption on prompt distribution, this operation does not change the marginal distribution of each $\bp^{(i)}$.
Similarly, define
$$\bg^{(i)} \coloneqq (\bg(\bx^{(i)}_1),\hdots,\bg(\bx^{(i)}_{j-1}),\bg(\bx^{(i)}_{j+1}),\hdots,\bg(\bx^{(i)}_{N}))^\top \in \reals^{N-1}$$ 
for each prompt.
We also let ${\bh^{(i)}}^\rightarrow_j,{\bh^{(i)}}^\leftarrow_j$ be the corresponding hidden states obtained from passing these prompts through the RNN, and define $\bh^{(i)}_j$ using them. Note that $\bg^{(1)},\hdots,\bg^{(P)}$ is an i.i.d.\ sequence of vectors drawn from $\mathcal{N}(0,\ident_{N-1})$.

We now define two events $E_1$ and $E_2$, where
$$E_1 = \left\{\forall \, i \neq k, \quad \twonorm{\bg^{(i)} - \bg^{(k)}} \geq \varepsilon_g\sqrt{N-1}\right\},$$
and
$$E_2 = \left\{\sum_{i=1}^P\indic\left[\twonorm{\Phi(\bh^{(i)}_j) - \bg^{(i)}} \geq \frac{\varepsilon \sqrt{N}}{\delta}\right] \leq 2\delta^2 P\right\},$$
where $\delta \in (0,1)$ will be chosen later. In other words, $E_1$ is the event in which $\bg^{(i)}$ are ``packed'' in the space, while $E_2$ is the event where the RNN will be ``wrong'' at position $j$ on at most $2\delta^2$ fraction of the prompts. We will now attempt to lower bound $\Parg{E_1 \cap E_2}$.

Note that $\bg^{(i)} - \bg^{(k)} \stackrel{(d)}{=} \sqrt{2}\bg$ where $\bg \sim \mathcal{N}(0,\ident_{N-1})$. By a union bound we have
\begin{align*}
    \Parg{E_1^C} &\leq \sum_{i \neq k}\Parg{\twonorm{\bg^{(i)} - \bg^{(k)}} \leq \varepsilon_g \sqrt{N-1}}\\
    &\leq P^2\Parg{\sqrt{2}\twonorm{\bg} \leq \varepsilon_g\sqrt{N-1}}\\
    &\leq P^2\Parg{\twonorm{\bg} - \Earg{\twonorm{\bg}} \leq \big(\frac{\varepsilon_g}{\sqrt{2}} - c\big)\sqrt{N-1}}\\
    &\leq P^2e^{-(c - \varepsilon_g/\sqrt{2})^2(N-1)/2},
\end{align*}
for all $\varepsilon_g \leq c \sqrt{2}$, where $c > 0$ is an absolute constant such that $c\sqrt{N-1} \leq \Earg{\norm{\bg}}$, and the last inequality holds by subGaussianity of the norm of a standard Gaussian random vector. From here on, we will choose $\varepsilon_g = c/\sqrt{2}$ (and simply denote $\varepsilon_g \asymp 1$), which implies $\Parg{E_1^C} \leq P^2e^{-c^2(N-1)/8}$.

To lower bound $\Parg{E_2}$, consider a random prompt-label pair $\bp,\by$ and the corresponding $\bg$. Note that in the prompt $\bp$, the index $t_j$ is drawn independently of the rest of $\bp$, and has a uniform distribution in $[N]$. Let $\bp[t_j \mapsto k]$ denote a modification of $\bp$ where we set $t_j$ equal to $k$, and let $\by[t_j \mapsto k]$ be the labels corresponding to this modified prompt. We then have
\begin{align*}
    \frac{1}{N}\twonorm{\Phi(\bh_j) - \bg}^2
    &= \frac{1}{N}\sum_{k \neq j}\big(\hat{y}_\RNN(\bp[t_j \mapsto k])_j - g(\bx_k)\big)^2\\
    &\leq \frac{1}{N}\sum_{k=1}^N\big(\hat{y}_\RNN(\bp[t_j \mapsto k])_j - y(\bp[t_j \mapsto k])_j\big)^2\\
    &= \Eargs{t_j}{(\hat{y}_\RNN(\bp)_j - y_j)^2}
\end{align*}
As a result, via a Markov inequality, we obtain
\begin{align*}
    \Parg{\frac{1}{N}\twonorm{\Phi(\bh_j) - \bg}^2 \geq \frac{\varepsilon^2}{\delta^2}} &= \Parg{\Eargs{t_j}{(\hat{y}_\RNN(\bp)_j - y_j)^2} \geq \frac{\varepsilon^2}{\delta^2}}\\
    &\leq \frac{\delta^2\Earg{(\hat{y}_\RNN(\bp)_j - y_j)^2}}{\varepsilon^2}\\
    &\leq \delta^2.
\end{align*}

Going back to our lower bound on $\Parg{E_2}$, define the Bernoulli random variable
$$z^{(i)} = \indic\left[\twonorm{\Phi({\bh}^{(i)}_j) - \bg^{(i)}} \geq \frac{\varepsilon\sqrt{N}}{\delta}\right].$$
Note that $(z^{(i)})$ are i.i.d.\ since $\bh^{(i)}_j$ and $\bg^{(i)}$ do not depend on $\bx_j$. Then, by Hoeffding's inequality,
$$\Parg{E_2^C} = \Parg{\sum_{j=1}^P z^{(i)} \geq 2\delta^2P} \leq e^{-2P\delta^4}.$$
We now have our desired lower bound on $\Parg{E_1 \cap E_2}$, given by
$$\Parg{E_1 \cap E_2} \geq 1 - \Parg{E_1^C} - \Parg{E_2^C} \geq 1 - e^{-2P\delta^4} - P^2e^{-c^2(N-1)/8}.$$
Suppose $\delta \geq e^{-c'N}$ for some absolute constant $c' > 0$. Then, choosing $P = \lfloor e^{c''N}\rfloor$ for some absolute constant $c'' > 0$ would ensure $\Parg{E_1 \cap E_2} > 0$, and allows us to look at this intersection.

Let $\mathcal{I} = \{i : z^{(i)} = 0\}$. On $E_1$, and for $i,k \in \mathcal{I}$ with $i \neq k$ we have
\begin{align*}
    \twonorm{\Phi(\bh^{(i)}_j) - \Phi(\bh^{(k)}_j)} &\geq \twonorm{\bg^{(i)} - \bg^{(k)}} - \twonorm{\Phi(\bh^{(i)}_j) - \bg^{(i)}} - \twonorm{\Phi(\bh^{(k)}_j) - \bg^{(k)}}\\
    &\geq \varepsilon_g\sqrt{N-1} - \frac{2\varepsilon\sqrt{N}}{\delta} \eqqcolon \mathfrak{L}\sqrt{N}\varepsilon_h.
\end{align*}
Note that from the Lipschitzness of $f_y$, we have $\twonorm{\Phi({\bh}^{(i)}_j) - \Phi(\bh^{(k)}_j)} \leq \frac{\mathfrak{L}\sqrt{N}}{r_h}\twonorm{\bh^{(i)}_j - \bh^{(k)}_j}$. As a result, the set $\left\{\bh^{(i)}_j \,:\, i \in \mathcal{I}\right\}$ is an $r_h\varepsilon_h$-packing for $\{\bh : \twonorm{\bh} \leq \sqrt{2}\opnorm{\bU}r_h\}$. 
Using Lemma~\ref{lem:unit_ball_packing}, the log packing number can be bounded by
$$\log \mathcal{I} \leq \left\{d_h\log\left(1 + \frac{2\sqrt{2}\opnorm{\bU}}{\varepsilon_h}\right)\right\}\wedge \left\{\frac{2\opnorm{\bU}^2}{\varepsilon_h^2}\left(1 + \log\left(1 + \frac{M\varepsilon_h^2}{2\opnorm{\bU}^2}\right)\right)\right\}.$$

On $E_1 \cap E_2$, we have $\mathcal{I} \geq (1 - 2\delta^2)P \geq (1-2\delta^2)e^{cN}$ for some absolute constant $c > 0$. Therefore,
$$\frac{\log(1-2\delta^2) + cN}{\log(1 + 2\sqrt{2}\opnorm{\bU}/\varepsilon_h)} \leq d_h,$$
and
$$\frac{\varepsilon_h^2\left(\log(1 - 2\delta^2) + cN\right)}{2 + 2\log(1 + d_h\varepsilon_h^2/(2\opnorm{\bU}^2))}\leq \opnorm{\bU}^2.$$
Choosing $\delta = 1/2$ and recalling $\varepsilon_g \asymp 1$, we obtain $\varepsilon_h \gtrsim (1 - C\varepsilon)/\mathfrak{L}$ for some absolute constant $C > 0$, which concludes the proof.
\myqed

\subsection{Proof of Theorem~\ref{thm:rnn_lower_bound_smpl}}\label{app:proof_thm_rnn_lower}
We first provide an estimate for the capacity of two-layer feedforward networks to interpolate $n$ samples.
\begin{lemma}\label{lem:nn-fit}
    Suppose $\{\bx^{(i)}\}_{i=1}^n \stackrel{\mathrm{i.i.d.}}{\sim} \mathcal{N}(0,\ident_d)$ and let $y^{(i)} = \binner{\bu}{\bx_{t_i}}$ for arbitrary $t_i \in [N]$ and $\bu \in \mathbb{S}^{d-1}$. Then, there exists an absolute constant $c > 0$ such that for all $m \geq n$ and with probability at least $c$, there exist data dependent weights $\ba,\bb \in \reals^m$ and $\bW \in \reals^{m \times d}$, such that
    $$\ba^\top\sigma(\bW\bx^{(i)} + \bb) = y^{(i)}, \quad \forall\, i \in [n]$$
    and 
    $$\twonorm{\ba}^2 + \fronorm{\bW}^2 + \twonorm{\bb}^2 \leq \mathcal{O}(n^3).$$
\end{lemma}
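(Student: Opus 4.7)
The plan is to give an explicit data-dependent construction of a width-$n$ two-layer ReLU network in which the $k$-th hidden neuron is designed to fire essentially only on the $k$-th training input $\bx^{(k)}$; interpolation then reduces to choosing the output weights by inverting a diagonal (hence trivially well-conditioned) linear system. This construction works cleanly when $d$ is at least a polylogarithmic factor of $n$, and requires a separate one-dimensional argument when $d$ is very small. For $m > n$ the extra neurons can be set to zero, so without loss of generality $m = n$.

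First, by standard Gaussian norm concentration, sub-exponential inner-product concentration, and sub-Gaussianity of $\binner{\bu}{\bx_{t_i}}$, I would show that with some absolute constant probability $c > 0$ the following three events hold simultaneously:
\[
\twonorm{\bx^{(i)}}^2 \in [d/2,\, 2d],\qquad
\max_{i\neq j}\lvert\binner{\bx^{(i)}}{\bx^{(j)}}\rvert \leq C_0\sqrt{d\log n},\qquad
\max_i \lvert y^{(i)}\rvert \leq C_0\sqrt{d\log n}.
\]
In the regime $d \geq C_1\log n$ (with $C_1 \asymp C_0^2$), I set $\bw_k = \bx^{(k)}$ and $b_k = -3d/4$. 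Then for $j\neq k$, $\bw_k^\top\bx^{(j)} + b_k \leq C_0\sqrt{d\log n} - 3d/4 < 0$, so the $k$-th neuron vanishes on every other training point; at $j=k$ its activation equals $\twonorm{\bx^{(k)}}^2 - 3d/4 \in [d/4,\, 5d/4]$. The feature matrix is therefore diagonal with positive diagonal entries of order $d$, and choosing $\alpha_k := y^{(k)}/(\twonorm{\bx^{(k)}}^2 - 3d/4)$ gives exact interpolation. Routine computation gives $\fronorm{\bW}^2 = O(nd)$, $\twonorm{\bb}^2 = O(nd^2)$, and $\twonorm{\ba}^2 = O(n\log n / d)$, summing to $O(nd^2) = O(n^3)$ whenever $d = O(n)$.

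The main obstacle is the complementary regime $d \lesssim \log n$, where off-diagonal inner products are comparable to the squared norms and the above separation collapses. There I would instead project the data onto a single direction (for concreteness $\bu$ itself) and interpolate the sorted univariate samples by a piecewise-linear function written in the ``slope-change'' representation $f(z) = a_0 + a_1 z + \sum_k (s_k - s_{k-1})\,\sigma(z - z^{(k)})$, which uses at most $n$ ReLU units. The difficulty is quantitative: since the minimum gap among $n$ i.i.d.\ standard-Gaussian reals is only $\Omega(1/n^2)$ with constant probability, individual slope changes can be as large as $n^2$ and the crude bound $\twonorm{\ba}^2 = O(n^5)$ is too loose. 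To recover $O(n^3)$ one must exploit that the gap-normalized label increments $\lvert\Delta y_k\rvert / \Delta z_k$ admit a much tighter control via a discrete integration-by-parts against the empirical CDF of the $z^{(k)}$'s, using the sub-Gaussian label bound together with the fact that only a small fraction of the gaps can be near the minimum. I expect balancing this low-dimensional norm control against the width and bias contributions to be the most delicate part; combining the two regimes via a case split on $d$ then completes the proof.
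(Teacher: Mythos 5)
There is a genuine gap, and it sits exactly where you flagged the ``most delicate part.'' In the low-dimensional regime you correctly observe that the piecewise-linear (slope-change) interpolant has $\twonorm{\ba}^2$ as large as $n^4$--$n^5$, but the repair you propose --- controlling the gap-normalized increments $\abs{\Delta y_k}/\Delta z_k$ via integration by parts against the empirical CDF --- cannot succeed. The labels $y^{(i)}$ are (essentially) independent of the projected inputs $z^{(i)}$: two samples whose projections differ by the minimal gap $\Theta(1/n^2)$ still have $\abs{\Delta y}=\Theta(1)$, so a \emph{single} slope change already contributes $\Omega(n^4)$ to $\twonorm{\ba}^2$, and no combinatorial averaging over the other gaps can bring the sum below $n^3$. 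The missing idea is much simpler: positive homogeneity of the ReLU. Writing the network with unit input weights, $\twonorm{\ba'}\lesssim \twonorm{\by}/\epsilon$ and $\twonorm{\bw'}^2+\twonorm{\bb'}^2\lesssim n+\twonorm{\bz}^2$, and then rescaling $\ba\mapsto\alpha\ba'$, $(\bw,\bb)\mapsto(\bw',\bb')/\alpha$ leaves the function unchanged while making the total squared norm the geometric mean
\[
\twonorm{\ba}^2+\twonorm{\bw}^2+\twonorm{\bb}^2 \;\lesssim\; \twonorm{\ba'}\sqrt{\twonorm{\bw'}^2+\twonorm{\bb'}^2} \;\lesssim\; \frac{\twonorm{\by}\sqrt{n+\twonorm{\bz}^2}}{\epsilon} \;=\;\mathcal{O}\!\left(\sqrt{n}\cdot\sqrt{n}\cdot n^2\right)=\mathcal{O}(n^3),
\]
using $\twonorm{\by}^2=\mathcal{O}(n)$ by Markov rather than a pointwise $\max_i\abs{y^{(i)}}$ bound. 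This is precisely how the paper proves the lemma (Lemmas~\ref{lem:projection} and~\ref{lem:sawtooth}), and it removes the need for any case split on $d$: a uniformly random direction $\bv$ gives pairwise projected gaps $\Omega(1/n^2)$ and $\sum_i(\bv^\top\bx^{(i)})^2=\mathcal{O}(n)$ with constant probability in every dimension.

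Two smaller issues with your high-dimensional branch: with $\bw_k=\bx^{(k)}$ and $b_k=-3d/4$ your total norm is $\mathcal{O}(nd^2)$, which exceeds $n^3$ when $d\gg n$ --- a regime the lemma does not exclude; normalizing $\bw_k=\bx^{(k)}/d$, $b_k=-3/4$ (or again rebalancing by homogeneity) fixes this and in fact yields $\mathcal{O}(n)$ total norm there. That branch is otherwise a correct and rather elegant diagonalization argument, but given that the one-dimensional route with homogeneity already covers all $d$, the case split buys you nothing.
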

\begin{proof}
The proof of Lemma~\ref{lem:nn-fit} is an immediate consequence of two lemmas.
\begin{enumerate}
    \item Lemma~\ref{lem:projection} shows that the inputs $\bx^{(1)}, \dots, \bx^{(n)}$ can be projected to sufficiently separated scalar values with a unit vector $\bv$.
    \item Lemma~\ref{lem:sawtooth} perfectly fits $n$ univariate samples using a two-layer ReLU neural network. When invoking this lemma, we use $\twonorm{\bz} = \mathcal{O}(\sqrt{n})$ and $\epsilon = \Omega(1/n^2)$ as given by Lemma~\ref{lem:projection}.
\end{enumerate}
The only missing piece is to upper bound $\twonorm{\by}$ appearing in the final bound of Lemma~\ref{lem:sawtooth}. To that end, we apply the following Markov inequality,
$$\Parg{\twonorm{\by}^2 \geq 6n} \leq \frac{\Earg{\twonorm{\by}^2}}{6n} \leq \frac{1}{6}.$$
As the statement of Lemma~\ref{lem:projection} holds with probability at least $\frac{1}{3}$, this suggests that the statement of Lemma~\ref{lem:nn-fit} holds with probability at least $\frac{1}{6}$, concluding the proof.
\end{proof}

\begin{lemma}\label{lem:projection}
    Suppose $\{\bx^{(i)}\}_{i=1}^n \stackrel{\mathrm{i.i.d.}}{\sim} \mathcal{N}(0,\ident_d)$. Then, with probability at least $1/3$, there exists some $\bv \in \mathbb{S}^{d-1}$ (dependent on $\{\bx^{(i)}\}$) such that for all $i \neq j$,
    \begin{equation}\label{eq:projection}
        \left|\bv^\top \bx^{(i)} - \bv^\top \bx^{(j)}\right| = \Omega\left( \frac{1}{n^2}\right).
    \end{equation}
    and $\sum_{i=1}^n (\bv^\top\bx^{(i)})^2= \mathcal{O}(n)$.
\end{lemma}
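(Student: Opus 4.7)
The plan is to show that a single fixed direction, independent of the data, already satisfies both conclusions with constant probability; the dependence on $\{\bx^{(i)}\}$ allowed by the statement is not needed. Concretely, take $\bv = \be_1$ and set $s_i \coloneqq \bv^\top\bx^{(i)} = x^{(i)}_1$, so that $s_1,\dots,s_n$ are i.i.d.\ $\mathcal{N}(0,1)$. The two conclusions then become (i) the order-statistics gap $\min_{i\neq j}|s_i - s_j| = \Omega(1/n^2)$, and (ii) $\sum_i s_i^2 = \mathcal{O}(n)$.

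For (ii), note that $\sum_{i=1}^n s_i^2 \sim \chi^2_n$ has mean $n$, so Markov's inequality gives $\P\left(\sum_i s_i^2 > 6n\right) \leq 1/6$. For (i), I would use Gaussian anti-concentration: for $i \neq j$, $s_i - s_j \sim \mathcal{N}(0,2)$ has density bounded by $1/(2\sqrt{\pi})$, so $\P(|s_i - s_j| < \epsilon) \leq \epsilon/\sqrt{\pi}$. Taking a union bound over the $\binom{n}{2}$ pairs,
\[
\P\!\left(\min_{i\neq j}|s_i - s_j| < \epsilon\right) \leq \frac{n^2\epsilon}{2\sqrt{\pi}}.
\]
Choosing $\epsilon = \sqrt{\pi}/(3n^2)$ bounds this probability by $1/6$.

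Combining the two tail bounds via a union bound, with probability at least $1 - 1/6 - 1/6 = 2/3 \geq 1/3$, both (i) $\min_{i\neq j}|\bv^\top\bx^{(i)} - \bv^\top\bx^{(j)}| \geq \sqrt{\pi}/(3n^2) = \Omega(1/n^2)$ and (ii) $\sum_i(\bv^\top\bx^{(i)})^2 \leq 6n = \mathcal{O}(n)$ hold simultaneously, which establishes the claim.

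I do not expect any serious obstacle: the argument is just Gaussian anti-concentration plus Markov. The only subtlety is that one might initially be tempted to choose $\bv$ as a function of the data (e.g.\ to equalize projections), but this is unnecessary and actually complicates the analysis, since it breaks the independence of the $s_i$; the fixed-direction argument above is clean and gives the optimal $n^{-2}$ scaling coming from the expected minimum gap among $n$ i.i.d.\ Gaussians.
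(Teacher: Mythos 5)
Your proof is correct and follows essentially the same route as the paper's: Gaussian anti-concentration for the pairwise differences, Markov's inequality for the sum of squares, and a union bound. The only cosmetic difference is that the paper draws $\bv$ uniformly from the sphere and conditions on it (a probabilistic-method phrasing), whereas you fix $\bv = \be_1$; since the projections are i.i.d.\ standard Gaussians in either case, the computations are identical and your observation that data-dependence of $\bv$ is unnecessary is accurate.
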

\begin{proof}
The proof follows the probabilistic method. Sample $\bv \sim \Unif(\mathbb{S}^{d-1})$ independent of $\{\bx^{(i)}\}$. 
For each $i \neq j$, let \[a_{i, j} = \bu^\top (\bx^{(i)} - \bx^{(j)})\] and note that $a_{i, j} \,|\, \bv \sim \mathcal{N}(0, 2)$.
We apply basic Gaussian anti-concentration to place a lower bound on the probability of any $a_{i, j}$ being close to zero,
\begin{align*}
    \Parg{\exists i, j \ \text{s.t.} \ |a_{i,j}| \leq \epsilon}
    \leq \sum_{i \neq j} \Parg{|a_{i,j}| \leq \epsilon} 
    = \sum_{i \neq j}\Earg{\Parg{\abs{a_{i,j}} \leq \epsilon \,|\, \bv}}
    \leq \frac{n^2\epsilon}{\sqrt{\pi}} \leq \frac{1}{3},
\end{align*}
where the last inequality follows by taking $\epsilon = \sqrt{\pi}/(3n^2)$.
Furthermore,
$$\Parg{\sum_{i=1}^n(\bv^\top\bx^{(i)})^2 \geq 3n} \leq \frac{\sum_{i=1}^n\Earg{(\bv^\top\bx^{(i)})^2}}{3n} = \frac{1}{3},$$
by Markov's inequality. Combining the two events completes the proof.
\end{proof}

\begin{lemma}\label{lem:sawtooth}
    Consider some $\bz = (z^{(1)},\hdots,z^{(n)})^\top \in \reals^n$ and $\by = (y^{(1)},\hdots,y^{(n)})^\top \in \reals^n$, such that $\abs{z^{(i)} - z^{(j)}} \geq \epsilon$ for all $i \neq j$. For simplicity, assume $\epsilon \leq 1$.
    Then, there exists a two-layer ReLU neural network \[g(t) = \sum_{j=1}^m a_j \sigma(w_j t + b_j)\] that satisfies $g(z^{(i)}) = y^{(i)}$ for all $i \in [n]$, $m = n$, and 
    \begin{equation}\label{eq:weights}
    \|\ba\|_2^2 + \|\bw\|_2^2 + \|\bb\|_2^2 = \mathcal{O}\left(\frac{\twonorm{\by}\sqrt{n + \twonorm{\bz}^2}}{\epsilon}\right).
    \end{equation}
\end{lemma}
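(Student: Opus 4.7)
The plan is to construct an explicit piecewise-linear interpolant through the points $(z^{(i)}, y^{(i)})$ using exactly $n$ ReLU units with kinks at the data points, and then apply the positive homogeneity of ReLU to rebalance the parameter norms into the asserted product form.

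First, after relabeling, assume $z^{(1)} < z^{(2)} < \cdots < z^{(n)}$, so every consecutive gap $d_i := z^{(i+1)}-z^{(i)}$ is at least $\epsilon$. I place $n$ kinks at $\tau_1 = z^{(1)}-\epsilon$ and $\tau_j = z^{(j-1)}$ for $j=2,\dots,n$, and set $w_j = 1$ and $b_j = -\tau_j$ for every $j$. Then $g(t) = \sum_j a_j \sigma(t-\tau_j)$ is a continuous piecewise-linear function, vanishing for $t \le z^{(1)}-\epsilon$, whose slope on $[z^{(i)}, z^{(i+1)})$ equals $S_i := \sum_{j \le i+1} a_j$ (since $\tau_1,\dots,\tau_{i+1}$ are the kinks to the left of $t$), and whose slope on $[z^{(1)}-\epsilon, z^{(1)})$ equals $a_1$. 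Setting the target slopes $S_i := (y^{(i+1)}-y^{(i)})/d_i$ and imposing $g(z^{(1)})=y^{(1)}$ gives the triangular recursion $a_1 = y^{(1)}/\epsilon$ and $a_{j+1} = S_j - S_{j-1}$ for $j\ge 1$ (with the convention $S_0 := a_1$). A one-line induction on $i$ then confirms $g(z^{(i)}) = y^{(i)}$ for all $i$.

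Second, I bound the parameter norms. Using $d_i \ge \epsilon$, each slope satisfies $|S_i| \le (|y^{(i)}|+|y^{(i+1)}|)/\epsilon$, and therefore $|a_{j+1}| = |S_j-S_{j-1}| \lesssim (|y^{(j-1)}|+|y^{(j)}|+|y^{(j+1)}|)/\epsilon$, while $|a_1| = |y^{(1)}|/\epsilon$. Squaring and summing yields $\twonorm{\ba}^2 \lesssim \twonorm{\by}^2/\epsilon^2$. By construction $\twonorm{\bw}^2 = n$, and since $\epsilon \le 1$,
\[
\twonorm{\bb}^2 = (z^{(1)}-\epsilon)^2 + \sum_{j=1}^{n-1}(z^{(j)})^2 \le 2\twonorm{\bz}^2 + 2\epsilon^2 \lesssim \twonorm{\bz}^2 + n.
\]

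Finally, I apply ReLU homogeneity: replacing $(a_j,w_j,b_j)$ with $(a_j/c,\, cw_j,\, cb_j)$ leaves $g$ unchanged for every $c>0$. Choosing $c$ to minimize $c^{-2}\twonorm{\ba}^2 + c^2(\twonorm{\bw}^2+\twonorm{\bb}^2)$ and invoking AM-GM produces
\[
\twonorm{\ba}^2 + \twonorm{\bw}^2 + \twonorm{\bb}^2 \;\le\; 2\twonorm{\ba}\sqrt{\twonorm{\bw}^2+\twonorm{\bb}^2} \;\lesssim\; \frac{\twonorm{\by}\sqrt{n+\twonorm{\bz}^2}}{\epsilon},
\]
as claimed. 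The construction is essentially a routine piecewise-linear interpolation trick; the only genuinely delicate step is verifying that the boundary unit at $\tau_1 = z^{(1)}-\epsilon$ contributes only $\mathcal{O}(|y^{(1)}|/\epsilon)$ to $\twonorm{\ba}$ rather than blowing up, which is precisely why I choose the offset to be $\epsilon$ (matching the minimum gap) rather than a smaller value.
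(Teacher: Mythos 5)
Your proof is correct and follows essentially the same route as the paper's: a piecewise-linear ReLU interpolant with one kink placed just to the left of the smallest data point and the remaining $n-1$ kinks at the data points, second-layer coefficients given by consecutive slope differences, and a final homogeneity rescaling to balance $\twonorm{\ba}^2$ against $\twonorm{\bw}^2+\twonorm{\bb}^2$. The only cosmetic difference is that the paper offsets the first kink by $1$ (so its coefficient is $y^{(1)}$ rather than $y^{(1)}/\epsilon$), which does not affect the final bound.
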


\begin{proof}
Without loss of generality, we assume that $z^{(1)} \leq \dots \leq z^{(n)}$.
Then, we define the neural network $g$ as follows:
\begin{multline*}
   g(t) = \sum_{i=1}^n a_i' \sigma(w_i't - b_i') = y^{(1)} \sigma(t - z^{(1)} + 1) + \left(\frac{y^{(2)} - y^{(1)}}{z^{(2)} - z^{(1)}} - y^{(1)}\right)\sigma(t - z^{(1)}) \\+  \sum_{i=3}^n \left(\frac{y^{(i)} - y^{(i-1)}}{z^{(i)} - z^{(i-1)}} - \frac{y^{(i-1)} - y^{(i-2)}}{z^{(i-1)} - z^{(i-2)}} \right)\sigma(t - z^{(i-1)}).
\end{multline*}
One can verify by induction that $g(z^{(i)}) = y^{(i)}$ for every $i$ by noting that the slope of $g$ is 
$$({y^{(i)} - y^{(i-1)}})/({z^{(i)} - z^{(i-1)}})$$ between $(z^{(i-1)}, y^{(i-1)})$ and $(z^{(i)}, y^{(i)})$.
From the above, we have $w'_i = 1$, $\twonorm{\bb'}^2 \lesssim \twonorm{\bz}^2 + 1$, and $\twonorm{\ba'}^2 \lesssim \twonorm{\by}^2/\epsilon^2$.
For $\alpha = \big((\twonorm{\bz}^2 + n)\epsilon^2/\twonorm{\by}^2\big)^{1/4}$, let $\bu = \alpha \bu' $, $\bw =  \bw' / \alpha$, and $\bb =  \bb' / \alpha$.
By homogeneity, the neural network with weights $(\bu, \bw, \bb)$ has identical outputs to that of $(\bu', \bw', \bb')$ and satisfies~\eqref{eq:weights}, completing the proof.
\end{proof}

We are now ready to present the proof of the sample complexity lower bound for RNNs.
\paragraph{Proof of Theorem~\ref{thm:rnn_lower_bound_smpl}.}
    First, consider the case where $d_h < n$. Note that as a function of $\bU\bh = (\bU^\rightarrow\bh^\rightarrow,\bU^\leftarrow\bh^\leftarrow)$, $f_y$ is $\mathfrak{L}$-Lipschitz with
    $$\mathfrak{L} = \opnorm{\bW_{L_y}}\opnorm{\bW_{L_y - 1}}\hdots \opnorm{\bW_2}.$$
    Using the AM-GM inequality,
    $$\left(\mathfrak{L}^2\opnorm{\bU}^2\right)^{1/L_y} \leq \frac{1}{L_y}\twonorm{\vect(\bTheta)}^2 \leq e^{N^c/L_y}.$$
    As a result, we have $\mathfrak{L}\opnorm{\bU} \leq e^{N^c/2}$. By invoking Proposition~\ref{prop:hidden_state_approx}, to obtain population risk less than some absolute constant $c_3 > 0$, we need
    $$d_h \geq \Omega\left(\frac{N}{\log(1 + \mathfrak{L}^2\opnorm{\bU}^2)}\right) \geq \Omega(N^{1-c}).$$
    This implies $n \geq d_h \geq \Omega(N^{1-c})$. By taking $c_1$ in the theorem statement to be less than $1-c$, we obtain a contradiction. Therefore, we must have either a population risk at least $c_3$ or $d_h \geq n$. 
    
    Suppose now that $d_h \geq n$. We show that with constant probability, we can construct an RNN that interpolates the $n$ training samples with norm independent of $n$. 
    We simply let $\bTheta^\rightarrow_h = \boldzero$, $\bTheta^\leftarrow_h = \boldzero$, $\bU = \boldzero$, and describe the construction of $\bW_{L_y},\hdots,\bW_2,\bW_y$, and $(\bb_l)$ in the following.
    Using the construction of Lemma~\ref{lem:nn-fit}, we can let
    $$\bW_y = \begin{pmatrix}
        \bW & \boldzero_{n \times d_E}\\
        \boldzero_{(m-n) \times d} & \boldzero_{(m-n) \times d_E}
    \end{pmatrix}, \quad \bb_1 = \begin{pmatrix}
        \bb\\
        \boldzero_{m-n}
    \end{pmatrix}, \quad \bW_2 = \begin{pmatrix}
        \ba^\top & \boldzero_{m-n}^\top\\
        -\ba^\top & \boldzero_{m-n}^\top\\
        \boldzero_{(m-2) \times n} & \boldzero_{(m-2) \times (m-n)}
    \end{pmatrix},$$
    where $\bW \in \reals^{n \times d}$, and $\ba,\bb \in \reals^n$ are given by Lemma~\ref{lem:nn-fit}. Then,
    $$\bW_2^\top\sigma(\bW_y\bx^{(i)}_{j^{(i)}} + \bb_y) = (y^{(i)}_{j^{(i)}},-y^{(i)}_{j^{(i)}},0,\hdots,0)^\top.$$
    For $(\bW_l)_{l=3}^{L_y - 1}$, we let $(W_l)_{11} = (W_l)_{22} = 1$, and choose the rest of the coordinates of $\bW_l$ to be zero. Therefore, the output of the $l$th layer is given by
    $$(\sigma(y^{(i)}_{j^{(i)}}),\sigma(-y^{(i)}_{j^{(i)}}),0,\hdots,0)^\top.$$
    For the final layer, we let $\bW_{L_y} = (1, -1, 0, \hdots, 0)$. Using the fact that $\sigma(z) - \sigma(-z) = z$, we obtain
    $$f_y(\bU^\rightarrow\bh^\rightarrow_j,\bU^\leftarrow\bh^\leftarrow_j,\bz^{(i)}_{j^{(i)}};\bTheta_y) = y^{(i)}_{j^{(i)}}$$
    We have found $\bTheta$ such that $\hat{R}^\RNN_n(\bTheta) = 0$ and $\twonorm{\vect(\bTheta)}^2 \leq \mathcal{O}(n^3)$ (recall that $L_y \leq \mathcal{O}(1)$).
    As a result, $\hat{\bTheta}_{\varepsilon}$ must also satisfy $\twonorm{\vect(\hat{\bTheta}_{\varepsilon})}^2 \leq \mathcal{O}(n^3)$.

    On the other hand, notice that as a function of $\bU\bh = (\bU^\rightarrow\bh^\rightarrow,\bU^\leftarrow\bh^\leftarrow)$, $f_y$ is $\mathfrak{L}$-Lipschitz with
    $$\mathfrak{L} = \opnorm{\bW_{L_y}}\opnorm{\bW_{L_y - 1}}\hdots \opnorm{\bW_2}.$$
    From Proposition~\ref{prop:rnn_lower_bound}, using the fact that $\opnorm{\cdot} \leq \fronorm{\cdot}$ and the AM-GM inequality, we obtain
    $$\frac{1}{L_y}\twonorm{\vect(\bTheta)}^2 \geq \left(\mathfrak{L}^2\opnorm{\bU}^2\right
    )^{1/L_y}\geq \Omega\left(\left(\frac{N}{\log d_h}\right)^{1/L_y}\right)$$
    to achieve population risk less than some absolute constant $c_3 > 0$. Recall that $\log d_h \leq N^{c}$ for some $c < 1$. The proof is completed by noticing that unless $n \geq \Omega(N^{c_1})$ for some absolute constant $c_1 > 0$, $\twonorm{\vect(\hat{\bTheta}_\varepsilon)}$ will always be less than the lower bound above, with some absolute constant probability $c_2 > 0$ over the training set.
\myqed

\section{Auxiliary Lemmas}
\begin{lemma}\label{lem:matrix_holder_ineq}
    Suppose $\bA \in \reals^{d_1 \times d_2}$ and $\bB \in \reals^{d_2 \times d_3}$. Then, for all $r,s \geq 1$ and $p,q \geq 1$ such that $1/p + 1/q = 1$, we have
    $$\norm{\bA\bB}_{r,s} \leq \norm{\bA}_{r,p}\norm{\bB}_{q,s}.$$
\end{lemma}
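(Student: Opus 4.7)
The plan is to reduce the statement to the scalar Hölder inequality applied column-by-column. Writing $(\bA\bB)_{:,j} = \sum_k \bA_{:,k} B_{kj}$, the triangle inequality in $\ell_r$ gives
\[
\|(\bA\bB)_{:,j}\|_r \;\leq\; \sum_{k=1}^{d_2} \|\bA_{:,k}\|_r \, |B_{kj}|.
\]
Now I would view the right-hand side as an inner product between the vector $(\|\bA_{:,k}\|_r)_{k=1}^{d_2}$ and the vector $(|B_{kj}|)_{k=1}^{d_2} = |\bB_{:,j}|$, and apply Hölder's inequality with conjugate exponents $p,q$ to obtain
\[
\|(\bA\bB)_{:,j}\|_r \;\leq\; \|\bA\|_{r,p} \cdot \|\bB_{:,j}\|_q.
\]

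The final step is to take the $\ell_s$ norm of both sides over $j \in [d_3]$. Since $\|\bA\|_{r,p}$ does not depend on $j$, we can pull it out of the $\ell_s$ norm, yielding
\[
\|\bA\bB\|_{r,s} \;=\; \bigl\|\bigl(\|(\bA\bB)_{:,j}\|_r\bigr)_{j=1}^{d_3}\bigr\|_s \;\leq\; \|\bA\|_{r,p} \cdot \bigl\|\bigl(\|\bB_{:,j}\|_q\bigr)_{j=1}^{d_3}\bigr\|_s \;=\; \|\bA\|_{r,p}\,\|\bB\|_{q,s},
\]
which is the desired inequality.

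There is essentially no obstacle here; the only thing to be careful about is that the Hölder step pairs the column-norms of $\bA$ (raised to exponent $p$) with the entries of a single column of $\bB$ (raised to the conjugate exponent $q$), and that this is exactly what is needed so that the outer $\ell_s$ norm over $j$ reproduces $\|\bB\|_{q,s}$. The inequality holds for all $r,s \geq 1$ because $\ell_r$ is a norm (for the triangle inequality step) and Hölder applies with any conjugate pair $p,q \geq 1$.
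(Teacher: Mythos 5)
Your proof is correct and follows essentially the same route as the paper's: expand each column of $\bA\bB$ as a linear combination of columns of $\bA$, apply the triangle inequality in $\ell_r$, then Hölder with the conjugate pair $(p,q)$, and finally aggregate over columns with the $\ell_s$ norm. The paper phrases the first two steps as a standalone bound $\norm{\bA\bb}_r \leq \norm{\bA}_{r,p}\norm{\bb}_q$ for a single vector $\bb$ and then sums the $s$-th powers, but this is the same argument.
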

\begin{proof}
    First, we note that for any vector $\bb \in \reals^{d_2}$ we have
    $$\norm{\bA\bb}_r = \norm{\sum_{j=1}^{d_2}b_j\bA_{:,j}}_r \leq \sum_{j=1}^{d_2}\abs{b_j}\norm{\bA_{:,j}}_r \leq \norm{\bA}_{r,p}\norm{\bb}_q,$$
    where the last inequality holds for all conjugate indices $p,q$ and follows from H\"older's inequality. We now have
    $$\norm{\bA\bB}_{r,s}^s = \sum_{j=1}^{d_3}\norm{\bA\bB_{:,j}}_r^s \leq \sum_{j=1}^{d_3}\norm{\bA}_{r,p}^s\norm{\bB_{:,j}}_q^s = \norm{\bA}_{r,p}\norm{\bB}_{q,s}.$$
\end{proof}

The next lemma follows from standard Gaussian integration.
\begin{lemma}\label{lem:gaussian_norm_variance}
    Suppose $\bx \sim \mathcal{N}(\bmu,\bSigma)$. Then
    $\Var(\norm{\bx}^2) = 2\tr(\bSigma^\top\bSigma) + 4\bmu^\top\bSigma\bmu$.
\end{lemma}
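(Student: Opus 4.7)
The plan is to reduce to the standard Gaussian case via the representation $\bx = \bmu + \bSigma^{1/2}\bz$, where $\bz \sim \mathcal{N}(0,\ident)$ and $\bSigma^{1/2}$ denotes the symmetric positive semidefinite square root of $\bSigma$. Expanding yields
\[
\|\bx\|^2 \;=\; \|\bmu\|^2 \;+\; 2\,\bmu^\top \bSigma^{1/2}\bz \;+\; \bz^\top \bSigma \bz .
\]
The first summand is deterministic, so $\Var(\|\bx\|^2)$ equals the variance of $L + Q$, where $L \coloneqq 2\,\bmu^\top \bSigma^{1/2}\bz$ is linear in $\bz$ and $Q \coloneqq \bz^\top \bSigma \bz$ is quadratic. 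Thus I would compute $\Var(L)$, $\Var(Q)$, and $\Cov(L,Q)$ separately.

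For the linear piece, $L$ is a mean-zero Gaussian with $\Var(L) = 4\,\bmu^\top \bSigma^{1/2}\bSigma^{1/2}\bmu = 4\,\bmu^\top \bSigma\bmu$, using the symmetry of $\bSigma^{1/2}$. For the cross-term, $L\cdot Q$ is a homogeneous cubic in the coordinates of $\bz$, so every monomial in its expansion has odd total degree; since $\bz$ has an even (standard Gaussian) distribution, each such monomial has zero expectation, giving $\E[LQ]=0$ and hence $\Cov(L,Q)=0$.

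The main work is the quadratic term: I would show $\Var(\bz^\top \bSigma \bz) = 2\,\tr(\bSigma^\top \bSigma)$. Since $\bSigma$ is symmetric, diagonalize $\bSigma = \bU^\top \bLambda \bU$ with $\bU$ orthogonal and $\bLambda = \diag(\lambda_1,\ldots,\lambda_d)$. Rotational invariance of the standard Gaussian gives $\bU\bz \eqd \bz$, so
\[
\bz^\top \bSigma \bz \;\eqd\; \sum_{i=1}^d \lambda_i z_i^2 ,
\]
a weighted sum of independent $\chi^2_1$ variables. Using $\Var(z_i^2)=2$ and independence, $\Var\bigl(\sum_i \lambda_i z_i^2\bigr) = 2\sum_i \lambda_i^2 = 2\,\tr(\bLambda^2) = 2\,\tr(\bSigma^2) = 2\,\tr(\bSigma^\top \bSigma)$. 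Combining the three contributions yields $\Var(\|\bx\|^2) = 2\,\tr(\bSigma^\top \bSigma) + 4\,\bmu^\top \bSigma\bmu$, as claimed.

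The only nontrivial step is the quadratic-form variance; everything else is bookkeeping. An alternative route would be to invoke Isserlis'/Wick's theorem directly on $\E[(\bz^\top \bSigma \bz)^2]$, expanding via fourth Gaussian moments and collecting the three pairings, which produces the same $2\,\tr(\bSigma^2)$ term; I would use whichever is shorter to write out.
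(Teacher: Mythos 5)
Your proof is correct: the decomposition $\bx=\bmu+\bSigma^{1/2}\bz$, the vanishing of the odd-degree cross term, and the diagonalization giving $\Var(\bz^\top\bSigma\bz)=2\tr(\bSigma^2)=2\tr(\bSigma^\top\bSigma)$ are all sound, and they assemble to the claimed identity. The paper offers no proof to compare against — it simply asserts the lemma "follows from standard Gaussian integration" — and your argument is exactly the standard computation that fills that gap.
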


The following lemma combines two different techniques for establishing a packing number over the unit ball, the first construction uses volume comparison, whereas the second construction uses Maurey's sparsification lemma, both of which are well-established in the literature.
\begin{lemma}\label{lem:unit_ball_packing}
    Let $\mathcal{P}$ denote the $\epsilon$-packing number of the unit ball in $\reals^d$. We have
    $$\log \mathcal{P} \leq \left\{d \log \left(1 + \frac{2}{\epsilon}\right)\right\}\wedge \left\{\frac{1}{\epsilon^2}(1 + \log(1 + 2d\epsilon^2))\right\}.$$
\end{lemma}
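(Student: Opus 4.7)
The lemma is a minimum of two independent upper bounds, so I would establish each one separately. The first is a textbook volume comparison, and the second is an application of Maurey's empirical method (sparsification) — the same tool the paper has already invoked in the first half of Lemma~\ref{lem:QK_covering}.

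For the first bound, let $\{\bx_1,\ldots,\bx_N\}$ be any $\epsilon$-packing of the unit ball $B = \{\bx\in\reals^d:\twonorm{\bx}\le 1\}$. By the packing condition, the open $\ell_2$-balls of radius $\epsilon/2$ centered at the $\bx_i$ are pairwise disjoint, and each is contained in the $\ell_2$-ball of radius $1+\epsilon/2$ centered at the origin. Comparing $d$-dimensional Lebesgue measures,
\begin{equation*}
N\,(\epsilon/2)^d \;\le\; (1+\epsilon/2)^d,
\end{equation*}
so $N \le (1+2/\epsilon)^d$, which gives the first summand $d\log(1+2/\epsilon)$.

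For the second bound, I would apply Maurey's empirical method by representing every point of $B$ as an arithmetic mean of a few atoms from a finite set. Writing $B$ inside the convex hull of a size-$n$ atomic set of the form $V = \{\pm c\,\boldsymbol{e}_j\}_{j\in[d]}\cup\{\boldzero\}$ with $n = 2d+1$ (for a suitable normalization $c$), Maurey's lemma guarantees that for each $\bx\in B$ and each positive integer $k$ there exist $\boldsymbol{v}_1,\ldots,\boldsymbol{v}_k\in V$ whose arithmetic mean is within $\ell_2$-distance $\mathcal{O}(1/\sqrt{k})$ of $\bx$. Choosing $k$ of order $1/\epsilon^2$ produces an $\epsilon$-cover of $B$ indexed by multisets of size $k$ drawn from $V$. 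The number of such multisets is bounded by $\binom{n+k-1}{k}\le \bigl(e(n+k-1)/k\bigr)^k$, and substituting yields
\begin{equation*}
\log N \;\le\; k\log\bigl(e(n+k-1)/k\bigr) \;=\; \frac{1}{\epsilon^2}\bigl(1 + \log(1+2d\epsilon^2)\bigr).
\end{equation*}
Since any $\epsilon$-cover upper-bounds the $\epsilon$-packing number (up to constants absorbed into the exponent), this gives the second summand.

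The main technical obstacle lies in the combinatorial bookkeeping of the Maurey step. The crude estimate $n^k$ on the number of ordered $k$-tuples would contribute $k\log n$ and fail to produce the $\log(1+2d\epsilon^2)$ factor in the statement. It is essential to exploit that the arithmetic mean is symmetric in its arguments, so one counts by multisets and invokes the elementary bound $\binom{n+k-1}{k}\le (e(n+k-1)/k)^k$; this is precisely what makes the ratio $(n+k-1)/k$ appear inside the logarithm and collapse to the $1+2d\epsilon^2$ of the statement after substituting $n=2d+1$ and $k=\lceil 1/\epsilon^2\rceil$. The volume step and the packing-to-covering conversion are otherwise routine.
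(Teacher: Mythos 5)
Your proof of the first branch is correct and is exactly the volume-comparison argument the paper gestures at (the paper itself gives no proof of this lemma, only a one-line attribution). The problem is in the Maurey step, and it is concentrated in the phrase ``for a suitable normalization $c$''. For the Euclidean unit ball $B=\{\bx:\twonorm{\bx}\le 1\}$ to be contained in $\mathrm{conv}\big(\{\pm c\,\boldsymbol{e}_j\}_{j\in[d]}\cup\{\boldzero\}\big)$ --- which is the $\ell_1$ ball of radius $c$ --- you need $c\ge\sqrt{d}$ (take $\bx=(1/\sqrt{d},\hdots,1/\sqrt{d})$, which has $\onenorm{\bx}=\sqrt{d}$). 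Maurey's lemma approximates $\bx$ by a $k$-term arithmetic mean of atoms only to accuracy $\sup_{\bv\in V}\twonorm{\bv}/\sqrt{k}=c/\sqrt{k}=\sqrt{d/k}$, not $\mathcal{O}(1/\sqrt{k})$. To reach accuracy $\epsilon$ you are therefore forced to take $k\asymp d/\epsilon^2$, and your (otherwise correct) multiset count then yields $\log N\lesssim \frac{d}{\epsilon^2}\big(1+\log(1+2\epsilon^2)\big)$ --- off by a factor of $d$ from the stated second branch, and no better than the volume bound. (The packing-to-covering conversion $\mathcal{P}(\epsilon)\le N(\epsilon/2)$ that you wave at also perturbs the constants, but that is cosmetic next to this.)

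No choice of atoms can close this gap, because the second branch is in fact false for the Euclidean unit ball: any $\epsilon$-covering satisfies $N(\epsilon)\ge\epsilon^{-d}$ by volume, and a maximal $\epsilon$-packing is an $\epsilon$-covering, so $\log\mathcal{P}(\epsilon)\ge d\log(1/\epsilon)$, which for fixed $\epsilon$ (say $\epsilon=1/2$) grows linearly in $d$ while the claimed bound $\frac{1}{\epsilon^2}(1+\log(1+2d\epsilon^2))$ grows only logarithmically in $d$. The bound you are reaching for is the Maurey covering estimate for the unit $\ell_1$ ball (equivalently, the convex hull of $2d+1$ atoms of \emph{unit} Euclidean norm) covered in $\ell_2$; your combinatorial bookkeeping with $n=2d+1$, $k=\lceil 1/\epsilon^2\rceil$ and $\binom{n+k-1}{k}\le(e(n+k-1)/k)^k$ is exactly right for that set, where the atoms genuinely have norm $1$. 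So the correct conclusion of your attempt is that the lemma's second branch needs to be restated for an $\ell_1$-type body (or replaced, e.g.\ by a dual-Sudakov bound $\log N\lesssim \fronorm{\bU}^2/\epsilon^2$ where it is invoked for ellipsoids in Proposition~\ref{prop:rnn_lower_bound_single_pos}); it does not hold for the Euclidean ball as written.
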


Finally, the lemma below allows us to approximate arbitrary Lipschitz functions with two-layer feedforward networks.
\begin{lemma}[{\cite[Propositions 1 and 6]{bach2017breaking}}]\label{lem:2lnn_lipschitz_approx}
    Suppose $f : \reals^d \to \reals$ satisfies $\abs{f(\bx)} \leq LR$ and $\abs{f(\bx) - f(\bx')} \leq L\twonorm{\bx - \bx'}$ for all $\bx,\bx' \in \reals^d$ with $\twonorm{\bx} \leq R$ and $\twonorm{\bx'} \leq R$ and some constants $L,R > 0$. Then, for every $\varepsilon > 0$, there exists a positive integer $m$ and $\bW \in \reals^{m \times d}$, $\bb \in \reals^m$, and $\ba \in \reals^m$, such that
    $$\sup_{\twonorm{\bx} \leq R} \abs{f(\bx) - \ba^\top\sigma(\bW\bx + \bb)} \leq \varepsilon.$$
    Additionally, we have 
    $$m \leq C_d\Big(\frac{LR(1 + \log(LR/\varepsilon))}{\varepsilon}\Big)^d, \quad \norm{\bW^\top}_{2,\infty} \leq \frac{1}{R}, \quad \norm{\bb}_\infty \leq 1, \quad \twonorm{\ba} \leq \frac{C_dLR}{\sqrt{m}} \cdot \left(\frac{LR(1 + \log(LR/\varepsilon))}{\varepsilon}\right)^{\tfrac{d+1}{2}}.$$
    
\end{lemma}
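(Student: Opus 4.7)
The plan is to follow the strategy of Bach's quantitative universal approximation theorem. First I would reduce to a canonical scaling: replacing $f$ by $\tilde f(\bx) = f(R\bx)/(LR)$ makes $\tilde f$ a $1$-Lipschitz function bounded by $1$ on the unit ball, and the norm constraints $\|\bW^\top\|_{2,\infty} \leq 1/R$, $\|\bb\|_\infty \leq 1$, and $\|\ba\|_2 \lesssim LR/\sqrt{m}\cdot(\cdots)$ arise naturally by using the positive homogeneity of ReLU to rescale the inner weights against the outer weights after the approximation is constructed. So the real task is to show that every $1$-Lipschitz, $[-1,1]$-valued function on $\mathbb{B}_2^d$ can be approximated to error $\varepsilon'$ by a ReLU network whose inner weights satisfy $\|\bw_j\|_2 \leq 1$, $|b_j|\leq 1$, with $m \lesssim C_d(\log(1/\varepsilon')/\varepsilon')^d$ and $\|\ba\|_1 \lesssim C_d(\log(1/\varepsilon')/\varepsilon')^{(d+1)/2}\cdot\sqrt{m}$; afterwards, $\|\ba\|_2 \leq \|\ba\|_1/\sqrt{m}$ gives the claimed $\ell_2$ bound when weights are spread uniformly, and reintroducing $L$ and $R$ via homogeneity yields the stated constants.

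Second, I would establish an integral ``ridge'' representation of the smoothed target. Extend $\tilde f$ to $\reals^d$ by the McShane extension (preserving the Lipschitz constant) and convolve with a compactly supported mollifier of bandwidth $\eta$. The resulting $f_\eta$ is $C^k$ for arbitrary $k$ and its derivatives obey $\|\nabla^k f_\eta\|_\infty \lesssim \eta^{-k+1}$. Standard Fourier/Radon-based computations (Bach, Prop.~5--6) then yield a representation
\[
f_\eta(\bx) \;=\; \int_{\mathbb{S}^{d-1}\times\reals} \alpha(\bw,b)\,\sigma(\bw^\top\bx + b)\, d\bw\, db,
\]
valid for $\|\bx\|_2\leq 1$ and supported on $|b|\leq 1$, with total weight $\int|\alpha| \lesssim \eta^{-(d-1)/2}$ after accounting for the curvature pre-factor in the Radon inversion. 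Choosing $\eta \asymp \varepsilon'/\log(1/\varepsilon')$ makes the smoothing error $\|f_\eta - \tilde f\|_\infty \leq \varepsilon'/2$.

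Third, I would discretize the integral by Monte Carlo (or deterministic quadrature) over the compact set $\mathbb{S}^{d-1}\times[-1,1]$. Sampling $m$ atoms $(\bw_j,b_j)$ from a density proportional to $|\alpha|$ and setting $a_j = \tfrac{1}{m}\alpha(\bw_j,b_j)/p(\bw_j,b_j)$ yields an empirical network whose expected sup-norm error over the unit ball is $\lesssim \|\alpha\|_1/\sqrt{m}$ by a standard Rademacher/covering argument in the dual of continuous functions on $\mathbb{B}_2^d$ (the VC-type covering number of the ReLU dictionary on the unit ball is polynomial, hence absorbed into logarithmic factors). Selecting $m \asymp C_d(\log(1/\varepsilon')/\varepsilon')^d$ drives the discretization error below $\varepsilon'/2$ and controls $\|\ba\|_1 \lesssim \|\alpha\|_1 \lesssim C_d(\log(1/\varepsilon')/\varepsilon')^{(d+1)/2}$. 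Finally, I would rescale: since the $(\bw_j, b_j)$ already satisfy $\|\bw_j\|_2 \leq 1$, $|b_j|\leq 1$, I only need to translate back through the reduction $\bx\mapsto \bx/R$, $f\mapsto f\cdot LR$, which multiplies $\|\ba\|_1$ by $LR$ and divides $\|\bw_j\|_2$ by $R$, giving exactly the three stated bounds.

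The main obstacle is the ridge-integral representation step with its sharp dependence on the smoothing bandwidth $\eta$: one must verify the $\eta^{-(d-1)/2}$ scaling of $\|\alpha\|_1$ rather than the naive $\eta^{-(d+1)/2}$, which is where the $(d+1)/2$ (rather than $d$) exponent in the $\|\ba\|_2$ bound comes from, and which requires carefully exploiting the fact that ReLU---not its derivative---is the activation. Everything else (extension, mollification, Monte Carlo discretization, homogeneity rescaling) is routine once this representation is in hand.
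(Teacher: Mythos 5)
The paper does not prove this lemma itself; it imports it from Bach (2017), Propositions~1 and~6, and your outline (rescale by homogeneity, mollify, write the smoothed function as a ridge integral over $\mathbb{S}^{d-1}\times[-1,1]$ with controlled total variation, then discretize) is indeed the route taken in that reference. However, as written your argument does not reach the stated bounds, for two reasons.

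First, your bookkeeping of the variation norm is internally inconsistent: you claim $\int|\alpha|\lesssim \eta^{-(d-1)/2}$ and simultaneously that this "is where the $(d+1)/2$ exponent comes from." The computation via mollification (the $k$-th derivatives of an $\eta$-smoothed $1$-Lipschitz function scale as $\eta^{-(k-1)}$, and one needs roughly $(d+3)/2$ derivatives to land in the ReLU variation space) gives $\int|\alpha|\lesssim \eta^{-(d+1)/2}$, i.e.\ $\lesssim (\log(1/\varepsilon)/\varepsilon)^{(d+1)/2}$ after choosing $\eta$; that weaker scaling is exactly what the lemma's $(d+1)/2$ exponent encodes, and the $\eta^{-(d-1)/2}$ improvement you assert is not delivered by this argument. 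Second, and more seriously, the discretization step fails quantitatively: a Monte Carlo/Maurey argument gives sup-norm error of order $\|\alpha\|_1/\sqrt{m}$ (up to logarithmic chaining factors), so with your choices $\|\alpha\|_1\asymp\varepsilon^{-(d+1)/2}$ and $m\asymp(\log(1/\varepsilon)/\varepsilon)^{d}$ the discretization error is of order $\varepsilon^{-1/2}$, not $\varepsilon/2$; conversely, forcing the error below $\varepsilon$ with the $1/\sqrt m$ rate requires $m\gtrsim \varepsilon^{-(d+3)}$, violating the stated width bound $m\leq C_d\big(LR(1+\log(LR/\varepsilon))/\varepsilon\big)^d$. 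The missing ingredient is precisely Bach's Proposition~1: for the positively homogeneous (ReLU) dictionary, a function of variation norm $\gamma$ admits uniform approximation on the ball by $m$ neurons with error $\lesssim C_d\,\gamma\, m^{-\frac12-\frac{3}{2d}}$ (a Makovoz/zonotope-approximation improvement over plain sampling); only with this dimension-dependent gain does $\gamma\asymp\varepsilon^{-(d+1)/2}$ yield $m\asymp\varepsilon^{-d}$ and hence the stated $m$ and $\twonorm{\ba}$ bounds. Replacing your Monte Carlo step with that result (and correcting the $\eta$-scaling) makes the argument go through.
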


\end{document}